\documentclass[9.5pt,journal,compsoc,cspaper]{IEEEtran}
\usepackage{enumitem}

\usepackage{amsmath,amsfonts,bm}


















\def\1{\bm{1}}








\def\vd{{\bm{d}}}

\def\vf{{\bm{f}}}
\def\vg{{\bm{g}}}

\def\vl{{\bm{l}}}



\DeclareMathAlphabet{\mathsfit}{\encodingdefault}{\sfdefault}{m}{sl}
\SetMathAlphabet{\mathsfit}{bold}{\encodingdefault}{\sfdefault}{bx}{n}













\usepackage{url}            
\usepackage{booktabs}       
\usepackage{amsfonts}       
\usepackage{nicefrac}       
\usepackage{microtype}      

\usepackage{subfigure}
\usepackage{dsfont}
\usepackage{amsthm}
\usepackage{mathrsfs}
\usepackage{algorithm}
\usepackage{algpseudocode}
\usepackage{bbm}
\usepackage{mathtools}
\usepackage{stackengine}
\usepackage{epsfig}
\usepackage{graphicx}
\usepackage{amsmath}
\usepackage{amssymb}
\usepackage{pifont}
\usepackage{wrapfig}
\usepackage{enumitem}
\usepackage{makecell}
\usepackage{multirow}
\usepackage{bibunits}

\usepackage{xspace}
\makeatletter
\DeclareRobustCommand\onedot{\futurelet\@let@token\@onedot}
\def\@onedot{\ifx\@let@token.\else.\null\fi\xspace}

\def\eg{\emph{e.g}\onedot} 
\def\ie{\emph{i.e}\onedot}

\def\wrt{w.r.t\onedot} 

\makeatother

\newtheorem{proposition}{Proposition}
\newtheorem{theorem}{Theorem}
\newtheorem{definition}{Definition}
\newtheorem{lemma}{Lemma}
\newtheorem{corollary}{Corollary}
\newtheorem{assumption}{Assumption}

\ifCLASSOPTIONcompsoc
  \usepackage[nocompress]{cite}
\else
  \usepackage{cite}
\fi

\begin{document}


\title{Partial Distribution Matching via \\ Partial Wasserstein Adversarial Networks}

\author{Zi-Ming Wang, Nan Xue, Ling Lei, Rebecka Jörnsten, Gui-Song Xia
\IEEEcompsocitemizethanks{
\IEEEcompsocthanksitem Zi-Ming Wang and Rebecka Jörnsten are with the Department of Mathematical Sciences, University of Chalmers, Gothenburg, 41296, Sweden (email: zimingwa@chalmers.se, jornsten@chalmers.se).
\IEEEcompsocthanksitem Nan Xue is with Ant Group, Hangzhou, Zhejiang, 310013, China (email:xuenan@ieee.org).
\IEEEcompsocthanksitem Ling Lei is with the School of Mathematics and Statistics, Wuhan University, Wuhan, 430072, China (email: leiling@whu.edu.cn).
\IEEEcompsocthanksitem Gui-Song Xia is with the School of Computer Science, National Engineering Research Center for Multimedia Software (NERCMS), and Institute for Mathematics and Artificial Intelligence, Wuhan University, Wuhan, 430072, China (email: guisong.xia@whu.edu.cn).}
\thanks{Corresponding author: Gui-Song Xia}
}

\IEEEtitleabstractindextext{%
\begin{abstract}
    This paper studies the problem of \emph{distribution matching} (DM), 
    which is a fundamental machine learning problem seeking to robustly align two probability distributions. 
    Our approach is established on a relaxed formulation, 
    called partial distribution matching (PDM),
    which seeks to match a fraction of the distributions instead of matching them completely.
    We theoretically derive the Kantorovich-Rubinstein duality for the {\em partial Wasserstain-$1$} (PW) discrepancy,
    and develop a {\em partial Wasserstein adversarial network} (PWAN) that efficiently approximates the PW discrepancy based on this dual form.
    Partial matching can then be achieved by optimizing the network using gradient descent.
    Two practical tasks, 
    point set registration and partial domain adaptation are investigated, 
    where the goals are to partially match distributions in 3D space and high-dimensional feature space respectively. 
    The experiment results confirm that the proposed PWAN effectively produces highly robust matching results, performing better or on par with the state-of-the-art methods.
\end{abstract}

\begin{IEEEkeywords}
partial distribution matching, partial Wasserstein adversarial network, point set registration, partial domain adaptation
\end{IEEEkeywords}}

\maketitle

\IEEEdisplaynontitleabstractindextext

\IEEEpeerreviewmaketitle

\IEEEraisesectionheading{\section{Introduction}\label{sec:introduction}}

\IEEEPARstart{D}{istribution} matching (DM) is a fundamental machine learning task with many applications.
As illustrated in Fig.~\ref{DM_PDM_D},
given two probability distributions,
the goal of DM is to match one distribution to the other.
For example,
in generative modelling,
to describe the observed data,
a parametric model distribution is matched to the data distribution.
Recently advanced DM algorithms~\cite{pmlr-v70-arjovsky17a, song2019generative, ho2020denoising} 
are well known for being able to handle highly complicated distributions.
However, 
these methods
still have difficulty in handling
distributions that are contaminated by outliers,
because their attempt to \emph{completely} match the ``dirty'' distributions will inevitably lead to biased results.

A natural way to address this issue is to consider the \emph{partial} distribution matching (PDM) task as a relaxation of the DM task:
given two unnormalized distributions of mass,
\ie, the total mass of each distribution is not necessarily $1$,
the goal of PDM is to match a certain fraction of mass of the two distributions.
An example of PDM is presented in Fig.~\ref{DM_PDM_P}.
In contrast to DM,
PDM allows to omit the fraction of distribution that does not match,
thus it naturally tolerates outliers better.
In other words, 
it is more suitable for applications with dirty data.

\begin{figure}[t!]
  \centering
  \subfigure[
  Given two normalized distributions,
  DM aims to match the source distribution (red) to the reference distribution (blue).
    ]{
    \label{DM_PDM_D}
    \includegraphics[width=0.9\linewidth]{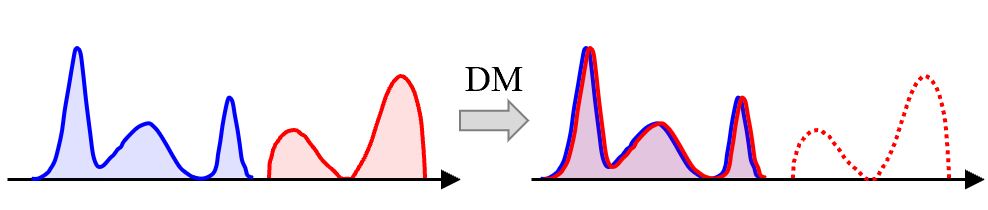}
  }
  \subfigure[
  Given two unnormalized distributions,
  PDM aims to match a certain fraction of the source distribution (red) to the reference distribution (blue).
    ]{
    \label{DM_PDM_P}
    \includegraphics[width=0.9\linewidth]{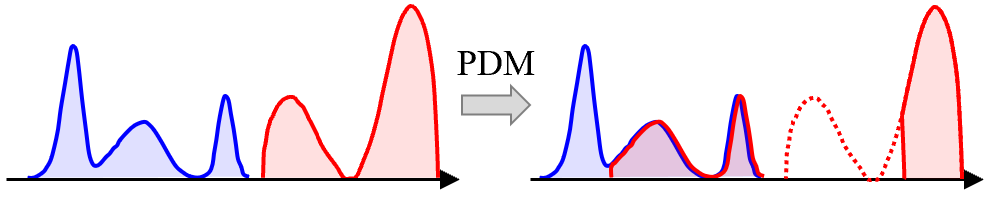}
  }
\caption{Comparison between DM and PDM.
DM aims to completely match two distributions,
but PDM only aims to match a certain fraction of them. 
}
\label{DM_PDM}
\end{figure}

Several PDM methods have been investigated,
but they tend to be computationally expensive for large-scale PDM problems,
because they generally rely on the correspondence between distributions~\cite{chizat2018scaling,chapel2020partial, mukherjee2021outlier}.
To address this issue,
we propose an efficient PDM algorithm, 
called partial Wasserstein adversarial network (PWAN), 
based on deep neural networks and the optimal transport (OT) theory~\cite{villani2016optimal}.
Our key idea is to partially match the distributions by minimizing their partial Wasserstein-1 (PW) discrepancy~\cite{figalli2010the} which is approximated by a neural network.
Specifically,
we first derive the Kantorovich-Rubinstein (KR) dual form~\cite{villani2003topics} for the PW discrepancy,
and show that its gradient can be explicitly computed via its potential.
These results then allow us to compute the PW discrepancy by approximating the potential using a deep neural network,
and minimize it by gradient descent.
Compared to the existing PDM methods,
the proposed PWAN is correspondence-free,
thus it is more efficient for large-scale PDM tasks.
In addition,
PWAN is a direct generalization of the well-known DM algorithm Wasserstein GAN (WGAN)~\cite{pmlr-v70-arjovsky17a} for PDM tasks.

To show the effectiveness of the proposed PWAN,
we evaluate it on two practical tasks which we formulate as PDM problems: 
point set registration~\cite{myronenko2006non} and partial domain adaptation~\cite{cao2022big, cao2018partial}.
Point set registration seeks to align two point sets representing incomplete shapes  in 3D space.
We regard this task as a PDM problem by treating point sets as discrete distributions  in 3D space;
Partial domain adaptation seeks to align the source domain to a fraction of the reference domain.
We formulate this task as a PDM problem in the high dimensional space of the extracted features.

These two tasks are challenging because they both require matching large-scale distributions containing a high ratio of outliers.
Specifically,
the distributions may contain up to $10^6$ samples,
and the outlier ratio can be up to $90\%$ in adaptation tasks, 
and $60\%$ in registration tasks.
Thus, 
the DM methods and existing correspondence-based PDM methods are difficult to apply.
In addition,
the datasets in adaptation tasks are usually too large to be processed as a whole,
thus mini-batch approaches are generally necessary.

We apply PWAN in its canonical form to registration tasks,
and develop an efficient coherence~\cite{myronenko2006non} regularizer to handle the non-rigid deformations of point sets.
As for adaptation tasks,
we extend PWAN to its mini-batch form, 
which does not introduce mini-batch errors like existing mini-batch OT methods~\cite{fatras2021unbalanced,nguyen2022improving,fatras2020learning},
\ie, it can produce accurate results even with small batch sizes.
In addition,
by utilizing the assumption of the clean source domain,
\ie, outliers only exist in the reference domain,
we derive a simplified PWAN formulation,
which is easier to train,
and can be seen as a direct extension of the classic closed-set adaptation methods~\cite{ganin2016domain,shen2018wasserstein}.
Our numerical experiments show that PWAN can produce highly robust matching results,
and perform better than or is at least comparable with the state-of-the-art methods in both tasks.

In summary, 
the contributions of this work are:
\begin{itemize}[leftmargin=4mm]
  \item[-] 
  Theoretically, 
  we derive the Kantorovich-Rubinstein (KR) duality of the partial Wasserstein-1 (PW) discrepancy.
  We further study its differentiability and present a qualitative description of the solution to the KR dual form.
  \item[-] 
  Based on the theoretical results,
  we propose a novel algorithm, 
  called Partial Wasserstein Adversarial Network (PWAN),
  for partial distribution matching (PDM).
  PWAN approximates distance-type or mass-type PW divergences by neural networks,
  thus it can be efficiently trained using gradient descent.
  We further extend PWAN to its mini-batch form,
  which does not introduce mini-batch errors.
  Notably, 
  PWAN is a generalization of the well-known Wasserstein GAN.
  \item[-] 
  We apply PWAN to point set registration,
  where we partially align discrete distributions representing point sets in 3D space.
  We evaluate PWAN on both non-rigid and rigid point set registration tasks,
  and show that PWAN exhibits strong robustness against outliers,
  and can register point sets accurately even when they are dominated by noise points or are only partially overlapped.
  \item[-] 
  We apply the mini-batch version PWAN to partial domain adaptation,
  where we align the source domain to a fraction of the reference domain.
  By exploiting the assumption of the clean source domain,
  we obtain a simplified PWAN formulation,
  which 
  generalizes the classic adversarial closed-set domain adaptation methods.
  We evaluate our approach on four benchmark datasets,
  and show that it can effectively align domains 
  even with a high outlier ratio.
\end{itemize}

The rest of this work is organized as follows: 
Sec.~\ref{Sec_related} reviews some related works. 
Sec.~\ref{Sec_Preliminaries} recalls the definitions of PW divergences which are the major tools used in PWAN.
Sec.~\ref{Sec_proposed_method} derives the formulation of PWAN, 
and provides a concrete training algorithm. Some connections between PWAN and WGAN are also discussed.
Sec.~\ref{Sec_application_1} applies PWAN to point set registration,
and Sec.~\ref{Sec_application_2} applies PWAN to partial domain adaptation.
Sec.~\ref{Sec_conclusion} finally draws some conclusions.

This work extends our earlier work~\cite{wang2022partial} in the following ways:
\begin{itemize}[leftmargin=4mm]
  \item[-]  We complete the point set registration experiments by applying PWAN to rigid registration tasks.
  \item[-]  We extend PWAN to its mini-batch form,
  which does not introduce mini-batch errors like existing mini-batch OT methods~\cite{fatras2020learning,fatras2021unbalanced,nguyen2022improving}.
  We numerically show that mini-batch PWAN can produce accurate results regardless of batch sizes.
  \item[-]  
  We apply the mini-batch version PWAN to partial domain adaptation tasks,
  where we derive a simplified PWAN formulation by exploiting the assumption of the task that the source domain is clean.
  Our formulation 
  is a direct generalization of the classic adversarial closed-set domain adaptation methods~\cite{shen2018wasserstein,ganin2016domain}.
  We experimentally demonstrate its effectiveness on four benchmark datasets.
\end{itemize}

\section{Related Works}
\label{Sec_related}

\subsection{Computational Optimal Transport}
PWAN is developed based on OT theory~\cite{villani2016optimal, figalli2010the, caffarelli2010free},
which is a powerful tool for comparing distributions.
Various types of numerical OT solvers have been proposed~\cite{bai2023sliced,bonneel2019spot,bonneel2015sliced, makkuva2020optimal, mukherjee2021outlier, schmitzer2019stabilized, chapel2020partial, bonneel2015sliced, schmitzer2019framework}.
A well-known type of solver is the entropy-regularized solver~\cite{benamou2015iterative, cuturi2013sinkhorn, chizat2018scaling},
which iteratively estimates the transport plan, \ie, the correspondence, between distributions.
This type of solver is flexible as it can handle a wide range of cost functions~\cite{villani2016optimal},
but it is generally computationally expensive for large-scale distributions.
Some efficient mini-batch-based approximations~\cite{fatras2021unbalanced,fatras2020learning} have been proposed to alleviate this issue,
but these methods are known to suffer from mini-batch errors, 
thus extra modifications are usually needed~\cite{nguyen2022improving}.

PWAN belongs to the Wasserstein-1-type OT solver,
which is dedicated to the special OT problem where the cost function is a metric.
This type of solver exploits the KR duality of the Wasserstein-1-type metrics,
thus it is generally highly efficient.
A representative method in this class is WGAN~\cite{pmlr-v70-arjovsky17a},
which approximates the Wasserstein-1 distance using neural networks. 
PWAN is a direct generalization of WGAN,
because it approximates the PW divergence, 
which is a generalization of the Wasserstein-1 distance,
under the same principle as WGAN.
In addition,
\cite{lellmann2014imaging, schmitzer2019framework} studied the KR duality of various Wasserstein-1-type divergences, 
including the distance-type PW divergence considered in this work.
Our work completes these works~\cite{lellmann2014imaging, schmitzer2019framework} in the sense that we additionally study the mass-type PW divergence,
and we provide a continuous approximation of PW discrepancies using neural networks,
which is suitable for a broad range of applications.

\subsection{Point Set Registration}
Point set registration is a classic PDM task that seeks to match discrete distributions, 
\ie, point sets.
This task has been extensively studied for decades,
and various methods have been proposed~\cite{besl1992a, zhang1994iterative, chui2000a, myronenko2006non, maiseli2017recent, vongkulbhisal2018inverse, hirose2021a}.
PWAN is related to the probabilistic registration methods,
which solve the registration task as a DM problem.
Specifically,
these methods first smooth the point sets as Gaussian mixture models (GMMs), 
and then align the distributions via minimizing the robust discrepancies between them.
Coherent point drift (CPD)~\cite{myronenko2006non} and its variants~\cite{hirose2021a} are well-known examples in this class,
which minimize Kullback-Leibler (KL) divergence between the distributions. 
Other robust discrepancies,
including $L_2$ distance~\cite{ma2013robust, jian2011robust}, 
kernel density estimator~\cite{tsin2004a} and scaled Geman-McClure estimator~\cite{zhou2016fast} have also been studied.

The proposed PWAN has two major differences from the existing probabilistic registration methods.
First,
it directly processes the point sets as discrete distributions instead of smoothing them to GMMs,
thus it is more concise and natural.
Second,
it solves a PDM problem instead of a DM problem,
as it only requires matching a fraction of points,
thus it is more robust to outliers.

\subsection{Partial Domain Adaptation}
Partial domain adaptation~\cite{cao2018partial, cao2022big, kuhnke2019deep} is a recent extension of the classic closed-set domain adaptation task~\cite{flamary2016optimal,ganin2016domain}.
The goal of partial adaptation is to train a classifier for the source data when the source and reference data are sampled from different domains and the source label space is a subset of the reference label space.
This task was first formally addressed by~\cite{cao2018partial},
where a re-weighting procedure was used to discard some samples as outliers,
and a DM model was then used to align the rest of the data distributions.
Other DM methods, 
such as maximum mean discrepancy network~\cite{li2020deep},
and more advanced re-weighting procedures~\cite{zhang2018importance, guo2022selective, cao2022big, gu2024adversarial} have been studied.
To eliminate the need for re-weighting procedures,
some PDM methods,
\ie, mini-batch OT methods~\cite{fatras2021unbalanced, nguyen2022improving},
have been investigated.

The proposed PWAN is conceptually close to the existing mini-batch OT methods~\cite{fatras2021unbalanced, nguyen2022improving} because they are also based on the OT theory.
However,
in contrast to these methods,
PWAN does not suffer from mini-batch errors~\cite{nguyen2022improving} as we will show in Sec.~\ref{Sec_algorithm}, 
and it explicitly uses the clean source domain assumption,
\ie, the source data is fully utilized.
On the other hand,
PWAN is a direct generalization of the WGAN-based closed-set adaptation methods~\cite{shen2018wasserstein,ganin2016domain}.
This connection will be discussed in detail in Sec.~\ref{Sec_app_PDA_alg}.
Note that apart from PDM and DM,
other formulations have been used for adaptation tasks,
such as contrastive learning~\cite{he2023addressing}, 
reinforcement learning~\cite{wu2022reinforced},
invariant feature learning~\cite{sahoo2023select},
and clustering~\cite{yang2021exploiting}.

\section{Preliminaries on PW Divergences}
\label{Sec_Preliminaries}
This section introduces the major tools used in this work,
\ie,
two types of PW divergences:
the partial-mass Wasserstein-1 discrepancy~\cite{figalli2010the, caffarelli2010free} and the bounded-distance Wasserstein-1 discrepancy~\cite{lellmann2014imaging, bogachev2007measure}.
We refer readers to~\cite{villani2016optimal} for a more complete introduction to the OT theory.

Let $\alpha, \beta \in \mathcal{M}_+(\Omega)$ be the reference and the source distribution of mass,
where $\Omega$ is a compact metric space, 
and $\mathcal{M}_+(\Omega)$ is the set of non-negative measures defined on $\Omega$.
Denote $m_\beta=\beta(\Omega)$ and $m_\alpha=\alpha(\Omega)$ the total mass of $\beta$ and $\alpha$ respectively.
The partial-mass Wasserstein-1 discrepancy $\mathcal{L}_{M, m}(\alpha, \beta)$ is defined as 
\begin{equation}
        \label{POT_primal}
        \mathcal{L}_{M, m}(\alpha, \beta)=\inf_{\pi \in \Gamma_{m}(\alpha, \beta)} \int_{\Omega \times \Omega} \vd(x,y) d\pi(x,y),
\end{equation}
where $\Gamma_{m}(\alpha, \beta)$ is the set of $\pi \in \mathcal{M}_+(\Omega \times \Omega)$ satisfying
\begin{equation}
    \pi(A \times \Omega) \leq \alpha(A), \quad \pi( \Omega \times A) \leq \beta(A) \quad and \quad \pi( \Omega \times \Omega) \geq m  \nonumber
\end{equation}
for all measurable set $A \subseteq \Omega$ and $\vd$ is the metric in $\Omega$.

In the complete OT case,
\ie, $m=m_\beta=m_\alpha$,
$\mathcal{L}_{M, m}$ is the Wasserstein-$1$ metric $\mathcal{W}_{1}$,
which is also called the earth mover's distance.
According to the Kantorovich-Rubinstein (KR) duality~\cite{kantorovich2006on},
$\mathcal{W}_{1}$ can be equivalently expressed as
\begin{equation}
    \label{KR-dual}
    \mathcal{W}_{1}(\alpha, \beta)=\sup_{\vf \in Lip(\Omega) } \int_{\Omega} \vf d\alpha - \int_{\Omega} \vf d\beta.
\end{equation}
where $Lip(\Omega)$ represents the set of Lipschitz-1 functions defined on $\Omega$.
\eqref{KR-dual} is exploited in WGAN~\cite{pmlr-v70-arjovsky17a} to efficiently compute $\mathcal{W}_{1}$,
where $\vf$ is approximated by a neural network.

Several methods have been proposed to generalize~\eqref{KR-dual} to unbalanced OT~\cite{chizat2018scaling, schmitzer2019stabilized}.
A unified framework of these approaches can be found in~\cite{schmitzer2019framework}.
Amongst these generalizations,
we considered the bounded-distance Wasserstein-1 discrepancy $\mathcal{L}_{D,h}$,
which can be regarded as~\eqref{POT_primal} with a soft mass constraint:
\begin{equation}
        \label{POT_primal_lag}
        \mathcal{L}_{D,h}(\alpha, \beta)=\inf_{ \pi \in \Gamma_0(\alpha, \beta) } \int_{\Omega \times \Omega} \vd(x,y) d\pi(x,y) - h m(\pi).
\end{equation}
Note that $\mathcal{L}_{D,h}$ can be interpreted as finding an optimal plan $\pi$ with a \emph{distance threshold $h$}.
Importantly,
$\mathcal{L}_{D,h}$ is known to have an equivalent form~\cite{lellmann2014imaging,schmitzer2019framework}
\begin{equation}
    \label{KR}
    \mathcal{L}_{D,h}(\alpha, \beta)=\sup_{ \substack{\vf \in Lip(\Omega) \\ -h \leq \vf \leq 0} }\int_{\Omega} \vf d\alpha - \int_\Omega \vf d\beta - h m_{\beta}.
\end{equation}
We call~\eqref{KR-dual} and~\eqref{KR} \emph{KR form}s,
and the solution to KR forms \emph{potential}s.
Note that we use letters $M$ and $D$ in our notation $\mathcal{L}_{M,m}$ and $\mathcal{L}_{D,h}$ to indicate that they are \textbf{M}ass and \textbf{D}istance type PW divergences.

\section{The Proposed Method}
\label{Sec_proposed_method}
We present the details of the proposed PWAN in this section.
We first formulate the PDM problem in Sec.~\ref{Sec_problem_formulation}.
Then we present an efficient approach for the problem in Sec.~\ref{Sec_Opt_PW} and~\ref{Sec_algorithm}.
We finally discuss the connections between PWAN and the well-known WGAN~\cite{pmlr-v70-arjovsky17a} in Sec.~\ref{Sec_connection_WGAN} to provide a deeper understanding of PWAN.

\subsection{Problem Formulation}
\label{Sec_problem_formulation}
Let $\alpha \in \mathcal{M}_+(\Omega)$, 
$\beta \in \mathcal{M}_+(\Omega')$ be the reference and the source distribution of mass,
where $\Omega'$ and $\Omega$ are two compact subsets of Euclidean spaces.
Let $\mathcal{T}_\theta: \Omega' \rightarrow \Omega$ denote a differentiable transformation parametrized by $\theta$,
and $\beta_\theta = (\mathcal{T}_{\theta})_{\#}\beta \in \mathcal{M}_+(\Omega)$ denote the push-forward of $\beta$ by $\mathcal{T}_\theta$.
Our goal is to \emph{partially} match $\beta_\theta$ to $\alpha$ by solving
\begin{equation}
    \label{Objctive}
    \min_{\theta}\mathcal{L}(\alpha, \beta_\theta) + \mathcal{C}(\theta),
\end{equation}
where $\mathcal{L}$ represents $\mathcal{L}_{M,m}$ or $\mathcal{L}_{D,h}$,
which measures the dissimilarity between $\beta_\theta$ and $\alpha$,
and $\mathcal{C}$ is a differentiable regularizer that reduces the ambiguity of solutions.

\subsection{Partial Matching via PW Divergences}
\label{Sec_Opt_PW}

To see the effectiveness of framework~\eqref{Objctive},
we present a toy example of matching discrete distributions in Fig.~\ref{C}.
Let $Y=\{y_j\}_{j=1}^r$ and $X=\{x_i\}_{i=1}^q$ be the source and reference 2-D point sets,
$\beta=\sum_{y_j \in Y} \delta_{y_j}$ and $\alpha=\sum_{x_i \in X} \delta_{x_i}$ be the corresponding discrete distributions,
where $\delta$ is the Dirac function.
We seek to find a transformation $\mathcal{T}_\theta$ that partially matches $Y$ to $X$ by solving~\eqref{Objctive}.

By expressing $\mathcal{L}_{M,m}$~\eqref{POT_primal} and $\mathcal{L}_{D,h}$~\eqref{POT_primal_lag} in their primal forms,
we can write \eqref{Objctive} as
\begin{equation}
    \label{our_primal}
    \min_{\theta} \sum_{i,j} \pi_{ij} \vd(x_i, \mathcal{T}_\theta(y_j)) + \mathcal{C}(\theta) + const,
\end{equation}
where $const$ is a constant that is not related to $\theta$, 
and $\pi \in \mathbb{R}^{q \times r}$ is the solution to~\eqref{POT_primal} or~\eqref{POT_primal_lag} obtained via linear programming.
We represent the non-zero elements in $\pi$ by line segments in Fig.~\ref{C}.
As can be seen,
the value of~\eqref{our_primal} only depends on the distances between the corresponding point pairs that are within the mass or distance threshold,
thus minimizing~\eqref{our_primal} will only align these point pairs subjecting to the regularizer $\mathcal{C}$,
while omitting all other points, 
\ie, the hollow points.
In other words,
$\mathcal{L}_{M,m}$ and $\mathcal{L}_{D,h}$ provide two ways to control the degree of alignment of distributions based on the mass or distance criteria.
\begin{figure}[htb!]
  \centering
  \subfigure[$\mathcal{L}_{M,6}$]{
    \includegraphics[width=0.45\linewidth]{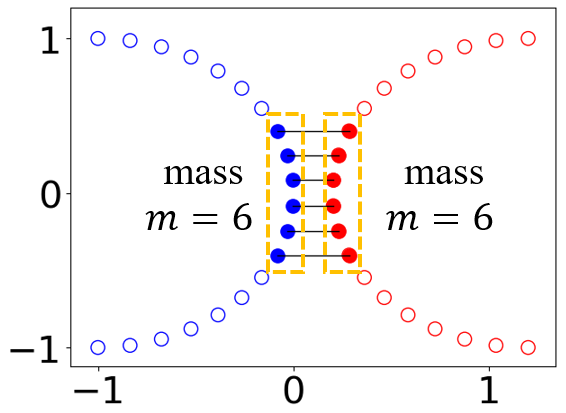}
  }
  \subfigure[$\mathcal{L}_{D,0.9}$]{
    \includegraphics[width=0.45\linewidth]{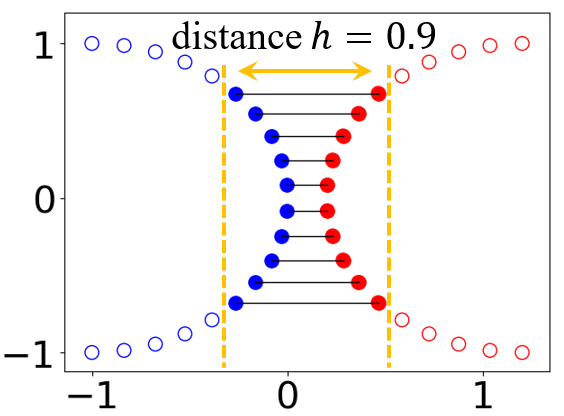}
  }
\caption{The computed correspondence $\pi$ between $\alpha$ (blue) and $\beta_\theta$ (red).
}
\label{C}
\end{figure}

However,
although the solution to~\eqref{our_primal} indeed partially matches the distributions,
it is intractable for large-scale discrete distributions or continuous distributions,
due to the high computational cost of solving for $\pi$ using linear programming in each iteration.

To address this challenge,
we propose to avoid computing $\pi$ by optimizing $\mathcal{L}_{D,h}$ or $\mathcal{L}_{M,m}$ in their KR forms as an alternative to their primal forms.
To this end,
we first present the KR forms for $\mathcal{L}_{D,h}$ and $\mathcal{L}_{M,m}$,
and then show that they are differentiable.
As for $\mathcal{L}_{D,h}$,
the KR form is known in~\eqref{KR},
and we further show that it is valid to compute its gradient.
We also show that a similar statement for $\mathcal{L}_{M,m}$ holds.
Specifically,
we have the following two theorems:
\begin{theorem}
  \label{Theorem1}
  $\mathcal{L}_{D,h}(\alpha, \beta)$ can be equivalently expressed as~\eqref{KR}.
  There is a solution $\vf^*:\Omega \rightarrow \mathbb{R}$ to problem~\eqref{KR}.
  If $\mathcal{T}_\theta$ satisfies a mild assumption,
  then $\mathcal{L}_{D,h}(\alpha, \beta_\theta)$ is continuous \wrt $\theta$, and is differentiable almost everywhere.
  Furthermore,
  we have
  \begin{equation}
      \label{grad_dis}
      \nabla_\theta \mathcal{L}_{D,h}(\alpha, \beta_\theta) = - \int_\Omega \nabla_\theta (\vf^* \circ \mathcal{T}_\theta) d\beta
  \end{equation}
  when both sides are well-defined.
\end{theorem}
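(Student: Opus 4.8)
My plan is to establish the four assertions in turn.

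\textbf{Equivalence with~\eqref{KR} and existence of a maximizer.} The identity $\mathcal{L}_{D,h}(\alpha,\beta)=$~\eqref{KR} is already available from~\cite{lellmann2014imaging,schmitzer2019framework}, so I would invoke it directly. For existence of a solution, note that the feasible set $F:=\{\vf\in Lip(\Omega):-h\le\vf\le 0\}$ is nonempty, uniformly bounded, and $1$-equi-Lipschitz; since $\Omega$ is compact, Arzel\`a--Ascoli makes $F$ a compact subset of $C(\Omega)$ in the uniform topology. The objective $\vf\mapsto\int_\Omega\vf\,d\alpha-\int_\Omega\vf\,d\beta-h\,m_\beta$ is affine and continuous for $\|\cdot\|_\infty$ (because $\alpha,\beta\in\mathcal{M}_+(\Omega)$ are finite measures), hence it attains its supremum on $F$ at some $\vf^*$; the same argument applies with any $\beta\in\mathcal{M}_+(\Omega)$, in particular with $\beta_\theta$ in place of $\beta$.

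\textbf{Continuity and a.e.\ differentiability.} Set $g(\theta):=\mathcal{L}_{D,h}(\alpha,\beta_\theta)$. Because $\mathcal{T}_\theta$ maps $\Omega'$ into $\Omega$, the push-forward $\beta_\theta$ has total mass $m_\beta$ independent of $\theta$, and a change of variables gives $\int_\Omega\vf\,d\beta_\theta=\int_{\Omega'}(\vf\circ\mathcal{T}_\theta)\,d\beta$; thus by~\eqref{KR},
\begin{equation*}
g(\theta)=\sup_{\vf\in F}G_\vf(\theta),\qquad G_\vf(\theta):=\int_\Omega\vf\,d\alpha-\int_{\Omega'}(\vf\circ\mathcal{T}_\theta)\,d\beta-h\,m_\beta.
\end{equation*}
I would then invoke the stated mild assumption on $\mathcal{T}_\theta$---which in particular makes $\theta\mapsto\mathcal{T}_\theta(y)$ Lipschitz uniformly in $y$, with some constant $L$, and differentiable in $\theta$---and combine it with the $1$-Lipschitzness of every $\vf\in F$ to get $|G_\vf(\theta_1)-G_\vf(\theta_2)|\le\int_{\Omega'}\|\mathcal{T}_{\theta_1}(y)-\mathcal{T}_{\theta_2}(y)\|\,d\beta(y)\le L\,m_\beta\,\|\theta_1-\theta_2\|$, a Lipschitz bound uniform in $\vf$. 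Hence $g$, a supremum of $L m_\beta$-Lipschitz functions, is itself $L m_\beta$-Lipschitz; in particular it is continuous, and by Rademacher's theorem it is differentiable at Lebesgue-almost every $\theta$.

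\textbf{The gradient formula.} Here I would use an envelope (Danskin-type) argument at a point $\theta_0$ of differentiability of $g$. Let $\vf^*$ be a maximizer in~\eqref{KR} for the pair $(\alpha,\beta_{\theta_0})$. Since $\vf^*\in F$ is feasible for every $\theta$, we have $g(\theta)\ge G_{\vf^*}(\theta)$ for all $\theta$, with equality at $\theta_0$; therefore $g-G_{\vf^*}$ is nonnegative and vanishes at $\theta_0$, and if $G_{\vf^*}$ is differentiable at $\theta_0$ its gradient there must vanish, so $\nabla_\theta g(\theta_0)=\nabla_\theta G_{\vf^*}(\theta_0)$. It then remains to differentiate $G_{\vf^*}$ under the integral sign: the difference quotients $t^{-1}\big[(\vf^*\circ\mathcal{T}_{\theta_0+tv})(y)-(\vf^*\circ\mathcal{T}_{\theta_0})(y)\big]$ are bounded in modulus by $L\|v\|$ uniformly in $y$ (by the $1$-Lipschitzness of $\vf^*$ and the uniform Lipschitz bound on $\mathcal{T}$), so the dominated convergence theorem gives $\nabla_\theta G_{\vf^*}(\theta_0)=-\int_{\Omega'}\nabla_\theta(\vf^*\circ\mathcal{T}_\theta)\big|_{\theta_0}\,d\beta$, which is~\eqref{grad_dis}.

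\textbf{Main obstacle.} The delicate point is differentiating through the composition $\vf^*\circ\mathcal{T}_\theta$. The potential $\vf^*$ is only $1$-Lipschitz, so it is differentiable merely Lebesgue-a.e.\ (Rademacher), and there is no a priori guarantee that $\theta\mapsto(\vf^*\circ\mathcal{T}_\theta)(y)$ is differentiable at $\theta_0$ for $\beta$-a.e.\ $y$, nor that $G_{\vf^*}$ is differentiable at $\theta_0$---this is exactly what the qualifier ``when both sides are well-defined'' handles. Under that qualifier (the integrand in~\eqref{grad_dis} exists $\beta$-a.e.\ at $\theta_0$ and is $\beta$-integrable), the chain rule applies pointwise, the dominated-convergence step produces the one-sided directional derivatives of $G_{\vf^*}$, and one uses that a Lipschitz function on an open subset of $\mathbb{R}^d$ that is G\^ateaux differentiable at a point is Fr\'echet differentiable there, so that the envelope comparison goes through verbatim. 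The remaining ingredients---Arzel\`a--Ascoli, Rademacher's theorem, and dominated convergence---are standard.
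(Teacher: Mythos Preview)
Your proposal is correct and follows essentially the same architecture as the paper: cite \cite{lellmann2014imaging,schmitzer2019framework} for the KR duality and existence of a maximizer, establish local Lipschitz continuity of $\theta\mapsto\mathcal{L}_{D,h}(\alpha,\beta_\theta)$ and invoke Rademacher, then use an envelope/Danskin argument (exactly as the paper does, referring to \cite{pmlr-v70-arjovsky17a}) for the gradient formula, with the same ``when both sides are well-defined'' caveat handling the non-smoothness of $\vf^*$.

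The one methodological difference is in how Lipschitz continuity is obtained. You bound each $G_{\vf}(\theta)$ uniformly over $\vf\in F$ using the $1$-Lipschitzness of $\vf$ together with the assumed Lipschitz dependence of $\mathcal{T}_\theta(y)$ on $\theta$, and then take the supremum. The paper instead observes that $\textit{KR}_h(\alpha,\beta):=\mathcal{L}_{D,h}(\alpha,\beta)+\tfrac{h}{2}(m_\alpha+m_\beta)$ is a genuine metric on $\mathcal{M}_+(\Omega)$ \cite{lellmann2014imaging,bogachev2007measure}, applies the triangle inequality to get $|\mathcal{L}_{D,h}(\alpha,\beta_\theta)-\mathcal{L}_{D,h}(\alpha,\beta_{\theta'})|\le \textit{KR}_h(\beta_\theta,\beta_{\theta'})$, and then bounds $\textit{KR}_h(\beta_\theta,\beta_{\theta'})\le\mathcal{W}_1(\beta_\theta,\beta_{\theta'})\le m_\beta\,\Delta(\tilde\theta)\,\|\theta-\theta'\|$. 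Your route is more self-contained (it does not require knowing that $\textit{KR}_h$ is a metric), while the paper's route is slightly slicker once that fact is in hand; both yield the same local Lipschitz constant $m_\beta\,\Delta(\tilde\theta)$ under Assumption~\ref{assumption_1}.
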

\begin{theorem}
    \label{Theorem2}
    $\mathcal{L}_{M,m}(\alpha, \beta)$ can be equivalently expressed as
    \begin{equation}
        \label{PW-m}
        \mathcal{L}_{M, m}(\alpha, \beta)=\sup_{\substack{\vf \in Lip(\Omega), h \in \mathbb{R}_+ \\ -h \leq \vf \leq 0 }}  \int_\Omega \vf d(\alpha -\beta) +h(m-m_{\beta}).
    \end{equation}
    There is a solution $\vf^*:\Omega \rightarrow \mathbb{R}$ to problem~\eqref{PW-m}.
    If $\mathcal{T}_\theta$ satisfies a mild assumption,
    then $\mathcal{L}_{M,m}(\alpha, \beta_\theta)$ is continuous \wrt $\theta$, and is differentiable almost everywhere.
    Furthermore,
    we have
    \begin{equation}
        \label{grad_mass}
        \nabla_\theta \mathcal{L}_{M,m}(\alpha, \beta_\theta) = - \int_\Omega \nabla_\theta (\vf^* \circ \mathcal{T}_\theta) d\beta
    \end{equation}
    when both sides are well-defined.
\end{theorem}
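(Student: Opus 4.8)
The plan is to derive Theorem~\ref{Theorem2} from Theorem~\ref{Theorem1} by exploiting the Lagrangian relationship between the mass‑constrained problem $\mathcal{L}_{M,m}$ and the distance‑thresholded problem $\mathcal{L}_{D,h}$. The first step is to establish the identity
\begin{equation}
\mathcal{L}_{M,m}(\alpha,\beta) \;=\; \sup_{h\ge 0}\,\big[\,\mathcal{L}_{D,h}(\alpha,\beta) + hm\,\big]. \nonumber
\end{equation}
To see this, I would rewrite the primal~\eqref{POT_primal} as $\inf_{\pi}\sup_{h\ge0}\big[\int_{\Omega\times\Omega}\vd\,d\pi + h(m-\pi(\Omega\times\Omega))\big]$, where $\pi$ ranges over the set of sub‑couplings in $\mathcal{M}_+(\Omega\times\Omega)$ dominated by $\alpha$ and $\beta$ (the inner supremum equals $\int\vd\,d\pi$ on $\Gamma_m(\alpha,\beta)$ and $+\infty$ otherwise). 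This set is convex and weak‑$*$ compact because $\Omega$ is compact and the marginal constraints force total mass $\le\min(m_\alpha,m_\beta)$; the integrand is weak‑$*$ continuous and linear in $\pi$ (both $\vd$ and $\1$ are bounded and continuous on the compact $\Omega\times\Omega$) and affine in $h$. Hence Sion's minimax theorem applies and lets me interchange $\inf_\pi$ and $\sup_{h\ge0}$; the resulting inner infimum over $\pi$ is exactly $hm+\mathcal{L}_{D,h}(\alpha,\beta)$. Substituting the KR form~\eqref{KR} of $\mathcal{L}_{D,h}$ furnished by Theorem~\ref{Theorem1} and merging the two suprema over $h$ and $\vf$ produces precisely~\eqref{PW-m}.

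For the existence claim I would show the outer supremum over $h$ is attained. The map $h\mapsto\mathcal{L}_{D,h}(\alpha,\beta)$ is an infimum of affine functions of $h$, hence concave and upper semicontinuous; and for $h>\mathrm{diam}\,\Omega$ an optimal plan saturates the marginals, so $\mathcal{L}_{D,h}(\alpha,\beta)+hm$ behaves like $C-h(\min(m_\alpha,m_\beta)-m)$ and tends to $-\infty$ under the standing assumption $m<\min(m_\alpha,m_\beta)$ (the boundary case $m=\min(m_\alpha,m_\beta)$ is handled by noting the objective is eventually constant in $h$). A concave, upper semicontinuous, coercive function on $[0,\infty)$ attains its maximum at some $h^*$; applying the existence part of Theorem~\ref{Theorem1} with threshold $h^*$ yields a $1$‑Lipschitz $\vf^*$ with $-h^*\le\vf^*\le0$ attaining~\eqref{KR}, and then $(\vf^*,h^*)$ attains~\eqref{PW-m}.

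For continuity and local Lipschitzness of $\theta\mapsto\mathcal{L}_{M,m}(\alpha,\beta_\theta)$ I would use the dual representation~\eqref{PW-m} applied to the pair $(\alpha,\beta_\theta)$: the only $\theta$‑dependence sits in the term $-\int_\Omega\vf\!\circ\mathcal{T}_\theta\,d\beta$, and since $\vf$ is $1$‑Lipschitz and $\mathcal{T}_\theta$ is locally Lipschitz in $\theta$ uniformly over $y\in\Omega'$ (the ``mild assumption''), this term is locally Lipschitz in $\theta$ with a modulus independent of the feasible pair $(\vf,h)$; a supremum of uniformly locally Lipschitz functions is locally Lipschitz, hence continuous, and differentiable almost everywhere by Rademacher's theorem. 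For the gradient formula, fix $\theta_0$ at which both $\nabla_\theta\mathcal{L}_{M,m}(\alpha,\beta_\theta)$ exists and $\int_\Omega\nabla_\theta(\vf^*\!\circ\mathcal{T}_\theta)\,d\beta$ is well‑defined, and take a maximizer $(\vf^*,h^*)$ of~\eqref{PW-m} at $(\alpha,\beta_{\theta_0})$ from the previous step. Since this pair stays feasible for~\eqref{PW-m} at every $\beta_\theta$ (the constraints and $m_{\beta_\theta}=m_\beta$ do not depend on $\theta$), the function $\psi(\theta):=\mathcal{L}_{M,m}(\alpha,\beta_\theta)-\int_\Omega\vf^*\,d\alpha+\int_\Omega\vf^*\!\circ\mathcal{T}_\theta\,d\beta-h^*(m-m_\beta)$ is nonnegative and vanishes at $\theta_0$, so $\theta_0$ is a global minimizer of $\psi$; it is differentiable at $\theta_0$ by the choice of $\theta_0$, whence $\nabla\psi(\theta_0)=0$, which rearranges to~\eqref{grad_mass}.

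The main obstacle I anticipate is the minimax exchange in the first step: one must be careful with the topology on the space of sub‑couplings (weak‑$*$ closedness and compactness of the feasible set, and weak‑$*$ continuity of $\pi\mapsto\pi(\Omega\times\Omega)$, which hinges on compactness of $\Omega$) and must verify that Sion's hypotheses genuinely hold with the unbounded multiplier set $[0,\infty)$ — the compactness of the $\pi$‑side is what rescues the argument. A second, milder technicality is passing $\nabla_\theta$ under the integral sign against the merely Lipschitz potential $\vf^*$; this is exactly the point encapsulated by the mild assumption on $\mathcal{T}_\theta$ and the ``when both sides are well‑defined'' proviso, and it is inherited verbatim from the corresponding step for $\mathcal{L}_{D,h}$ in Theorem~\ref{Theorem1}.
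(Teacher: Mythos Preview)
Your proposal is correct and takes a genuinely different route from the paper on all three claims. For the KR form, the paper computes the Fenchel--Rockafellar dual of $\mathcal{L}_{M,m}$ from scratch (Proposition~\ref{app_dual_m}), does the same for $\mathcal{L}_{D,h}$ (Proposition~\ref{app_dual_h}), and then reads off the Lagrangian identity $\mathcal{L}_{M,m}=\sup_{h\ge0}[\mathcal{L}_{D,h}+hm]$ by comparing the two duals; you obtain the same identity by a Sion minimax argument on the primal side, which avoids the FR machinery but needs the weak-$*$ compactness check you flag. For existence, the paper works directly with a maximizing sequence $(\vf_n,h_n)$, uses a translation trick to force uniform boundedness by $\mathrm{diam}(\Omega)$, and then invokes Arzel\`a--Ascoli (Proposition~\ref{app_Existence_M}); you instead show the outer supremum over $h$ is attained by concavity, upper semicontinuity and coercivity, and then borrow the existence of $\vf^*$ from Theorem~\ref{Theorem1} at the optimal $h^*$, which is more modular. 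For local Lipschitzness in $\theta$, the paper argues on the primal side via the triangle inequality for $\mathcal{W}_1$ combined with Lemma~\ref{unimportant_M} (Proposition~\ref{app_Smooth_M_con}); you argue on the dual side that each feasible pair $(\vf,h)$ gives a function of $\theta$ with a common local Lipschitz modulus, so the supremum inherits it. Your route is more economical because it systematically reduces everything to Theorem~\ref{Theorem1}; the paper's route is more self-contained and produces the full FR dual~\eqref{app_dual_m_eq} of $\mathcal{L}_{M,m}$ as a byproduct, which may be of independent interest.
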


Theorem~\ref{Theorem1} and~\ref{Theorem2} immediately allow us to approximate $\mathcal{L}_{D,h}$ or $\mathcal{L}_{M,m}$ using neural networks.
Specifically,
let $\vf_{w,h}$ be a neural network satisfying $-h \leq \vf_{w,h} \leq 0$ and $\|\nabla \vf_{w,h} \| \leq 1$,
where $w$ and $h \in \mathbb{R}_{+}$ are parameters of the network $\vf_{w,h}$,
and $\|\nabla \vf_{w,h} \|$ is the gradient of $\vf$ \wrt the input.
We can compute $\mathcal{L}_{M,m}(\alpha, \beta_\theta)$ and $\mathcal{L}_{D,h}(\alpha, \beta_\theta)$ according to~\eqref{KR} and~\eqref{PW-m}:
    \begin{align}
      \label{loss_m}
      \mathcal{L}_{M,m}(\alpha, \beta_\theta)= \max_{w,h} \int_{\Omega} \vf_{w,h}d\alpha  -  & \int_{\Omega}  (\vf_{w,h} \circ \mathcal{T}_{\theta}) d\beta \nonumber \\
      & + h(m-m_{\beta}), 
    \end{align}
and
\begin{equation}
    \label{loss_d}
    \mathcal{L}_{D,h}(\alpha, \beta_\theta) = \max_{w} \int_{\Omega} \vf_{w,h}d\alpha  -  \int_{\Omega}  (\vf_{w,h} \circ \mathcal{T}_{\theta}) d\beta - hm_\beta
\end{equation}
using gradient descent and back-propagation.

Using the approximated KR forms of $\mathcal{L}_{M,m}$~\eqref{loss_m} and $\mathcal{L}_{M,m}$~\eqref{loss_d},
we can finally rewrite~\eqref{Objctive} as
\begin{equation}
  \label{our_kr}
  \min_\theta  \bigl( - \vf_{w,h}^*(\mathcal{T}_{\theta}(y_j)) + \mathcal{C}(\theta) + const \bigr),
\end{equation}
where $\vf_{w,h}^*$ is the solution to $\mathcal{L}_{M,m}$~\eqref{loss_m} or $\mathcal{L}_{D,h}$~\eqref{loss_d}.
To see the connection between this KR-based formulation~\eqref{our_kr} and the primal formulation~\eqref{our_primal},
we present an example in Fig.~\ref{app_Fish_complete}.
In contrast to~\eqref{our_primal},
which matches a fraction of $\beta_\theta$ to the corresponding points in $\alpha$ (1-st row),
\eqref{our_kr} is correspondence-free, 
and it seeks to move $\beta_\theta$ along the direction of $\nabla \vf_{w,h}^*$.
Note that the points with zero gradients (hollow points in the 3-rd row) are omitted by~\eqref{our_kr},
and the same group of points (hollow points in the 1-st row) are also omitted by~\eqref{our_primal},
which indicates the consistency of these two formulations.
Further numerical results in Appx. B.1 confirm that these two forms are indeed consistent.
More discussions of the property of the potential are presented in Sec.~\ref{Sec_connection_WGAN} and Appx. A.2.

\begin{figure*}[htb!]
	\centering
    \begin{minipage}[b]{0.02\linewidth}
        \small 
        $\pi$ \\ \vspace{24mm}
        $\vf^*$ \\ \vspace{22mm}
        $||\nabla \vf^*||$ \\ \vspace{14mm}
    \end{minipage}
    \hspace{4.8mm}
    \begin{minipage}[b]{0.89\linewidth}
        \subfigure[$\mathcal{L}_{M,25}$ or $\mathcal{L}_{D,0.64}$]{
            \begin{minipage}[b]{0.24\linewidth}
                \includegraphics[width=1\linewidth]{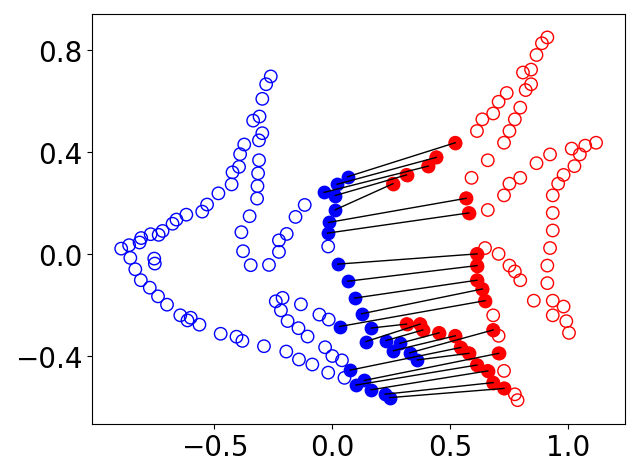} \\\vspace{-2ex}
                \includegraphics[width=1\linewidth]{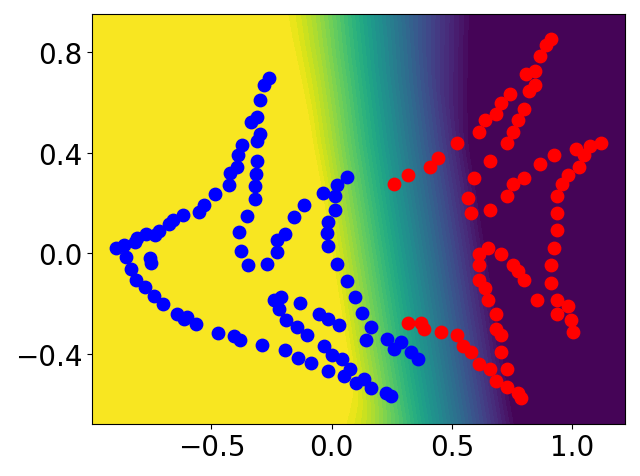} \\\vspace{-2ex}
                \includegraphics[width=1\linewidth]{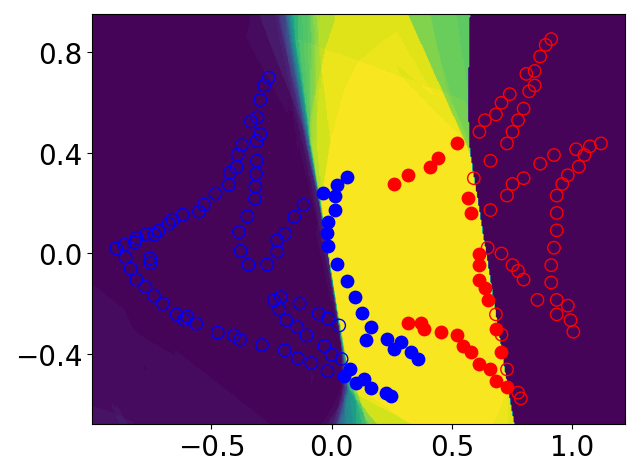} \\\vspace{-2ex}
            \end{minipage}
            \label{app_Fish_complete_1}
        }
        \hspace{-4.3mm}
        \subfigure[$\mathcal{L}_{M,50}$ or $\mathcal{L}_{D,1.09}$]{
            \begin{minipage}[b]{0.24\linewidth}
                \includegraphics[width=1\linewidth]{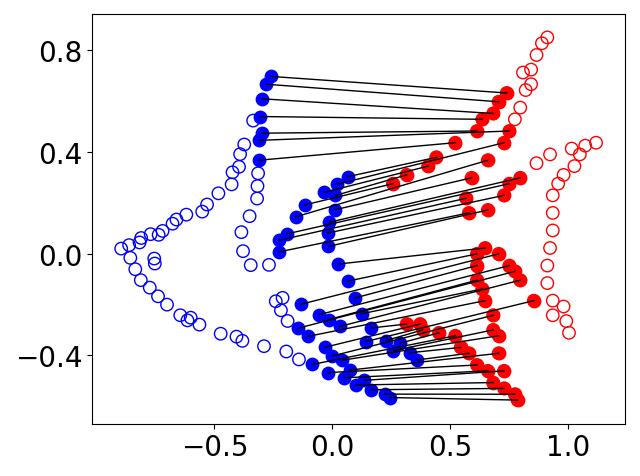} \\\vspace{-2ex}
                \includegraphics[width=1\linewidth]{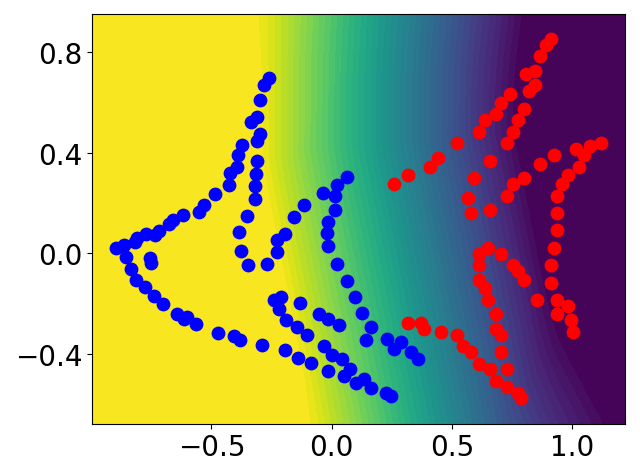} \\\vspace{-2ex}
                \includegraphics[width=1\linewidth]{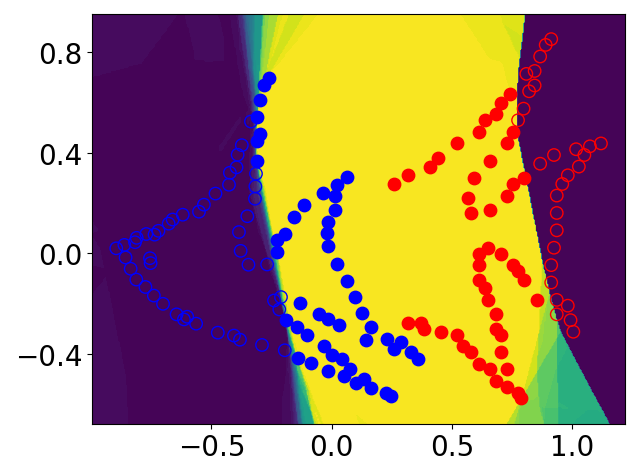} \\\vspace{-2ex}
            \end{minipage}
            \label{app_Fish_complete_2}
        }
        \hspace{-4.3mm}
        \subfigure[$\mathcal{L}_{M,78}$ or $\mathcal{L}_{D,5}$]{
            \begin{minipage}[b]{0.24\linewidth}
                \includegraphics[width=1\linewidth]{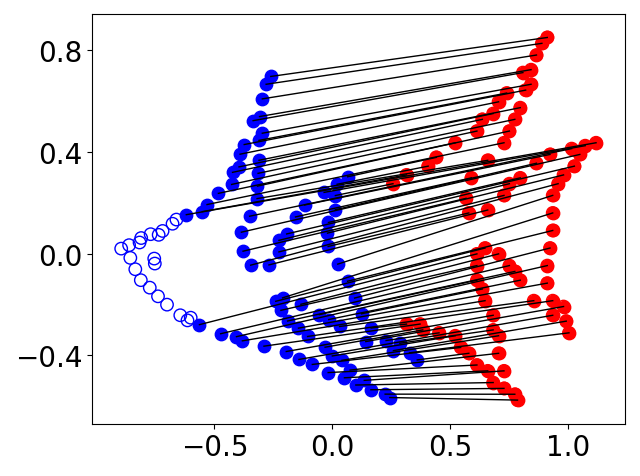} \\\vspace{-2ex}
                \includegraphics[width=1\linewidth]{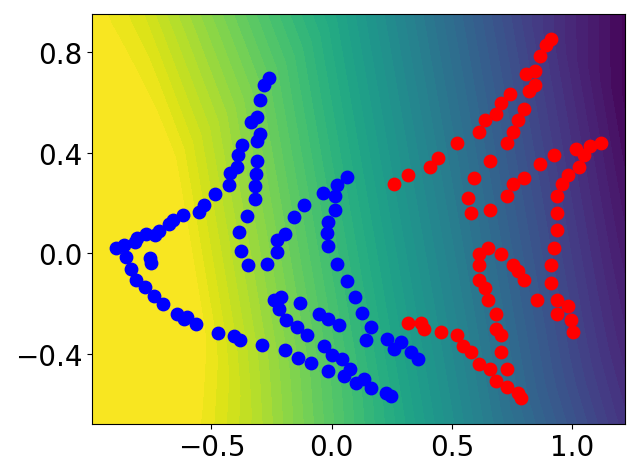} \\\vspace{-2ex}
                \includegraphics[width=1\linewidth]{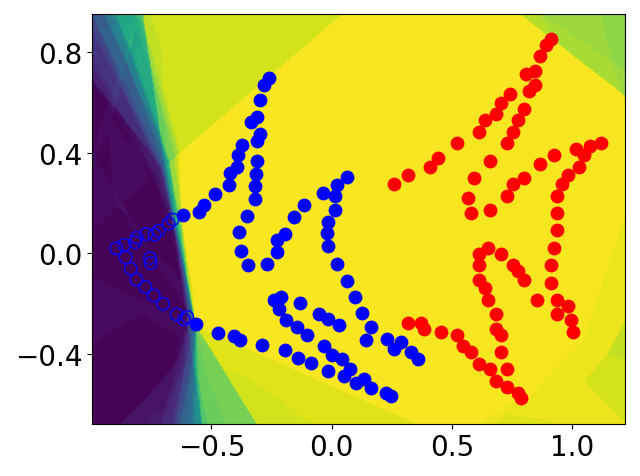} \\\vspace{-2ex}
            \end{minipage}
            \label{app_Fish_complete_3}
        }
        \hspace{-4.3mm}
        \subfigure[$\mathcal{W}_{1}$, $\mathcal{L}_{M,98}$ or $\mathcal{L}_{D,100}$]{
            \begin{minipage}[b]{0.24\linewidth}
                \includegraphics[width=1\linewidth]{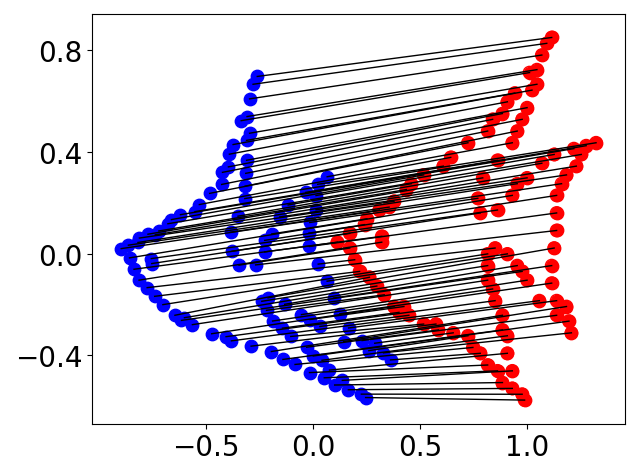} \\\vspace{-2ex}
                \includegraphics[width=1\linewidth]{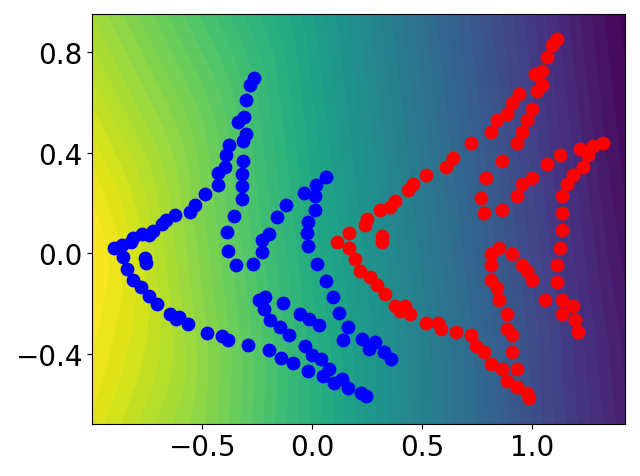} \\\vspace{-2ex}
                \includegraphics[width=1\linewidth]{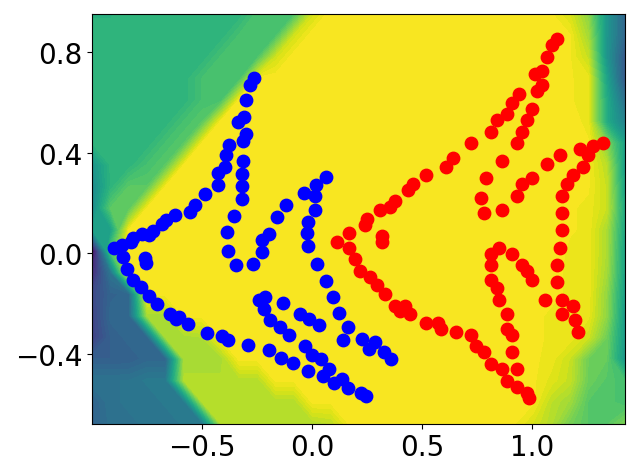} \\\vspace{-2ex}
            \end{minipage}
            \label{app_Fish_complete_4}
        }
    \end{minipage}
    \hspace{-8.4mm}
    \begin{minipage}[b]{0.051\linewidth}
        \includegraphics[width=1\linewidth]{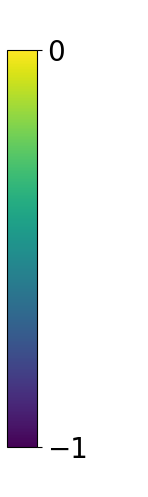} \\\vspace{-3.4ex}
        \includegraphics[width=1\linewidth]{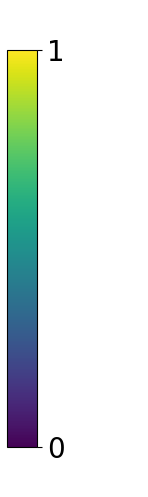} \\ \vspace{-2.0ex}
    \end{minipage}
\vspace{-2mm}
  \caption{
    Comparison of the primal form and our approximated KR form on discrete distributions $\alpha$ (blue) and $\beta_\theta$ (red).
    The solutions to these two forms are presented in the 1-st and 2-nd row respectively,
    and the gradients of the potentials are presented in the 3-rd row.
  }
	\label{app_Fish_complete}
\end{figure*}

\subsection{Algorithm}
\label{Sec_algorithm}
We now develop a concrete PDM algorithm based on approximated KR forms discussed in Sec.~\ref{Sec_Opt_PW}.
We use the following three techniques to facilitate efficient computations.
First,
we need to ensure that the neural network $\vf_{w,h}$ is a valid potential function,
\ie, bounded and Lipschitz-1.
We ensure the boundedness by defining $\vf_{w,h}(x)=\textit{clip}(-|\vf_w(x)|, -h, 0 )$,
where $\vf_w$ is a multi-layer perception (MLP) with learnable parameter $w$,
and $\textit{clip}(\cdot)$ is the clip function,
\ie, we take the negative absolute value of the output of $\vf_w(x)$,
and then clip it to the interval $[-h, 0]$.
Note that here we use $\textit{clip}(\cdot)$ instead of the more commonly used bounded activation functions like $\textit{sigmoid}$ or $\textit{tanh}$,
because it allows $\vf_{w,h}$ to attain the exact maximum and minimum values $0$ and $-h$ (Prop.~6 and Cor.~4 in the appendix),
\ie, the brightest and darkest colors in the 2-nd row in Fig.~\ref{app_Fish_complete}.
To ensure the Lipschitzness,
we add a gradient penalty~\cite{gulrajani2017improved} $\textit{GP}(\vf) = \max_{\hat{x} \in \Omega}\{||\nabla_{\hat{x}}\vf(\hat{x})||^2, 1\}$ to the training loss defined below.

Second,
to handle continuous or large-scale distributions,
we learn $\vf_{w,h}$ in a mini-batch manner.
Specifically,
in each iteration,
we sample $\tilde{q}$ and $\tilde{r}$ i.i.d. samples $\{x_i\}_{i=1}^{\tilde{q}}$ and $\{y_i\}_{i=1}^{\tilde{r}}$ from the normalized distributions $\frac{1}{m_\alpha} \alpha$ and $\frac{1}{m_\beta} \beta$ respectively,
and approximate $\alpha$ and $\beta$ by their empirical distributions: 
$\tilde{\alpha}=\frac{m_\alpha}{\tilde{q}}\sum_{i=1}^{\tilde{q}} \delta_{x_i}$ and  $\tilde{\beta}=\frac{m_\beta}{\tilde{r}}\sum_{i=1}^{\tilde{r}} \delta_{y_i}$.

Finally,
instead of solving for the optimal $\vf^*_{w,h}$,
we only update $\vf_{w,h}$ for a few steps in each iteration to reduce the computation cost.
In summary,
by combining all three techniques,
we optimize
\begin{equation}
    \label{adv_object}
    \min_\theta \max_{\tilde{w}} L(\alpha, \beta_\theta; \tilde{w}) + \mathcal{C}(\theta)
\end{equation}
by alternatively updating $\tilde{w}$ and $\theta$ using gradient descent,
where 
\begin{align}
  \label{loss_m_app}
  L = \frac{m_\alpha}{\tilde{q}}  \sum_{i=1}^{\tilde{q}} & \vf_{w,h}(x_i)  -  \frac{m_\beta}{\tilde{r}} \sum_{j=1}^{\tilde{r}} \vf_{w,h}(\mathcal{T}_{\theta}(y_j))  \nonumber \\
  & +  h(m-m_{\beta}) - GP(\vf_{w,h})
\end{align}
and $\tilde{w} = \{w, h\}$ for $\mathcal{L}_{M,m}$;
\begin{align}
  \label{loss_d_app}
  L = \frac{m_\alpha}{\tilde{q}} \sum_{i=1}^{\tilde{q}} \vf_{w,h}(x_i)  -  \frac{m_\beta}{\tilde{r}}    \sum_{j=1}^{\tilde{r}} &  \vf_{w,h}(\mathcal{T}_{\theta}(y_j)) \nonumber \\
  &- GP(\vf_{w,h})
\end{align}
and $\tilde{w} = \{w\}$ for $\mathcal{L}_{D,h}$.
The gradient at each iteration can be computed as
\begin{equation}
  \label{gradient_app}
  \nabla_\theta L = - \frac{m_\beta}{\tilde{r}}\sum_{j=1}^{\tilde{r}}  \nabla_\theta \vf_{w,h}(\mathcal{T}_{\theta}(y_i)).
\end{equation}

We call our method partial Wasserstein adversarial network (PWAN),
and call $\vf_{w,h}$ potential network.
The detailed algorithm is presented in Alg.~\ref{our_algorithm}.
For clearness,
we refer to PWAN based on $\mathcal{L}_{M,m}$ and $\mathcal{L}_{D,h}$ as mass-type PWAN (m-PWAN) or distance-type PWAN (d-PWAN) respectively.
These two types of PWAN are generally not equivalent despite their close relation as described in lemma~5 in the appendix.
Specifically,
for each $(\alpha, \beta_\theta, m)$,
there exists an $h^*$ such that the solution to $\mathcal{L}_{D,h^*}(\alpha, \beta_\theta)$ recovers to that of $\mathcal{L}_{M,m}(\alpha, \beta_\theta)$,
but optimizing $\mathcal{L}_{M,m}(\alpha, \beta_\theta)$ is generally not equivalent to optimizing $\mathcal{L}_{D,h}(\alpha, \beta_\theta)$ with any fixed $h$,
because the corresponding $h^*$ depends on $\beta_\theta$,
which varies during the optimization process.
A special case where m-PWAN and d-PWAN are equivalent are presented in corollary~\ref{corollary-P}.

\begin{figure}[t!]
  \centering
  \includegraphics[width=0.4\linewidth]{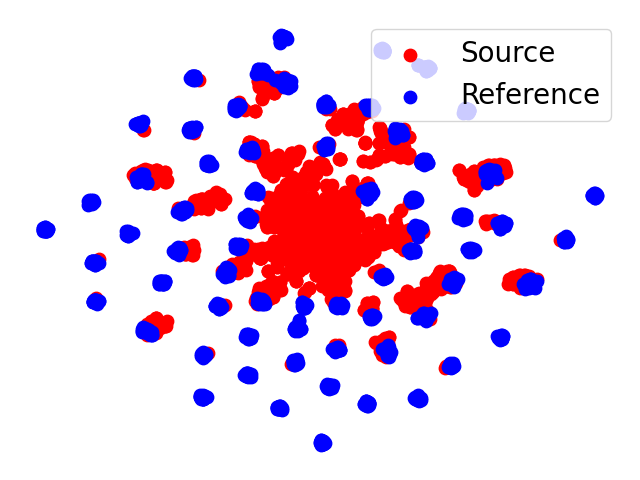}
  \includegraphics[width=0.45\linewidth]{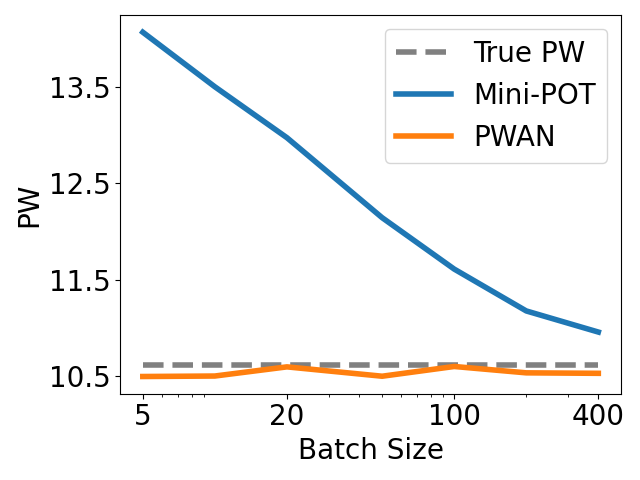}
\vspace{-2mm}
\caption{Comparison between PWAN and mini-POT under various batch sizes.
$\alpha$ and $\beta$ are visualized on the left panel.
}
\vspace{-4mm}
\label{Algo-comparison}
\end{figure}

We finally remark that one important property of PWAN is that unlike the popular mini-batch OT method mini-POT~\cite{fatras2021unbalanced},
PWAN theoretically does not introduce mini-batch errors,
\ie, it does not require large batch sizes to obtain accurate results.
This is due to the fact that $\vf_{w,h}$ is shared across all samples instead of being computed independently for each batch.
We numerically illustrate this property in Fig.~\ref{Algo-comparison},
where we compute $\mathcal{L}_{M,m}$ using PWAN and mini-POT with various batch sizes.
We also compute the true PW discrepancy value using linear programming on the whole dataset for comparison.
As can be seen,
PWAN produces consistent values under different batch sizes despite some randomness brought by the stochastic gradient descent process, 
and the relative error remains low (less than $1\%$).
In contrast,
mini-POT suffers from large relative errors (about $30\%$) when the batch size is small,
and very large batch sizes (at least $400$) are required to achieve comparable accuracy with PWAN.
The result demonstrates the advantage of mini-batch PWAN especially when the batch size is small.
Further comparison with the improved mini-POT~\cite{nguyen2022improving} and more details of this example can be found in Appx.~B.2.

\begin{algorithm}[th]
	\caption{PDM using PWAN}
  \label{our_algorithm}
	\begin{algorithmic}
    \Require reference distribution $\alpha$, source distribution $\beta$, transform $\mathcal{T}_\theta$, network $\vf_{w,h}$, network update frequency $u$,  
    training type (``\textit{mass}'' or ``\textit{distance}''), mass threshold $m$ or distance threshold $h$, 
    training step $T$, regularizer $\mathcal{C}_\theta$,
    batch size $\tilde{q}$ and $\tilde{r}$ 
	\Ensure learned $\theta$
    \If {training type = ``\textit{mass}''}
      \State  $\tilde{w} \leftarrow  (w, h)$; \quad $L \leftarrow$~\eqref{loss_m_app}
    \ElsIf {training type = ``\textit{distance}''}
      \State  $\tilde{w} \leftarrow (w)$; \quad $L \leftarrow$~\eqref{loss_d_app}
    \EndIf
	\For {$t= 1, \ldots, T$}
        \State  Obtain a mini-batch of $\tilde{q}$ samples $\{x_i\}_{i=1}^{\tilde{q}}$ from $\frac{1}{m_\alpha} \alpha$.
        \State  Obtain a mini-batch of $\tilde{r}$ samples $\{y_i\}_{i=1}^{\tilde{r}}$ from $\frac{1}{m_\beta} \beta$.
        \For {$= 1, \ldots, u$} 
            \State  Update $\tilde{w}$ by ascending the gradient $ \partial L / \partial \tilde{w} $. 
        \EndFor
        \State Compute the gradient $ \partial L / \partial \theta $ according to~\eqref{gradient_app}. 
        \State Compute the gradient $\partial \mathcal{C}(\theta) /\partial \theta$.
        \State Update $\theta$ by descending the gradient $ \partial L / \partial  \theta + \partial \mathcal{C} / \partial \theta $.
    \EndFor
	\end{algorithmic}
\end{algorithm}

\subsection{Connections with WGAN}
\label{Sec_connection_WGAN}
PWAN includes the well-known WGAN as a special case,
because the objective function of WGAN, \ie, $\mathcal{W}_1$ is a special case of that of PWAN, \ie, $\mathcal{L}_{M,m}$ and $\mathcal{L}_{D,h}$,
and they both approximate the KR forms using neural networks.
Specifically,
we have the following corollary. 
\begin{corollary}
  \label{corollary_WGAN}
  When $m_\alpha=m_\beta=m$, 
  $\mathcal{L}_{M,m}$ can be equivalently written as $\mathcal{W}_1$~\eqref{KR-dual}.
  In addition, 
  when $m_\alpha=m_\beta$ and $h > diam(\Omega)$,
  $\mathcal{L}_{D,h}$ can be equivalently written as $\mathcal{W}_1 - h m_\beta$.
\end{corollary}
In other words,
both m-PWAN and d-PWAN become WGAN when we seek to match the whole distribution.

However,
despite its close relations with WGAN,
PWAN has a unique property that WGAN does not have:
PWAN can automatically omit a fraction of data in the training process.
This property can be theoretically explained by observing the learned potential function $\vf^*$.
As for PWAN,
$||\nabla \vf^*||=1$ or $||\nabla \vf^*|| =0$ on the input data (Corollary~4 and~3 in the appendix),
\ie, the data points with $0$ gradients will be omitted during training,
because they do not contribute to the update of $\theta$ in~\eqref{gradient_app}.
While for WGAN,
$||\nabla \vf^*||=1$ on the input data (Corollary 1 in~\cite{gulrajani2017improved}),
\ie, no data point will be omitted.
This property suggests that PWAN can be used as a drop-in enhancement of WGAN to handle ``dirty'' datasets.
An illustrative example in generative modeling can be found in Appx.~B.3.

Finally,
we note that PWAN has an intuitive adversarial explanation as an analogue of WGAN:
Let $\alpha$ and $\beta_\theta$ be the distribution of real and fake data respectively.
During the training process,
$\vf_{w,h}$ is trying to discriminate $\alpha$ and $\beta_\theta$ by assigning each data point a realness score in range $[-h, 0]$.
The points with the highest score $0$ or the lowest score $-h$ are regarded as the ``realest'' or ``fakest'' points respectively.
Meanwhile, 
$\mathcal{T}_\theta$ is trying to cheat $\vf_{w,h}$ by gradually moving the fake data points to obtain higher scores.
However, 
it does not tend to move the ``fakest'' points,
as their scores cannot be changed easily.
The training process ends when $\mathcal{T}_\theta$ cannot make further improvements.

\section{Applications \uppercase\expandafter{\romannumeral1}: Point Set Registration}
\label{Sec_application_1}
This section applies Alg.~\ref{our_algorithm} to point set registration tasks.
We present the details of the algorithms in Sec.~\ref{Sec_app_PSR},
and the numerical results in Sec.~\ref{Sec_experiments_PS}.

\subsection{Aligning Point Set Using PWAN}
\label{Sec_app_PSR}
Recall the definition for point set registration in Sec.~\ref{Sec_Opt_PW}:
Given $\beta=\sum_{y_j \in Y} \delta_{y_j}$ and $\alpha=\sum_{x_i \in X} \delta_{x_i}$,
where $Y=\{y_j\}_{j=1}^r$ and $X=\{x_i\}_{i=1}^q$ are the source and reference point sets in 3D space.
We seek to find an optimal transformation $\mathcal{T}$ that matches $Y$ to $X$ by solving problem~\eqref{Objctive}.

First of all,
we define the non-rigid transformation $\mathcal{T}_{\theta}$ parametrized by $\theta = (A, t, V)$ as
\begin{equation}
    \label{our_transformation}
    \mathcal{T}_{\theta}(y_j) = y_jA + t + V_j,
\end{equation}
where $y_j \in \mathbb{R}^{1\times3}$ represents the coordinate of point $y_j \in Y$,
$A \in \mathbb{R}^{3\times3}$ is the linear transformation matrix,
$t\in \mathbb{R}^{1\times3}$ is the translation vector,
$V \in \mathbb{R}^{r\times3}$ is the offset vector of all points in $Y$,
and $V_j$ represents the $j$-th row of $V$.

Then we define the coherence regularizer $\mathcal{C}(\theta)$ similar to~\cite{myronenko2006non},
\ie,
we enforce the offset vector $v_j$ to vary smoothly.
Formally,
let $\mathbf{G} \in \mathbb{R}^{r \times r}$ be a kernel matrix,
\eg, the Gaussian kernel $\mathbf{G}_\rho(i,j) = e^{-||y_i-y_j||^2/\rho}$,
and $\sigma \in \mathbb{R}_{+}$.
We define
\begin{equation}
   \label{bending}
    \mathcal{C}(\theta) = \lambda \, Tr(V^T (\sigma \mathcal{I} + \mathbf{G}_\rho)^{-1}V),
\end{equation}
where $\lambda \in \mathbb{R}_+$ is the strength of constraint,
$\mathcal{I}$ is the identity matrix,
and $Tr(\cdot)$ is the trace of a matrix.

Since the matrix inversion $(\sigma \mathcal{I} + \mathbf{G}_\rho)^{-1}$ is computationally expensive,
we approximate it via the Nystr\"{o}m method~\cite{williams2000using},
and obtain the gradient
\begin{equation}
    \label{bending_grad}
   \frac{\partial \mathcal{C}(\theta)}{\partial V} \approx 2 \lambda\sigma^{-1} ( V - \sigma^{-1} \mathbf{Q} (\Lambda^{-1} + \sigma^{-1} \mathbf{Q}^T \mathbf{Q})^{-1}\mathbf{Q}^T V),
\end{equation}
where $\mathbf{G}_\rho \approx \mathbf{Q} \Lambda \mathbf{Q}^T$,
$\mathbf{Q} \in \mathbb{R}^{r\times k}$, 
$\Lambda \in \mathbb{R}^{k\times k}$ is a diagonal matrix,
and $k \ll r$.
Note the computational cost of~\eqref{bending_grad} is only $O(r)$ if we regard $k$ as a constant.
The detailed derivation is presented in Appx.~B.4.
Note that $\frac{\partial \mathcal{C}(\theta)}{\partial A}  =\frac{\partial \mathcal{C}(\theta)}{\partial t} =0$ since $\mathcal{C}(\theta)$ is not related to $A$ and $t$.

Finally,
since it can be shown that $\mathcal{T}_{\theta}$ satisfies the assumption required by Theorem~\ref{Theorem1} and Theorem~\ref{Theorem2} (Proposition~9 in the appendix),
it is valid to use Alg.~\ref{our_algorithm}.
We do not adopt the mini-batch setting,
\ie,
we sample all points in the sets in each iteration: $\tilde{q}=q$ and $\tilde{r}=r$,
and set $m_\alpha=q$ and $m_\beta=r$.
For rigid registration tasks,
\ie,
when $\mathcal{T}_{\theta}$~\eqref{our_transformation} is required to be a rigid transformation,
we simply fix $V=0$ ($\mathcal{C}(\theta)=0$),
and define $A$ to be a rotation matrix parametrized by a quaternion.

Due to the approximation error of neural networks,
the optimization process often has difficulty converging to the exact local minimum.
To address this issue,
when the objective function no longer decreases,
we replace the objective function by
\begin{equation}
  \label{fine_tune}
   \min_{\theta} \sum_{j=1}^r s_{j} ||x_{N(j)} - \mathcal{T}_\theta(y_{j})|| + \mathcal{C}(\theta)
\end{equation}
where $x_{N(j)}$ is the closest point to $\mathcal{T}_\theta(y_{j})$,
and $s_j$ is the threshold for point $\mathcal{T}_\theta(y_{j})$ defined as
\begin{equation}
  s_{j} =\begin{cases}
      1 & if \; ||x_{N(j)} - \mathcal{T}_\theta(y_{j})|| \leq h \\
      0 & else
  \end{cases}
\end{equation}
for $\mathcal{L}_{D,h}$,
and 
\begin{equation}
  s_{j} =\begin{cases}
      1 & if \;  j \in \textit{Topk}(||\{x_{N(i)} - \mathcal{T}_\theta(y_{i})||\}_{i=1}^r, m) \\
      0 & else
  \end{cases}
\end{equation}
for $\mathcal{L}_{M,m}$,
where $\textit{Topk}(\cdot, m)$ presents the smallest $m$ elements in $\cdot$.
We obtain $\mathcal{T}_\theta$ by optimizing~\eqref{fine_tune} until convergence using gradient descent,
which generally only takes a few steps.
Note that this approach is reasonable because when the point sets are sufficiently well aligned,
the closest point pairs are likely to be the corresponding point pairs,
thus~\eqref{fine_tune} can be regarded as an approximation of~\eqref{our_primal}.

\subsection{Experiments on Point Set Registration}
\label{Sec_experiments_PS}
We experimentally evaluate the proposed PWAN on point set registration tasks. 
We first present a toy example to highlight the robustness of the PW discrepancies in Sec.~\ref{Sec_exp_Toy}.
Then we compare PWAN against the state-of-the-art methods on synthesized data in Sec.~\ref{Sec_exp_acc},
and discuss its scalability in Sec.~\ref{Sec_exp_eff}.
We further evaluate PWAN on real datasets in Sec.~\ref{Sec_exp_real}.
Finally,
we present the results of rigid registration in Sec.~\ref{Sec_rigid}.

\subsubsection{Comparison of Several Discrepancies}
\label{Sec_exp_Toy}

\begin{figure}[t!]
  \centering
  \vspace{-2mm}
  \begin{minipage}[b]{1\linewidth}
      \centering
      \subfigure[Experimental setting.]{
          \includegraphics[width=0.8\linewidth]{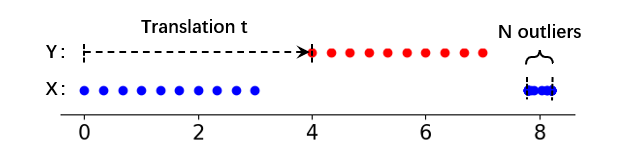}
          \label{sub1}
  }
  \vspace{-2mm}
  \end{minipage}
  \subfigure[$L_2$]{
      \includegraphics[width=0.45\linewidth]{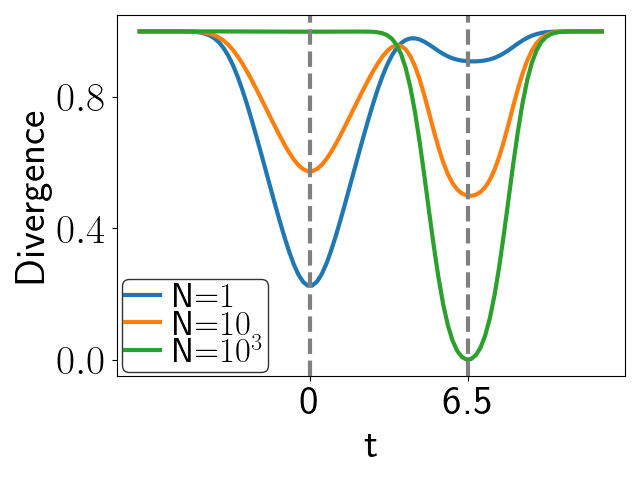}
      \label{sub2}
  }
  \hspace{-2.4mm}
  \subfigure[$KL$]{
      \includegraphics[width=0.45\linewidth]{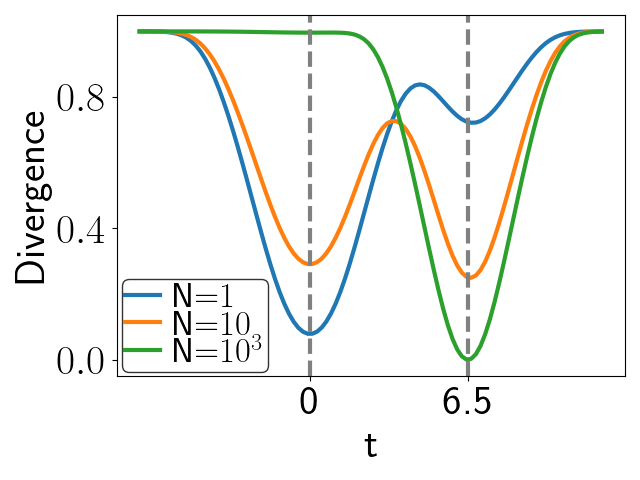}
  }
  \hspace{-2.4mm}
\subfigure[$\mathcal{L}_{M,10}$]{
  \includegraphics[width=0.45\linewidth]{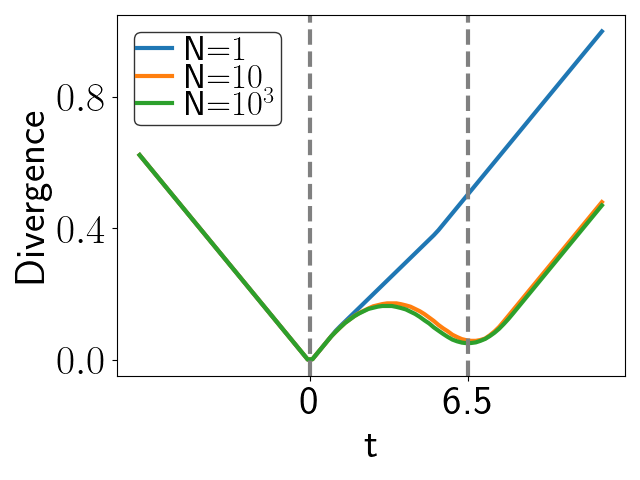}
  }
  \hspace{-2.4mm}
\subfigure[$\mathcal{L}_{D,2}$]{
      \includegraphics[width=0.45\linewidth]{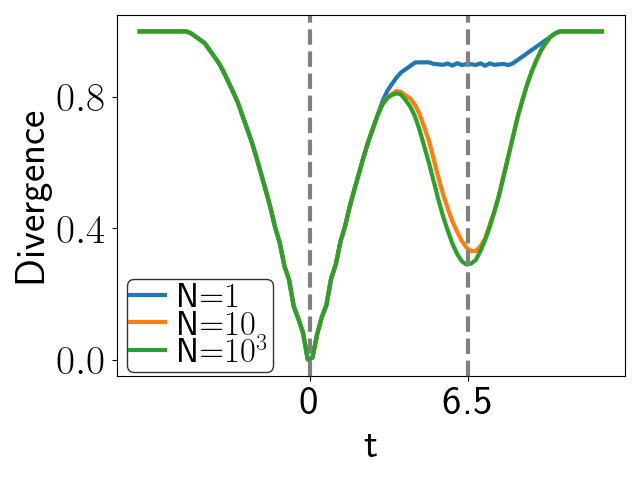}
      \label{sub5}
  }
\vspace{-2.4mm}
\caption{Comparison of different discrepancies on a toy point set.
}
\vspace{-2mm}
\label{metric2}
\end{figure}

To provide an intuition of the PW divergence $\mathcal{L}_{M,m}$ and $\mathcal{L}_{D,h}$ used in this work,
we compare them against two representative robust discrepancies, 
\ie, KL divergence~\cite{hirose2021a} and $L_2$ distance~\cite{jian2011robust},
that are commonly used in registration tasks on a toy 1-dimensional example. 
More discussion can be found in Appx.~B.6.

We construct the toy point sets $X$ and $Y$ shown in Fig.~\ref{sub1},
where we first sample $10$ equi-spaced data points in interval $[0,3]$,
then we define $Y$ by translating the data points by $t$,
and define $X$ by adding $N$ outliers in a narrow interval $[7.8,8.2]$ to the data points.
For $N=\{1, 10, 10^3\}$,
we record four discrepancies: KL, $L_2$, $\mathcal{L}_{M,10}$ and $\mathcal{L}_{D,2}$ between $X$ and $Y$ as a function of $t$,
and present the results from Fig.~\ref{sub2} to Fig.~\ref{sub5}.

By construction,
the optimal (perfect) alignment is achieved at $t=0$.
However,
due to the existence of outliers,
there are two alignment modes, \ie, $t=0$ and $t=6.5$ in all settings,
where $t=6.5$ represents the biased result.
As for KL and $L_2$,
the local minimum $t=0$ gradually vanishes and they gradually bias toward $t=6.5$ as $N$ increases.
In contrast,
$\mathcal{L}_{M,10}$ and $\mathcal{L}_{D,2}$ do not suffer from this issue,
because the local minimum $t=0$ remains deep, \ie, it is the global minimum, regardless of the value of $N$.

The result suggests that compared to KL and $L_2$,
$\mathcal{L}_{M,m}$ and $\mathcal{L}_{D,h}$ exhibit stronger robustness against outliers,
as the optimal alignment can always be achieved by an optimization process, 
such as gradient descent, 
regardless of the ratio of outliers.

\subsubsection{Evaluation on Synthesized Data}
\label{Sec_exp_acc}

This section evaluates the performance of PWAN on synthesized data.
We use three synthesized datasets shown in Fig.~\ref{syn_dataset}: bunny, armadillo and monkey.
The bunny and armadillo datasets are from the Stanford Repository~\cite{Sbunny},
and the monkey dataset is from the internet.
These shapes consist of $8,171$, $106,289$ and $7,958$ points respectively.
We create deformed sets following~\cite{hirose2021acceleration},
and evaluate all registration results via mean square error (MSE).

We compare the performance of PWAN with four state-of-the-art methods:
CPD~\cite{myronenko2006non}, GMM-REG~\cite{jian2011robust}, BCPD~\cite{hirose2021a} and TPS-RPM~\cite{chui2000a}.
We evaluate all methods in the following two settings:
\begin{itemize}[leftmargin=3mm]
  \item[-] Extra noise points: We first sample $500$ random points from the original and the deformed sets as the source and reference sets respectively.
  Then we add uniformly distributed noise to the reference set, 
  and normalize both sets to mean $0$ and variance $1$.
  We vary the number of outliers from $100$ to $600$ at an interval of $100$,
  \ie,
  the outlier/non-outlier ratio varies from $0.2$ to $1.2$ at an interval of $0.2$.
  \item[-] Partial overlap: We first sample $1000$ random points from the original and the deformed sets as the source and reference sets respectively.
  We then intersect each set with a random plane,
  and retain the points on one side of the plane and discard all points on the other side.
  We vary the retaining ratio $s$ from $0.7$ to $1.0$ at an interval of $0.05$ for both the source and the reference sets,
  \ie, the minimal overlap ratio $(2s-1)/s$ varies from $0.57$ to $1$.
\end{itemize}
We evaluate m-PWAN with $m=500$ in the first setting,
while we evaluate m-PWAN with $m=(2s-1)*1000$ and d-PWAN with $h=0.05$ in the second setting.
More detailed experiment settings are presented in Appx.~B.5.
We run all methods $100$ times and report the median and standard deviation of MSE in Fig.~\ref{compare_fig}.

Fig.~\ref{com_sub1} presents the results of the first experiment.
The median error of PWAN is generally comparable with TPS-RPM,
and is much lower than the other methods.
In addition,
the standard deviations of the results of PWAN are much lower than that of TPS-RPM.
Fig.~\ref{com_sub2} presents the results of the second experiment.
Two types of PWANs perform comparably, 
and they outperform the baseline methods by a large margin in terms of both median and standard deviations when the overlap ratio is low,
while all methods perform comparably when the data is fully overlapped.
These results suggest the PWAN are more robust against outliers than baseline methods,
and can effectively handle partial overlaps.
More results are provided in Appx.~B.7.

\begin{figure}[tb]
  \vspace{-1mm}
\centering
    \includegraphics[width=0.3\linewidth]{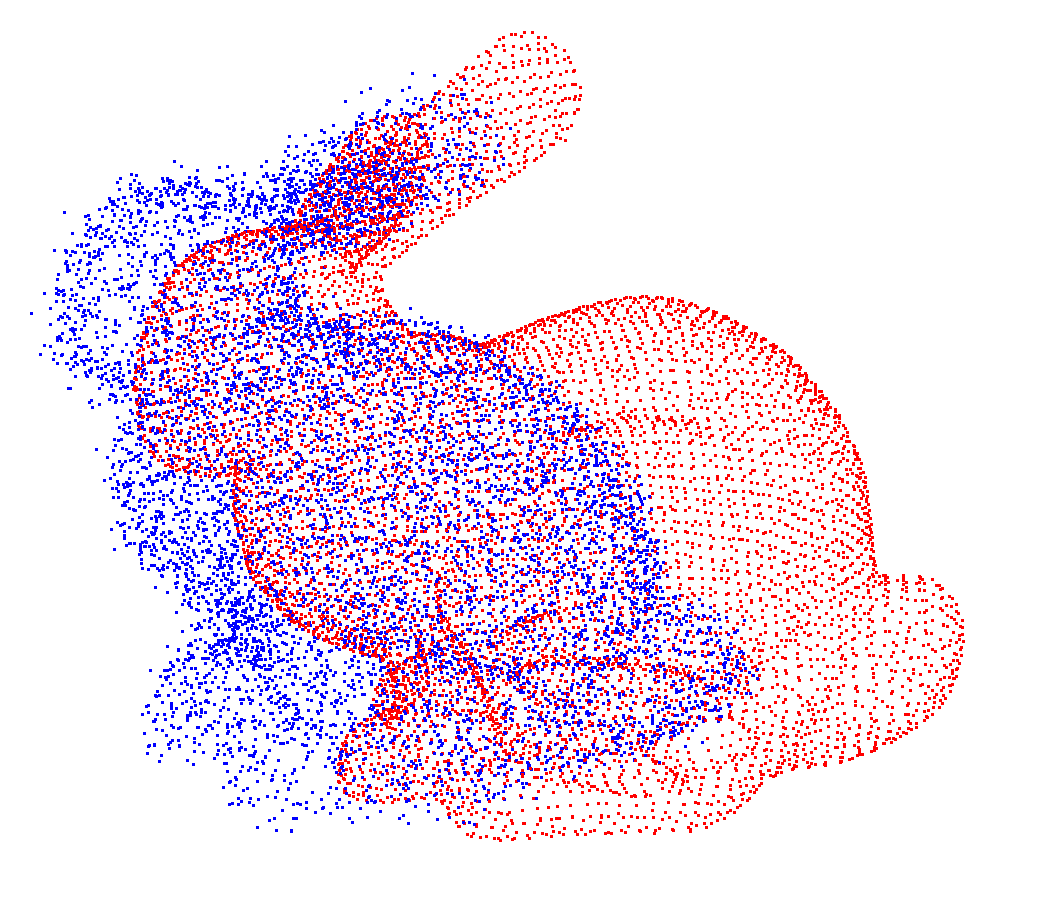}
    \includegraphics[width=0.3\linewidth]{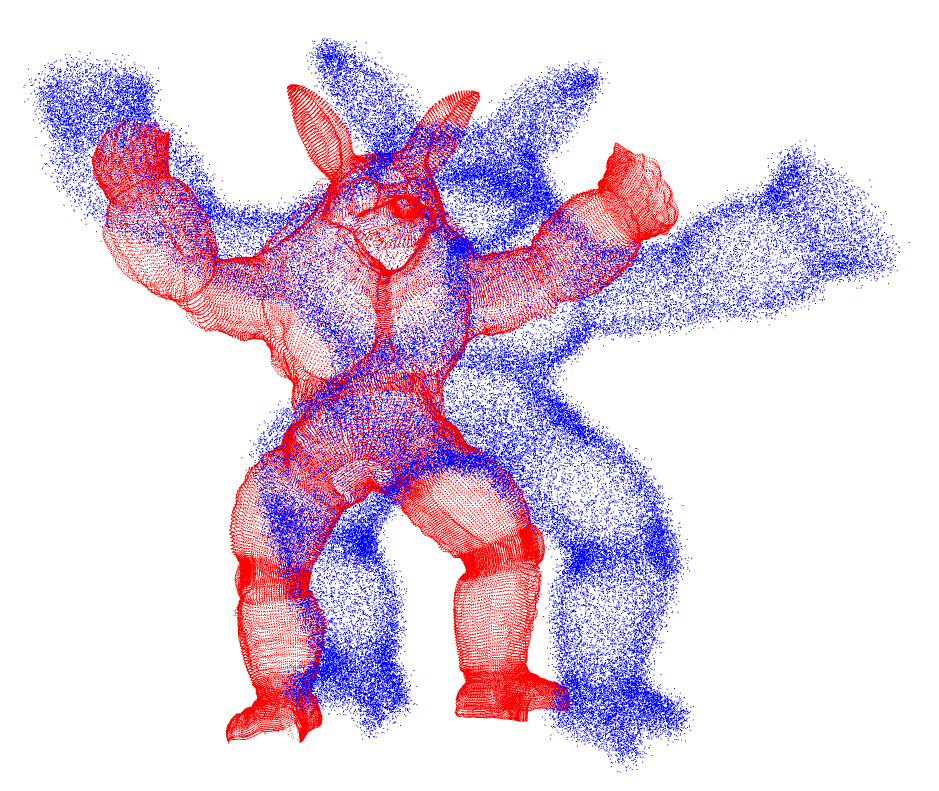}
    \includegraphics[width=0.3\linewidth]{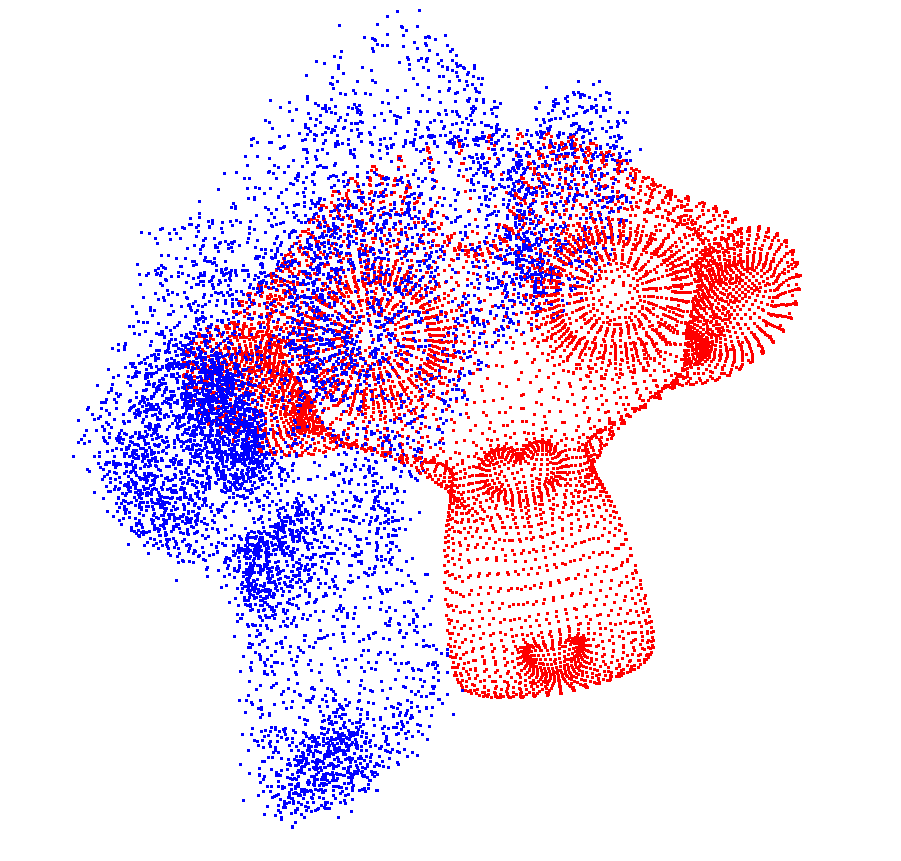}
    \vspace{-3mm}
\caption{The synthesized datasets used in our experiments.
}
\label{syn_dataset}
\vspace{-2mm}
\end{figure}

\begin{figure*}[t!]
	\centering
    \subfigure[Evaluation of robustness against noise points.]{
      \includegraphics[width=0.3\linewidth]{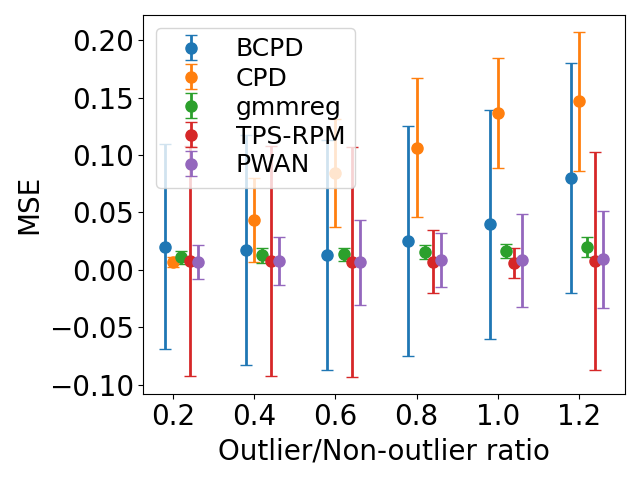}
      \includegraphics[width=0.3\linewidth]{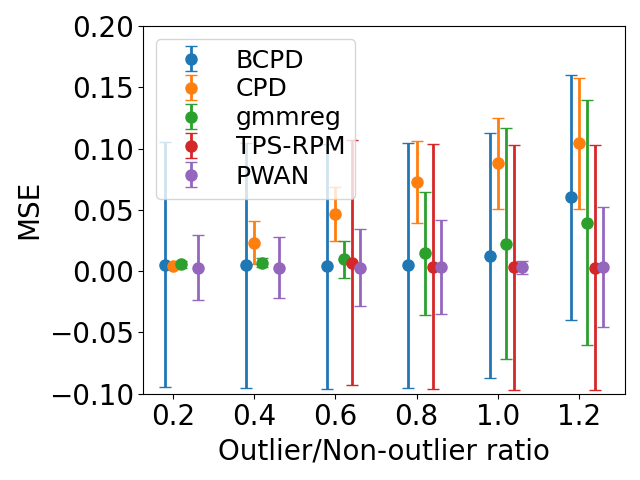}
      \includegraphics[width=0.3\linewidth]{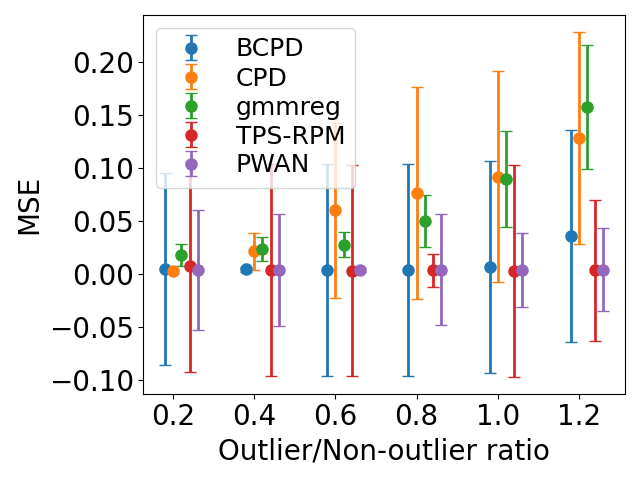}
      \label{com_sub1}
  }
    \vspace{-2mm}
    \subfigure[Evaluation of robustness against partial overlap.]{
        \includegraphics[width=0.3\linewidth]{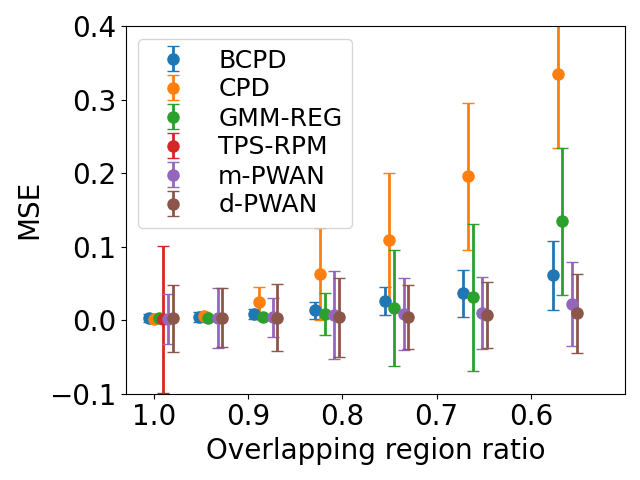}
        \includegraphics[width=0.3\linewidth]{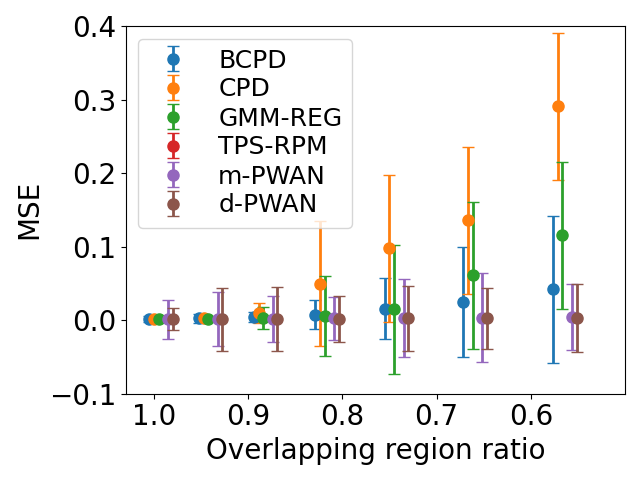}
        \includegraphics[width=0.3\linewidth]{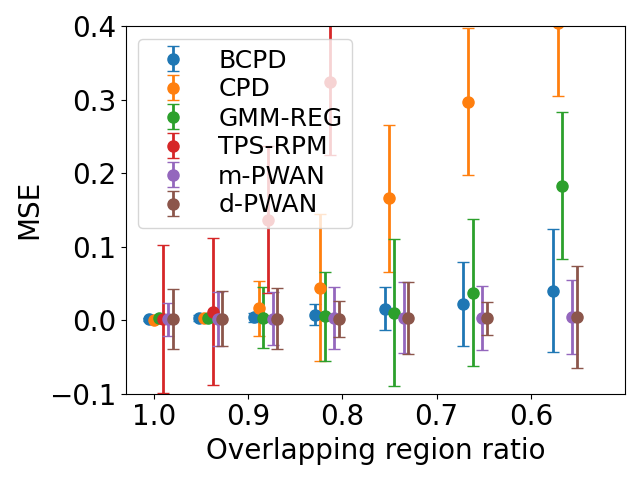}
        \label{com_sub2}
    }
  \caption{Registration accuracy of the bunny (left), monkey (middle) and armadillo (right) datasets.
  The error bars represent the medians and standard deviations of MSE.
  }
	\label{compare_fig}
\end{figure*}

To provide an intuition of the learned potential network of PWAN, 
we visualize the gradient norm $||\nabla \vf_{w,h}||$ on a pair of partially overlapped armadillo point sets at the end of the registration process in Fig.~\ref{vis_outlier_larma}.
As can be seen,
$||\nabla \vf_{w,h}||$ is close to $0$ in non-overlapping regions,
such as the nose and the left arm,
suggesting that the points in these regions are successfully omitted by PWAN.

\begin{figure}[htb!]
  \vspace{-5mm}
  \centering
  \begin{minipage}{\linewidth}
    \subfigure[Initial step]{
      \includegraphics[width=0.33\linewidth]{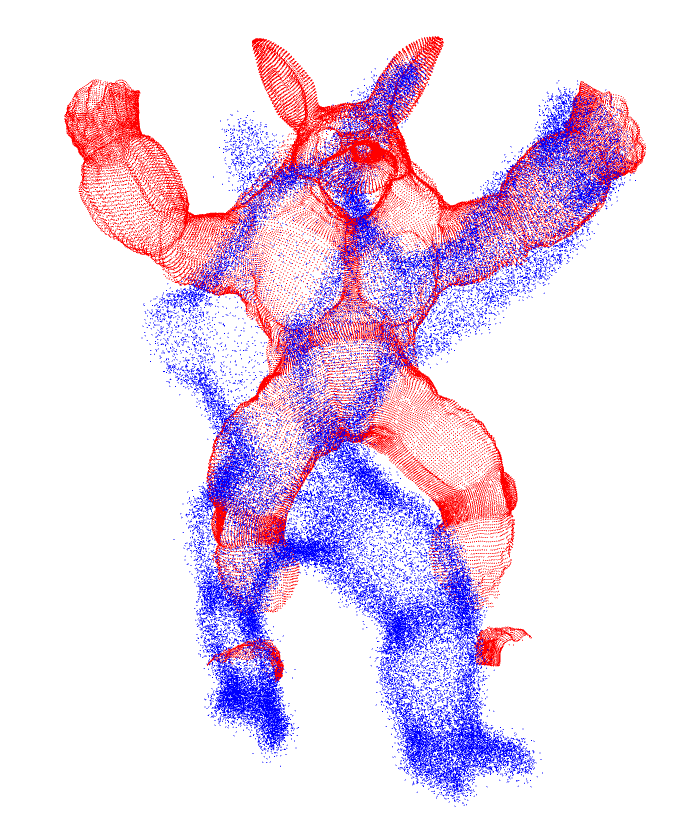}
    }
    \hspace{-5mm}
    \subfigure[Final step]{
      \includegraphics[width=0.33\linewidth]{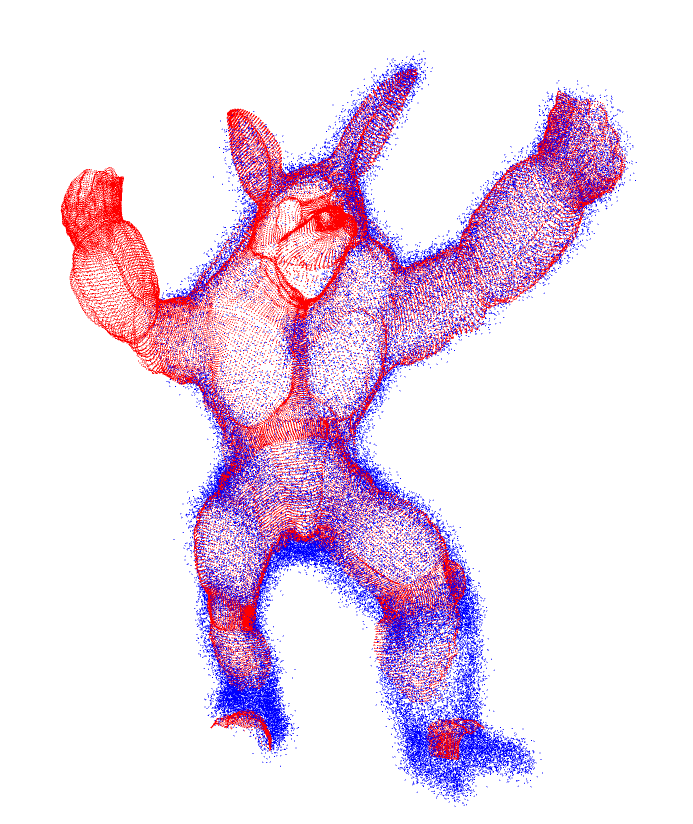}
    }
    \hspace{-5mm}
    \subfigure[$||\nabla \vf_{w,h}||$]{
      \includegraphics[width=0.33\linewidth]{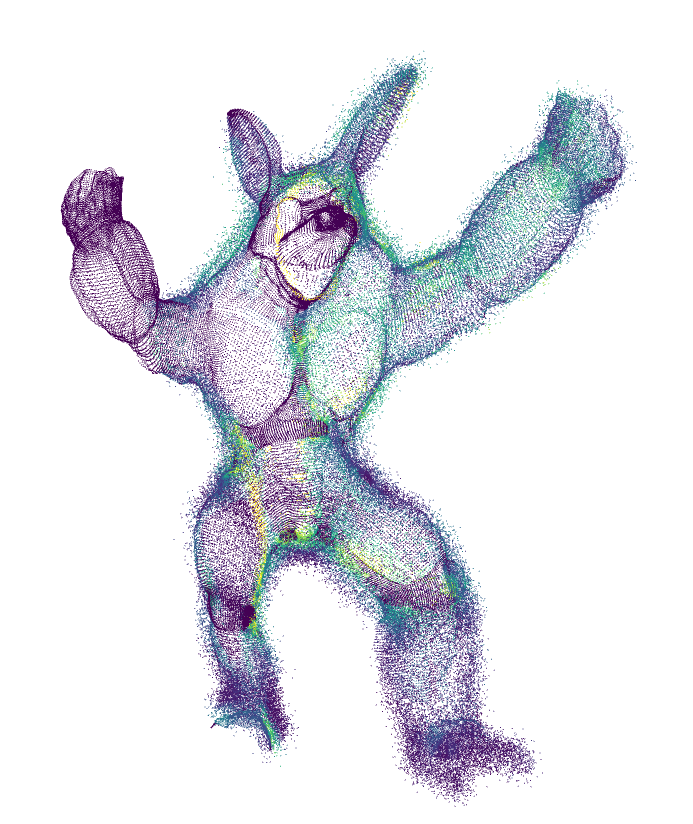}
    }
  \end{minipage}
\vspace{-3mm}
\caption{Visualization of the learned potential network on a pair of partially overlapped armadillo point sets.
$||\nabla \vf_{w,h}||$ is lower (darker) in non-overlapping regions,
suggesting that the points in those regions are successfully omitted.
}
\label{vis_outlier_larma}
\end{figure}

\subsubsection{Evaluation of the Efficiency}
\label{Sec_exp_eff}

To evaluate the efficiency of PWAN,
we first need to investigate the influence of the parameter $u$,
\ie,
the network update frequency,
which is designed to control the tradeoff between efficiency and effectiveness.
To this end,
we register a pair of bunny datasets consisting of $2000$ points with varying $u$ and report the results on the left panel of Fig.~\ref{Scalability}.
As can be seen,
the median and standard deviation of MSE decrease as $u$ increases,
and the computation time increases proportionally with $u$,
which verifies the effectiveness of $u$.

We then benchmark the computation time of different methods on a computer with two Nvidia GTX TITAN GPUs and an Intel i7 CPU.
We fix $u=20$ for PWAN.
We sample $q=r$ points from the bunny shape,
where $q$ varies from $10^3$ to $7 \times 10^5$. 
PWAN is run on the GPU while the other methods are run on the CPU.
We also implement a multi-GPU version of PWAN where the potential network is updated in parallel.  
We run each method $10$ times and report the mean of the computation time on the right panel of Fig.~\ref{Scalability}.
As can be seen,
BCPD is the fastest method when $q$ is small,
and PWAN is comparable with BCPD when $q$ is near $10^6$.
In addition,
the 2-GPU version PWAN is faster than the single GPU version,
and it is faster than BCPD when $q$ is larger than $5 \times 10^5$.

\begin{figure}
      \centering
      \includegraphics[width=0.45\linewidth]{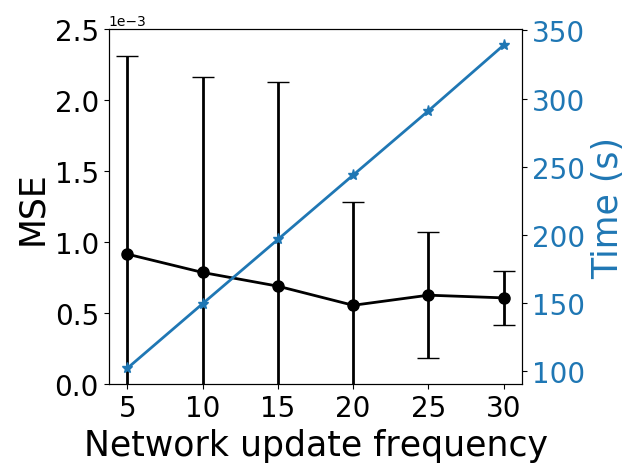}
      \includegraphics[width=0.45\linewidth]{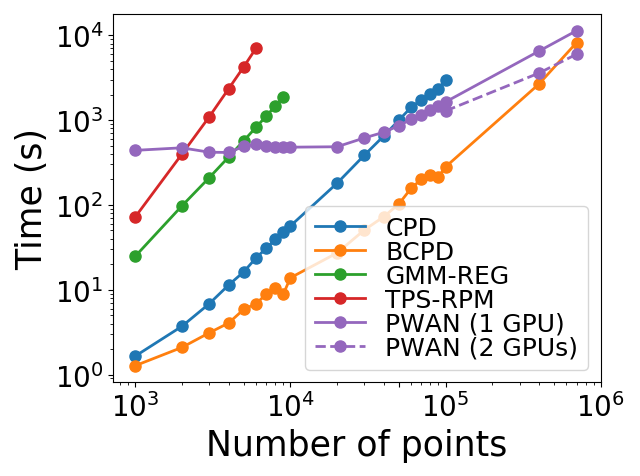}
  \vspace{-2mm}
  \caption{Scalability of PWAN.
  Left: $u$ controls the tradeoff between efficiency and effectiveness.
  Right:
  The speed of PWAN is comparable with that of BCPD when $q$ is near $10^6$.
  }
\vspace{-2mm}
\label{Scalability}
\end{figure}

\subsubsection{Evaluation on Real Data}
\label{Sec_exp_real}
To demonstrate the capability of PWAN in handling datasets with non-artificial deformations,
we evaluate it on the space-time faces dataset~\cite{zhang2008spacetime} and human shape dataset~\cite{bodydataset}.

The human face dataset~\cite{zhang2008spacetime} consists of a time series of point sets sampled from a real human face.
Each face consists of $23,728$ points.
We use the faces at time $i$ and $i + 20$ as the source and the reference set,
where $i=1,...,20$.
All point sets in this dataset are completely overlapped.
An example of our registration result is presented in Fig.~\ref{real}.
The registration results are quantitatively compared with CPD and BCPD in Tab.~6 in the appendix,
where PWAN outperforms both baseline methods.

We further evaluate PWAN on the challenging human body dataset~\cite{bodydataset}.
We apply PWAN to both complete and incomplete human shapes.
An example of our registration results is shown in Fig.~\ref{real_human},
and more details can be found in Appx.~B.8.
As can be seen,
although PWAN is not designed to handle articulated deformations as~\cite{ge2014non},
it can still produce good full-body registration ($1$-st row) and natural partial alignment ($2$-nd row).
In addition,
it is worth noticing that the partial alignment produces a novel shape that does not exist in the original dataset,
which suggests the potential of PWAN in shape editing.
Nevertheless,
we observe some degree of misalignment near complicated articulated structures, 
such as fingers.
This issue might be addressed by incorporating mesh-level constraints~\cite{fan2022coherent} in the future.

\begin{figure}[tb!]
  \vspace{-1mm}
\centering
    \includegraphics[width=0.25\linewidth]{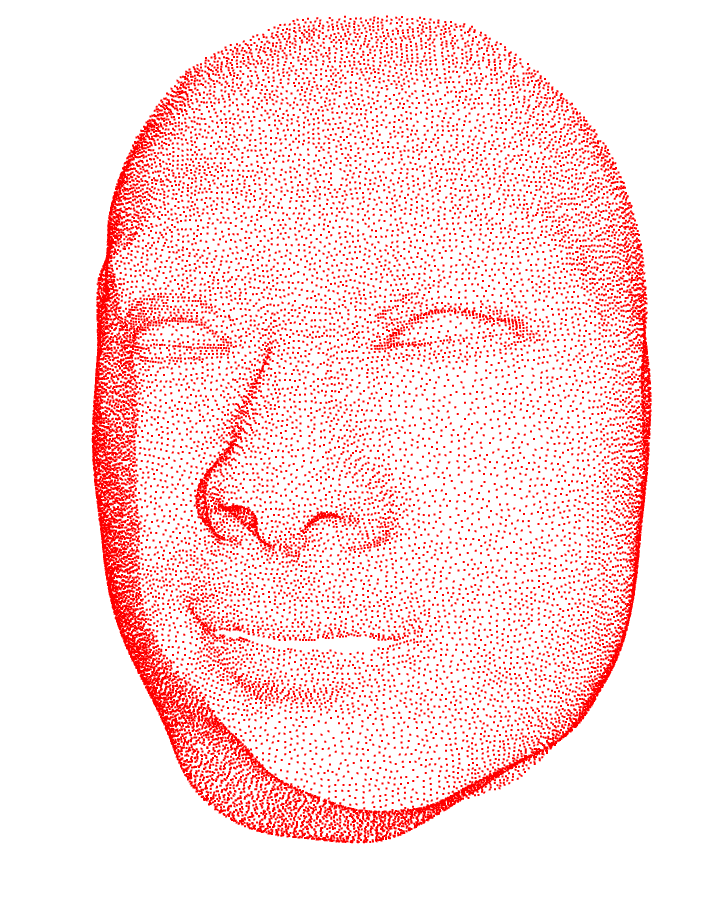}
    \hspace{3mm}
    \includegraphics[width=0.25\linewidth]{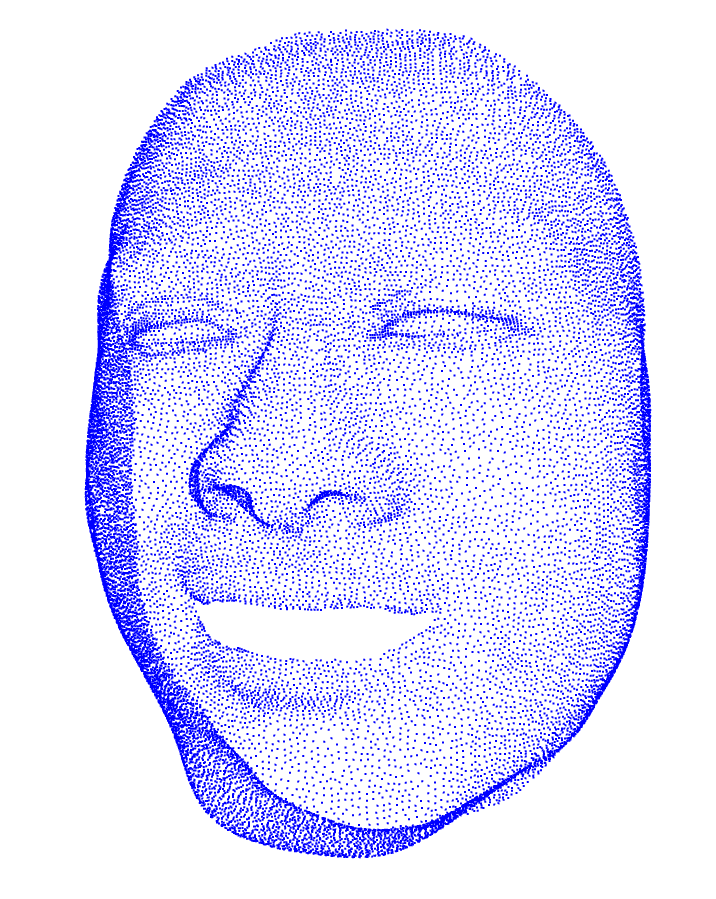}
    \hspace{3mm}
    \includegraphics[width=0.25\linewidth]{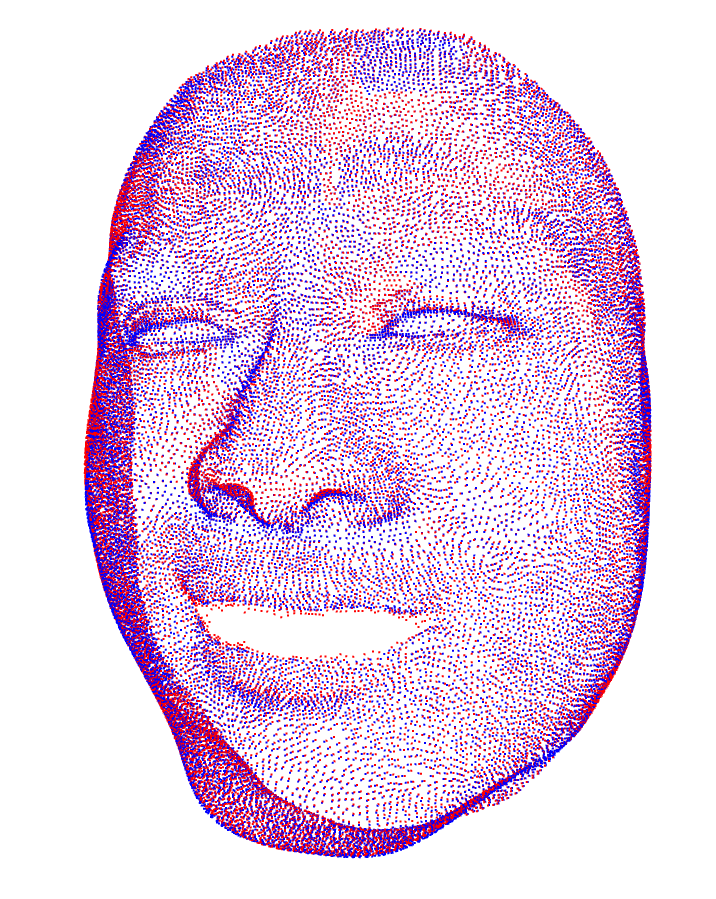}
\vspace{-2mm}
\caption{An example of our registration results on the human face dataset. 
The aligned point sets (right) is obtained by matching the $1$-st frame (left) to the $21$-st frame (middle).
}
\label{real}
\vspace{-2mm}
\end{figure}

\begin{figure}[htb!]
	\centering
      \begin{minipage}[b]{0.3\linewidth}
          \centering
          \includegraphics[width=0.92\linewidth]{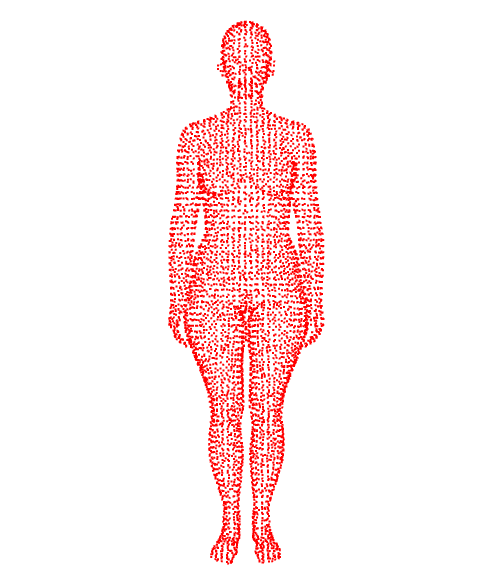}\\
          \includegraphics[width=1\linewidth]{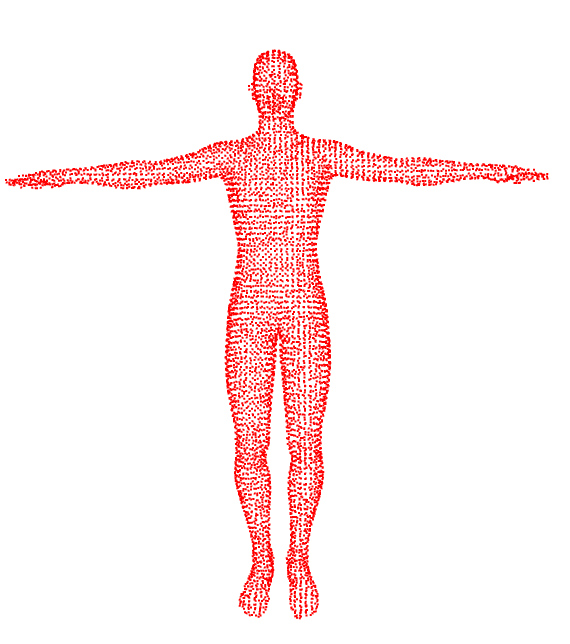}
      \end{minipage}
      \begin{minipage}[b]{0.3\linewidth}
          \centering
          \includegraphics[width=1\linewidth]{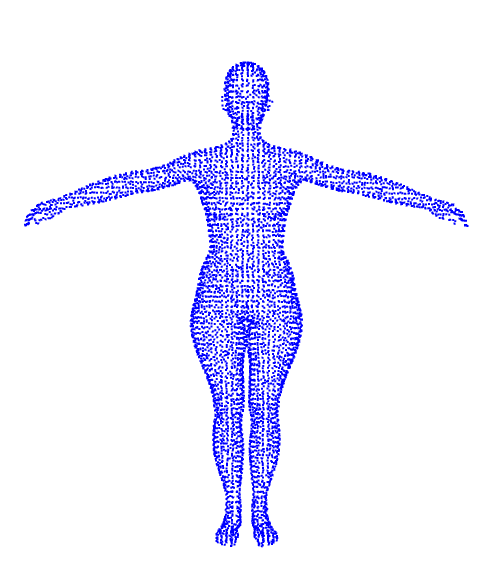}\\
          \includegraphics[width=1\linewidth]{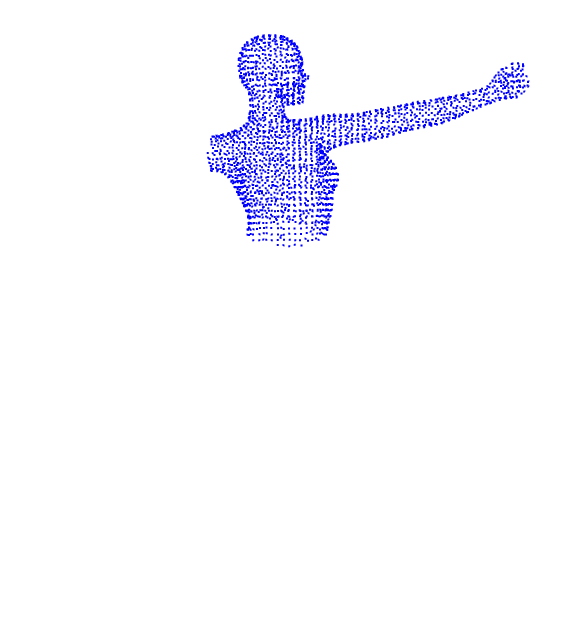} \\
      \end{minipage}
      \begin{minipage}[b]{0.3\linewidth}
          \centering
          \includegraphics[width=1\linewidth]{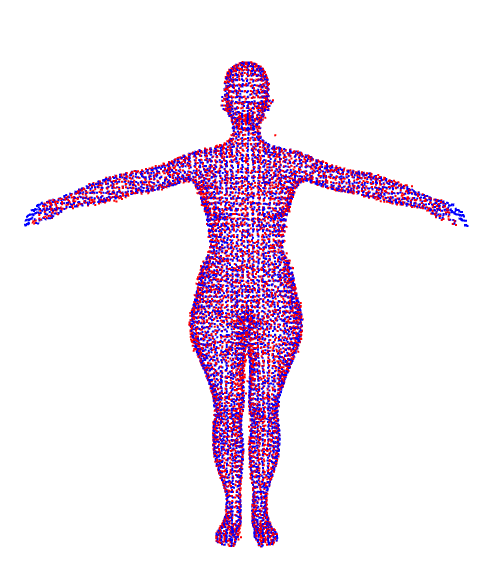}\\
          \includegraphics[width=1\linewidth]{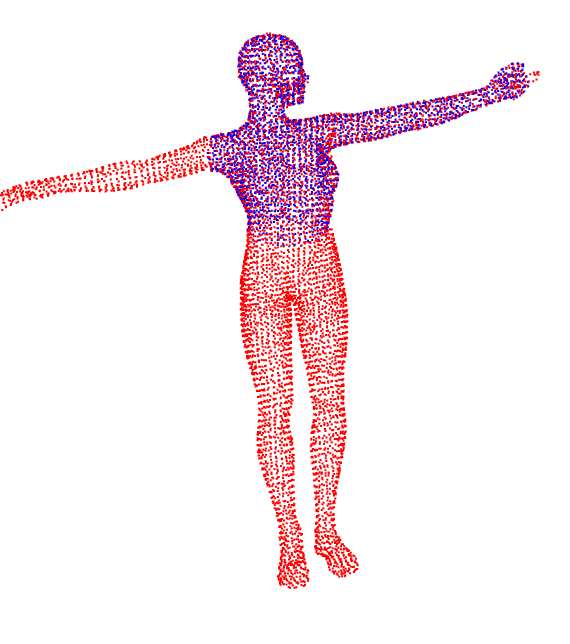} \\
      \end{minipage}
    \vspace{-2mm}
    \caption{
      Examples of our registration results on the human body dataset. 
      We present the result of registering complete shapes ($1$-st row) and incomplete shapes ($2$-nd row),
      where the source point set (left) is aligned to the reference set (middle).
    }
    \label{real_human}
    \vspace{-2mm}
\end{figure}

\subsubsection{Results of Rigid Registration}
\label{Sec_rigid}
PWAN can be naturally applied to rigid registration as a special case of non-rigid registration.
We evaluate PWAN on rigid registration using two datasets: ASL~\cite{pomerleau2012challenging} and UWA~\cite{mian2006three}.
We construct three registration tasks: outdoor, indoor and object,
where the outdoor task consists of two scenes from ASL: mountain and wood-summer,
the indoor task consists of two scenes from ASL: stair and apartment,
and the object task consists of two scenes from UWA: parasaurolophus and T-rex.
For each of the $6$ scenes,
we register the $i$-th view to the $(i + 1)$-th view,
where $i=1,...,10$.
We report the rotation error $\frac{180}{\pi} \arccos (\frac{1}{2} (Tr(R\tilde{R}^T)-1))$,
where $R$ and $\tilde{R}$ are the estimated and true rotation matrices respectively.
Since UWA dataset does not provide ground true poses,
we run the global search algorithm Go-ICP~\cite{yang2015go} on UWA dataset and use its result as the ground truth.

We pre-process all point sets by normalizing them to be inside the cubic $[-1, 1]^3$,
and down-sampling them using a voxel grid filter with grid size $0.08$.
We compare d-PWAN and m-PWAN against BCPD~\cite{hirose2021a}, CPD~\cite{myronenko2006non}, ICP with trimming~\cite{chetverikov2005robust} (m-ICP) and ICP with distance threshold~\cite{besl1992a, Zhou2018} (d-ICP),
and sopt~\cite{bai2023sliced}.
We set $h=0.05$ for d-PWAN,
$m=0.8\min(q,r)$ for m-PWAN, and fix $u=25$.

A quantitative comparison of the registration result is presented in Tab.~\ref{rigid_quantitative},
and more details are presented in Appx.~B.9.
As can be seen,
m-PWAN and d-PWAN outperform all baseline methods in all but the object class,
where BCPD performs slightly better,
and m-PWAN achieves better results than d-PWAN in terms of standard deviation.

\begin{table}[ht!]
	\begin{center}
	  \caption{Quantitative results of rigid registration. We report the median and standard deviation of rotation errors.}
	  \setlength{\tabcolsep}{2.0pt}
    \label{rigid_quantitative}
      \begin{tabular}{c c c c} 
		  \hline
      & ASL indoor & ASL outdoor & UWA object \\
    \hline
	  \centering
    BCPD & 0.35 (10.37) & 8.08 (8.54) & \textbf{0.10 (0.26)} \\
    CPD & 0.79 (28.46) & 5.89 (3.95) & 0.17 (61.99) \\
    SOPT & 11.03 (28.91) & 30.01 (11.45) & 34.9 (6.55) \\
    m-ICP & 0.54 (22.46) & 13.36 (12.37) & 1.96 (6.43) \\
    d-ICP & 6.62 (18.85) & 14.86 (8.20) & 21.17 (6.65) \\
    m-PWAN (Ours) & \textbf{0.26 (14.22)} & \textbf{2.00 (3.62)} & 0.10 (0.35) \\
    d-PWAN (Ours) & 0.26 (22.44) & 2.01 (3.63) & 0.11 (5.60) \\
    \hline
	  \end{tabular}
	\end{center}
\end{table}

\section{Application \uppercase\expandafter{\romannumeral2}: Partial Domain Adaptation}
\label{Sec_application_2}

This section applies PWAN to partial domain adaptation tasks.
After presenting the formulation of partial domain adaptation in Sec.~\ref{Sec_app_PDA},
and the details of the algorithm in Sec.~\ref{Sec_app_PDA_alg},
we show the results of our numerical experiments in Sec.~\ref{Sec_experiment_PDA}.

\subsection{A PDM Formulation of Partial Domain Adaptation}
\label{Sec_app_PDA}
Consider an image classification problem defined on the image space $I$ and the label space $L$.
Let $\mathbf{R}$ and $\mathbf{S}$ be the reference and source distributions over $I \times L$,
$(\cdot)|_I$ be their marginal distributions on $I$,
and $L_{(\cdot)}$ be their label sets.
For example,
$\mathbf{R}_I$ is the reference image distribution,
and $L_R$ consists of all possible labels of $\mathbf{R}$.
Given the labelled reference dataset $\mathbf{D}_R=\{(x_i, l_i)\}_{i=1}^q \sim (\mathbf{R})^q$
and the unlabeled source dataset $\mathbf{D}_S=\{(y_i)\}_{i=1}^r \sim (\mathbf{S}_I)^r$,
where $L_S$ is assumed to be an unknown subset of $L_R$,
the goal of partial domain adaptation is to obtain an accurate classifier $\eta$ for $\mathbf{D}_S$.
Note that following~\cite{ganin2016domain,cao2022big},
we assume $\eta = \mathcal{D}_{\nu} \circ \mathcal{T}_\theta$,
where $\mathcal{T}_\theta$ is a feature extractor with parameter $\theta$,
$\mathcal{D}_{\nu}$ is a label predictor with parameter $\nu$.

We regard the partial adaptation task as a PDM task,
where the goal is to match $\mathbf{S}_I$ to the fraction of $\mathbf{R}_I$ belonging to class $L_S$ in the feature space.
This PDM task is special in that it assumes that the outliers,
\ie, the data belonging to class $L_R \setminus L_S$,
do not exist in $\mathbf{S}_I$.
We utilize this assumption in our formulation by not omitting any data in $\mathbf{S}_I$ during matching.
Formally,
let $\beta_\theta=(\mathcal{T}_\theta)_{\#}\mathbf{S}_I$ and $\alpha=(\mathcal{T}_{\widehat{\theta}})_{\#}\mathbf{R}_I$ be the feature distributions of $\mathbf{S}_I$ and $\mathbf{R}_I$ respectively,
where $\widehat{\theta}$ denotes a frozen copy of $\theta$ which does not allow back-propagation.
We minimize $\mathcal{L}_{M,1}(\alpha, \beta_\theta)$,
where we set $m_\alpha \geq m_\beta=m=1$ to enforce that the whole $\beta_\theta$ is matched to a fraction of $\alpha$.
By combining the PDM formulation with the classifier $\eta$,
our complete formulation can be expressed as
\begin{equation}
  \label{PDA_goal}
  \min_{\theta, \nu} \sum_{(x_i, l_i) \in \mathbf{R}}  \mu_i \textit{CE}(\eta(x_i), l_i )  + \lambda_1 \mathcal{L}_{M,1}(\alpha, \beta_\theta)
                                                                                        + \lambda_2 \mathcal{C}(\eta),
\end{equation}
where $\mathcal{C}(\eta)$ is the entropy regularizer~\cite{grandvalet2004semi},
$\textit{CE}$ is the cross-entropy loss,
$\mu_i=\frac{1}{|\mathbf{D_S|}} \sum_{y \in \mathbf{D}_S}\mathds{1}( \eta(y)=l_i)$
is the re-weighting coefficient of class $l_i$~\cite{liang2020balanced},
where the indicator function $\mathds{1}(\cdot)$ equals $1$ if $(\cdot)$ is true, 
and $0$ otherwise.
Note that $\mu$ is designed to reduce the influence of outliers on the training of the classifier $\eta$,
\ie, $\mu_i$ is expected to be smaller if $x_i$ is in the outlier class.
In practice,
we update $\mu$ periodically during the training process.
Also note that existing OT-based partial adaptation methods~\cite{nguyen2022improving,fatras2021unbalanced} do not utilize the clean $\mathbf{S}_I$ property,
because they generally omit samples in both $\mathbf{R}_I$ and $\mathbf{S}_I$.

\subsection{PWAN for Partial Domain Adaptation}
\label{Sec_app_PDA_alg}
To apply PWAN to~\eqref{PDA_goal},
we first verify that the feed-forward network $\mathcal{T}_\theta$ is a valid transformation for PWAN,
\ie, it satisfies the assumption required by Theorem~\ref{Theorem2} (Proposition~10 in the appendix).
Then we need to determine the parameter $m_\alpha$, 
which specifies the alignment ratio $1/m_\alpha$ of the reference data.
If the outlier ratio $r$ of $\alpha$ is known as a prior,
we can simply set $m_\alpha \approx 1 / (1-r)$ to ensure that the correct ratio of data is aligned.
However, $r$ is usually unknown in practice,
so we first heuristically set $m_\alpha$ as $\tilde{m}_\alpha=|L_R| / |\tilde{L_S}|$,
where $\tilde{L_S} = \{l \in L_R | w_l > \frac{1}{|L_R|} \}$ is the estimated $L_S$.
This choice of $m_\alpha$ is intuitive because it encourages aligning $\tilde{L_S}$ classes of data between $\mathbf{D_R}$ and $\mathbf{D_S}$.
Then,
to further avoid biasing toward outliers,
we gradually increase  $m_\alpha$ during training.
Formally,
we set 
\begin{equation}
  \label{m_alpha_pda}
  m_\alpha= \tilde{m}_\alpha \times s^v
\end{equation}
at training step $v$,
where 
$s >1$ is the annealing factor.
An explanation of the effect of $m_\alpha$ is presented in Appx.~B.10.

Furthermore,
we observed that the KR dual of $\mathcal{L}_{M,m}$~\eqref{PW-m} takes a particularly simple form in this case~\eqref{PDA_goal},
where the scalar variable $h$ does not appear.
In other words, 
we do not need to optimize the scalar $h$,
and the bounded constraint $\vf \geq -h$ of $\vf$ can be dropped.
Specifically,
we have the following straightforward corollary.
\begin{corollary}
  \label{corollary-P}
  When $m_\alpha \geq m_\beta=m$, 
  $\mathcal{L}_{M,m}$ can be equivalently written as
  \begin{equation}
    \label{simple_1}
    \mathcal{L}_{P}(\alpha, \beta) = \sup_{\vf \in Lip(\Omega), \vf \leq 0 }  \int_\Omega \vf d\alpha -   \int_\Omega \vf d\beta.
  \end{equation}
  In addition,
  $\mathcal{L}_{D,h}$ can be written as $\mathcal{L}_{P}-hm_\beta$ when $m_\alpha \geq m_\beta$ and $h \geq \textit{diam}(\Omega)$.
\end{corollary}

By inserting~\eqref{simple_1} into~\eqref{PDA_goal} and taking a mini-batch form,
we can finally write the objective function as:
\begin{align}
  \label{obj_PDA}
  \min_{\theta, \nu} \max_{w} & \sum_{i=1}^{\tilde{q}} \mu_i \textit{CE}(\eta(x_i), l_i )  +  \lambda_1 \Bigl(  \frac{m_\alpha}{\tilde{q}} \sum_{i=1}^{\tilde{q}}  \vf_w(\mathcal{T}_{\hat{\theta}} (x_i) ) \nonumber \\ 
             -  &\frac{1}{\tilde{q}}\sum_{i=1}^{\tilde{q}}  \vf_w(\mathcal{T}_\theta(y_i)) - GP(\vf_{w}) \Bigr) +  \lambda_2 \mathcal{C}(\eta), 
\end{align}
where $\{(x_i, l_i)\}_{i=1}^{\tilde{q}}$ is a random batch of $\mathbf{D}_R$, 
$\{(y_i)\}_{i=1}^{\tilde{q}}$ is a random batch of $\mathbf{D}_S$,
$m_\alpha$ is set according to~\eqref{m_alpha_pda},
and $\vf_w \leq 0$ is the potential network with parameter $w$.
The overall network architecture is presented in Fig.~\ref{PDA_net}.
By abusing notations,
we also call~\eqref{obj_PDA} PWAN.

We end this subsection with a remark that PWAN~\eqref{obj_PDA} can be seen as a direct generalization of the classic adversarial closed-set domain adaptation methods~\cite{shen2018wasserstein,ganin2016domain},
where the only differences are the constraints of $m_\alpha$ and $\vf_w$:
$m_\alpha \geq 1$ and $\vf_w \leq 0$ in PWAN,
while $m_\alpha = 1$ and there is no sign constraint for $\vf_w$ in~\cite{shen2018wasserstein}.
As we will show in the experiment section,
this generalization alone,
without complicated sampling strategies~\cite{zhang2018importance, guo2022selective, cao2022big, gu2024adversarial},
allows PWAN to produce accurate results in the presence of a high ratio of outliers.

\begin{figure}[htb!]
  \centering
    \includegraphics[width=0.95\linewidth]{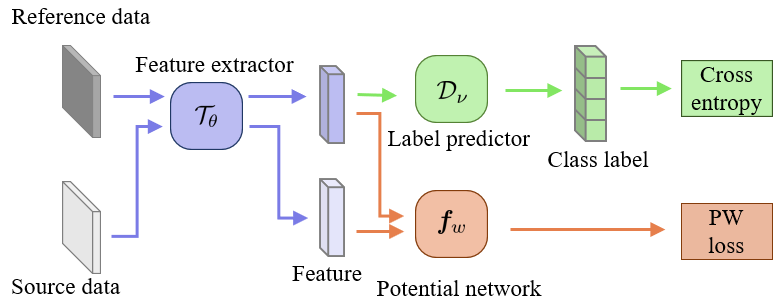}
    \vspace{-2mm}
\caption{The proposed PWAN for partial domain adaptation.
}
\label{PDA_net}
\end{figure}

\subsection{Experiments on Partial Domain Adaptation}
\label{Sec_experiment_PDA}
We experimentally evaluate PWAN on partial domain adaptation tasks.
After describing the experimental settings in Sec.~\ref{Sec_PDA_setting},
we first present an ablation study in Sec.~\ref{Sec_PDA_ablation} to explain the effectiveness of our formulation.
Then we compare the performance of PWAN with the state-of-the-art methods in Sec.~\ref{Sec_PDA_comparison}.
We finally discuss parameter selection in Sec.~\ref{Sec_PDA_analysis}.

\subsubsection{Experimental Settings}
\label{Sec_PDA_setting}
We construct PWAN as follows:
the potential net $\vf_w$ is a fully connected network with $2$ hidden layers ($\textit{feature}$ $\rightarrow$ $256$ $\rightarrow$ $256$ $\rightarrow$ $1$);
Following~\cite{ganin2016domain},
the feature extractor $\mathcal{T}_\theta$ is defined as a ResNet-50~\cite{he2016deep} network pre-trained on ImageNet~\cite{russakovsky2015imagenet},
and the label predictor $\mathcal{D}_\nu$ is a fully connected network with $1$ bottleneck layer ($\textit{feature}$ $\rightarrow$ $256$ $\rightarrow$ $\textit{label}$).
We update $\{\theta, \nu\}$ using Adam optimizer~\cite{kingma2014adam} and update $\{w\}$ using RMSprop optimizer~\cite{Tieleman2012},
and manually set the learning rate to $1e^{-4}(1+10i)^{-0.75}$ following~\cite{ganin2016domain}, 
where $i$ is the training step.
We fix $\lambda_1=0.05$ and $\lambda_2=0.1$ for all experiments.
More detailed settings are presented in Appx.~B.11.

We consider the following four datasets in our experiments:
\begin{itemize}[leftmargin=3mm]
  \item[-] {OfficeHome}~\cite{venkateswara2017deep}: A dataset consisting of four domains: Artistic (A), Clipart (C), Product (P) and real-world (R). 
  Following~\cite{cao2022big},
  we consider all $12$ adaptation tasks between these $4$ domains.
  We keep all $65$ classes of the reference domain and keep the first $25$ classes (in alphabetic order) of the source domain.
  \item[-] {VisDa17}~\cite{peng2018visda}: A large-scale dataset for synthesis-to-real adaptation, 
  where the reference domain consists of abundant rendered images, 
  and the source domain consists of real images belonging to the same classes.
  We keep all $12$ classes of the reference domain and keep the first $6$ classes (in alphabetic order) of the source domain.
  \item[-] {ImageNet-Caltech}: A challenging large-scale dataset for general-to-specify adaptation.
  The reference domain is ImageNet~\cite{russakovsky2015imagenet} ($1000$ classes),
  and the source domain consists of data from Caltech~\cite{griffin2007caltech} belonging to the $84$ classes shared by Caltech and ImageNet.
  \item[-] {DomainNet}~\cite{peng2019moment}: A recently proposed challenging large-scale dataset. 
  Following~\cite{saito2019semi},
  we adopt four domains: Clipart (C), Painting (P), Real (R) and Sketch (S).
  We consider all $12$ adaptation tasks on this dataset.
  We keep all $126$ classes of the reference domain and keep the first $40$ classes (in alphabetic order) of the source domain.
\end{itemize}
For clearness,
for OfficeHome and DomainNet,
we name each task $\textit{RS}$-$\textit{D}$,
where $\textit{D}$ represents the dataset,
$\textit{R}$ and $\textit{S}$ represent the reference and source domain respectively.
For example, 
$AC$-OfficeHome represents the task of adapting domain A to domain C on OfficeHome.
We omit the dataset name if it is clear from the text.

\subsubsection{Ablation Study}
\label{Sec_PDA_ablation}

\begin{figure}[tb!]
  \centering
  \vspace{-2mm}
  \subfigure[$\lambda_1=\lambda_2=0$]{
    \includegraphics[width=0.45\linewidth]{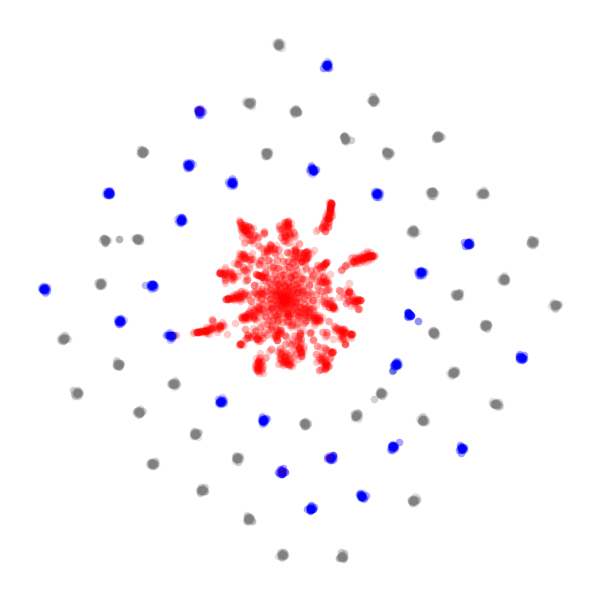}
  }
  \subfigure[$\lambda_1=1, m_\alpha=1 ; \lambda_2=0$]{
    \includegraphics[width=0.45\linewidth]{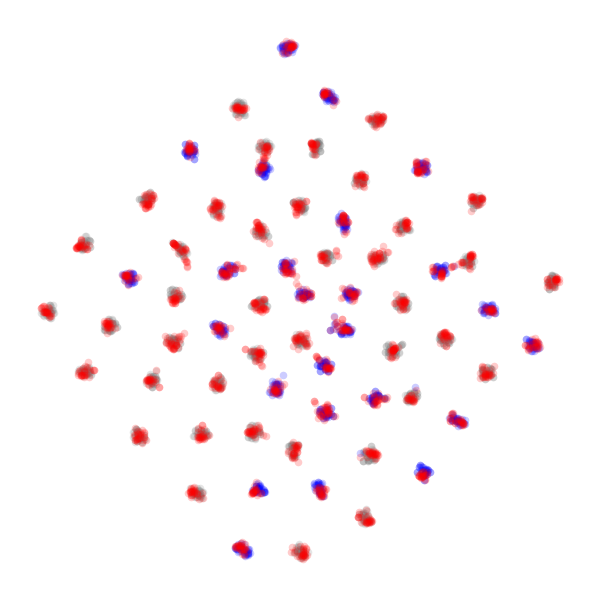}
  }
  \subfigure[$\lambda_1=1, m_\alpha=10 ; \lambda_2=0$]{
    \includegraphics[width=0.45\linewidth]{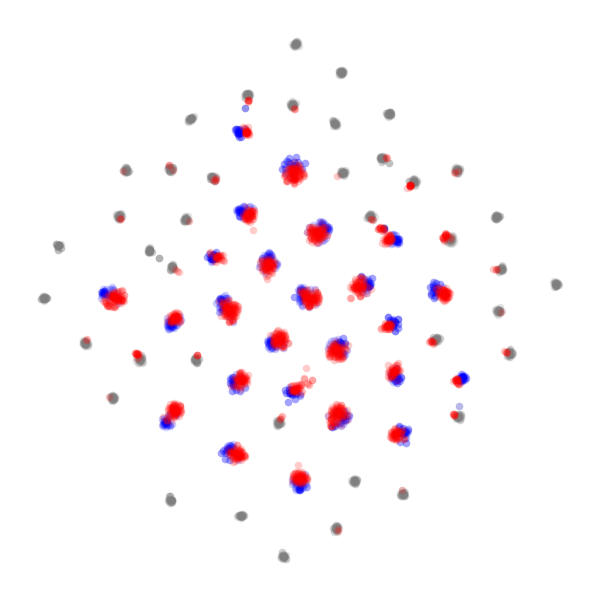}
  }
  \subfigure[$\lambda_1=1, m_\alpha=10; \lambda_2=0.1$]{
    \includegraphics[width=0.45\linewidth]{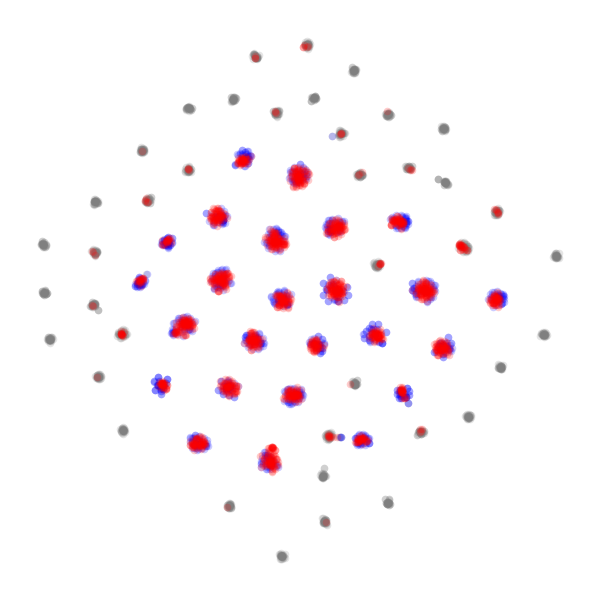}
  }

\caption{t-SNE visualization of the learned source and reference features,
where blue and grey points represent non-outlier and outlier reference features respectively,
and red points represent the source features.
PWAN ($\lambda_1 \neq 0, m_\alpha > 1$) can align the source features to a fraction of the reference features while avoiding biasing toward outliers.
}
\vspace{-2mm}
\label{PDA_vis}
\end{figure}

To explain the effectiveness of formulation~\eqref{obj_PDA},
we conduct an ablation experiment on AR-OfficeHome.
We train PWAN with different values of $\{\lambda_1, m_\alpha, \lambda_2\}$,
and visualize the features extracted by $\mathcal{T}_\theta$ in Fig.~\ref{PDA_vis} using t-SNE~\cite{van2008visualizing}.
For clearness,
we fix $\mu_i=1$ and $s=1$ in this experiment.

When a classifier is trained on the reference data without the aid of PWAN ($\lambda_1=\lambda_2=0$),
the reference and the source feature distributions are well-separated,
which suggests that the trained classifier cannot be used for the source data.
PWAN ($\lambda_1>0$) can alleviate this issue by aligning these two distributions.
However,
complete alignment ($m_\alpha=1$) leads to degraded performance due to serious negative transfer, 
\ie, most of the source features are biased toward the outlier features.
In contrast,
by aligning a lower ratio of the reference features,
PWAN with larger $m_\alpha$ ($m_\alpha=10$) can largely avoid the negative transfer effect,
\ie, most of the outliers are omitted.
In addition,
the entropy regularizer ($\lambda_2 >0$) further improves the alignment as it pushes the source features to their nearest class centers. 
The results suggest that our formulation~\eqref{obj_PDA} can indeed align the feature distributions while avoiding negative transfer,
which makes the trained classifier suitable for the source data.

\begin{table*}[t!]
  \footnotesize
	\begin{center}
	  \caption{Results (accuracy \%) of partial domain adaptation on OfficeHome.
    We additionally report PWAN with four extra techniques (PWAN + A): label smoothing, 
    complement objective regularizer, 
    neighborhood reciprocity clustering,
    and $\alpha$-power.}
    \vspace{-3mm}
	  \setlength{\tabcolsep}{1.3pt}
    \label{Office_Home_tab}
      \begin{tabular}{c c c c c c c c c c c c c c} 
		  \hline
           & AC &	AP	& AR	& CA	& CP	& CR	& PA	& PC	& PR	& RA	& RC	& RP & Avg \\
		\hline
	  \centering
    Ref Only	& 42&	67&	79.2&	56.8&	55.9&	65.4&	59.3&	35.5&	75.5&	68.7&	43.4&	74.5&	60.2 \\
    DANN~\cite{ganin2016domain}	&46.6	&45.8&	57.5&	37.3&	32.6&	40.5&	40.2&	39.4&	55.4&	54.5&	44.8&	57.6&	46.0 \\
    PADA~\cite{cao2018partial}	&48.9&	66.9&	81.6&	59.1&	55.3&	65.7&	65&	41.6&	81.1&	76&	47.6&	82&	64.2 \\
    SAN++~\cite{cao2022big} &61.25 &81.57 &88.57 &72.82 &76.41 &81.94 &74.47& 57.73 &87.24 &79.71& 63.76& 86.05& 75.96\\
    BAUS~\cite{liang2020balanced}	&60.6&	83.1&	88.3&	71.7&	72.7&	83.4&	75.4&	61.5&	86.5&	79.2&	62.8&	86.0&	75.9 \\
    DPDAN~\cite{hu2020discriminative} & 59.4 & --- & 79.0& ---& ---& ---& ---& --- &81.7& 76.7& 58.6& 82.1 & --- \\
    SHOT~\cite{liang2020we}	&62.9 (1.9)&	87.0 (0.5)&	92.5 (0.1)&	75.8 (0.3)&	77.0 (1.2)&	86.2 (0.6)&	77.7 (0.8)&	62.8 (0.6)	&90.4 (0.5)&	81.8 (0.1)&	65.5 (0.4)&	86.2 (0.1)&	78.8 (0.1) \\
    ADV~\cite{gu2021adversarial}	&62.3 (0.4)	&82.3 (2.9)	&91.5 (0.5)&	77.3 (0.9)&	76.6 (2.4)&	84.9 (2.6)&	79.8 (0.8)&	63.4 (0.7)&	90.1 (0.5)&	81.6 (1.1)&	65.0 (0.6)&	86.6 (0.5)&	78.5 (0.4) \\
    IDSP~\cite{li2022partial} & 60.8 & 80.8 & 87.3 &69.3 &76.0 &80.2& 74.7& 59.2& 85.3& 77.8& 61.3 &85.7& 74.9 \\
    SLM~\cite{sahoo2023select}  & 61.1 & 84.0 & 91.4 & 76.5 & 75.0 & 81.8 &74.6 &55.6 &87.8 &82.3 &57.8& 83.5& 76.0 \\
    SPDA~\cite{guo2022selective} &63.1 &87.8 &90.1 &77.2& 75.4 &85.6 &81.4& 62.4& 90.5 &82.6 &69.5& 88.2& 79.5 \\
    RAN~\cite{wu2022reinforced} & 63.3 & 83.1 & 89.0 & 75.0 & 74.5 & 82.9&  78.0 & 61.2&  86.7&  79.9 & 63.5 & 85.0 & 76.8 \\
    CLA~\cite{yang2022contrastive} &66.7 &85.6 &90.9 &75.6 &76.9& 86.8& 78.8 &67.4 &88.7 &81.7& 66.9 &87.8& 79.5 \\
    ARPM~\cite{gu2024adversarial} & 66.6 (2.8)&	87.2 (1.5)&	93.1 (0.5)&	78.1 (4.2)&	78.6 (2.0)&	84.8 (2.2)&	75.8 (0.4)&	68.5 (1.1)&	90.3 (2.6)&	86.5 (0.6)&	69.6 (2.4)&	89.2 (0.2) & \textbf{80.7 (0.6)} \\
		PWAN (Ours) &63.5 (3.3)&	83.2 (2.7)&	89.3 (0.2)&	75.8 (1.5)&	75.5 (2.5)&	83.3 (0.1)&	77.0 (2.4)&	61.1 (1.5)&	86.9 (0.7)&	79.9 (0.6)&	65.0 (3.8)&	86.2 (0.2)	&77.2 (1.6)\\
    PWAN+A  (Ours)	& 65.4 (0.2) &	88.0 (1.8) &	89.9 (0.3) &	79.2 (1.1) &	78.0 (1.1) &	88.0 (0.5) &	80.5 (1.6)& 66.2 (0.5) &	88.6 (0.8)&	81.8 (0.5)&	70.2 (3.4) &	90.1 (0.6)&	80.5 (0.3)\\
    \hline
	  \end{tabular}
	\end{center}
	\vspace{-4mm}
\end{table*}

\subsubsection{Results of Partial Domain Adaptation}
\label{Sec_PDA_comparison}

We compare the performance of PWAN with the state-of-the-art algorithms: DPDAN~\cite{hu2020discriminative} (2020, ECCV), 
PADA~\cite{cao2018partial} (ECCV, 2018), BAUS~\cite{liang2020balanced} (ECCV, 2020), 
SHOT~\cite{liang2020we} (ICML, 2020), 
ADV~\cite{gu2021adversarial} (NIPS, 2021),  
SAN++~\cite{cao2022big} (TPAMI, 2023), 
IDSP~\cite{li2022partial} (TPAMI, 2023),
SLM~\cite{sahoo2023select} (WACV, 2023), 
SPDA~\cite{guo2022selective} (BMVC, 2022), 
RAN~\cite{wu2022reinforced} (TCSVT, 2023)
CLA~\cite{yang2022contrastive} (TNNLS, 2023) and ARPM~\cite{gu2024adversarial} (IJCV, 2024).
We also report the performance of DANN~\cite{ganin2016domain} (JMLR, 2016), 
which can be seen as a special case of PWAN for closed-set domain adaptation.
We evaluate the performances of all algorithms by the accuracy on the source domain.
We run the code of ARPM, ADV and SHOT with $3$ different random seeds and report the mean and standard deviation.
The results of other algorithms are adopted from ARPM~\cite{gu2024adversarial}.

\begin{figure}[tb!]
  \centering
  \subfigure[Learned features]{
    \label{Vis_Norm_a}
    \begin{minipage}[b]{0.36\linewidth}
        \centering
        \includegraphics[width=1\linewidth]{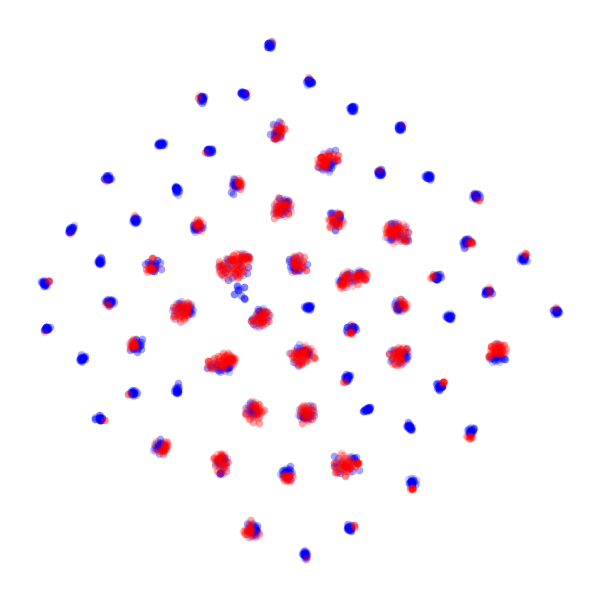}
    \end{minipage}
    }
    \hspace{3mm}
    \subfigure[$\|\nabla \vf_w \|$]{
        \label{Vis_Norm_b}
        \begin{minipage}[b]{0.36\linewidth}
            \centering
            \includegraphics[width=1\linewidth]{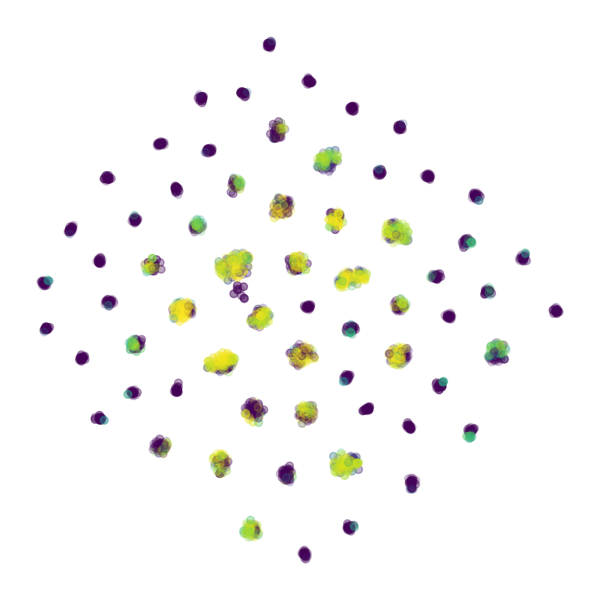}
        \end{minipage}
    }
    \subfigure[Classification result]{
        \label{Vis_Norm_c}
        \begin{minipage}[b]{0.45\linewidth}
            \centering
            \includegraphics[width=1\linewidth]{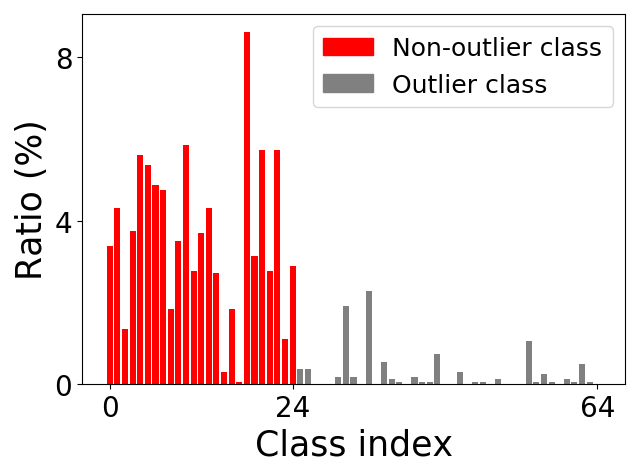}
        \end{minipage}
    }
    \subfigure[Training process]{
        \label{Vis_Norm_d}
        \begin{minipage}[b]{0.45\linewidth}
            \centering
            \includegraphics[width=1\linewidth]{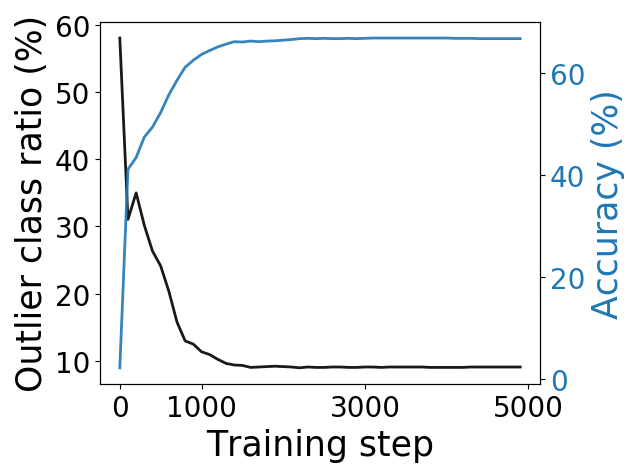}
        \end{minipage}
    }
\caption{More detailed results of PWAN on AC-OfficeHome.
\subref{Vis_Norm_a} Visualization of the learned features. 
The red and blue points represent the source and reference features respectively.
\subref{Vis_Norm_b} PWAN automatically omits a fraction of reference features (the dark points).
\subref{Vis_Norm_c} Most of the source data ($92\%$) is correctly classified into non-outlier classes, \ie,
the negative transfer effect is largely avoided.
\subref{Vis_Norm_d} During the training process, 
the test accuracy increases as the ratio of biased source data decreases.
}
\label{Vis_Norm}
\end{figure}

For a fair comparison,
we additionally report the results of PWAN with four advanced techniques developed for classification-type tasks (PWAN + A) on OfficeHome and DomainNet datasets:
complement objective regularizer~\cite{chen2019complement}, 
label smoothing~\cite{szegedy2016rethinking},
$\alpha$-power~\cite{gu2024adversarial},
and neighborhood reciprocity clustering~\cite{yang2021exploiting},
as they are also used in previous methods such as ADV, SHOT and BAUS and ARPM.
An ablation study of the effectiveness of each of these techniques is shown in Tab.~9 in the appendix.
The results on OfficeHome are reported in Tab.~\ref{Office_Home_tab}.
We observe that PWAN with these techniques (PWAN+A) performs comparably with ARPM,
and they outperform all other methods.

To show the effectiveness of PWAN,
we present more training details of PWAN on AC-OfficeHome in Fig.~\ref{Vis_Norm}.
As shown in Fig.~\ref{Vis_Norm_a} and Fig.~\ref{Vis_Norm_b},
the learned source features are matched to a fraction of the reference features,
while the other reference features are successfully omitted,
\ie, the gradient norm is low on these features.
Classification results in Fig.~\ref{Vis_Norm_c} show that most of the source data (about 92\%) are correctly classified into non-outlier classes,
which suggests that the negative transfer effect is largely avoided.
In addition,
the training process is shown in Fig.~\ref{Vis_Norm_d},
where we observe that classification accuracy (on the source domain) gradually increases as the biased ratio decreases during the training process.
The results show that our training process can effectively narrow the domain gap while avoiding the negative transfer effect.

\begin{table*}[tb!]
	\begin{center}
	  \caption{Results (accuracy \%) of partial domain adaptation on DomainNet.
    We additionally report PWAN with four extra techniques (PWAN + A): label smoothing, 
    complement objective regularizer, 
    neighborhood reciprocity clustering,
    and $\alpha$-power.}
    \vspace{-2mm}
	  \setlength{\tabcolsep}{1.3pt}
    \label{DomainNet_tab}
      \begin{tabular}{c c c c c c c c c c c c c c} 
		  \hline
           & CP&	CR&	CS&	PC&	PR&	PS&	RC&	RP&	RS&	SC&	SP&	SR & Avg \\
		\hline
	  \centering
    Ref Only	&41.2&	60.0&	42.1&	54.5&	70.8&	48.3&	63.1&	58.6&	50.2&	45.4&	39.3&	49.7&	51.9 \\
    DANN~\cite{ganin2016domain}	&27.8&	36.6&	29.9&	31.7&	41.9&	36.5&	47.6&	46.8&	40.8&	25.8&	29.5&	32.7&	35.6 \\
    PADA~\cite{cao2018partial}	&22.4	&32.8&	29.9&	25.7&	56.4&	30.4&	65.2&	63.3&	54.1&	17.4&	23.8&	26.9&	37.4\\ 
    BAUS~\cite{liang2020balanced}	&42.8 	&54.7& 	53.7& 	64.0& 	76.3& 	64.6& 	79.9& 	74.3& 	74.0& 	50.3& 	42.6& 	49.6& 	60.6 \\
    ADV~\cite{gu2021adversarial}	&54.1 (3.7)	&72.0 (0.4)&	55.4 (1.5) &	68.8 (0.5)	& 78.9 (0.2)	&75.4 (0.6)	& 77.4 (0.6)&	72.3 (0.5)	&70.4 (1.0)	&58.4 (0.6)	&53.4 (1.6)	&65.3 (0.6)	& 66.8 (0.2) \\
    ARPM~\cite{gu2024adversarial} & 65.9 (7.7)&	77.2 (10.7)&	65.6 (6.6)&	78.4 (2.8)&	84.4 (4.4)&	81.5 (1.2)&	84.5 (1.0)&	77.2 (7.2)&	79.6 (1.4)&	60.7 (3.8)&	64.4 (0.1)&	72.0 (1.2)&	74.3 (0.06) \\
    PWAN  (Ours)	&54.4 (0.9)&	74.1 (1.0)	&58.7 (2.5)&	65.3 (0.8)&	81.4 (0.5)&	73.0 (0.6)&	78.0 (0.5)&	73.4 (0.8)&	70.8 (0.7)&	51.6 (2.5)&	55.1 (0.6)&	66.2 (3.3)&	66.8 (0.5) \\
    PWAN+A (Ours) & 71.1 (1.5)&	80.2 (1.8)&	66.8 (0.9)&	77.5 (0.8)&	84.0 (1.0)&	80.1 (0.6)&	83.6 (0.6)&	77.3 (0.2)&	75.2 (1.4)&	62.4 (2.7)&	66.2 (1.4)&	71.5 (1.6)&	\textbf{74.7 (0.6)} \\
    \hline
	  \end{tabular}
	\end{center}
\end{table*}

\begin{table}[tb!]
	\begin{center}
	  \caption{Results (accuracy \%) of partial domain adaptation on ImageNet-Caltech and VisDa17.}
    \vspace{-2mm}
	  \setlength{\tabcolsep}{2.0pt}
    \label{IC_V_tab}
      \begin{tabular}{c c c} 
		  \hline
           & ImageNet-Clatech & VisDa17 \\
		\hline
	  \centering
    Ref Only  & 70.6 & 60.0  \\
    DANN~\cite{ganin2016domain} & 71.4 & 57.1  \\
    PADA~\cite{cao2018partial} & 75 & 66.8 \\
    SAN++~\cite{cao2022big}& 83.34 & 63.0 \\
    BAUS~\cite{liang2020balanced} & 84 &  69.8 \\
    DPDAN~\cite{hu2020discriminative} & ---& 65.2 \\
    SHOT~\cite{liang2020we} & 83.8 (0.2) & --- \\
    ADV~\cite{gu2021adversarial} & 85.4 (0.2) & 80.1 (7.9) \\
    ARPM~\cite{gu2024adversarial} & 84.6 & 83.1 (0.5) \\
    PWAN  (Ours) & \textbf{86.0 (0.5)} & \textbf{84.8 (6.1)} \\
    \hline
	  \end{tabular}
	\end{center}
	\vspace{-4mm}
\end{table}

\begin{figure*}[tb!]
  \centering
  \includegraphics[width=0.3\linewidth]{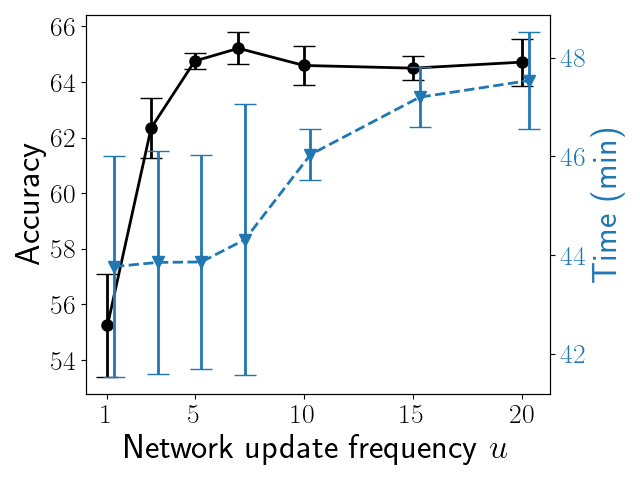}
  \includegraphics[width=0.3\linewidth]{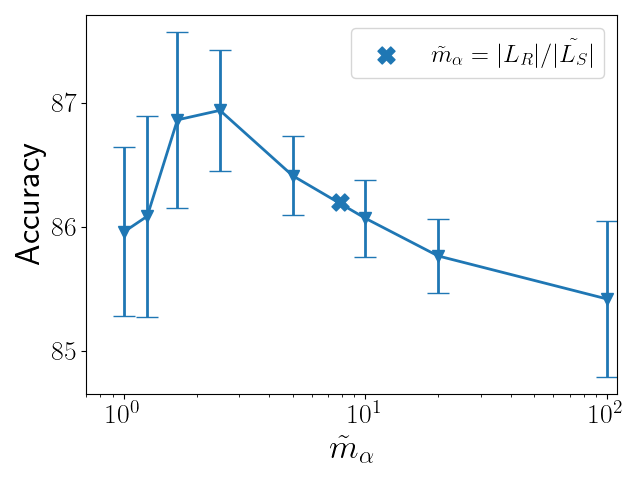}
  \includegraphics[width=0.3\linewidth]{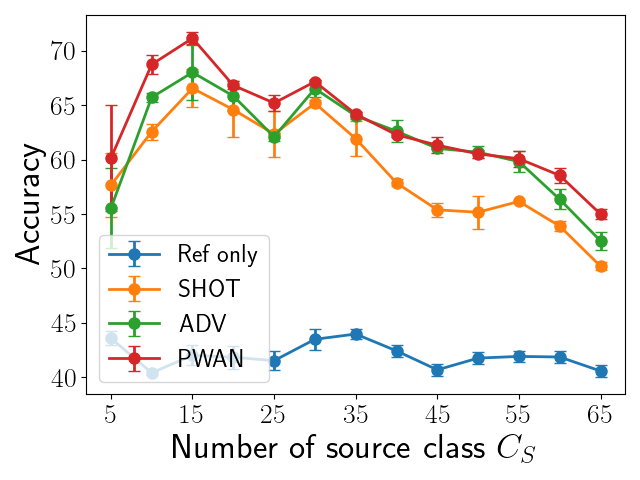}
\caption{Effect of parameter $m_\alpha$ and $u$.
Left: The performance and training time of PWAN as functions of $u$.
Middle: The performance of PWAN on ImageNet-Caltech with different values of $\tilde{m}_\alpha$.
Right: Comparison of performance on AC-OfficeHome with different ratios of outlier class.
}
\label{PDA_num_class}
\end{figure*}

The results on large-scale datasets VisDa17, ImageNet-Caltech and DomainNet are reported in Tab.~\ref{IC_V_tab} and Tab.~\ref{DomainNet_tab}.
On VisDa17,
PWAN outperforms all other baseline methods by a large margin.
As shown in Fig.~30 in the appendix,
PWAN can discriminate all data almost perfectly except for the ``knife'' and ``skateboard'' classes which are visually similar.
On ImageNet-Caltech,
PWAN also outperforms all other baseline methods,
which suggests that PWAN can successfully handle datasets dominated by outliers (more than $90\%$ of classes in ImageNet are outlier classes) without complicated sampling techniques~\cite{gu2021adversarial} or self-training procedures~\cite{cao2022big}.
On DomainNet,
PWAN with extra techniques (PWAN + A) performs slightly better than ARPM~\cite{gu2024adversarial},
and outperforms all other methods by a large margin.

\subsubsection{Effect of Parameters}
\label{Sec_PDA_analysis}

In this section,
we investigate the effects of the two major hyper-parameters involved in PWAN, 
\ie, $m_\alpha$ and $u$, 
so that they can be intuitively chosen in practical applications without hyper-parameter searching~\cite{you2019towards}.

Parameter $u$ is designed to control the tradeoff between the effectiveness and efficiency of PWAN.
To see the practical effects of $u$,
we run PWAN on AC-OfficeHome with varying $u$, 
and report the test accuracy and the corresponding training time in the left panel in Fig.~\ref{PDA_num_class}.
We observe that increasing $u$ leads to more accurate results and longer training time,
and the accuracy saturates when $u$ is larger than $10$.
Moreover,
as $u$ increases from $1$ to $20$,
the training time only increases by $10\%$.
This result suggests that the cost of training the potential network is small compared to training the classifier,
thus relatively large $u$ can be chosen to achieve better performance without causing a heavy computational burden.

Parameter $m_\alpha$~\eqref{m_alpha_pda} is designed to control the alignment ratio,
\ie,
larger $m_\alpha$ indicates that a lower ratio of reference feature will be aligned.
To investigate the effects of $m_\alpha$,
we run PWAN on ImageNet-Caltech with varying $\tilde{m}_\alpha$.
We update PWAN for $T=48000$ steps,
and we fix $s=\exp(10/T)$,
\ie, $m_\alpha=10\tilde{m}_\alpha$ at step $T$.
The results are reported in the middle panel of Fig.~\ref{PDA_num_class},
where we observe that overly large or small $\tilde{m}_\alpha$ leads to poor performance,
but the variation of performance is not large (about $2\%$) when $\tilde{m}_\alpha$ is in the range of $[10^0, 10^2]$.
This result suggests that the performance of PWAN is stable if a reasonable value of $\tilde{m}_\alpha$ is selected.

To further reduce the difficulty of selecting $\tilde{m}_\alpha$,
we suggested choosing $\tilde{m}_\alpha=|L_R| / \tilde{|L_S|}$ in Sec.~\ref{Sec_app_PDA}.
To see the effectiveness of this selection,
we compute the suggested $\tilde{m}_\alpha$ for ImageNet-Caltech and mark it in the middle panel of Fig.~\ref{PDA_num_class},
where we see that although the suggested value is not optimal,
it still leads to reasonable performance.
We further report the performance of PWAN using the suggested $\tilde{m}_\alpha$ on AC-OfficeHome with varying outlier ratio,
where we keep the first $C_S$ classes (in alphabetic order) in the source domain and vary $C_S$ from $5$ to $60$,
\ie,
the ratio of outlier class varies from $92\%$ to $0\%$.
The results are shown in the right panel of Fig.~\ref{PDA_num_class},
where we observe that 
the selected $\tilde{m}_\alpha$ can indeed handle different outlier ratios,
and it enables PWAN to outperform the baseline methods.

\section{Conclusion}
\label{Sec_conclusion}
In this work,
we propose PWAN for PDM tasks.
To this end,
we first derive the KR form and the gradient for the PW divergence.
Based on these results,
we approximate the PW divergence by a neural network and optimize it via gradient descent.
We apply PWAN to point sets registration and partial domain adaptation tasks,
and show that PWAN achieves favorable results on both tasks due to its strong robustness against outliers.

There are several issues that need further study.
First, 
when applied to point set registration,
the computation time of PWAN is still relatively high.
A promising approach to accelerate PWAN is to use a forward generator network as in~\cite{sarode2019pcrnet},
or to take an optimize-by-learning strategy~\cite{donti2021dc3}.
Second,
it is interesting to explore PWAN in other practical tasks,
such as outlier detection~\cite{xia2022gan},
and other types of domain adaptation tasks~\cite{kuhnke2019deep, li2021synthetic, panareda2017open}.

\ifCLASSOPTIONcompsoc
  \section*{Acknowledgments}
\else
  \section*{Acknowledgment}
\fi

This work was funded by the National Natural Science Foundation of China (NSFC) grants under contracts No. 62325111 and U22B2011,
and Knut and Alice Wallenberg Foundation.
The computation was partially enabled by resources provided by the National Academic Infrastructure for Supercomputing in Sweden (NAISS) at C3SE partially funded by the Swedish Research Council through grant agreement No. 2022-06725.



\bibliography{a}
\bibliographystyle{IEEEtran}

\clearpage

\clearpage
\appendices


\onecolumn
\section{Theoretical results}
We first derive the KR duality of $\mathcal{L}_{M,m}$ in Sec.~\ref{app_Sec_formulation},
and then characterize its potential in Sec.~\ref{app_sec_property_potential}.
We finally discuss the differentiability in Sec.~\ref{app_sec_diff_potential}.

\subsection{KR Duality of $\mathcal{L}_{M,m}$}
\label{app_Sec_formulation}
In this subsection,
we derive the KR formulation of $\mathcal{L}_{M,m}$.
We use the following notations:
\begin{itemize}[leftmargin=4mm]
    \item [-] $(\Omega, d)$: a metric $d$ associated with a compact metric space $\Omega$. 
    For example, $\Omega$ is a closed cubic in $\mathbb{R}^3$ in point set registration tasks,
    and $\Omega$ is a closed cubic in $\mathbb{R}^p$ in partial domain adaptation tasks,
    where $p$ is the dimension of features.
    \item [-] $C(\Omega)$: the set of continuous bounded functions defined on $\Omega$ equipped with the supreme norm.
    \item [-] $Lip(\Omega) \subseteq C(\Omega)$: the set of 1-Lipschitz function defined on $\Omega$.
    \item [-] $\mathcal{M}(X)$: the space of Radon measures on space $X$.
    \item [-] $\pi_\#^1$: The marginal of $\pi$ on its first variable. Similarly, $\pi_\#^2$ represents the marginal of $\pi$ on its second variable.
    \item [-] Given a function $\mathbf{F}$: $X \rightarrow \mathbb{R} \cup +\infty$, 
            the Fenchel conjugate of $\mathbf{F}$ is denoted as $\mathbf{F}^*$ and is given by: 
            \begin{equation}
                \mathbf{F}^*(x^*) = \sup_{x \in X} <x, x^*> - \mathbf{F}(x), \quad \forall x^* \in X^*
            \end{equation}
            where $X^*$ is the dual space of $X$ and $< \cdot >$ is the dual pairing.
\end{itemize}

Recall the definition of $\mathcal{L}_{M,m}$ and $\mathcal{L}_{D,h}$:
for $\alpha, \beta \in \mathcal{M}_+(\Omega)$,
\begin{equation}
        \label{app_POT_primal}
        \mathcal{L}_{M, m}(\alpha, \beta)=\inf_{\pi \in \Gamma_{m}(\alpha, \beta)} \int_{\Omega \times \Omega} c(x,y) d\pi(x,y),
\end{equation}
where $c: \Omega \times \Omega \rightarrow \mathbb{R}^+ $ is a continuous cost function,
and $\Gamma_{m}(\alpha, \beta)$ is the set of non-negative measure $\pi$ defined on $\Omega \times \Omega$ satisfying
\begin{equation}
    \pi(A \times \Omega) \leq \alpha(A), \quad \pi( \Omega \times A) \leq \beta(A) \quad and \quad \pi( \Omega \times \Omega) \geq m  \nonumber
\end{equation}
for all measurable set $A \subseteq \Omega$.
For ease of notations,
we abbreviate $m_\alpha=\alpha(\Omega)$,
$m_\beta=\beta(\Omega)$ and $m(\pi)=\pi( \Omega \times \Omega)$.
We also define
\begin{equation}
    \label{app_POT_primal_lag}
    \mathcal{L}_{D,h}(\alpha, \beta)=\inf_{ \pi \in \Gamma_0(\alpha, \beta) } \int_{\Omega \times \Omega} c(x,y) d\pi(x,y) - h m(\pi).
\end{equation}
where $h>0$ is the Lagrange multiplier.

We first derive the Fenchel-Rockafellar dual of $\mathcal{L}_{M,m}(\alpha, \beta)$.
\begin{proposition}[Dual form of $\mathcal{L}_{M,m}$]
    \label{app_dual_m}
    \eqref{app_POT_primal} can be equivalently expressed as 
    \begin{equation}
        \label{app_dual_m_eq}
        \mathcal{L}_{M,m}(\alpha, \beta)=\sup_{(\vf, \vg, h) \in \mathbf{R}} \int_\Omega \vf d\alpha + \int_\Omega \vg d\beta + mh.
    \end{equation}
    where the feasible set $\mathbf{R}$ is 
    \begin{equation}
        \label{app_admissible_R}
        \mathbf{R}=\Bigl\{ (\vf, \vg, h) \in C(\Omega) \times C(\Omega) \times \mathbb{R}_+ | \vf \leq 0,\; \vg \leq 0 ,\;  c(x,y)-h-\vf(x) - \vg(y) \geq 0, \forall x,y \in \Omega \Bigr\}
    \end{equation}
    In addition,
    the infimum in \eqref{app_POT_primal} is attained. 
\end{proposition}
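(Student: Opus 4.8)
The plan is to derive \eqref{app_dual_m_eq} as an instance of Fenchel--Rockafellar duality, following the standard recipe used for Kantorovich duality but adapted to the partial-mass constraints. First I would recast the primal problem \eqref{app_POT_primal} as an unconstrained minimization over $\pi \in \mathcal{M}(\Omega \times \Omega)$ of a sum $\mathbf{F}(\pi) + \mathbf{G}(A\pi)$, where $\mathbf{F}(\pi)$ encodes the cost integral plus the constraint $\pi \geq 0$ (so $\mathbf{F}(\pi) = \int c\, d\pi$ if $\pi \geq 0$ and $+\infty$ otherwise), and $\mathbf{G}$ is an indicator-type functional encoding the remaining constraints. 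The linear operator $A$ should map $\pi$ to the triple $(\pi_\#^1, \pi_\#^2, m(\pi))$, living in $C(\Omega)^* \times C(\Omega)^* \times \mathbb{R}$ (or its predual); then the constraints $\pi_\#^1 \leq \alpha$, $\pi_\#^2 \leq \beta$, $m(\pi) \geq m$ become membership of $A\pi$ in a suitable convex set, and $\mathbf{G}$ is its indicator. The adjoint $A^*$ acts on a triple of test functions $(\vf, \vg, h) \in C(\Omega) \times C(\Omega) \times \mathbb{R}$ by $(A^*(\vf,\vg,h))(x,y) = \vf(x) + \vg(y) + h$, which is exactly what produces the pointwise constraint $c(x,y) - h - \vf(x) - \vg(y) \geq 0$ in \eqref{app_admissible_R} when one computes $\mathbf{F}^*(-A^*(\vf,\vg,h))$.

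Next I would compute the two Fenchel conjugates explicitly. The conjugate $\mathbf{F}^*$ of the cost-plus-positivity functional yields the pointwise inequality $\vf(x)+\vg(y)+h \leq c(x,y)$ together with finiteness $= 0$ on that feasible region (and $+\infty$ otherwise), which accounts for the ``$c - h - \vf - \vg \geq 0$'' condition. The conjugate $\mathbf{G}^*$ of the indicator of $\{\mu^1 \leq \alpha,\ \mu^2 \leq \beta,\ s \geq m\}$ is a support function: for test functions it evaluates to $\int_\Omega \vf\, d\alpha + \int_\Omega \vg\, d\beta + m h$ precisely when $\vf \leq 0$, $\vg \leq 0$, and $h \geq 0$ (the sign conditions coming from the directions of the inequalities $\leq \alpha$, $\leq \beta$, $\geq m$), and $+\infty$ otherwise. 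Assembling $\sup\{ -\mathbf{F}^*(-A^*(\vf,\vg,h)) - \mathbf{G}^*(\vf,\vg,h)\}$ then reproduces \eqref{app_dual_m_eq} with feasible set $\mathbf{R}$ as in \eqref{app_admissible_R}.

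To get equality (no duality gap) and attainment of the primal infimum, I would verify the qualification hypothesis of the Fenchel--Rockafellar theorem in the $C(\Omega)$ / $\mathcal{M}(\Omega)$ pairing: it suffices to exhibit one admissible $\pi$ at which $\mathbf{G}$ is continuous, e.g.\ a small multiple of a product measure $\varepsilon (\alpha \otimes \beta)/(m_\alpha m_\beta)$ with $\varepsilon < \min(m_\alpha, m_\beta)$ adjusted so the mass constraint $m(\pi) \geq m$ is met (here the standing assumption $m \leq \min(m_\alpha, m_\beta)$, implicit for $\Gamma_m$ to be nonempty, is used). Attainment of the infimum in \eqref{app_POT_primal} then follows from weak-$*$ compactness: $\Gamma_m(\alpha,\beta)$ is a weak-$*$ closed subset of the order interval $\{0 \leq \pi,\ \pi_\#^1 \leq \alpha,\ \pi_\#^2 \leq \beta\}$, which is weak-$*$ compact by Banach--Alaoglu since $\Omega \times \Omega$ is compact, and $\pi \mapsto \int c\, d\pi$ is weak-$*$ continuous because $c \in C(\Omega \times \Omega)$; so a minimizing sequence has a convergent subsequence whose limit is optimal.

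\textbf{Main obstacle.} The technical crux is the functional-analytic bookkeeping: choosing the correct predual pairing so that the adjoint $A^*$ and the conjugates $\mathbf{F}^*$, $\mathbf{G}^*$ land in the right spaces, and checking the continuity/qualification condition rigorously rather than formally. The sign conventions — which of $\vf, \vg, h$ must be nonpositive versus nonnegative, and which direction $mh$ enters — are entirely dictated by the inequality directions in the definition of $\Gamma_m$, and getting these consistent throughout the conjugate computation is where care is needed; everything else is a routine adaptation of the classical Kantorovich duality argument.
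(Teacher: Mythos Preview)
Your overall strategy---Fenchel--Rockafellar duality with the pairing $(\vf,\vg,h) \leftrightarrow \vf(x)+\vg(y)+h$ and the indicator/support-function computations---is exactly the paper's. The substantive difference is the \emph{direction}: you place $\pi \in \mathcal{M}(\Omega\times\Omega)$ on the infimum side and let $(\vf,\vg,h)$ emerge in the dual, whereas the paper minimizes over $(\vf,\vg,h) \in C(\Omega)\times C(\Omega)\times\mathbb{R}$ and lets $\pi$ appear on the dual side. This reversal is not cosmetic, and it breaks your qualification step. You propose to check continuity of $\mathbf{G}$ (the indicator of $\{\mu_1 \leq \alpha,\ \mu_2 \leq \beta,\ s \geq m\}$) at some admissible $A\pi_0$, but that set has empty norm-interior in $\mathcal{M}(\Omega)\times\mathcal{M}(\Omega)\times\mathbb{R}$: for any candidate $\mu_0 \leq \alpha$, perturbing by an arbitrarily small Dirac mass at a point where $\alpha$ has no mass already violates $\mu \leq \alpha$. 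The same obstruction hits $\mathbf{F}$, since the nonnegative cone in $\mathcal{M}(\Omega\times\Omega)$ also has empty interior. So the standard FR hypothesis cannot be verified in your orientation, and your ``technical crux'' is in fact a genuine gap as written.

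The paper's orientation removes the difficulty at no cost: there the functional whose continuity must be checked is the indicator of $\{u \in C(\Omega\times\Omega): u \geq -c\}$, which has obvious sup-norm interior (since $c \geq 0$, a uniform ball around $u\equiv 1$ stays in the set), and the qualification is verified at $(\vf,\vg,h)=(1,1,-1)$. A further dividend of this orientation is that FR then delivers attainment on the \emph{measure} side, so the primal infimum in \eqref{app_POT_primal} is attained directly---your separate weak-$*$ compactness argument, while correct, becomes unnecessary. In short, your conjugate computations and the identification of the feasible set $\mathbf{R}$ are right; the fix is simply to swap which side is primal in the FR setup.
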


\begin{proof}
    We prove this proposition via Fenchel-Rockafellar duality.
    We first define space $E$: $C(\Omega) \times C(\Omega) \times \mathbb{R}$,
    space $F$: $C(\Omega \times \Omega)$,
    and a linear operator $\mathcal{A}$: $E \rightarrow F$ as 
    \begin{equation}
        \mathcal{A}(\vf,\vg,h): (x, y) \rightarrow \vf(x) + \vg(y) +h; \quad \forall \vf, \vg \in C(\Omega), \; \forall h \in \mathbb{R}, \; \forall x, y \in \Omega.
    \end{equation}
    Then we introduce a convex function $\mathbf{H}$: $F \rightarrow \mathbb{R} \cup +\infty $ as
    \begin{equation}
        \mathbf{H}(u) =\begin{cases}
            0 &\; if \; u \geq -c \\
            +\infty &\; else
        \end{cases}
    \end{equation}
    and $\mathbf{L}$: $E \rightarrow \mathbb{R} \cup +\infty $ as
    \begin{equation}
        \mathbf{L}(\vf, \vg, h) =\begin{cases}
            \int \vf d\alpha + \int \vg d\beta +hm &\; if \; \vf \geq 0, \; \vg \geq 0,\; h \leq 0 \\
            +\infty &\; else
        \end{cases}
    \end{equation}
    We can check when $\vf \equiv \vg \equiv 1$ and $h=-1$,
    $\mathbf{H}$ is continuous at $\mathcal{A}(\vf,\vg,h)$.
    Thus by Fenchel-Rockafellar duality,
    we have 
    \begin{equation}
        \label{app_RK_dual}
        \inf_{(\vf,\vg,h) \in E} \mathbf{H}(\mathcal{A}(\vf,\vg,h)) + \mathbf{L}(\vf,\vg,h) = \sup_{\pi \in \mathcal{M}(\Omega \times \Omega)} -\mathbf{H}^*(-\pi) - \mathbf{L}^*(\mathcal{A}^*\pi)
    \end{equation}

    We first compute $\mathbf{H}^*(-\pi)$ and $\mathbf{L}^*(\mathcal{A}^*\pi)$ on the right-hand side of~\eqref{app_RK_dual}.
    For arbitrary $\pi \in \mathcal{M}(\Omega \times \Omega)$,
    we have
    \begin{align}
        &\mathbf{H}^*(-\pi) \nonumber \\
        & = \sup_{u\in F} \Bigl\{ \int (-u)d\pi - \mathbf{H}(u) \Bigr\} \nonumber \\
        & = \sup_{u\in F} \Bigl\{    \int (-u)d\pi| \; u(x,y) \geq -c(x, y), \; \forall (x,y) \in \Omega \times \Omega  \Bigr\} \nonumber \\
        & = \sup_{u\in F} \Bigl\{    \int u d\pi| \; u(x,y) \leq c(x, y), \; \forall (x,y) \in \Omega \times \Omega \Bigr\} \nonumber
    \end{align}
    It is easy to see that if $\pi$ is a non-negative measure,
    then this supremum is $\int c d\pi $,
    otherwise it is $+\infty$.
    Thus
    \begin{equation}
        \mathbf{H}^*(-\pi) = \begin{cases}
            \int c(x,y) d\pi(x,y) & if \; \pi \in \mathcal{M}_+(\Omega \times \Omega) \\
            + \infty & else
        \end{cases}
    \end{equation}
    Similarly,
    we have
    \begin{align}
        &\mathbf{L}^*(\mathcal{A}^*\pi)  \nonumber \\
        &=\sup_{(\vf,\vg,h) \in E} \Bigl\{ <(\vf,\vg,h), A^*\pi > -\mathbf{L}(\vf,\vg,h) \Bigr\} \nonumber \\
        &=\sup_{(\vf,\vg,h) \in E} \Bigl\{ <A(\vf,\vg,h), \pi > -(\int \vf d\alpha + \int \vg d\beta +hm) | \vf,\vg \geq 0, \; h\leq 0 \Bigr\} \nonumber \\
        &=\sup_{(\vf,\vg,h) \in E} \Bigl\{\int \vf d (\pi_\#^1-\alpha) + \int \vg d(\pi_\#^2-\beta) +h ( \pi(\Omega \times \Omega) -m ) | \vf,\vg \geq 0, \; h\leq 0 \Bigr\} \nonumber
    \end{align}
    If $(\alpha-\pi_\#^1)$ and $(\beta -\pi_\#^2)$ are non-negative measures,
    and $\pi(\Omega \times \Omega) -m \geq 0$, 
    this supremum is $0$,
    otherwise it is $+\infty$.
    Thus
    \begin{equation}
        \mathbf{L}^*(\mathcal{A}^*\pi) = \begin{cases}
            0 & if (\alpha-\pi_\#^1) \in \mathcal{M}_+(\Omega), \; (\beta-\pi_\#^2) \in \mathcal{M}_+(\Omega), \; \pi(\Omega \times \Omega) \geq m\\
            + \infty & else
        \end{cases}
    \end{equation}
    In addition,
    the left-hand side of~\eqref{app_RK_dual} reads
    \begin{align}
        &\inf_{(\vf,\vg,h) \in E} \mathbf{H(\mathcal{A}(\vf,\vg,h))} + \mathbf{L}(\vf,\vg,h) \nonumber \\
        &= \inf_{(\vf,\vg,h) \in E} \Bigl\{ \int \vf d\alpha + \int \vg d\beta +hm |\; \vf,\vg \geq 0, \; h \leq 0, \; \vf(x) + \vg(y) +h \geq -c(x,y), \forall x, y \in \Omega \Bigr\} \nonumber \\ 
        &= -\sup_{(\vf,\vg,h) \in E} \Bigl\{ \int \vf d\alpha + \int \vg d\beta +hm |\; \vf,\vg \leq 0, \; h \geq 0, \; \vf(x) + \vg(y) +h \leq c(x,y), \forall x, y \in \Omega \Bigr\} \nonumber 
    \end{align}
    Finally,
    by inserting these terms into~\eqref{app_RK_dual},
    we have
    \begin{equation}
        \sup_{\substack{(\vf,\vg,h) \in E \\ \vf,\vg \leq 0, h \geq 0 \\ \vf(x) + \vg(y) + h \leq c(x,y) \forall x, y \in \Omega }} \int \vf d\alpha + \int \vg d\beta +hm = \inf_{\substack{\pi \in \mathcal{M}_+(\Omega \times \Omega) \\ (\alpha-\pi_\#^1) \in \mathcal{M}_+(\Omega), (\beta-\pi_\#^2) \in \mathcal{M}_+(\Omega) \\ \pi(\Omega \times \Omega) \geq m}  } \int c(x,y) d \pi(x,y), \nonumber
    \end{equation}
    which proves~\eqref{app_dual_m_eq}.
    
    In addition,
    we can also check right-hand side of~\eqref{app_RK_dual} is finite,
    since we can always construct independent coupling $\widetilde{\pi}=\frac{m}{\alpha(\Omega) \beta(\Omega)} \alpha \otimes  \beta $,
    such that $\widetilde{\pi} \in \mathcal{M}_+(\Omega \times \Omega)$,
    $\widetilde{\pi}_\#^1 = \frac{m}{\alpha(\Omega)} \alpha \leq \alpha$,
    $\widetilde{\pi}_\#^2 = \frac{m}{\beta(\Omega)} \beta \leq \beta $
    and $\widetilde{\pi}(\Omega \times \Omega) = m$.
    Thus the Fenchel-Rockafellar duality suggests the infimum is attained.
\end{proof}

Similarly,
we can derive the Fenchel-Rockafellar dual form of $\mathcal{L}_{D,h}$.
\begin{proposition}[Dual form of $\mathcal{L}_{D,h}$]
    \label{app_dual_h}
    \eqref{app_POT_primal_lag} can be equivalently expressed as
    \begin{equation}
        \label{app_dual_h_eq}
        \mathcal{L}_{D,h}(\alpha, \beta)=\sup_{(\vf, \vg) \in \mathbf{R}(h) } \int_\Omega \vf d\alpha + \int_\Omega \vg d\beta.
    \end{equation}
    where the feasible set is 
    \begin{equation}
        \label{app_admissible_Rh}
        \mathbf{R}(h)=\Bigl\{ (\vf, \vg) \in C(\Omega) \times C(\Omega) | \vf \leq 0,\; \vg \leq 0 ,\;  c(x,y)-h-\vf(x) - \vg(y) \geq 0, \forall x,y \in \Omega \Bigr\}
    \end{equation}
    In addition,
    the infimum in \eqref{app_POT_primal_lag} is attained. 
\end{proposition}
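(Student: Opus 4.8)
The plan is to follow the proof of Proposition~\ref{app_dual_m} almost verbatim, since $\mathcal{L}_{D,h}$ is nothing but $\mathcal{L}_{M,m}$ with the mass constraint dropped and the mass term absorbed into the objective. First I would note that, because $m(\pi)=\int_{\Omega\times\Omega}1\,d\pi$, the primal~\eqref{app_POT_primal_lag} can be rewritten as $\inf_{\pi\in\Gamma_0(\alpha,\beta)}\int_{\Omega\times\Omega}\bigl(c(x,y)-h\bigr)\,d\pi(x,y)$, i.e.\ an unbalanced transport problem with the shifted cost $\tilde{c}:=c-h$ subject only to the domination constraints $\pi_\#^1\le\alpha$ and $\pi_\#^2\le\beta$. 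Then I would set up Fenchel--Rockafellar exactly as before, but with $E=C(\Omega)\times C(\Omega)$ (with no $\mathbb{R}$ factor, since here $h$ is a fixed parameter rather than an optimization variable), $F=C(\Omega\times\Omega)$, the linear operator $\mathcal{A}(\vf,\vg)\colon(x,y)\mapsto\vf(x)+\vg(y)$, the indicator $\mathbf{H}(u)=0$ if $u\ge-\tilde{c}=h-c$ and $+\infty$ otherwise, and $\mathbf{L}(\vf,\vg)=\int\vf\,d\alpha+\int\vg\,d\beta$ if $\vf\ge0$, $\vg\ge0$ and $+\infty$ otherwise.

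Computing the conjugates exactly as in Proposition~\ref{app_dual_m} should give $\mathbf{H}^*(-\pi)=\int(c-h)\,d\pi$ for $\pi\in\mathcal{M}_+(\Omega\times\Omega)$ (and $+\infty$ otherwise) and $\mathbf{L}^*(\mathcal{A}^*\pi)=0$ when $\alpha-\pi_\#^1,\beta-\pi_\#^2\in\mathcal{M}_+(\Omega)$ (and $+\infty$ otherwise), while the primal side of the Fenchel--Rockafellar identity, after the sign flip $(\vf,\vg)\mapsto(-\vf,-\vg)$, becomes $-\sup\{\int\vf\,d\alpha+\int\vg\,d\beta:\vf\le0,\ \vg\le0,\ c(x,y)-h-\vf(x)-\vg(y)\ge0\ \forall x,y\}$, which is precisely $-\sup_{(\vf,\vg)\in\mathbf{R}(h)}\bigl(\int\vf\,d\alpha+\int\vg\,d\beta\bigr)$. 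Equating the two sides yields~\eqref{app_dual_h_eq}; this is also consistent with $\mathbf{R}(h)$ being the slice $\{(\vf,\vg):(\vf,\vg,h)\in\mathbf{R}\}$ of the feasible set of Proposition~\ref{app_dual_m}. Attainment of the infimum in~\eqref{app_POT_primal_lag} then follows from the same Fenchel--Rockafellar argument once the constraint qualification is checked (one also verifies the value is finite, e.g.\ the zero measure $\pi=0$ is feasible and the objective is bounded below by $-h\min(m_\alpha,m_\beta)$).

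The one place that genuinely needs verification is the constraint qualification: I must exhibit $(\vf_0,\vg_0)$ in the domain of $\mathbf{L}$ at which $\mathbf{H}$ is continuous. Taking $\vf_0\equiv\vg_0\equiv M$ with $2M>h$ — which is possible since $h$ is a fixed positive number and $c\ge0$, so $\sup_{\Omega\times\Omega}(h-c)\le h$ — makes $\mathcal{A}(\vf_0,\vg_0)\equiv2M$ lie strictly above $h-c$ everywhere, so $\mathbf{H}$ is identically $0$, hence continuous, in a neighborhood of $\mathcal{A}(\vf_0,\vg_0)$, while $\vf_0,\vg_0\ge0$ gives $\mathbf{L}(\vf_0,\vg_0)<\infty$. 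I do not expect a real obstacle; the only subtlety to flag is that the shifted cost $\tilde{c}=c-h$ may take negative values, so classical Kantorovich duality cannot simply be quoted — but this is harmless for the Fenchel--Rockafellar route, since the computation of $\mathbf{H}^*$ only uses that $\pi\mapsto\int\tilde{c}\,d\pi$ is the conjugate of the indicator of $\{u\ge-\tilde{c}\}$, which is insensitive to the sign of $\tilde{c}$.
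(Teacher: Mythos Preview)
Your proposal is correct and matches the paper's approach exactly: the paper gives no explicit proof of this proposition beyond the remark ``Similarly, we can derive the Fenchel--Rockafellar dual form of $\mathcal{L}_{D,h}$,'' i.e.\ it intends precisely the adaptation of the proof of Proposition~\ref{app_dual_m} that you have spelled out. Your added details (dropping the $\mathbb{R}$ factor from $E$, absorbing $h$ into the cost, the constraint-qualification witness $\vf_0\equiv\vg_0\equiv M$ with $2M>h$, and the finiteness check via $\pi=0$) are all sound and in fact more explicit than what the paper provides.
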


By comparing Proposition~\ref{app_dual_h} with Proposition~\ref{app_dual_m},
we can see that $\mathcal{L}_{D,h}$ and $\mathcal{L}_{M,m}$ are related by
\begin{align}
    \label{app_L-M}
    \mathcal{L}_{M,m} &= \sup_{(\vf, \vg, h) \in \mathbf{R}} \int_\Omega \vf d\alpha + \int_\Omega \vg d\beta + mh \nonumber \\
    & =\sup_{h\in \mathbb{R}_+}   \mathcal{L}_{D,h} + mh.
\end{align}
Therefore,
when the cost function $c$ is the distance $\vd$,
we obtain the KR form of $\mathcal{L}_{M,m}$ by inserting~\eqref{KR} into~\eqref{app_L-M}.
\begin{proposition}[KR form of $\mathcal{L}_{M,m}$]
    \label{app_m-PW}
    When $c(x, y) = \vd(x, y)$,
    \eqref{app_POT_primal} can be reformulated as 
    \begin{equation}
        \label{app_final_m_eq1}
        \mathcal{L}_{M, m}(\alpha, \beta)=\sup_{\substack{\vf \in Lip(\Omega), h \in \mathbf{R}_+ \\ -h \leq \vf \leq 0 }} \int_\Omega \vf d\alpha - \int_\Omega \vf d\beta +h(m-m_{\beta})
    \end{equation}
    or equivalently as 
    \begin{equation}
        \label{app_final_m_eq2}
        \mathcal{L}_{M, m}(\alpha, \beta)=\sup_{\vf \in Lip(\Omega), \vf \leq 0 } \int_\Omega \vf d\alpha - \int_\Omega \vf d\beta - \inf(\vf)(m-m_{\beta}).
    \end{equation}
\end{proposition}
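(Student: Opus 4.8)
The plan is to read off both identities from the relation~\eqref{app_L-M}, $\mathcal{L}_{M,m}(\alpha,\beta)=\sup_{h\in\mathbb{R}_+}\mathcal{L}_{D,h}(\alpha,\beta)+mh$, which has already been established from Propositions~\ref{app_dual_m} and~\ref{app_dual_h}, together with the known KR form~\eqref{KR} of $\mathcal{L}_{D,h}$, which applies because $c=\vd$ is a metric.

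First I would substitute~\eqref{KR} into~\eqref{app_L-M}, obtaining
\[
\mathcal{L}_{M,m}(\alpha,\beta)=\sup_{h\in\mathbb{R}_+}\Bigl[\,\sup_{\substack{\vf\in Lip(\Omega)\\ -h\le\vf\le 0}}\Bigl(\int_\Omega\vf\,d\alpha-\int_\Omega\vf\,d\beta-hm_\beta\Bigr)+mh\,\Bigr].
\]
For fixed $h$ the bracketed quantity is a supremum over $\vf$ of an expression that involves $h$ only through the feasibility constraint $-h\le\vf\le 0$ and the additive tail $-hm_\beta+mh$, so the nested suprema collapse to a single supremum over the pair $(\vf,h)\in Lip(\Omega)\times\mathbb{R}_+$ with $-h\le\vf\le 0$; regrouping $-hm_\beta+mh=h(m-m_\beta)$ then gives~\eqref{app_final_m_eq1}. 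For~\eqref{app_final_m_eq2} I would eliminate $h$ from~\eqref{app_final_m_eq1} by optimizing over it for each fixed admissible $\vf$: for $\vf\in Lip(\Omega)$ with $\vf\le 0$, the constraints $h\ge 0$ and $\vf\ge-h$ are together equivalent to $h\ge-\inf_\Omega\vf$ (and $-\inf_\Omega\vf\ge 0$ since $\vf\le 0$, the infimum being attained because $\Omega$ is compact and $\vf$ continuous). The objective is affine in $h$ with slope $m-m_\beta$, which is $\le 0$ under the standing assumption $m\le m_\beta$ that is implicit in the definition of $\mathcal{L}_{M,m}$ (if $m>m_\beta$ then $\Gamma_m(\alpha,\beta)=\emptyset$). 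Hence the inner supremum over $h$ is attained at the lower endpoint $h=-\inf_\Omega\vf$, substituting which yields the summand $\int_\Omega\vf\,d\alpha-\int_\Omega\vf\,d\beta-\inf(\vf)(m-m_\beta)$; taking the supremum over $\vf$ produces~\eqref{app_final_m_eq2}.

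I expect no real obstacle here, since all the content is carried by~\eqref{app_L-M} and~\eqref{KR}, both assumed, and what remains is bookkeeping. The two points that deserve a little care are the collapsing of the nested suprema into a joint one (routine, as $\sup_a\sup_b=\sup_{(a,b)}$ over the combined constraint set) and the sign of the slope $m-m_\beta$: it is precisely its nonpositivity that pins the optimal $h$ to the lower endpoint $-\inf_\Omega\vf$, so the assumption $m\le m_\beta$ is genuinely used in~\eqref{app_final_m_eq2}, whereas~\eqref{app_final_m_eq1} holds verbatim in all cases (both sides being $+\infty$ when $m>m_\beta$).
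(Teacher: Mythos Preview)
Your proposal is correct and follows essentially the same approach as the paper: substitute the KR form~\eqref{KR} of $\mathcal{L}_{D,h}$ into the relation~\eqref{app_L-M} to obtain~\eqref{app_final_m_eq1}, then optimize over $h$ for fixed $\vf$ to reach~\eqref{app_final_m_eq2}. You are in fact more careful than the paper, which simply asserts that the optimal $h$ is $-\inf(\vf)$ without mentioning that this requires the slope $m-m_\beta\le 0$; your explicit observation that this is where the standing assumption $m\le m_\beta$ enters is a genuine clarification.
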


\begin{proof}
For equation~\eqref{app_final_m_eq2},
note that given an $\vf \in C(\Omega) $,
the optimal $h$ is simply $-\inf(\vf) < +\infty$ ($\vf$ is continuous and $\Omega$ is compact).
So we can replace $h$ by $-\inf(\vf)$ in~\eqref{app_final_m_eq1} to obtain~\eqref{app_final_m_eq2}.
\end{proof}

For simplicity,
we define some functionals associated with $\mathcal{L}_{M,m}$ and $\mathcal{L}_{D,h}$.
\begin{definition}[$\mathbf{L}_{M,m}$ and $\overline{\mathbf{L}_{M,m}}$]
    Define 
\begin{equation}
    \label{app_Functional_M}
    \mathbf{L}_{M,m}^{\alpha, \beta}(\vf, h)=\begin{cases}
        \int_\Omega \vf d(\alpha - \beta) + h(m-m_{\beta}) & h \in \mathbb{R}_+ \; , \; \vf \in Lip(\Omega) \; ,\; -h \leq \vf \leq 0 \nonumber \\
        -\infty & else,
    \end{cases}
\end{equation}
and 
\begin{equation}
    \label{app_Functional_M2}
    \overline{\mathbf{L}_{M,m}^{\alpha, \beta}}(\vf)=\begin{cases}
        \int_\Omega \vf d(\alpha - \beta) - \inf(\vf) (m-m_{\beta}) & \vf \in Lip(\Omega) \; ,\; \vf \leq 0  \nonumber \\
        -\infty & else.
    \end{cases}
\end{equation}
\end{definition}
\begin{definition}[$\mathbf{L}_{D,h}$]
    \begin{equation}
        \mathbf{L}_{D,h}^{\alpha, \beta}(\vf)=\begin{cases}
            \int_{\Omega} \vf d\alpha - \int_\Omega \vf d\beta - h m_{\beta} & \vf \in Lip(\Omega) \;, \; -h \leq \vf \leq 0, \nonumber \\
            -\infty & else.
        \end{cases}
    \end{equation}
\end{definition}
We also define the functional associated with Wasserstein-1 metric $\mathcal{W}_1$:
\begin{definition}[$\mathbf{W}_{1}$]
\begin{equation}
    \label{app_Functional_W}
    \mathbf{W}^{\alpha, \beta}(\vf)=\begin{cases}
        \int_\Omega \vf d\alpha - \int_\Omega \vf d\beta & \vf \in Lip(\Omega), \; m_\alpha = m_\beta\\
        -\infty & else,
    \end{cases}
\end{equation}
\end{definition}

With these definitions,
we can write
\begin{equation}
    \mathcal{L}_{D,h}(\alpha, \beta)=\sup_{\vf \in C(\Omega)}\mathbf{L}_{D,h}^{\alpha, \beta}(\vf), \nonumber
\end{equation}
\begin{equation}
    \mathcal{L}_{M,m}(\alpha, \beta)=\sup_{\vf \in C(\Omega), h\in \mathbb{R}}\mathbf{L}_{M,m}^{\alpha, \beta}(\vf, h) \;\; and \;\; \mathcal{L}_{M,m}(\alpha, \beta)=\sup_{\vf \in C(\Omega)}\overline{\mathbf{L}_{M,m}^{\alpha, \beta}}(\vf),  \;  \nonumber
\end{equation}
and 
\begin{equation}
    \mathcal{W}_1(\alpha, \beta)=\sup_{\vf \in C(\Omega)} \mathbf{W}^{\alpha, \beta}(\vf). \nonumber
\end{equation}
For simplicity, we omit the notation of $\alpha$ and $\beta$ where they are clear from the text.

\subsection{Properties of KR Potentials}
\label{app_sec_property_potential}
Before we can discuss the property of the KR potential,
\ie, the maximizer of $\mathbf{L}_{M,m}$ and $\mathbf{L}_{D,h}$,
we first need to show that the potentials exist.
As for $\mathbf{L}_{D,h}$,
the existence of potentials is already known in~\cite{schmitzer2019framework}:
\begin{proposition}[\cite{schmitzer2019framework}]
    \label{app_Existence_D}
    For $\alpha, \beta \in \mathcal{M}_+(\Omega)$ and $h >0$,
    there exists an $\vf^*\in C(\Omega)$ such that $\mathcal{L}_{D,h}(\alpha, \beta) = \mathbf{L}_{D,h}^{\alpha, \beta}(\vf^*)$.
\end{proposition}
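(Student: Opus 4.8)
The plan is to prove existence of a maximizer by the direct method of the calculus of variations: exhibit the feasible set of the KR form as a compact subset of $C(\Omega)$, observe that the objective is continuous on it, and invoke the fact that a continuous real-valued function on a compact set attains its supremum. Since $\mathbf{L}_{D,h}^{\alpha,\beta}$ equals $-\infty$ outside its feasible set, any maximizer automatically lies in $Lip(\Omega)\subseteq C(\Omega)$, so this also delivers the regularity $\vf^*\in C(\Omega)$ claimed in the statement.

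First I would fix the feasible set $K=\{\vf\in Lip(\Omega):-h\le\vf\le 0\}$, which by the KR form~\eqref{KR} is exactly the set over which $\mathcal{L}_{D,h}(\alpha,\beta)$ is the supremum of $J(\vf):=\int_\Omega\vf\,d\alpha-\int_\Omega\vf\,d\beta-hm_\beta$. The set $K$ is nonempty (it contains $\vf\equiv 0$), uniformly bounded by $h$, and uniformly equicontinuous, because every element is $1$-Lipschitz on the compact metric space $\Omega$. Hence by the Arzel\`a--Ascoli theorem $K$ is relatively compact in $(C(\Omega),\|\cdot\|_\infty)$; moreover $K$ is closed, since both the $1$-Lipschitz property and the pointwise bounds $-h\le\vf\le 0$ pass to uniform limits, so $K$ is compact. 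Next I would note that $J$ is continuous on $C(\Omega)$ for the supremum norm: as $\alpha,\beta$ are finite Radon measures, $|J(\vf)-J(\psi)|\le(m_\alpha+m_\beta)\|\vf-\psi\|_\infty$. A continuous function on a compact set attains its maximum, so there is $\vf^*\in K$ with $J(\vf^*)=\sup_{\vf\in K}J(\vf)=\mathcal{L}_{D,h}(\alpha,\beta)$; since $\mathbf{L}_{D,h}^{\alpha,\beta}$ agrees with $J$ on $K$ and is $-\infty$ off $K$, this gives $\mathcal{L}_{D,h}(\alpha,\beta)=\mathbf{L}_{D,h}^{\alpha,\beta}(\vf^*)$. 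Equivalently one may run the argument on a maximizing sequence $\vf_n\in K$, extract a uniformly convergent subsequence via Arzel\`a--Ascoli, and pass to the limit using continuity of $J$.

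This is essentially the argument of~\cite{schmitzer2019framework}, which I would cite for the routine verifications. The only points needing a little care are matching the feasible set appearing in the KR form~\eqref{KR} with the compact set $K$ (i.e. that ``$1$-Lipschitz together with $-h\le\vf\le 0$'' is precisely the constraint there), and confirming closedness of $K$ under uniform convergence; neither is a genuine obstacle, and no delicate estimate is required since the substantive input — the KR duality itself — is already available from Proposition~\ref{app_dual_h} and~\eqref{KR}.
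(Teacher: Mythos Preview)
Your argument is correct. Note that the paper does not actually supply its own proof of this proposition: it is stated with a citation to \cite{schmitzer2019framework}, and the paper's own work begins with the analogous Proposition~\ref{app_Existence_M} for $\mathcal{L}_{M,m}$. Your direct-method argument (Arzel\`a--Ascoli compactness of the feasible set plus continuity of the linear objective) is the standard route and matches the structure of the paper's proof of Proposition~\ref{app_Existence_M}, which the authors explicitly describe as an analogue of the argument in \cite{schmitzer2019framework}; the only difference is that for $\mathcal{L}_{M,m}$ the feasible set is not a priori uniformly bounded, so the paper has to first modify the maximizing sequence before applying Arzel\`a--Ascoli, a step you rightly do not need here since the constraint $-h\le\vf\le 0$ already gives the uniform bound.
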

As for $\mathbf{L}_{M,m}$,
we can also prove the existence of the maximizer similarly:
\begin{proposition}
    \label{app_Existence_M}
    For $\alpha, \beta \in \mathcal{M}_+(\Omega)$ and $m > 0$,
    there exist $\vf^*\in C(\Omega)$ and $h \in \mathbb{R}$ such that $\mathcal{L}_{M,m}(\alpha, \beta) = \mathbf{L}_{M,m}^{\alpha, \beta}(\vf^*, h)$.
\end{proposition}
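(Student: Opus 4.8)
The plan is to reduce the claim to the already-established existence of an $\mathcal{L}_{D,h}$-potential (Proposition~\ref{app_Existence_D}) through the identity \eqref{app_L-M}, and then to show that the supremum over the Lagrange multiplier is attained at a finite value. Throughout I may assume $m \le \min(m_\alpha,m_\beta)$, since otherwise $\Gamma_m(\alpha,\beta)=\emptyset$, $\mathcal{L}_{M,m}(\alpha,\beta)=+\infty$, and no maximizer exists; I set $g(h):=\mathcal{L}_{D,h}(\alpha,\beta)+mh$ for $h\in\mathbb{R}_+$, so that \eqref{app_L-M} reads $\mathcal{L}_{M,m}(\alpha,\beta)=\sup_{h\ge 0} g(h)$.

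First I would record two elementary properties of $g$. Concavity: by \eqref{app_POT_primal_lag}, for each fixed feasible $\pi$ the map $h\mapsto\int c\,d\pi-h\,m(\pi)$ is affine, so $\mathcal{L}_{D,h}(\alpha,\beta)$ is a pointwise infimum of affine functions of $h$ and hence concave; adding $mh$ keeps $g$ concave on $[0,\infty)$. Boundedness: taking $\pi=0$ gives $\mathcal{L}_{D,h}(\alpha,\beta)\le 0$, while any feasible $\pi$ has $\int c\,d\pi\ge 0$ and $m(\pi)\le\min(m_\alpha,m_\beta)$, so $\mathcal{L}_{D,h}(\alpha,\beta)\ge-h\min(m_\alpha,m_\beta)$; in particular $\mathcal{L}_{D,h}(\alpha,\beta)\to 0=\mathcal{L}_{D,0}(\alpha,\beta)$ as $h\to 0^+$, so $g$ is continuous on $[0,\infty)$ (concave functions being continuous on the interior of their domain).

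The key step is to localise the supremum. I claim that for $h\ge\mathrm{diam}(\Omega)$ every optimiser of \eqref{app_POT_primal_lag} (which exists by Proposition~\ref{app_dual_h}) transports the maximal possible mass $\min(m_\alpha,m_\beta)$: if $m(\pi)<\min(m_\alpha,m_\beta)$, then $\alpha-\pi_\#^1$ and $\beta-\pi_\#^2$ are nonzero nonnegative measures, and adding to $\pi$ a coupling of an $\varepsilon$-portion of $\alpha-\pi_\#^1$ with an $\varepsilon$-portion of $\beta-\pi_\#^2$ raises $\int c\,d\pi$ by at most $\varepsilon\,\mathrm{diam}(\Omega)$ and lowers $-h\,m(\pi)$ by $\varepsilon h$, which does not increase the objective. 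Iterating to saturation gives, for such $h$,
\begin{equation*}
  g(h)=\mathcal{L}_{M,\min(m_\alpha,m_\beta)}(\alpha,\beta)+h\bigl(m-\min(m_\alpha,m_\beta)\bigr),
\end{equation*}
affine with slope $m-\min(m_\alpha,m_\beta)\le 0$; hence $g$ is non-increasing on $[\mathrm{diam}(\Omega),\infty)$, so $\sup_{h\ge 0}g(h)=\sup_{h\in[0,\mathrm{diam}(\Omega)]}g(h)$, which is attained at some $h^\ast\in[0,\mathrm{diam}(\Omega)]$ by continuity of $g$ on this compact interval.

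Finally I would extract the potential. If $h^\ast>0$, Proposition~\ref{app_Existence_D} at $h=h^\ast$ yields $\vf^\ast\in C(\Omega)$ with $\mathcal{L}_{D,h^\ast}(\alpha,\beta)=\mathbf{L}_{D,h^\ast}^{\alpha,\beta}(\vf^\ast)$; since this value is finite, $\vf^\ast$ is feasible ($\vf^\ast\in Lip(\Omega)$, $-h^\ast\le\vf^\ast\le 0$), so $(\vf^\ast,h^\ast)$ is admissible for $\mathbf{L}_{M,m}^{\alpha,\beta}$, and using $\mathbf{L}_{M,m}^{\alpha,\beta}(\vf,h)=\mathbf{L}_{D,h}^{\alpha,\beta}(\vf)+mh$ on admissible pairs one gets $\mathbf{L}_{M,m}^{\alpha,\beta}(\vf^\ast,h^\ast)=\mathcal{L}_{D,h^\ast}(\alpha,\beta)+mh^\ast=g(h^\ast)=\mathcal{L}_{M,m}(\alpha,\beta)$. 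If $h^\ast=0$, then $\mathcal{L}_{M,m}(\alpha,\beta)=g(0)=\mathcal{L}_{D,0}(\alpha,\beta)=0$, and $(\vf^\ast,h^\ast)=(0,0)$ is admissible with $\mathbf{L}_{M,m}^{\alpha,\beta}(0,0)=0$, hence again a maximiser. The main obstacle is the localisation step, i.e. ruling out that the supremum over the multiplier $h$ escapes to infinity; the mass-saturation argument together with the mild continuity check at $h=0$ is what makes this work.
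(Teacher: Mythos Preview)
Your argument is correct but takes a genuinely different route from the paper. The paper proceeds by a direct compactness argument on maximizing sequences: given any maximizing sequence $(\vf_n,h_n)$, it shows one may translate so that $\min\vf_n=-\mathrm{diam}(\Omega)$ without decreasing the objective (using $m\le m_\alpha$), thereby obtaining a uniformly bounded, equicontinuous sequence; Arzel\`a--Ascoli plus a continuity lemma for $\mathbf{L}_{M,m}^{\alpha,\beta}$ then yields a limit maximizer. Your proof instead factors through the multiplier: you study $g(h)=\mathcal{L}_{D,h}(\alpha,\beta)+mh$, observe it is concave and continuous, and use a mass-saturation argument to show $g$ is affine with non-positive slope beyond $h=\mathrm{diam}(\Omega)$; compactness of $[0,\mathrm{diam}(\Omega)]$ then gives an optimal $h^\ast$, and you read off the potential from Proposition~\ref{app_Existence_D}. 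Both approaches localise to $h\le\mathrm{diam}(\Omega)$, but by different mechanisms (translation invariance of the functional versus behaviour of the primal problem at large penalty). Your approach is more modular, piggybacking cleanly on the already-proved $\mathcal{L}_{D,h}$ existence result; the paper's approach is more self-contained and avoids re-entering the primal formulation. One small remark: your saturation step as written shows only that \emph{some} optimiser of $\mathcal{L}_{D,h}$ has full mass for $h\ge\mathrm{diam}(\Omega)$ (not every optimiser), but that is all you need for the affine formula for $g$.
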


To prove this proposition,
we need the following lemma.
\begin{lemma}[Continuity]
    \label{app_Continuous}
    $\mathbf{L}_{M,m}^{\alpha, \beta}$ is continuous on $C(\Omega) \times \mathbb{R}$.
\end{lemma}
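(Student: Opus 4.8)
The plan is to read the statement as \emph{upper semicontinuity} of the extended-real-valued functional $\mathbf{L}_{M,m}^{\alpha,\beta}$ — which is the notion actually used in the subsequent existence proof (Proposition~\ref{app_Existence_M}) — by isolating the affine part of the functional and the closed set on which it is finite. Concretely, set
\[
D \;=\; \bigl\{(\vf,h)\in C(\Omega)\times\mathbb{R} \;:\; h\ge 0,\ \vf\in Lip(\Omega),\ -h\le \vf\le 0 \text{ on } \Omega\bigr\},
\]
and let $\Lambda(\vf,h):=\int_\Omega \vf\,d(\alpha-\beta)+h\,(m-m_\beta)$, which is well-defined on \emph{all} of $C(\Omega)\times\mathbb{R}$. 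Then $\mathbf{L}_{M,m}^{\alpha,\beta}=\Lambda$ on $D$ and $\mathbf{L}_{M,m}^{\alpha,\beta}=-\infty$ off $D$, so the lemma reduces to two facts: $D$ is closed in $C(\Omega)\times\mathbb{R}$, and $\Lambda$ is (jointly) continuous.

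First I would check that $D$ is closed. Take $(\vf_n,h_n)\in D$ with $\vf_n\to\vf$ uniformly and $h_n\to h$. Passing to the limit in $h_n\ge 0$ gives $h\ge 0$; for each $x,y\in\Omega$ we have $|\vf(x)-\vf(y)|=\lim_n|\vf_n(x)-\vf_n(y)|\le d(x,y)$, so $\vf\in Lip(\Omega)$; and the pointwise bounds $-h_n\le \vf_n(x)\le 0$ pass to the limit to give $-h\le \vf(x)\le 0$. Hence $(\vf,h)\in D$. This is the only place the structure of the constraints is used, and each is a closed condition preserved under uniform convergence.

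Next, $\Lambda$ is continuous: if $\vf_n\to\vf$ uniformly then, since $\alpha,\beta$ are finite Radon measures on the compact space $\Omega$, $\bigl|\int_\Omega(\vf_n-\vf)\,d(\alpha-\beta)\bigr|\le \|\vf_n-\vf\|_\infty\,(m_\alpha+m_\beta)\to 0$, and $h_n(m-m_\beta)\to h(m-m_\beta)$ trivially; in fact $\Lambda$ is Lipschitz on $C(\Omega)\times\mathbb{R}$. Combining the two facts yields upper semicontinuity of $\mathbf{L}_{M,m}^{\alpha,\beta}$: given $(\vf_n,h_n)\to(\vf,h)$, if $(\vf,h)\notin D$ then openness of $D^c$ forces $(\vf_n,h_n)\in D^c$ for large $n$, so $\mathbf{L}_{M,m}^{\alpha,\beta}(\vf_n,h_n)=-\infty=\mathbf{L}_{M,m}^{\alpha,\beta}(\vf,h)$ eventually; if $(\vf,h)\in D$, split the sequence into its terms lying in $D$ (where $\mathbf{L}_{M,m}^{\alpha,\beta}=\Lambda$ converges to $\Lambda(\vf,h)=\mathbf{L}_{M,m}^{\alpha,\beta}(\vf,h)$) and those lying in $D^c$ (where $\mathbf{L}_{M,m}^{\alpha,\beta}=-\infty\le \mathbf{L}_{M,m}^{\alpha,\beta}(\vf,h)$), obtaining $\limsup_n \mathbf{L}_{M,m}^{\alpha,\beta}(\vf_n,h_n)\le \mathbf{L}_{M,m}^{\alpha,\beta}(\vf,h)$. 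Together with the Arzel\`a--Ascoli compactness used in the next step, this gives existence of a maximizer.

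The only genuine subtlety — and the point I would flag — is the meaning of ``continuous'' in the presence of the value $-\infty$: because $Lip(\Omega)$ has empty interior in $C(\Omega)$, the functional fails to be lower semicontinuous at points of $D$, so the claim should be understood as continuity of the restriction $\mathbf{L}_{M,m}^{\alpha,\beta}|_D$ (equivalently, continuity of $\Lambda$ together with closedness of $D$) or, what is really invoked downstream, upper semicontinuity on the whole space; the argument above delivers both readings. Everything else is routine, resting only on stability of the 1-Lipschitz and two-sided bound constraints under uniform limits and on the elementary estimate $\bigl|\int \vf_n\,d\mu-\int\vf\,d\mu\bigr|\le\|\vf_n-\vf\|_\infty\,\mu(\Omega)$ for finite $\mu$.
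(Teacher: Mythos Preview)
Your argument is correct and follows the same core idea as the paper's proof: the feasible set $D$ is closed under uniform limits, and the affine part of the functional is continuous there. The paper assumes from the start that $(\vf_n,h_n)$ eventually satisfies the constraints and then invokes dominated convergence (where your uniform-norm bound $\bigl|\int(\vf_n-\vf)\,d\mu\bigr|\le\|\vf_n-\vf\|_\infty\,\mu(\Omega)$ is more direct); it never considers sequences approaching $D$ from its complement and so does not confront the issue you correctly flag --- that $\mathbf{L}_{M,m}^{\alpha,\beta}$ is only upper semicontinuous on $C(\Omega)\times\mathbb{R}$, not continuous, since $Lip(\Omega)$ has empty interior in $C(\Omega)$. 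Your reading (continuity of the restriction to $D$ plus closedness of $D$, equivalently upper semicontinuity on the whole space) is precisely what Proposition~\ref{app_Existence_M} actually uses, and is sharper than the paper's stated claim.
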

\begin{proof}
    Let $(\vf_n, h_n) \rightarrow (\vf, h)$ in $C(\Omega) \times \mathbb{R}$.
    Assume $\mathbf{L}_{M,m}^{\alpha, \beta}(\vf_n, h_n) > -\infty$ when $n$ is sufficiently large.
    We first check $\mathbf{L}_{M,m}^{\alpha, \beta}(\vf, h) > -\infty$ as follows.
    For arbitrary $\epsilon > 0$,
    there exists $N>0$ such that for $n > N$, 
    $h_n < h + \epsilon$,
    thus $\vf_n > - h_n > -h - \epsilon$. 
    By taking $n \rightarrow \infty$,
    we see for arbitrary $\epsilon >0$, 
    $\vf > -h - \epsilon$,
    which suggests $\vf \geq -h$.
    In addition,
    it is easy to see $\vf \leq 0$ and $h \geq 0$.
    It is also easy to see $Lip(\vf) \leq 1$ due to the closeness of $Lip(\Omega)$.
    Thus according to definition~\ref{app_Functional_M},
    we claim $\mathbf{L}_{M,m}^{\alpha, \beta}(\vf, h) > -\infty$.
    Furthermore,
    since $-h -\epsilon < \vf_n <0$,
    and $\vf_n \rightarrow \vf$,
    by dominated convergence theorem,
    we have
    \begin{align}
        & \lim_{n \rightarrow \infty} \int_\Omega \vf_n d\alpha = \int_\Omega \lim_{n \rightarrow \infty} \vf_n d\alpha = \int_\Omega \vf d\alpha, \\
        and & \lim_{n \rightarrow \infty} \int_\Omega \vf_n d\beta = \int_\Omega \lim_{n \rightarrow \infty} \vf_n d\beta = \int_\Omega \vf d\beta.
    \end{align}
    Note we also have $h_n(m-m_{\beta}) \rightarrow h(m-m_{\beta})$.
    We conclude the proof by combining these three terms and obtaining $\mathbf{L}_{M,m}^{\alpha, \beta}(\vf_n, h_n) \rightarrow \mathbf{L}_{M,m}^{\alpha, \beta}(\vf, h)$.
\end{proof}

\begin{proof}[Proof of Proposition~\ref{app_Existence_M}]
    If we can find a maximizing sequence $(\vf_n, h_n)$ that converges to $(\vf, h) \in C(\Omega) \times \mathbb{R}$,
    then Lemma~\ref{app_Continuous} suggests that $\mathcal{L}_{M,m}(\alpha, \beta) = \sup_{\vf, h} \mathbf{L}_{M,m}^{\alpha, \beta}(\vf, h)= \lim_{n \rightarrow \infty} \mathbf{L}_{M,m}^{\alpha, \beta}(\vf_n, h_n) = \mathbf{L}_{M,m}^{\alpha, \beta}(\vf, h)$,
    which proves this proposition.
    Therefore,
    we only need to show that it is always possible to construct such a maximizing sequence.
    
    Let $(\vf_n, h_n)$ be a maximizing sequence.
    We abbreviate $\max(\vf_n)=\max_{x\in \Omega}(\vf_n(x))$
    and $\min(\vf_n) =\min_{x\in \Omega}(\vf_n(x))$.
    We first assume $\vf_n$ does not have any bounded subsequence,
    then there exists $N>0$,
    such that for all $n>N$, 
    $\min(\vf_n) < -diam(\Omega)$ (otherwise we can simply collect a subsequence of $\vf_n$ bounded by $diam(\Omega)$).
    We can therefore construct $\widetilde{\vf_n} = \vf_n - (\min(\vf_n) + diam(\Omega))$ and $\widetilde{h_n} = h_n + (\min(\vf_n) + diam(\Omega))$.
    Note that $\max(\widetilde{\vf_n}) \leq \min(\widetilde{\vf_n}) + diam(\Omega) = - diam(\Omega) + diam(\Omega) = 0$,
    $\widetilde{\vf_n} + \widetilde{h_n} = \vf_n + h_n \geq 0$,
    and $\widetilde{\vf_n} \in Lip(\Omega)$,
    so $\mathbf{L}_{M,m}^{\alpha, \beta}(\widetilde{\vf_n}, \widetilde{h_n}) > -\infty$,
    and 
    \begin{align}
        \mathbf{L}_{M,m}^{\alpha, \beta}(\widetilde{\vf_n}, \widetilde{h_n})  &= \int_\Omega \widetilde{\vf_n} d(\alpha - \beta) + \widetilde{h_n}(m-m_{\beta}) \nonumber \\
        & = \int_\Omega \vf_n d(\alpha - \beta) + h_n(m-m_{\beta}) + (\min(\vf_n) - diam(\Omega)) (m - m_\alpha) \nonumber \\
        & \geq \mathbf{L}_{M,m}^{\alpha, \beta}(\vf_n, h_n), \nonumber
    \end{align}
    which suggests that $(\widetilde{\vf_n}, \widetilde{h_n})$ is a better maximizing sequence than $(\vf_n, h_n)$.
    Note that $\widetilde{\vf_n}$ is uniformly bounded by $diam(\Omega)$ because $0 \geq \widetilde{\vf_n} \geq \min(\widetilde{\vf_n})=-diam(\Omega)$.
    As a result,
    we can always assume $\widetilde{h_n}$ is also bounded by $diam(\Omega)$.
    Because otherwise we can construct $\overline{h_n}  = -\min(\widetilde{\vf_n}) \leq diam(\Omega)$,
    and it is easy to show $(\widetilde{\vf_n}, \overline{h_n})$ is a better maximizing sequence than $(\widetilde{\vf_n}, \widetilde{h_n})$.
    In summary,
    we can always find a maximizing sequence  $(\vf_n, h_n)$,
    such that both $\vf_n$ and $h_n$ are bounded by $diam(\Omega)$.

    Finally,
    since $\vf_n$ is uniformly bounded and equicontinuous, 
    $\vf_n$ converges uniformly (up to a subsequence) to a continuous function $\vf$.
    In addition,
    $h_n$ has a convergent subsequence since it is bounded.
    Therefore,
    we can always find a maximizing sequence $(\vf_n, h_n)$ that converges to some $(\vf, h) \in C(\Omega) \times \mathbb{R}$,
    which finishes the proof.
\end{proof}

\textbf{Remark} The proof of Proposition~\ref{app_Existence_M} is an analogue of Proposition 2.11 in~\cite{schmitzer2019framework}.
The difference is that in Proposition~\ref{app_Existence_M}, 
besides $\vf$, 
we additionally need to handle another variable $h$ acting as the lower bound of $\vf$.

Now we proceed to the main result in this subsection,
which states that the potential is $0$ or $-h$ on the omitted mass,
thus it has $0$ gradient on this mass.
This qualitative description reveals an interesting connection between our algorithm and WGAN~\cite{pmlr-v70-arjovsky17a}.

First,
we note that,
the potential of $\mathcal{W}_1$ can be characterized as follows.
\begin{lemma}[Potential of $\mathcal{W}_1$~\cite{gulrajani2017improved}]
    \label{app_potential_w1}
    Let $\vf^*$ be a maximizer of $\mathbf{W}_1^{\alpha, \beta}$. 
    If $\vf^*$ is differentiable, 
    and there exists a primal solution $\pi$ satisfying $\pi(x=y)=0$,
    then $\vf^*$ has gradient norm $1$ $(\alpha+\beta)$-almost surely.
\end{lemma}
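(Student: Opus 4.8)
The plan is to derive the claim from the Kantorovich--Rubinstein duality~\eqref{KR-dual} by combining a complementary-slackness relation between the dual optimum $\vf^*$ and a primal optimum $\pi$ with an elementary one-dimensional argument along transport segments. First I would record complementary slackness. Since $m_\alpha=m_\beta$, any admissible $\pi$ satisfies $\pi(\Omega\times\Omega)=\pi_\#^1(\Omega)\le m_\alpha$, so the constraint $\pi(\Omega\times\Omega)\ge m_\alpha$ forces a primal minimizer to have marginals exactly $\alpha$ and $\beta$, whence $\int_\Omega \vf^*\,d\alpha-\int_\Omega \vf^*\,d\beta=\int_{\Omega\times\Omega}\bigl(\vf^*(x)-\vf^*(y)\bigr)\,d\pi(x,y)$. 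Because $\vf^*\in Lip(\Omega)$ the integrand is at most $\vd(x,y)=\|x-y\|$ pointwise, while by optimality of $\vf^*$ and $\pi$ the left-hand side equals $\mathcal{W}_1(\alpha,\beta)=\int \vd\,d\pi$; hence $\vf^*(x)-\vf^*(y)=\|x-y\|$ for $\pi$-almost every $(x,y)$. Combined with the hypothesis $\pi(x=y)=0$, this says the set $A$ of pairs $(x,y)$ with $x\ne y$ and $\vf^*(x)-\vf^*(y)=\|x-y\|$ carries full mass, $\pi(A)=m_\alpha$.

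Next I would upgrade this pointwise equation to a statement about $\|\nabla\vf^*\|$. After extending $\vf^*$ to a $1$-Lipschitz function on the ambient Euclidean space (for instance by a McShane extension, so that the segment from $y$ to $x$ is available even when $\Omega$ is not convex), for any $(x,y)\in A$ the map $t\mapsto \vf^*\bigl(y+t(x-y)\bigr)$ on $[0,1]$ is $\|x-y\|$-Lipschitz and its endpoints differ by $\|x-y\|$, which forces it to be affine with slope exactly $\|x-y\|$. Using differentiability of $\vf^*$ at $x$ and letting $t\to 1^-$ gives $\langle\nabla\vf^*(x),u\rangle=1$ for the unit vector $u=(x-y)/\|x-y\|$, hence $\|\nabla\vf^*(x)\|\ge 1$; since a differentiable $1$-Lipschitz function has $\|\nabla\vf^*\|\le 1$ everywhere, we conclude $\|\nabla\vf^*(x)\|=1$, and the symmetric computation at $t\to 0^+$ yields $\|\nabla\vf^*(y)\|=1$.

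Finally I would push this through the marginals. The set $\{x:\|\nabla\vf^*(x)\|=1\}$ contains the projection of $A$ onto its first coordinate, and $\{y:\|\nabla\vf^*(y)\|=1\}$ contains the projection onto the second; since $\alpha=\pi_\#^1$, $\beta=\pi_\#^2$ and $\pi(A)=m_\alpha=m_\beta$, both projections have full measure. Therefore $\|\nabla\vf^*\|=1$ holds $\alpha$-almost surely and $\beta$-almost surely, hence $(\alpha+\beta)$-almost surely.

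The main obstacle is the complementary-slackness step: I need existence of a primal optimum $\pi$ with marginals exactly $\alpha$ and $\beta$ (standard in the balanced case $m_\alpha=m_\beta$ on compact $\Omega$, and analogous to Proposition~\ref{app_Existence_M}), together with the argument that equality of the two integrals forces the pointwise identity $\vf^*(x)-\vf^*(y)=\|x-y\|$ at $\pi$-almost every point. The measure-theoretic bookkeeping in the last step — that a $\pi$-full set projects to an $\alpha$-full and a $\beta$-full set — is routine, and the segment argument in the second step is elementary calculus.
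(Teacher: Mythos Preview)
The paper does not supply its own proof of this lemma; it is quoted as a known result from \cite{gulrajani2017improved} (their Corollary~1, built on their Proposition~1). Your argument is a faithful reconstruction of that proof: complementary slackness between a primal optimizer $\pi$ with exact marginals and the dual optimizer $\vf^*$ gives $\vf^*(x)-\vf^*(y)=\|x-y\|$ $\pi$-a.e., the one-dimensional segment argument then pins the directional derivative to $1$ at each endpoint, and the marginal identities $\alpha=\pi_\#^1$, $\beta=\pi_\#^2$ transfer this to $(\alpha+\beta)$-a.e. One minor polish for the final step: rather than invoking projections of $A$ (which raises analytic-set measurability issues), argue directly that for $B=\{x:\|\nabla\vf^*(x)\|\neq 1\}$ one has $\alpha(B)=\pi(B\times\Omega)\le\pi(A^c)=0$, and symmetrically for $\beta$. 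With that tweak the proof is complete and matches the cited source.
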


Then we characterize the flatness of the potential of $\mathcal{L}_{D,h}$ in the following two lemmas.
\begin{lemma}
    \label{unimportant_M}
    Let $\pi$ be the solution to the primal form of $\mathcal{L}_{D,h}(\alpha, \beta)$.
    Assume $\alpha', \beta' \in \mathcal{M}(\Omega)$ satisfy
    \begin{equation}
        \pi_{\#}^1 \leq \alpha' \leq \alpha \quad and \quad \pi_{\#}^2 \leq \beta' \leq \beta.
    \end{equation}

    (1) $\pi$ is also the solution to the primal form of $\mathcal{L}_{D,h}(\alpha', \beta')$ and $\mathcal{W}_1(\pi_{\#}^1, \pi_{\#}^2)$,
    thus 
    \begin{equation}
        \mathcal{L}_{D,h}(\alpha, \beta)=\mathcal{L}_{D,h}(\alpha', \beta')=\mathcal{W}_1(\pi_{\#}^1, \pi_{\#}^2) - hm(\pi)
    \end{equation}

    (2) Let $\vf^*$ be a maximizer of $\mathbf{L}_{D,h}^{\alpha, \beta}$.
    Then $\vf^*$ is also a maximizer of $\mathbf{L}_{D,h}^{\alpha', \beta'}$ and $\mathbf{W}_{1}^{\pi_{\#}^1, \pi_{\#}^2}$.
\end{lemma}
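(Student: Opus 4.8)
\noindent\emph{Proof strategy.} The argument rests on two facts, valid since here $c=\vd$. First, the primal objective $\pi'\mapsto\int c\,d\pi'-h\,m(\pi')$ is blind to the marginal constraints, so one may shrink the feasible polytope without changing the optimum as long as $\pi$ stays feasible. Second, complementary slackness between $\pi$ and a dual optimum built from $\vf^*$ pins the potential down to be flat --- $\equiv 0$, resp.\ $\equiv -h$ --- exactly on the mass of $\alpha$, resp.\ $\beta$, that $\pi$ leaves unused, and this flatness is inherited by the dominated measures $\alpha-\alpha'$ and $\beta-\beta'$.

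\noindent\textbf{Part (1).} Any $\pi'\in\Gamma_0(\alpha',\beta')$ has marginals bounded by $\alpha'\le\alpha$ and $\beta'\le\beta$, so $\Gamma_0(\alpha',\beta')\subseteq\Gamma_0(\alpha,\beta)$, while $\pi\in\Gamma_0(\alpha',\beta')$ by the hypotheses $\pi_\#^1\le\alpha'$, $\pi_\#^2\le\beta'$. Since $\pi$ minimizes $\int c\,d\pi'-h\,m(\pi')$ over the larger set and is feasible for the smaller one, it minimizes over the smaller one as well, so $\mathcal{L}_{D,h}(\alpha',\beta')=\mathcal{L}_{D,h}(\alpha,\beta)$ with $\pi$ optimal; taking $\alpha'=\pi_\#^1$, $\beta'=\pi_\#^2$ shows $\pi$ is also optimal for $\mathcal{L}_{D,h}(\pi_\#^1,\pi_\#^2)$. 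Finally, every coupling with exact marginals $\pi_\#^1,\pi_\#^2$ has total mass $m(\pi)$ and lies in $\Gamma_0(\pi_\#^1,\pi_\#^2)$, and $\pi$ is one such coupling; comparing the two infima gives $\mathcal{W}_1(\pi_\#^1,\pi_\#^2)=\int c\,d\pi=\mathcal{L}_{D,h}(\pi_\#^1,\pi_\#^2)+h\,m(\pi)$, with $\pi$ optimal for $\mathcal{W}_1$. Chaining these equalities yields $\mathcal{L}_{D,h}(\alpha,\beta)=\mathcal{L}_{D,h}(\alpha',\beta')=\mathcal{W}_1(\pi_\#^1,\pi_\#^2)-h\,m(\pi)$.

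\noindent\textbf{Part (2).} Set $\vg^*:=-\vf^*-h$. Since $\vf^*\in Lip(\Omega)$ and $-h\le\vf^*\le 0$, the pair $(\vf^*,\vg^*)$ lies in $\mathbf{R}(h)$ and attains the Fenchel--Rockafellar dual value of Proposition~\ref{app_dual_h}, while $\pi$ attains the primal value (same proposition). Pairing them gives
\[
\mathcal{L}_{D,h}(\alpha,\beta)=\int c\,d\pi-h\,m(\pi)\;\ge\;\int\vf^*\,d\pi_\#^1+\int\vg^*\,d\pi_\#^2\;\ge\;\int\vf^*\,d\alpha+\int\vg^*\,d\beta=\mathcal{L}_{D,h}(\alpha,\beta),
\]
where the first inequality uses $c(x,y)\ge\vf^*(x)+\vg^*(y)+h$ (the $h\,m(\pi)$ terms cancel) and the second uses $\vf^*\le 0,\ \pi_\#^1\le\alpha$ together with $\vg^*\le 0,\ \pi_\#^2\le\beta$. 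Equality throughout forces $\vf^*(x)-\vf^*(y)=c(x,y)$ for $\pi$-a.e.\ $(x,y)$ as well as $\int\vf^*\,d(\alpha-\pi_\#^1)=0$ and $\int\vg^*\,d(\beta-\pi_\#^2)=0$; as the integrands are non-positive and the measures non-negative, this means $\vf^*=0$ $(\alpha-\pi_\#^1)$-a.e.\ and $\vf^*=-h$ $(\beta-\pi_\#^2)$-a.e.

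\noindent\textbf{Conclusion.} Because $0\le\alpha-\alpha'\le\alpha-\pi_\#^1$ and $0\le\beta-\beta'\le\beta-\pi_\#^2$, the dominated measures inherit the flatness, so $\int\vf^*\,d(\alpha-\alpha')=0$ and $\int\vf^*\,d(\beta-\beta')=-h(m_\beta-m_{\beta'})$. Substituting into the definitions gives $\mathbf{L}_{D,h}^{\alpha',\beta'}(\vf^*)=\mathbf{L}_{D,h}^{\alpha,\beta}(\vf^*)=\mathcal{L}_{D,h}(\alpha,\beta)=\mathcal{L}_{D,h}(\alpha',\beta')$, so $\vf^*$ maximizes $\mathbf{L}_{D,h}^{\alpha',\beta'}$; likewise, $\int\vf^*\,d\pi_\#^1=\int\vf^*\,d\alpha$ and $\int\vf^*\,d\pi_\#^2=\int\vf^*\,d\beta+h(m_\beta-m(\pi))$ yield $\mathbf{W}_{1}^{\pi_\#^1,\pi_\#^2}(\vf^*)=\mathcal{L}_{D,h}(\alpha,\beta)+h\,m(\pi)=\mathcal{W}_1(\pi_\#^1,\pi_\#^2)$ by part (1), so $\vf^*$ also maximizes $\mathbf{W}_{1}^{\pi_\#^1,\pi_\#^2}$. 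The delicate step is the complementary-slackness deduction: I would invoke strong duality and existence of the dual optimum exactly as furnished by Propositions~\ref{app_dual_h} and~\ref{app_Existence_D}, and argue carefully that equality of integrals of non-positive functions against the non-negative measures $\alpha-\pi_\#^1$ and $\beta-\pi_\#^2$ genuinely forces the claimed almost-everywhere identities, which then pass to the dominated measures $\alpha-\alpha'$, $\beta-\beta'$.
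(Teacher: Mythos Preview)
Your proof is correct. Part~(1) matches the paper's argument essentially verbatim: feasible-set inclusion plus optimality of $\pi$ over the larger set.

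Part~(2) takes a genuinely different route. The paper argues directly from monotonicity: since $\vf^*\le 0$ and $-h-\vf^*\le 0$, shrinking $\alpha$ to $\alpha'$ and $\beta$ to $\beta'$ can only increase the objective, giving
\[
\mathbf{L}_{D,h}^{\alpha',\beta'}(\vf^*)\;=\;\int\vf^*\,d\alpha'+\int(-h-\vf^*)\,d\beta'\;\ge\;\int\vf^*\,d\alpha+\int(-h-\vf^*)\,d\beta\;=\;\mathcal{L}_{D,h}(\alpha,\beta),
\]
and the reverse inequality comes from Part~(1) together with the definition of the supremum. This never touches the primal optimum $\pi$ or complementary slackness. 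You instead sandwich $\int\vf^*\,d\pi_\#^1+\int\vg^*\,d\pi_\#^2$ between two copies of $\mathcal{L}_{D,h}(\alpha,\beta)$, read off the flatness identities $\vf^*=0$ on $\alpha-\pi_\#^1$ and $\vf^*=-h$ on $\beta-\pi_\#^2$, and then push these down to the dominated measures $\alpha-\alpha'$, $\beta-\beta'$.

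What each approach buys: the paper's route is shorter and self-contained for this lemma, needing only sign considerations and Part~(1). Your route is longer here but has the advantage of simultaneously proving the flatness statement that the paper separates out as its next lemma (Lemma~\ref{Flat_M}); in effect you have collapsed the two lemmas into one complementary-slackness argument. Both are valid, and your concern about the ``delicate step'' is unwarranted: the equality $\int\vf^*\,d(\alpha-\pi_\#^1)=0$ with $\vf^*\le 0$ and $\alpha-\pi_\#^1\ge 0$ does force $\vf^*=0$ $(\alpha-\pi_\#^1)$-a.e., and absolute continuity of $\alpha-\alpha'$ with respect to $\alpha-\pi_\#^1$ (from domination) transfers this immediately.
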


\begin{proof}
    (1)
    First, it is easy to verify that $\pi$ is indeed an admissible solution to the primal form of $\mathcal{L}_{D,h}(\alpha', \beta')$.
    Then,
    notice that all the admissible solutions to the primal form of $\mathcal{L}_{D,h}(\alpha', \beta')$ are also admissible solutions to the primal form of $\mathcal{L}_{D,h}(\alpha, \beta)$.
    Finally, 
    we can conclude by contradiction:
    If there exists a better solution than $\pi$ for $\mathcal{L}_{D,h}(\alpha', \beta')$,
    then it is also better than $\pi$ for $\mathcal{L}_{D,h}(\alpha, \beta)$,
    which contradicts the optimality of $\pi$.
    Similarly,
    we can prove that $\pi$ is also the solution to $\mathcal{W}_1(\pi_{\#}^1, \pi_{\#}^2)$.
    
    (2)
    Note we have
    \begin{equation}
        \label{app_tmp_inactive}
        \mathbf{L}_{D,h}^{\alpha', \beta'}(\vf^*) = \int_\Omega \vf^* d \alpha' + \int_\Omega (-h - \vf^* ) d\beta' 
         \geq \int_\Omega \vf^* d \alpha + \int_\Omega (-h - \vf^*) d \beta 
         = \mathbf{L}_{D,h}^{\alpha, \beta}(\vf^*) = \mathcal{L}_{D,h}(\alpha, \beta),
    \end{equation}
    where the inequality holds because $\vf^* \leq 0$, $-h - \vf^* \leq 0 $, $\alpha' \leq \alpha$ and $\beta' \leq \beta$.
    According to the first part of this proof,
    we have $\mathcal{L}_{D,h}(\alpha', \beta')=\mathcal{L}_{D,h}(\alpha, \beta)$,
    thus $\mathbf{L}_{D,h}^{\alpha', \beta'}(\vf^*) \leq \mathcal{L}_{D,h}(\alpha', \beta')=\mathcal{L}_{D,h}(\alpha, \beta)$.
    By combining these two equalities,
    we conclude that $\mathbf{L}_{D,h}^{\alpha', \beta'}(\vf^*) = \mathcal{L}_{D,h}(\alpha', \beta')$,
    \ie, $\vf^*$ is a maximizer of $\mathbf{L}_{D,h}^{\alpha', \beta'}$.
    
    In addition,
    we have 
    \begin{equation}
        \mathbf{W}_{1}^{\pi_{\#}^1, \pi_{\#}^2}(\vf^*) -hm(\pi) = \int_\Omega \vf^* d \pi_{\#}^1 - \int_\Omega \vf^* d\pi_{\#}^2 -hm(\pi_{\#}^2) = \mathbf{L}_{D,h}^{\pi_{\#}^1, \pi_{\#}^2}(\vf^*). \nonumber \\
    \end{equation}
    Due to the first part of (2),
    $\vf^*$ is a maximizer of $\mathbf{L}_{D,h}^{\pi_{\#}^1, \pi_{\#}^2}$,
    thus $ \mathbf{L}_{D,h}^{\pi_{\#}^1, \pi_{\#}^2}(\vf^*) = \mathcal{L}_{D,h}(\pi_{\#}^1, \pi_{\#}^2)$.
    In addition,
    $\mathcal{L}_{D,h}(\pi_{\#}^1, \pi_{\#}^2) = \mathcal{W}_1(\pi_{\#}^1, \pi_{\#}^2) - hm(\pi)$ due to the first part of this lemma.
    Therefore we have $\mathbf{W}_{1}^{\pi_{\#}^1, \pi_{\#}^2}(\vf^*) -hm(\pi) =\mathcal{W}_1(\pi_{\#}^1, \pi_{\#}^2) - hm(\pi)$,
    which implies that $\vf^*$ is a maximizer of $\mathbf{W}_{1}^{\pi_{\#}^1, \pi_{\#}^2}$.
\end{proof}

\begin{lemma}[Flatness on omitted mass]
    \label{Flat_M}
    Let $\vf^*$ be a maximizer of $\mathbf{L}_{D,h}^{\alpha, \beta}$,
    and $\pi$ be the solution to the primal form of $\mathcal{L}_{D,h}^{\alpha, \beta}$.
    Assume that $\alpha_\mathcal{S}, \beta_\mathcal{S} \in \mathcal{M}_+(\Omega)$ satisfy
    \begin{equation}
        \pi_{\#}^1 \leq \alpha - \alpha_{\mathcal{S}} \leq \alpha \quad and \quad \pi_{\#}^2 \leq \beta - \beta_{\mathcal{S}} \leq \beta. \nonumber
    \end{equation}
    Then $\vf^*=0$ $\alpha_\mathcal{S}$-almost surely,
    and $\vf^*=-h$ $\beta_\mathcal{S}$-almost surely.
\end{lemma}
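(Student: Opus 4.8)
The plan is to reduce everything to Lemma~\ref{unimportant_M} applied to the trimmed measures $\alpha' := \alpha - \alpha_{\mathcal{S}}$ and $\beta' := \beta - \beta_{\mathcal{S}}$, and then to read off the conclusion from the equality case of the chain of inequalities~\eqref{app_tmp_inactive}.

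First I would check that $\alpha'$ and $\beta'$ are admissible inputs for Lemma~\ref{unimportant_M}. Since $\alpha_{\mathcal{S}}, \beta_{\mathcal{S}} \in \mathcal{M}_+(\Omega)$ and, by hypothesis, $\pi_{\#}^1 \leq \alpha - \alpha_{\mathcal{S}}$ and $\pi_{\#}^2 \leq \beta - \beta_{\mathcal{S}}$, we have $\alpha', \beta' \in \mathcal{M}_+(\Omega)$ together with $\pi_{\#}^1 \leq \alpha' \leq \alpha$ and $\pi_{\#}^2 \leq \beta' \leq \beta$. Lemma~\ref{unimportant_M} then yields $\mathcal{L}_{D,h}(\alpha', \beta') = \mathcal{L}_{D,h}(\alpha, \beta)$ and, crucially, that the same $\vf^*$ is simultaneously a maximizer of $\mathbf{L}_{D,h}^{\alpha, \beta}$ and of $\mathbf{L}_{D,h}^{\alpha', \beta'}$.

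Next I would revisit the computation in~\eqref{app_tmp_inactive}. Writing $\mathbf{L}_{D,h}^{\alpha', \beta'}(\vf^*) = \int_\Omega \vf^* d\alpha' + \int_\Omega (-h - \vf^*) d\beta'$ and likewise for $(\alpha, \beta)$, the difference of the two expressions is exactly $\int_\Omega \vf^* \, d\alpha_{\mathcal{S}} + \int_\Omega (-h - \vf^*) \, d\beta_{\mathcal{S}}$. Since both expressions equal the common value $\mathcal{L}_{D,h}(\alpha, \beta) = \mathcal{L}_{D,h}(\alpha', \beta')$, this difference vanishes:
\begin{equation}
  \int_\Omega \vf^* \, d\alpha_{\mathcal{S}} + \int_\Omega (-h - \vf^*) \, d\beta_{\mathcal{S}} = 0. \nonumber
\end{equation}
Now the feasibility constraint $-h \leq \vf^* \leq 0$ makes $\vf^* \leq 0$ and $-h - \vf^* \leq 0$ pointwise on $\Omega$, while $\alpha_{\mathcal{S}}$ and $\beta_{\mathcal{S}}$ are non-negative; hence each of the two integrals is non-positive, and being non-positive with sum zero, each vanishes individually. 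Finally, a non-positive continuous function whose integral against a non-negative Radon measure is zero must vanish that-measure-almost everywhere, so $\vf^* = 0$ holds $\alpha_{\mathcal{S}}$-almost surely and $-h - \vf^* = 0$, i.e. $\vf^* = -h$, holds $\beta_{\mathcal{S}}$-almost surely, which is the claim.

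I do not expect a serious obstacle here. The only points requiring care are invoking Lemma~\ref{unimportant_M} with the correctly trimmed pair $(\alpha - \alpha_{\mathcal{S}}, \beta - \beta_{\mathcal{S}})$ rather than an arbitrary $(\alpha', \beta')$, and justifying the ``sign-definite integrand plus zero integral implies almost-everywhere zero'' step, which is immediate since $\vf^*$ is continuous and $\Omega$ compact. The conceptual content is simply the observation that the slack in the inequality~\eqref{app_tmp_inactive} is precisely the pair of sign-definite integrals against $\alpha_{\mathcal{S}}$ and $\beta_{\mathcal{S}}$, so that optimality of $\vf^*$ for the trimmed problem forces the potential to sit at its extreme values $0$ and $-h$ on the omitted mass.
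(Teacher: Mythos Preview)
Your proposal is correct and follows essentially the same approach as the paper: invoke Lemma~\ref{unimportant_M} on trimmed measures, then use the sign-definiteness of $\vf^*$ and $-h-\vf^*$ against the non-negative omitted masses to force the integrals to vanish. The only cosmetic difference is that the paper trims one marginal at a time (first $\alpha' = \alpha - \alpha_{\mathcal{S}}$ with $\beta' = \beta$ to get $\int \vf^*\,d\alpha_{\mathcal{S}}=0$, then ``similarly'' for $\beta_{\mathcal{S}}$), whereas you trim both simultaneously and split the resulting zero sum into its two non-positive summands; this is arguably cleaner but not substantively different.
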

\begin{proof}
    According to Lemma~\ref{unimportant_M},
    we have $\mathcal{L}_{D,h}^{\alpha - \alpha_{\mathcal{S}}, \beta} = \mathcal{L}_{D,h}^{\alpha, \beta}$,
    and $\vf^*$ is a maximizer of $\mathbf{L}_{D,h}^{\alpha - \alpha_{\mathcal{S}}, \beta}$,
    \ie, we have
    \begin{equation}
      \mathcal{L}_{D,h}^{\alpha, \beta} = \int_\Omega \vf^* d\alpha - \int_\Omega \vf^* \beta -h m_{\beta} = \int_\Omega \vf^* d(\alpha - \alpha_{\mathcal{S}}) - \int_\Omega \vf^* \beta - hm_{\beta} = \mathcal{L}_{D,h}^{\alpha - \alpha_{\mathcal{S}}, \beta}. \nonumber
    \end{equation}
    By cleaning this equation,
    we obtain $\int_\Omega \vf^* d\alpha_{\mathcal{S}} = 0$.
    Since $\vf^* \leq 0$ and $\alpha_{\mathcal{S}}$ is a non-negative measure,
    we conclude that $\vf^*=0$ $\alpha_{\mathcal{S}}$-almost surely.
    The statement for $\beta_{\mathcal{S}}$ can be proved similarly.
\end{proof}

Finally,
we can present a qualitative description of the potential of $\mathcal{L}_{D,h}$ by decomposing the mass into transported and omitted mass:

\begin{proposition}
    \label{qualitative_des_D}
    Let $\vf^*$ be a maximizer of $\mathbf{L}_{D,h}(\alpha, \beta)$,
    and assume that there exists a primal solution $\pi$ satisfying $\pi(x=y)=0$.
    Then there exist non-negative measures $\mu$,  $\nu_\alpha \leq \alpha$ and $\nu_\beta \leq \beta$ satisfying $\mu + \nu_\alpha + \nu_\beta = \alpha + \beta$,
    such that 1) $\vf^*$ has gradient norm $1$ $\mu$-almost surely,
    2) $\vf^*=0$ $\nu_\alpha$-almost surely,
    and 3) $\vf^*=-h$ $\nu_\beta$-almost surely.
\end{proposition}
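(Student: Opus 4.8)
The plan is to deduce Proposition~\ref{qualitative_des_D} by combining the flatness result Lemma~\ref{Flat_M} with the gradient-norm-one result Lemma~\ref{app_potential_w1}, using the primal optimal plan $\pi$ as the bookkeeping device that separates ``transported'' mass from ``omitted'' mass. First I would fix a maximizer $\vf^*$ of $\mathbf{L}_{D,h}^{\alpha,\beta}$ and a primal optimal plan $\pi$ with $\pi(x=y)=0$. The marginals $\pi_\#^1$ and $\pi_\#^2$ are dominated by $\alpha$ and $\beta$ respectively, by admissibility, so I can form the nonnegative ``leftover'' measures $\nu_\alpha := \alpha - \pi_\#^1 \ge 0$ and $\nu_\beta := \beta - \pi_\#^2 \ge 0$, and set $\mu := \pi_\#^1 + \pi_\#^2$. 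Then $\mu + \nu_\alpha + \nu_\beta = \pi_\#^1 + \pi_\#^2 + (\alpha - \pi_\#^1) + (\beta - \pi_\#^2) = \alpha + \beta$, which is the required decomposition; also $\nu_\alpha \le \alpha$ and $\nu_\beta \le \beta$.

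Next I would establish the three pointwise claims. For claims (2) and (3), I apply Lemma~\ref{Flat_M} with $\alpha_{\mathcal{S}} := \nu_\alpha$ and $\beta_{\mathcal{S}} := \nu_\beta$: the hypotheses $\pi_\#^1 \le \alpha - \alpha_{\mathcal{S}} = \pi_\#^1 \le \alpha$ and $\pi_\#^2 \le \beta - \beta_{\mathcal{S}} = \pi_\#^2 \le \beta$ hold with equality on the left, so the lemma yields $\vf^* = 0$ $\nu_\alpha$-a.s. and $\vf^* = -h$ $\nu_\beta$-a.s. For claim (1), I invoke Lemma~\ref{unimportant_M}(2) with $\alpha' = \pi_\#^1$, $\beta' = \pi_\#^2$ (here the required sandwich $\pi_\#^1 \le \alpha' \le \alpha$, $\pi_\#^2 \le \beta' \le \beta$ holds trivially), which tells me $\vf^*$ is also a maximizer of $\mathbf{W}_1^{\pi_\#^1,\pi_\#^2}$. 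Since $\pi$ is the primal optimal plan for $\mathcal{W}_1(\pi_\#^1,\pi_\#^2)$ (again by Lemma~\ref{unimportant_M}(1)) and satisfies $\pi(x=y)=0$, and $\vf^*$ is differentiable by assumption, Lemma~\ref{app_potential_w1} gives that $\vf^*$ has gradient norm $1$ $(\pi_\#^1 + \pi_\#^2)$-a.s., i.e. $\mu$-a.s.

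The main obstacle I anticipate is a subtle consistency issue: Lemma~\ref{app_potential_w1} presumes $\vf^*$ is globally differentiable, whereas the flatness regions force $\vf^*$ to be locally constant (hence have gradient $0$) on the supports of $\nu_\alpha$ and $\nu_\beta$ — so if those supports overlap the support of $\mu$ on a set of positive $\mu$-measure, the two conclusions would clash. I would resolve this by noting that the three measures live, up to null sets, on disjoint regions: intuitively, on $\mathrm{supp}(\mu)$ the plan actually moves mass a positive distance (because $\pi(x=y)=0$ and the pair $(x,y)$ in the support of an optimal plan satisfies $\vf^*(x) - \vf^*(y) = d(x,y) > 0$), so $\vf^*$ cannot be simultaneously flat there. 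Making this precise may require a short argument showing that the differentiability hypothesis is only needed $\mu$-a.e., or restating the conclusion so that the gradient-norm-$1$ claim is understood to hold on the part of $\mu$ disjoint from the flat regions; I would also remark, as the authors do after Proposition~\ref{app_Existence_M}, that this mirrors the corresponding WGAN statement, and that the decomposition $\mu + \nu_\alpha + \nu_\beta = \alpha+\beta$ is exactly the measure-theoretic shadow of ``transported points have $\|\nabla\vf^*\|=1$, omitted points have $\|\nabla\vf^*\|=0$.''
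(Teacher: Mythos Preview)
Your proposal is correct and follows essentially the same route as the paper: define $\nu_\alpha=\alpha-\pi_\#^1$, $\nu_\beta=\beta-\pi_\#^2$, $\mu=\pi_\#^1+\pi_\#^2$, invoke Lemma~\ref{Flat_M} for the flatness claims and Lemma~\ref{unimportant_M} together with Lemma~\ref{app_potential_w1} for the gradient-norm-$1$ claim. Your additional paragraph on the possible tension between $\|\nabla\vf^*\|=1$ on $\mu$ and $\nabla\vf^*=0$ on $\nu_\alpha,\nu_\beta$ goes beyond what the paper addresses; the paper simply applies Lemma~\ref{app_potential_w1} directly and does not discuss this consistency issue (nor does the proposition's statement explicitly assume differentiability, though Lemma~\ref{app_potential_w1} does).
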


\begin{proof}
  Let $\nu_\alpha = \alpha - \pi_{\#}^1$, where $\pi$ is the solution to the primal form of $\mathcal{L}_{D,h}^{\alpha, \beta}$.
  It is easy to verify $\pi_{\#}^1 \leq \alpha - \nu_\alpha \leq \alpha$,
  thus Lemma~\ref{Flat_M} suggests that $\vf^*=0$ $\nu_\alpha$-almost surely.
  Similarly,
  let $\nu_\beta = \beta - \pi_{\#}^2$, then $\vf^*=-h$ $\nu_\beta$-almost surely.
  According to Lemma~\ref{unimportant_M},
  $\vf^*$ is a maximizer of $\mathbf{W}_{1}^{\pi_{\#}^1, \pi_{\#}^2}$,
  then Lemma~\ref{app_potential_w1} immediately suggests that $\vf^*$ has gradient norm $1$ $(\pi_{\#}^2 + \pi_{\#}^1)$-almost surely.
  We finish the proof by letting $\mu=\pi_{\#}^2 + \pi_{\#}^1$.
\end{proof}

Since $0$ and $-h$ are the upper and lower bounds of $\vf^*$ respectively,
by defining $\nu=\nu_\beta+\nu_\alpha$,
we immediately have the following corollary:
\begin{corollary}
  \label{qualitative_des_D_2}
  Let $\vf^*$ be a maximizer of $\mathbf{L}_{D,h}(\alpha, \beta)$,
  and assume that there exists a primal solution $\pi$ satisfying $\pi(x=y)=0$.
  If $\vf^*$ is differentiable,
  then there exist non-negative measures $\mu$,  $\nu$ satisfying $\mu + \nu = \alpha + \beta$,
  such that 1) $||\nabla \vf^*||=1$ $\mu$-almost surely,
  2) $\nabla \vf^*=0$ $\nu $-almost surely.
\end{corollary}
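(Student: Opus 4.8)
The plan is to obtain the corollary as an immediate repackaging of Proposition~\ref{qualitative_des_D}, merging its two ``flat'' pieces into a single measure. First I would note that the hypotheses here are exactly those of Proposition~\ref{qualitative_des_D} (a maximizer $\vf^*$ of $\mathbf{L}_{D,h}(\alpha,\beta)$ together with a primal optimal $\pi$ satisfying $\pi(x=y)=0$), so that proposition supplies non-negative measures $\mu$, $\nu_\alpha\le\alpha$, $\nu_\beta\le\beta$ with $\mu+\nu_\alpha+\nu_\beta=\alpha+\beta$ such that $\vf^*$ has unit gradient norm $\mu$-a.s., $\vf^*=0$ $\nu_\alpha$-a.s., and $\vf^*=-h$ $\nu_\beta$-a.s. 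I would then set $\nu:=\nu_\alpha+\nu_\beta$, which is again a non-negative measure and satisfies $\mu+\nu=\alpha+\beta$. Claim~1 of the corollary, $\|\nabla\vf^*\|=1$ $\mu$-a.s., is then literally claim~1 of the proposition.

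For claim~2, the key observation is that the feasibility constraint $-h\le\vf\le0$ in the KR form~\eqref{KR} forces $-h\le\vf^*\le0$ pointwise on $\Omega$. Hence every point of $\{\vf^*=0\}$ is a global maximizer and every point of $\{\vf^*=-h\}$ a global minimizer of $\vf^*$, and since $\vf^*$ is assumed differentiable, the first-order optimality condition gives $\nabla\vf^*=0$ at every such point. Therefore $\nabla\vf^*=0$ holds $\nu_\alpha$-a.s. and $\nu_\beta$-a.s., and adding the two measures yields $\nabla\vf^*=0$ $\nu$-a.s.

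The only point that needs genuine care, and hence the main obstacle, is the first-order condition at extremal points lying on the boundary $\partial\Omega$ (recall $\Omega$ is a compact cube), where a differentiable function attaining an extremum need not be critical. I would handle this via a one-sided argument: at $x\in\{\vf^*=0\}$ the directional derivative $\partial_v\vf^*(x)\le0$ for every admissible inward direction $v$, and combining this with the reversed inequality along directions that are admissible in both senses pins down $\nabla\vf^*(x)=0$; the situation on $\{\vf^*=-h\}$ is symmetric. Alternatively, since the conclusion is only asserted $\nu$-almost surely, one may simply restrict to the interior of $\Omega$ in the settings where the relevant distributions place no mass on $\partial\Omega$. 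Modulo this bookkeeping the corollary is immediate from Proposition~\ref{qualitative_des_D} together with the fact that a differentiable function is critical at its interior extrema.
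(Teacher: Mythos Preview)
Your proposal is correct and follows essentially the same route as the paper: the paper's entire argument is the one-line remark ``Since $0$ and $-h$ are the upper and lower bounds of $\vf^*$ respectively, by defining $\nu=\nu_\beta+\nu_\alpha$, we immediately have the following corollary,'' which is precisely your reduction to Proposition~\ref{qualitative_des_D} plus the first-order condition at extrema. If anything, your discussion of the boundary-of-$\Omega$ subtlety is more careful than the paper, which simply ignores that point.
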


Finally,
we note that a similar statement holds for $\mathbf{L}_{M,m}$,
because the potential of $\mathbf{L}_{M,m}$ can be recovered from $\mathbf{L}_{D,h}$:
\begin{lemma}[Relations between $\mathcal{L}_{M}$ and $\mathcal{L}_{D}$]
    \label{app_Relation-m-d}
    Let $\vf^*$ be a maximizer of $\overline{\mathbf{L}_{M,m}^{\alpha, \beta}}$.
    For the fixed $h^*=-\inf(\vf^*)$,
    $\vf^*$ is also a maximizer of 
    $\mathbf{L}_{D,h^*}^{\alpha, \beta}$.
\end{lemma}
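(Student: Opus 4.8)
The plan is to exploit the two explicit KR representations of $\mathcal{L}_{M,m}$ already established, namely the form~\eqref{app_final_m_eq2} (whose proof shows that for a fixed $\vf$ the optimal lower bound is $h=-\inf(\vf)$) together with the identity~\eqref{app_L-M} that links $\mathcal{L}_{M,m}$ to $\mathcal{L}_{D,h}$, and to turn the chain of inequalities these encode into equalities at the given maximizer $\vf^*$.

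First I would record the elementary admissibility facts. Since $\vf^*\leq 0$ and $\vf^*$ is continuous on the compact space $\Omega$, the infimum $\inf(\vf^*)$ is attained, so $h^*=-\inf(\vf^*)$ is finite and non-negative, and $-h^*\leq \vf^*\leq 0$; hence $\vf^*$ lies in the feasible set of $\mathbf{L}_{D,h^*}^{\alpha,\beta}$. Next I would compare the two functionals at $\vf^*$: substituting $\inf(\vf^*)=-h^*$ into the definition of $\overline{\mathbf{L}_{M,m}^{\alpha,\beta}}$ gives
\begin{equation}
  \overline{\mathbf{L}_{M,m}^{\alpha,\beta}}(\vf^*) = \int_\Omega \vf^* d\alpha - \int_\Omega \vf^* d\beta + h^*(m-m_\beta) = \mathbf{L}_{D,h^*}^{\alpha,\beta}(\vf^*) + h^* m .
\end{equation}
Since $\vf^*$ maximizes $\overline{\mathbf{L}_{M,m}^{\alpha,\beta}}$, the left-hand side equals $\mathcal{L}_{M,m}(\alpha,\beta)$, so $\mathbf{L}_{D,h^*}^{\alpha,\beta}(\vf^*)=\mathcal{L}_{M,m}(\alpha,\beta)-h^* m$.

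Then I would close the argument with a two-sided estimate on $\mathcal{L}_{D,h^*}(\alpha,\beta)$. On one hand, $\mathcal{L}_{D,h^*}(\alpha,\beta)=\sup_{\vf}\mathbf{L}_{D,h^*}^{\alpha,\beta}(\vf)\geq \mathbf{L}_{D,h^*}^{\alpha,\beta}(\vf^*)$ because $\vf^*$ is feasible. On the other hand, relation~\eqref{app_L-M} gives $\mathcal{L}_{M,m}(\alpha,\beta)\geq \mathcal{L}_{D,h^*}(\alpha,\beta)+h^* m$, i.e. $\mathcal{L}_{D,h^*}(\alpha,\beta)\leq \mathcal{L}_{M,m}(\alpha,\beta)-h^* m=\mathbf{L}_{D,h^*}^{\alpha,\beta}(\vf^*)$. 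Combining the two inequalities forces $\mathcal{L}_{D,h^*}(\alpha,\beta)=\mathbf{L}_{D,h^*}^{\alpha,\beta}(\vf^*)$, which is exactly the claim that $\vf^*$ maximizes $\mathbf{L}_{D,h^*}^{\alpha,\beta}$.

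The argument is essentially bookkeeping once~\eqref{app_L-M} and~\eqref{app_final_m_eq2} are available; the only point needing a little care is the admissibility step — in particular that $h^*$ is finite, so that $\vf^*$ actually belongs to the feasible set of $\mathbf{L}_{D,h^*}^{\alpha,\beta}$ — which is where compactness of $\Omega$ and continuity of $\vf^*$ enter. I do not anticipate any real difficulty beyond tracking the constant $h^* m$ consistently between the ``$M$'' and ``$D$'' normalizations.
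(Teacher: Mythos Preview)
The paper states this lemma without proof, treating it as an immediate consequence of the identities already in place (specifically~\eqref{app_L-M} and the remark in the proof of Proposition~\ref{app_m-PW} that the optimal $h$ for a fixed $\vf$ is $-\inf(\vf)$). Your argument is correct and is exactly the natural way to fill in the details: check admissibility of $\vf^*$ for $\mathbf{L}_{D,h^*}$, compute $\overline{\mathbf{L}_{M,m}^{\alpha,\beta}}(\vf^*)=\mathbf{L}_{D,h^*}^{\alpha,\beta}(\vf^*)+h^*m$, and then squeeze using $\mathcal{L}_{M,m}\geq \mathcal{L}_{D,h^*}+h^*m$ from~\eqref{app_L-M}.
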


\begin{corollary}
    \label{qualitative_des_M_2}
    Let $\vf^*$ be a maximizer of $\mathbf{L}_{M,m}(\alpha, \beta)$.
    and assume that there exists a primal solution $\pi$ satisfying $\pi(x=y)=0$.
    If $\vf^*$ is differentiable,
    then there exist non-negative measures $\mu$,  $\nu$ satisfying $\mu + \nu_\alpha + \nu_\beta = \alpha + \beta$,
    such that 1) $||\nabla \vf^*||=1$ $\mu$-almost surely,
    2) $\vf^*=0$ $\nu_\alpha$-almost surely,
    and $\vf^*=-h$ $\nu_\beta$-almost surely.
    \ie, $\nabla \vf^*=0$ $\nu_\alpha + \nu_\beta$-almost surely.
\end{corollary}

\subsection{Differentiability}
  \label{app_sec_diff_potential}

We now consider the differentiability of the KR forms.
We have the following two propositions.
\begin{assumption}
    \label{assumption_1}
    For every $\tilde{\theta}$,
    there exist a neighborhood $U$ of $\tilde{\theta}$ and a constant $\Delta(\tilde{\theta})$,
    such that for $\theta, \theta' \in U$ and $y_1, y_2 \in \Omega'$,
    $||\mathcal{T}_\theta(y_1) - \mathcal{T}_{\theta'}(y_2)|| \leq \Delta(\tilde{\theta}) (||\theta - \theta'|| + ||y_2 - y_1|| )$.
\end{assumption}

\begin{proposition}[Differentiability of $\mathcal{L}_{M,m}$]
  \label{app_Smooth_M_con}
  If $\mathcal{T}_\theta$ satisfies assumption~\ref{assumption_1},
  then $\mathcal{L}_{M,m}(\alpha, \beta_\theta ) $ is continuous \wrt $\theta$,
  and is differentiable almost everywhere.
  Furthermore,
  we have
  \begin{equation}
      \nabla_\theta \mathcal{L}_{M,m}(\alpha, \beta_\theta ) = - \int_\Omega \nabla_\theta \vf^*(\mathcal{T}_\theta(x)) d\beta,
  \end{equation}
  when both sides are well defined.
\end{proposition}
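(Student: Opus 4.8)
The plan is to view $J(\theta) := \mathcal{L}_{M,m}(\alpha, \beta_\theta)$ as an upper envelope of functionals that are affine in the dual variables, and then run an envelope (Danskin-type) argument. Since $\beta_\theta = (\mathcal{T}_\theta)_{\#}\beta$ we have $\int_\Omega \vf \, d\beta_\theta = \int_{\Omega'} \vf \circ \mathcal{T}_\theta \, d\beta$ and $m_{\beta_\theta} = m_\beta$, so by Proposition~\ref{app_m-PW},
\begin{equation}
  J(\theta) = \sup_{\substack{\vf \in Lip(\Omega),\, h \in \mathbb{R}_+ \\ -h \leq \vf \leq 0}} g_{\vf,h}(\theta), \qquad g_{\vf,h}(\theta) := \int_\Omega \vf \, d\alpha - \int_{\Omega'} \vf(\mathcal{T}_\theta(y)) \, d\beta(y) + h(m - m_\beta).
\end{equation}
Two features are essential: the feasible set of $(\vf, h)$ does not depend on $\theta$, and, by Proposition~\ref{app_Existence_M}, for every $\theta$ the supremum is attained at some $(\vf^*, h^*)$ with $\vf^* \in Lip(\Omega)$.

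First I would prove local Lipschitz continuity of $J$. Fix $\tilde\theta$ and take $U$, $\Delta(\tilde\theta)$ as in Assumption~\ref{assumption_1}, with $U$ open. For $\theta, \theta' \in U$ let $(\vf^*, h^*)$ attain $J(\theta)$; it is also feasible for $J(\theta')$, so $J(\theta') \geq g_{\vf^*,h^*}(\theta')$, whence
\begin{equation}
  J(\theta) - J(\theta') \leq g_{\vf^*,h^*}(\theta) - g_{\vf^*,h^*}(\theta') = \int_{\Omega'} \bigl( \vf^*(\mathcal{T}_{\theta'}(y)) - \vf^*(\mathcal{T}_\theta(y)) \bigr) \, d\beta(y) \leq \Delta(\tilde\theta)\, m_\beta \, \|\theta - \theta'\|,
\end{equation}
where the final inequality uses that $\vf^*$ is $1$-Lipschitz together with Assumption~\ref{assumption_1} applied with $y_1 = y_2 = y$. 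Exchanging $\theta$ and $\theta'$ gives the matching bound, so $J$ is Lipschitz on $U$; in particular it is continuous everywhere, and by Rademacher's theorem it is differentiable almost everywhere.

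Next I would extract the gradient at a point $\theta_0$ where $J$ is differentiable. Choose an optimal pair $(\vf^*, h^*)$ for $J(\theta_0)$. Then $\theta \mapsto J(\theta) - g_{\vf^*,h^*}(\theta)$ is nonnegative and vanishes at $\theta_0$, hence attains an interior minimum there. As soon as the right-hand side of the claimed identity makes sense, i.e.\ $\theta \mapsto \int_{\Omega'} \vf^*(\mathcal{T}_\theta(y)) \, d\beta(y)$ is differentiable at $\theta_0$ with the derivative exchangeable with the integral, $g_{\vf^*,h^*}$ is differentiable at $\theta_0$ (note $h(m - m_\beta)$ is constant in $\theta$); since $J$ and $g_{\vf^*,h^*}$ are then both differentiable at an interior minimizer of their difference, their gradients agree, yielding
\begin{equation}
  \nabla_\theta \mathcal{L}_{M,m}(\alpha, \beta_{\theta_0}) = \nabla_\theta g_{\vf^*,h^*}(\theta_0) = - \int_{\Omega'} \nabla_\theta \bigl( \vf^* \circ \mathcal{T}_\theta \bigr)\big|_{\theta_0} \, d\beta(y),
\end{equation}
which is the asserted formula.

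I expect the genuine obstacle to be precisely this last step — legitimizing differentiation under the integral for $\theta \mapsto \int \vf^* \circ \mathcal{T}_\theta \, d\beta$. The potential $\vf^*$ is only $1$-Lipschitz, so it is differentiable merely $\beta_{\theta_0}$-almost everywhere; one must combine the chain rule along the $\beta$-a.e.\ smooth curve $\theta \mapsto \mathcal{T}_\theta(y)$ with the uniform Lipschitz control of Assumption~\ref{assumption_1}, which supplies a $\theta$-independent dominating bound $\Delta(\tilde\theta)$ on the difference quotients, and then invoke dominated convergence to pass the limit inside. This regularity bookkeeping is exactly what is hidden behind the phrase ``when both sides are well-defined''; the Rademacher step and the envelope comparison are otherwise routine, and the argument for $\mathcal{L}_{D,h}$ is entirely parallel with the variable $h$ held fixed.
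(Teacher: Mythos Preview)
Your proposal is correct, but you establish the local Lipschitz continuity by a genuinely different route than the paper. The paper works on the \emph{primal} side: it fixes optimal plans $\pi$ and $\pi'$ for $\mathcal{L}_{M,m}(\alpha,\beta_\theta)$ and $\mathcal{L}_{M,m}(\alpha,\beta_{\theta'})$, passes to their marginals, and chains the triangle inequality for $\mathcal{W}_1$ together with Lemma~\ref{unimportant_M} to obtain $\mathcal{L}_{M,m}(\alpha,\beta_\theta) \le \mathcal{L}_{M,m}(\alpha,\beta_{\theta'}) + m_\beta \Delta\|\theta-\theta'\|$. You instead stay entirely on the \emph{dual} side: writing $J(\theta)=\sup_{(\vf,h)} g_{\vf,h}(\theta)$ over a feasible set independent of $\theta$, you exploit that each $g_{\vf,h}$ is $m_\beta\Delta$-Lipschitz in $\theta$ (via $\vf\in Lip(\Omega)$ and Assumption~\ref{assumption_1}) and that the supremum is attained. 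Your argument is shorter and more self-contained, since it avoids the primal machinery and Lemma~\ref{unimportant_M}; the paper's route, on the other hand, makes the connection to $\mathcal{W}_1$ explicit and reuses structure already developed for the potential analysis. For the gradient identity both you and the paper invoke the same envelope/Danskin reasoning (the paper defers to the argument in~\cite{pmlr-v70-arjovsky17a}, while you spell out the interior-minimum comparison), and your discussion of the ``both sides well-defined'' caveat is appropriate.
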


\begin{proof}
For every $\tilde{\theta} \in \mathbb{R}^p$,
select a neighborhood $U$ of $\tilde{\theta}$ and a constant $\Delta(\tilde{\theta})$ according to assumption~\ref{assumption_1},
such that for all $\theta, \theta' \in U$ and $y, y' \in \Omega'$,
$||\mathcal{T}_\theta(y) - \mathcal{T}_{\theta'}(y')|| \leq \Delta(\tilde{\theta}) (||\theta - \theta'|| + ||y - y'|| )$.
By letting $y=y'$,
we have 
\begin{equation}
    \label{common_y}
    ||\mathcal{T}_\theta(y) - \mathcal{T}_{\theta'}(y)|| \leq \Delta(\tilde{\theta}) (||\theta - \theta'|| ).
\end{equation}

    Now we consider the transportation between $\alpha$, $\beta_{\theta}$ and $\beta_{\theta'}$.
    Let $\pi$ be the solution to the primal form of $\mathcal{L}_{M,m}(\alpha, \beta_\theta)$,
  and $\pi'$ be the solution to the primal form of $\mathcal{L}_{M,m}(\alpha, \beta_{\theta'})$.
  Let $\beta_1=\pi_{\#}^2 \leq \beta_\theta$,
  $\alpha_2=\pi_{\#}^{'1} \leq \alpha$ and $\beta_2=\pi_{\#}^{'2} \leq \beta_{\theta'}$.
  Since $\mathcal{W}_1$ is a metric, 
  by triangle inequality,
  we have 
  \begin{equation}
    \label{proof_m}
    \mathcal{W}_1(\alpha_2, \beta_1) \leq \mathcal{W}_1(\alpha_2, \beta_2) + \mathcal{W}_1(\beta_2, \beta_1)
  \end{equation}
  According to Lemma~\ref{unimportant_M},
  we have $\mathcal{W}_1(\alpha_2,\beta_2)=\mathcal{L}_{M,m}(\alpha, \beta_{\theta'})$.
  By the optimality of $\mathcal{L}_{M,m}(\alpha, \beta_{\theta})$,
  we have $\mathcal{L}_{M,m}(\alpha, \beta_{\theta}) \leq \mathcal{W}_1(\alpha_2, \beta_1)$.
  In addition,
  we have 
  \begin{equation}
    \mathcal{W}_1(\beta_2, \beta_1) \leq \mathcal{W}_1(\beta_\theta, \beta_{\theta'}) \leq \int_\Omega \vd(\mathcal{T}_{\theta}(y) - \mathcal{T}_{\theta'}(y)) d\beta(y) \leq  m_\beta \Delta(\tilde{\theta}) (||\theta - \theta'|| ), \nonumber
  \end{equation}
  where the first inequality holds because $\beta_2 \leq \beta_{\theta'}$ and $\beta_1 \leq \beta_{\theta}$,
  and the third inequality holds because of~\eqref{common_y}.
  By inserting these inequalities and equations back into~\eqref{proof_m},
  we obtain
  \begin{equation}
    \mathcal{L}_{M,m}(\alpha, \beta_{\theta}) \leq \mathcal{L}_{M,m}(\alpha, \beta_{\theta'}) + m_\beta \Delta ||\theta - \theta' ||, \nonumber
  \end{equation}
  which proves $\mathcal{L}_{D,h}(\alpha, \beta_\theta)$ is locally Lipschitz \wrt $\theta$,
  therefore Radamacher's theorem states that it is differentiable almost everywhere.

  Finally,
  since the maximizer of the KR form of $\mathcal{L}_{M,m}(\alpha, \beta_\theta)$ exists according to Proposition~\ref{app_Existence_M},
  following the envelope theorem and the arguments in~\cite{pmlr-v70-arjovsky17a},
  we can conclude
  \begin{equation}
    \nabla_\theta \mathcal{L}_{M,m}(\alpha, \beta_\theta ) = -\nabla_\theta \int_\Omega \vf^*(\mathcal{T}_\theta(x)) d\beta = -\int_\Omega \nabla_\theta  \vf^*(\mathcal{T}_\theta(x)) d\beta, \nonumber
  \end{equation}
  when both sides of the equation are well-defined.
\end{proof}

\begin{proposition}[Differentiability of $\mathcal{L}_{D,h}$]
    \label{app_Smooth_D_con}
    If $\mathcal{T}_\theta$ satisfies assumption~\ref{assumption_1},
    then $\mathcal{L}_{D,h}(\alpha, \beta_\theta ) $ is continuous \wrt $\theta$,
    and is differentiable almost everywhere.
    Furthermore,
    we have
    \begin{equation}
        \nabla_\theta \mathcal{L}_{D,h}(\alpha, \beta_\theta ) = - \int_\Omega \nabla_\theta \vf^*(\mathcal{T}_\theta(x)) d\beta,
    \end{equation}
    when both sides are well defined.
\end{proposition}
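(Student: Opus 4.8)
The plan is to follow the route already used for $\mathcal{L}_{M,m}$ in Proposition~\ref{app_Smooth_M_con}; in fact the argument is slightly simpler here, since the distance threshold $h$ is fixed and a pushforward preserves total mass, so $m_{\beta_\theta}=m_\beta$ for every $\theta$. The first step is to show that $\theta\mapsto\mathcal{L}_{D,h}(\alpha,\beta_\theta)$ is locally Lipschitz. Fix $\tilde\theta$ and, by Assumption~\ref{assumption_1}, choose a neighborhood $U\ni\tilde\theta$ and a constant $\Delta(\tilde\theta)$ with $\|\mathcal{T}_\theta(y)-\mathcal{T}_{\theta'}(y)\|\le\Delta(\tilde\theta)\|\theta-\theta'\|$ for all $\theta,\theta'\in U$ and $y\in\Omega'$, exactly as in~\eqref{common_y}. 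By Proposition~\ref{app_Existence_D} there is a potential $\vf^*\in C(\Omega)$ attaining $\mathcal{L}_{D,h}(\alpha,\beta_\theta)$ in the KR form~\eqref{KR}; since $\vf^*$ is $1$-Lipschitz and satisfies $-h\le\vf^*\le0$, the feasibility constraints in~\eqref{KR} do not depend on the parameter, so $\vf^*$ is also admissible for $\mathcal{L}_{D,h}(\alpha,\beta_{\theta'})$. Writing $\int_\Omega\vf^*\,d\beta_\theta=\int_{\Omega'}(\vf^*\circ\mathcal{T}_\theta)\,d\beta$ and using $m_{\beta_\theta}=m_{\beta_{\theta'}}=m_\beta$, the suboptimality of $\vf^*$ at $\theta'$ gives
\begin{equation}
\mathcal{L}_{D,h}(\alpha,\beta_\theta)-\mathcal{L}_{D,h}(\alpha,\beta_{\theta'})\ \le\ \int_{\Omega'}\bigl(\vf^*\circ\mathcal{T}_{\theta'}-\vf^*\circ\mathcal{T}_\theta\bigr)\,d\beta\ \le\ \int_{\Omega'}\|\mathcal{T}_{\theta'}(y)-\mathcal{T}_\theta(y)\|\,d\beta(y)\ \le\ m_\beta\,\Delta(\tilde\theta)\,\|\theta-\theta'\|, \nonumber
\end{equation}
where the middle inequality is the $1$-Lipschitz bound on $\vf^*$. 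Exchanging $\theta$ and $\theta'$ yields the reverse inequality, so $\mathcal{L}_{D,h}(\alpha,\beta_\theta)$ is locally Lipschitz, hence continuous in $\theta$, and by Rademacher's theorem differentiable almost everywhere. (Alternatively, one can reproduce the triangle-inequality argument through $\mathcal{W}_1$ and Lemma~\ref{unimportant_M} from Proposition~\ref{app_Smooth_M_con} essentially verbatim, using $\mathcal{L}_{D,h}(\alpha,\beta_\theta)=\mathcal{W}_1(\pi_\#^1,\pi_\#^2)-h\,m(\pi)$.)

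For the gradient formula, at a point $\theta$ where $\mathcal{L}_{D,h}(\alpha,\beta_\theta)$ is differentiable I would apply the envelope theorem to the parametrized family $\theta\mapsto\sup_{\vf}\mathbf{L}_{D,h}^{\alpha,\beta_\theta}(\vf)$, as in~\cite{pmlr-v70-arjovsky17a} and in the proof of Proposition~\ref{app_Smooth_M_con}: the existence of the maximizer (Proposition~\ref{app_Existence_D}) plus the fact that only the term $-\int_{\Omega'}(\vf^*\circ\mathcal{T}_\theta)\,d\beta$ carries $\theta$-dependence give $\nabla_\theta\mathcal{L}_{D,h}(\alpha,\beta_\theta)=-\nabla_\theta\int_{\Omega'}(\vf^*\circ\mathcal{T}_\theta)\,d\beta$, and moving $\nabla_\theta$ inside the integral yields~\eqref{grad_dis}.

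The main obstacle is this last step, i.e.\ the envelope argument together with the interchange of $\nabla_\theta$ and the integral: it requires $\theta\mapsto\vf^*(\mathcal{T}_\theta(y))$ to be differentiable for $\beta$-almost every $y$ and to admit a $\beta$-integrable local bound on its $\theta$-derivative, which is exactly why the identity is asserted only ``when both sides are well-defined.'' Everything else reduces to Assumption~\ref{assumption_1}, the existence result of Proposition~\ref{app_Existence_D}, and the boundedness and $1$-Lipschitz constraints built into the KR form~\eqref{KR}.
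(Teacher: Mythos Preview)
Your proposal is correct. The local Lipschitz argument you give is valid: the feasible set in the KR form~\eqref{KR} is indeed independent of $\theta$, so the optimal potential $\vf^*$ at $\theta$ is admissible at $\theta'$, and the $1$-Lipschitzness of $\vf^*$ together with $m_{\beta_\theta}=m_{\beta_{\theta'}}$ yields the one-sided bound; the symmetry gives the other side. The envelope step is treated exactly as in Proposition~\ref{app_Smooth_M_con}.

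However, the paper takes a genuinely different route for the Lipschitz estimate. Rather than arguing directly on the dual side, it introduces $\textit{KR}_h(\alpha,\beta)=\mathcal{L}_{D,h}(\alpha,\beta)+\tfrac{h}{2}(m_\alpha+m_\beta)$, cites that this is a \emph{metric}~\cite{lellmann2014imaging,bogachev2007measure}, and applies the triangle inequality to obtain $|\mathcal{L}_{D,h}(\alpha,\beta_\theta)-\mathcal{L}_{D,h}(\alpha,\beta_{\theta'})|\le \textit{KR}_h(\beta_\theta,\beta_{\theta'})$, which is then bounded by $\mathcal{W}_1(\beta_\theta,\beta_{\theta'})$ and finally by $m_\beta\Delta(\tilde\theta)\|\theta-\theta'\|$ via~\eqref{common_y}. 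Your argument is more self-contained, since it avoids invoking the metric property of $\textit{KR}_h$ and works purely from the existence of a maximizer and the constraints built into~\eqref{KR}; the paper's argument, on the other hand, delivers the two-sided bound in one stroke via $|\cdot|$ and the triangle inequality. The alternative you sketch through Lemma~\ref{unimportant_M} is the mechanism the paper uses for $\mathcal{L}_{M,m}$, not for $\mathcal{L}_{D,h}$.
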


\begin{proof}
    Similar to the proof of Proposition~\ref{app_Smooth_M_con},
    for every $\tilde{\theta} \in \mathbb{R}^p$,
    we select a neighborhood $U$ of $\tilde{\theta}$ and a constant $\Delta(\tilde{\theta})$, 
    such that for all $\theta, \theta' \in U$ and $y \in \Omega'$ inequality~\eqref{common_y} holds.

    Define $\textit{KR}_{h}(\alpha, \beta) = \mathcal{L}_{D,h}(\alpha, \beta) + \frac{h}{2} (m_\alpha + m_\beta)$.
    It is known that $\textit{KR}_{h}$ is a metric~\cite{lellmann2014imaging, bogachev2007measure},
    thus by triangle inequality of the KR metric,
    \begin{align}
        \label{app_tmp_1}
        |\mathcal{L}_{D,h}(\alpha, \beta_\theta) - \mathcal{L}_{D,h}(\alpha, \beta_{\theta'})| =& |\Bigl(  \textit{KR}_{h}(\alpha, \beta_\theta) - \frac{h}{2} (m_\alpha + m_{\beta_\theta}) \Bigr) -  \Bigl(  \textit{KR}_{h}(\alpha, \beta_{\theta'}) - \frac{h}{2} (m_\alpha + m_{\beta_{\theta'}} )   \Bigr) | \nonumber \\ 
        =& |\textit{KR}_{h}(\alpha, \beta_\theta) - \textit{KR}_{h}(\alpha, \beta_{\theta'}) | \leq \textit{KR}_{h}(\beta_{\theta'} \beta_\theta)
    \end{align}
    for arbitrary $\theta, \theta' \in \mathbb{R}^d$.
    Note that the second equality holds because $m_{\beta_{\theta'}} = m_{\beta_\theta} = m_{\beta}$.
    In addition,
    we have
    \begin{align}
        \textit{KR}_{h}(\beta_{\theta'}, \beta_\theta)= \sup_{\substack{\vf \in Lip(\Omega) \\ -\frac{h}{2} \leq \vf \leq \frac{h}{2}} } \int_\Omega \vf d(\beta_{\theta'}-\beta_\theta) \leq \sup_{\vf \in Lip(\Omega) } \int_\Omega \vf d(\beta_{\theta'}-\beta_\theta)  = \mathcal{W}_1(\beta_{\theta'}, \beta_\theta) & \leq \int_\Omega \vd(\mathcal{T}_{\theta}(y) - \mathcal{T}_{\theta'}(y)) d\beta(y) \nonumber \\
        & \leq  m_\beta \Delta(\tilde{\theta}) (||\theta - \theta'|| ), \nonumber
    \end{align}
    where the last inequality holds because of~\eqref{common_y}.
    Therefore,
    we have
    \begin{equation}
        \mathcal{L}_{D,h}(\alpha, \beta_\theta) - \mathcal{L}_{D,h}(\alpha, \beta_{\theta'}) \leq \textit{KR}_{h}(\beta_{\theta'}, \beta_\theta) \leq m_\beta \Delta ||\theta - \theta' ||, \nonumber
    \end{equation}
    which proves $\mathcal{L}_{D,h}(\alpha, \beta_\theta)$ is locally Lipschitz \wrt $\theta$,
    and Radamacher's theorem states that it is differentiable almost everywhere.
    The rest of the proof is similar to that of Proposition~\ref{app_Smooth_M_con}.
\end{proof}

For clearness,
we summarize the results and prove the theorems in our work.
\begin{proof}[\textbf{Proof of Theorem~\ref{Theorem1}}]
    The KR formulation of $\mathcal{L}_{D,h}$ and the existence of its solution is proved in~\cite{schmitzer2019framework}.
    The gradient of $\mathcal{L}_{D,h}(\alpha, \beta_\theta)$ is derived in Proposition~\ref{app_Smooth_D_con}.
\end{proof}
\begin{proof}[\textbf{Proof of Theorem~\ref{Theorem2}}]
    The KR formulation of $\mathcal{L}_{M,m}$ is given in Proposition~\ref{app_m-PW},
    and the existence of the optimizer is proved in Proposition~\ref{app_Existence_M}.
    The gradient of $\mathcal{L}_{M,m}(\alpha, \beta_\theta)$ is derived in Proposition~\ref{app_Smooth_M_con}.
\end{proof}

We finally verify that the parametrized transformation $\mathcal{T}_\theta$~\eqref{our_transformation} satisfies assumption~\ref{assumption_1}.
\begin{proposition}[Definition~\ref{our_transformation} satisfies assumption~\ref{assumption_1}]
    \label{app_Lip_T}
    Let $y \in \Omega' \subseteq \mathbb{R}^{1 \times 3}$ be a point in 3D space.
    Define $\mathcal{T}_{\theta}(y) = yA + t + v(y)$ and $\theta=(A,t,v)$,
    where $A \in \mathbb{R}^{3 \times 3}$ is an affinity matrix,
    $t \in \mathbb{R}^{1\times3}$ is a translation vector,
    $v(y) \in \mathbb{R}^{1 \times 3}$ is the offset vectors of $y$.
    Assume that $\mathcal{T}_{\theta}$ is coherent,
    \ie, for all $y \in \Omega'$, 
    there exists a constant $J > 0$,
    such that $||\nabla_{y} v|| \leq J$ where $\nabla$ is the Jacobian matrix.
    Then,
    for every $\tilde{\theta}$,
    there exist a neighborhood $U_{\tilde{\theta}}(\delta)$ of $\tilde{\theta}$ and a constant $\Delta(\tilde{\theta})$,
    such that for $\theta, \theta' \in U_{\tilde{\theta}}(\delta)$ and $y_1, y_2 \in \Omega'$,
    $||\mathcal{T}_\theta(y_1), \mathcal{T}_{\theta'}(y_2)|| \leq \Delta(\tilde{\theta}) (||\theta - \theta'|| + ||y_2 - y_1|| )$.
\end{proposition}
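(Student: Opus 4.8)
The plan is to prove that \eqref{our_transformation} satisfies assumption~\ref{assumption_1} by a direct triangle-inequality decomposition of $\mathcal{T}_\theta(y_1) - \mathcal{T}_{\theta'}(y_2)$, separating the contribution of the parameter change $\theta - \theta'$ from that of the input change $y_1 - y_2$, and then bounding the resulting coefficients uniformly on a small neighborhood of $\tilde\theta$ using the compactness of $\Omega'$.

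First I would fix $\tilde\theta = (\tilde A, \tilde t, \tilde v)$, choose $\delta > 0$, and take $U_{\tilde\theta}(\delta)$ to be the $\delta$-ball about $\tilde\theta$ (with the offset component measured in the supremum norm over $\Omega'$, which in the implemented finite-dimensional setting $v(y_j) = V_j$ is just the Euclidean norm on $V$). Since $\Omega' \subseteq \mathbb{R}^{1\times 3}$ is compact there is an $R > 0$ with $\|y\| \le R$ for all $y \in \Omega'$, and for any $\theta' = (A', t', v') \in U_{\tilde\theta}(\delta)$ we have $\|A'\| \le \|\tilde A\| + \delta =: M_A$.

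Next, after adding and subtracting $y_1 A'$ and $v(y_2)$, I would write
\[
\mathcal{T}_\theta(y_1) - \mathcal{T}_{\theta'}(y_2) = y_1(A - A') + (y_1 - y_2)A' + (t - t') + \bigl(v(y_1) - v(y_2)\bigr) + \bigl(v(y_2) - v'(y_2)\bigr),
\]
and estimate each term: $\|y_1(A - A')\| \le R\,\|A - A'\|$; $\|(y_1 - y_2)A'\| \le M_A\,\|y_1 - y_2\|$; the coherence hypothesis $\|\nabla_y v\| \le J$ together with the mean value inequality along segments in $\Omega'$ gives $\|v(y_1) - v(y_2)\| \le J\,\|y_1 - y_2\|$; and $\|t - t'\|$ and $\|v(y_2) - v'(y_2)\| \le \|v - v'\|$ are each dominated by $\|\theta - \theta'\|$ since they are (pointwise values of) components of $\theta$. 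Summing,
\[
\|\mathcal{T}_\theta(y_1) - \mathcal{T}_{\theta'}(y_2)\| \le (R + 2)\,\|\theta - \theta'\| + (M_A + J)\,\|y_1 - y_2\|,
\]
so $\Delta(\tilde\theta) := \max\{R + 2,\ \|\tilde A\| + \delta + J\}$ finishes the argument.

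The two points that deserve care are (i) the step $\|v(y_1) - v(y_2)\| \le J\,\|y_1 - y_2\|$, which needs $\Omega'$ to be joined by segments (it is convex, being a closed cube, in the registration setup, so this is harmless), and (ii) fixing a norm on the offset component so that pointwise evaluation is controlled by $\|\theta - \theta'\|$; the latter is immediate once $v$ is treated as the finite vector $V$ with $v(y_j) = V_j$, since then $\|V_j - V_j'\| \le \|V - V'\|_F \le \|\theta - \theta'\|$. Everything else is a routine norm estimate, so I do not expect a substantial obstacle.
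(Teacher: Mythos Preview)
Your proposal is correct and takes essentially the same approach as the paper: a triangle-inequality decomposition separating the parameter variation from the input variation, followed by term-by-term bounds using compactness of $\Omega'$, the coherence hypothesis $\|\nabla_y v\|\le J$, and a local bound on $\|A'\|$ in a $\delta$-ball around $\tilde\theta$. The paper's proof splits in two steps as $\mathcal{T}_\theta(y_1)-\mathcal{T}_\theta(y_2)$ plus $\mathcal{T}_\theta(y_2)-\mathcal{T}_{\theta'}(y_2)$ and then expands, whereas you expand directly into five terms by adding and subtracting $y_1A'$ and $v(y_2)$; the resulting constants differ cosmetically (the paper arrives at $\Delta(\tilde\theta)=\max\{\|\tilde\theta\|+\delta,\ \mathrm{diam}(\Omega')+2\}$), but the argument is the same. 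Your explicit remarks on convexity of $\Omega'$ for the mean-value step and on the norm used for the offset component are welcome clarifications that the paper glosses over.
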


\begin{proof}
    We have
    \begin{align}
        ||\mathcal{T}_\theta(y_1) - \mathcal{T}_{\theta'}(y_2) || & = ||\mathcal{T}_\theta(y_1) - \mathcal{T}_\theta(y_2) + \mathcal{T}_\theta(y_2) - \mathcal{T}_{\theta'}(y_2) || \nonumber \\
        &\leq||\mathcal{T}_\theta(y_1) - \mathcal{T}_\theta(y_2)|| + ||\mathcal{T}_\theta(y_2) - \mathcal{T}_{\theta'}(y_2) || \nonumber \\
        &= || (y_2 - y_1)A + (v(y_2)-v(y_1))|| + ||y_2(A-A') + (t-t') + (v(y_2)- v'(y_2)) || \nonumber \\
        &\leq ||y_2 - y_1|| ||\theta|| + ||\nabla_{y}v(y_t)|| ||y_2-y_1|| + ||y_2|| ||\theta-\theta'|| + ||\theta-\theta'|| + ||\theta-\theta'|| \nonumber \\
        &\leq ||y_2 - y_1|| (||\tilde{\theta}|| + \delta) + J ||y_2-y_1|| + diam(\Omega') ||\theta-\theta'|| + 2||\theta-\theta'|| \nonumber \\
        &=(||\tilde{\theta}|| + \delta) ||y_2-y_1|| + (diam(\Omega') + 2) ||\theta-\theta'|| \nonumber \\
        & \leq \max\bigl( (||\tilde{\theta}|| + \delta), (diam(\Omega') + 2) \bigr)   (||y_2-y_1|| + ||\theta-\theta'||), \nonumber
    \end{align}
    where $y_t$ is given by the mean value theorem.
    We finish the proof by letting $\Delta(\tilde{\theta})=\max\bigl( (||\tilde{\theta}|| + \delta), (diam(\Omega') + 2) \bigr)$.
\end{proof}

The statement also holds for a forward neural network:
\begin{proposition}[A forward neural network satisfies assumption~\ref{assumption_1}]
    \label{app_Lip_T_2}
    Let $y \subseteq \Omega' \subseteq \mathbb{R}^p$ be a $p$ dimensional input,
    and a forward neural network $\mathcal{T}_{\theta}$ consisting of linear and activation layers.
    For every $\tilde{\theta}$,
    there exist a neighborhood $U_{\tilde{\theta}}(\delta)$ of $\tilde{\theta}$ and a constant $\Delta(\tilde{\theta})$,
    such that for $\theta, \theta' \in U_{\tilde{\theta}}(\delta)$ and $y_1, y_2 \in \Omega'$,
    $||\mathcal{T}_\theta(y_1), \mathcal{T}_{\theta'}(y_2)|| \leq \Delta(\tilde{\theta}) (||\theta - \theta'|| + ||y_2 - y_1|| )$.
\end{proposition}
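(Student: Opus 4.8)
The plan is to proceed by induction on the depth of the network, using that each elementary layer is jointly Lipschitz in its parameters and its input on bounded sets, and that joint Lipschitz continuity is preserved under composition. Write $\mathcal{T}_\theta = \mathcal{T}^{(k)}_\theta \circ \cdots \circ \mathcal{T}^{(1)}_\theta$, where each layer $\mathcal{T}^{(i)}$ is either a linear map $z \mapsto W_i z + b_i$ (whose parameters form a block of $\theta$) or a componentwise activation $z \mapsto \sigma_i(z)$ with $\sigma_i$ globally $C_i$-Lipschitz and carrying no parameters; the standard choices (ReLU, sigmoid, tanh) are all $1$-Lipschitz. Fix $\tilde{\theta}$, choose $\delta > 0$, and let $U = U_{\tilde{\theta}}(\delta)$ be the closed ball of radius $\delta$ around $\tilde{\theta}$. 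Since $U$ is bounded, every weight matrix $W_i$ and bias $b_i$ arising from $\theta \in U$ has norm at most some $B = B(\tilde{\theta})$; and $\Omega'$ is compact, hence bounded.

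First I would set up the induction hypothesis for the partial network $\mathcal{T}^{(\leq i)}_\theta := \mathcal{T}^{(i)}_\theta \circ \cdots \circ \mathcal{T}^{(1)}_\theta$: there are constants $L_i, R_i$ such that for all $\theta, \theta' \in U$ and $y_1, y_2 \in \Omega'$,
\begin{equation}
  \| \mathcal{T}^{(\leq i)}_\theta(y_1) - \mathcal{T}^{(\leq i)}_{\theta'}(y_2) \| \leq L_i \bigl( \|\theta - \theta'\| + \|y_1 - y_2\| \bigr), \nonumber
\end{equation}
and $\| \mathcal{T}^{(\leq i)}_\theta(y) \| \leq R_i$ for all such $\theta, y$. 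The base case $i = 0$ (the identity map) is immediate. For an activation layer I would use $\| \sigma_{i+1}(u) - \sigma_{i+1}(v) \| \leq C_{i+1} \| u - v \|$, which propagates the bound with $L_{i+1} = C_{i+1} L_i$ and, by continuity, gives a finite $R_{i+1}$; no parameters are added, so no new $\theta$-dependence appears. For a linear layer, writing $u_j = \mathcal{T}^{(\leq i)}_{\cdot}(y_j)$, I would split
\begin{align}
  \| W_{i+1} u_1 + b_{i+1} &- W'_{i+1} u_2 - b'_{i+1} \| \nonumber \\
  &\leq \| W_{i+1} \|\, \| u_1 - u_2 \| + \| W_{i+1} - W'_{i+1} \|\, \| u_2 \| + \| b_{i+1} - b'_{i+1} \|, \nonumber
\end{align}
then bound the three terms using $\| W_{i+1} \| \leq B$, the induction hypothesis on $\| u_1 - u_2 \|$, $\| u_2 \| \leq R_i$, and $\max( \| W_{i+1} - W'_{i+1} \|, \| b_{i+1} - b'_{i+1} \| ) \leq \| \theta - \theta' \|$; this yields $L_{i+1} = B L_i + R_i + 1$ and $R_{i+1} = B R_i + B$. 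After $k$ steps the claim follows with $\Delta(\tilde{\theta}) = L_k$, which depends on $\tilde{\theta}$ only through $\delta$ and $B(\tilde{\theta})$.

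All of these estimates are elementary, so there is no serious obstacle; the only point that genuinely forces the statement to be \emph{local} rather than global is the cross term $\| W_{i+1} - W'_{i+1} \|\, \| u_2 \|$, whose control requires $\theta$ to range over a bounded set so that $R_i$ stays finite — exactly the mechanism already used in the affine case of Proposition~\ref{app_Lip_T}. The one modeling assumption I would flag explicitly is that every activation function used in $\mathcal{T}_\theta$ is globally Lipschitz, which holds for all activations appearing in the paper.
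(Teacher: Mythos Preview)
Your proposal is correct and takes a genuinely different route from the paper. The paper's proof first splits via the triangle inequality into $\|\mathcal{T}_\theta(y_1)-\mathcal{T}_\theta(y_2)\|+\|\mathcal{T}_\theta(y_2)-\mathcal{T}_{\theta'}(y_2)\|$, then applies the mean value theorem to each piece and bounds the two Jacobians $\|\nabla_x \mathcal{T}_\theta\|$ and $\|\nabla_\theta \mathcal{T}_\theta\|$ by chain-rule products over the layers, citing~\cite{pmlr-v70-arjovsky17a} for the explicit constants; it explicitly restricts to the differentiable case. Your layer-by-layer induction instead tracks a joint Lipschitz constant $L_i$ together with an output bound $R_i$ directly, without passing through derivatives. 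The advantage of your approach is that it is self-contained, handles non-differentiable activations such as ReLU without caveat, and sidesteps the (informal) use of the mean value theorem for vector-valued maps. The paper's approach is shorter and makes the role of the network Jacobians explicit, connecting to the WGAN analysis, but at the cost of the differentiability restriction and an external reference for the constants. Both ultimately identify the same mechanism forcing locality: the bilinear term coupling parameter perturbations to the (bounded) intermediate activations.
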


\begin{proof}
    We only discuss the case when $\mathcal{T}_{\theta}$ is differentiable.
    We have 
    \begin{equation}
        ||\mathcal{T}_\theta(y_1) - \mathcal{T}_{\theta'}(y_2) || \leq||\mathcal{T}_\theta(y_1) - \mathcal{T}_\theta(y_2)|| + ||\mathcal{T}_\theta(y_2) - \mathcal{T}_{'\theta}(y_2) ||
        \leq ||y_2 - y_1|| ||\nabla_x T_{\theta}(x) || +  ||\nabla_\theta T_{\theta''}(y_2)|| ||\theta-\theta'||, \nonumber
    \end{equation}
    where $x$ and $\theta''$ is given by the mean value theorem.
    In addition,
    by chain rule,
    we have $||\nabla_x T_{\theta}(x) || \leq \Delta_1(\theta) $,
    and $||\nabla_\theta T_{\theta}(y_2) || \leq ||y_2|| \Delta_2(\theta)$,
    where $\Delta_1$ and $\Delta_2$ are functions of $\theta$,
    and the explicit forms can be found in~\cite{pmlr-v70-arjovsky17a}.
    By combining these inequalities,
    we have 
    \begin{equation}
        ||\mathcal{T}_\theta(y_1) - \mathcal{T}_{\theta'}(y_2) || 
        \leq ||y_2 - y_1|| \Delta_1(\theta) +  diam(\Omega') \Delta_2(\theta) ||\theta-\theta'|| \leq \max\bigl( \Delta_1(\theta), diam(\Omega') \Delta_2(\theta) \bigr)  (||y_2-y_1|| + ||\theta-\theta'||), \nonumber
    \end{equation}
    which finishes our proof.
\end{proof}
\twocolumn
\section{More Experiment Details}

\subsection{More Details in Sec.~\ref{Sec_Opt_PW}}
\label{Sec_Opt_PW_app}
To demonstrate the accuracy of the proposed neural approximation,
we quantitatively compare the approximated KR forms and the primal forms in each setting in Fig.~\ref{app_Fish_complete}.
We compute the primal forms using linear programs.
The results are summarized in Tab.~\ref{Numerical_table}.
As can be seen,
our approximated KR forms are close to the true values with an average relative error less than $0.2\%$,
which is sufficiently accurate for machine learning applications.

\begin{table}[hbt!]
	\begin{center}
	  \caption{Quantitative comparison between the approximated KR forms and primal forms on the fish shape in Fig.~\ref{app_Fish_complete}}
	  \setlength{\tabcolsep}{2.0pt}
    \label{Numerical_table}
	  \begin{tabular}{c c c c  c c c c} 
		  \hline
           & $\mathcal{L}_{M,25}$ & $\mathcal{L}_{M,50}$ & $\mathcal{L}_{M,78}$ & $\mathcal{L}_{D,0.648}$ & $\mathcal{L}_{D,1.09}$ & $\mathcal{L}_{D,5}$ & $\mathcal{W}_1$\\
		\hline
	  \centering
        Primal      & 0.1354 & 0.4191  & 0.8955 & -0.0722 & -0.2795 & -4.1044 & 1.0835 \\
        KR (Ours) & 0.1352 & 0.4202  & 0.8994 & -0.0724 & -0.2791 & -4.1004 & 1.0893 \\
		\hline
	  \end{tabular}
	\end{center}
\end{table}

\subsection{More Details in Sec.~\ref{Sec_algorithm}}
\label{Sec_algorithm_app}
In Fig.~\ref{Algo-comparison},
we compute $\mathcal{L}_{M,m}(\alpha, \beta)$,
where $\alpha$ and $\beta$ are feature distributions extracted in task AC-OfficeHome in Sec.~\ref{Sec_PDA_comparison}.
They each contain $1600$ samples in the $256$-D feature space.
We set $m_\beta = m = 1$ and $m_\alpha=2$.
The mini-POT is computed using the POT library~\cite{flamary2021pot}.
Specifically,
to compute mini-POT,
we first randomly divide $\alpha$ and $\beta$ into $P$ mini-batches.
Let the $k$-th mini-batch be $\tilde{\alpha}_k$ and $\tilde{\beta}_k$,
the mini-POT is computed as
\begin{equation}
\textit{mini-POT}(\alpha, \beta) =  \frac{1}{P} \sum_{k=1}^P \mathcal{L}_{M,m}(\tilde{\alpha}_k, \tilde{\beta}_k),
\end{equation}
where $\mathcal{L}_{M,m}$ takes the primal form~\eqref{POT_primal} with an entropy regularizer.
To compute mini-batch PWAN,
we update $\vf_{w,h}$ for $1000$ epochs to make sure the training process is fully converged.
We use Adam optimizer, and the learning rate is set to $b \times 10^{-4}$ where $b$ represents the batch size.

We further consider the improved mini-POT proposed in~\cite{nguyen2022improving},
which corrects the mini-batch errors by transporting less mass in each mini-batch.
Specifically,
with a correction parameter $s\in (0,1)$,
the improved mini-POT is defined as
\begin{equation}
  \textit{mini-POT}_s(\alpha, \beta) =  \frac{1}{P} \sum_{k=1}^P \mathcal{L}_{M,sm}(\tilde{\alpha}_k, \tilde{\beta}_k).
\end{equation}
Note that only $sm$ mass, instead of $m$ mass, is transported in each batch.
The correction parameter $s$ is usually hard to choose,
and~\cite{nguyen2022improving} suggests searching for the optimal $s$, 
\eg, using a grid search on a validation set.
In this example,
we manually select $s$ to be $0.8$ or $0.9$, 
and we report the results in Fig.~\ref{Algo-comparison-app}.

As can be seen,
due to the mini-batch error, 
mini-POT is always larger than the true PW discrepancy.
By reducing the transported mass,
improved mini-POT can achieve smaller values.
However,
the optimal $s$ depends on the batch size,
and a bad choice of $s$ can lead to even worse results.
For example,
when the batch size is $10$,
$s=0.8$ leads to accurate results (the relative error is about $2\%$),
but when the batch size is $400$,
the same parameter $s=0.8$ leads to inaccurate results (the relative error is about $20\%$) due to over correction.
In other words,
if the optimal correction parameter $s$ is not known in advance,
it is hard to achieve an accurate result using improved mini-POT.

\begin{figure}[htb!]
  \centering
  \includegraphics[width=0.7\linewidth]{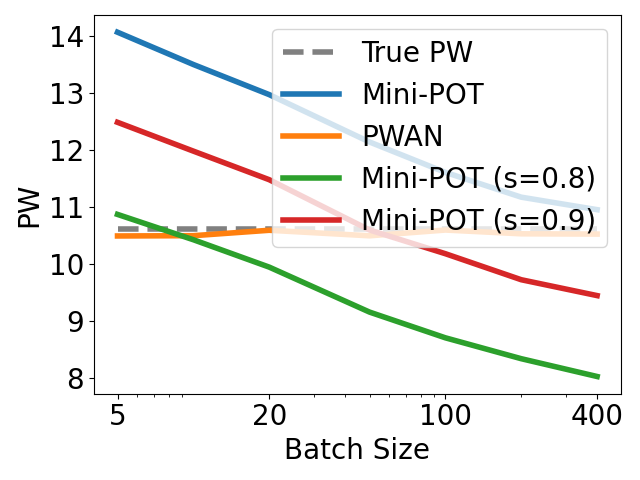}
  \caption{Comparison between PWAN and improved mini-POT on examples in Fig.~\ref{Algo-comparison}.}
  \label{Algo-comparison-app}
\end{figure}

\subsection{An Application in Generative Modelling}
\label{sec_app_WGAN}

\begin{figure*}[t!]
  \centering
  \subfigure[Random samples from the training set (left), WGAN (middle), and PWAN (right).]{
    \label{generative_vis_a}
    \begin{minipage}[b]{0.7\linewidth}
        \centering
        \includegraphics[width=0.3\linewidth]{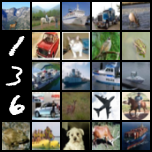}
        \includegraphics[width=0.3\linewidth]{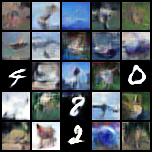}
        \includegraphics[width=0.3\linewidth]{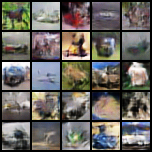}
    \end{minipage}
    }
    \subfigure[t-SNE visualization of the samples generated by WGAN (left), PWAN (middle), and a random initialized generator (right).]{
      \label{generative_vis_d}
      \begin{minipage}[b]{0.7\linewidth}
          \centering
          \includegraphics[width=0.3\linewidth]{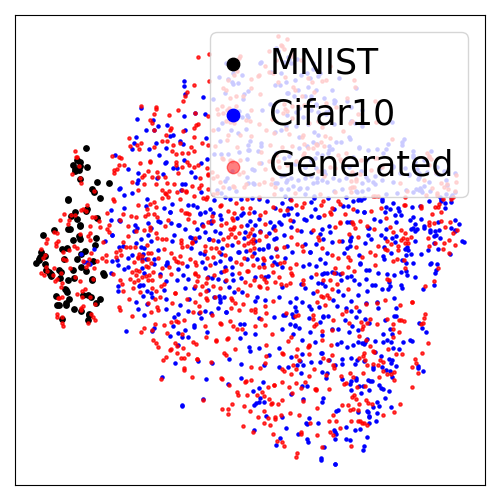}
          \includegraphics[width=0.3\linewidth]{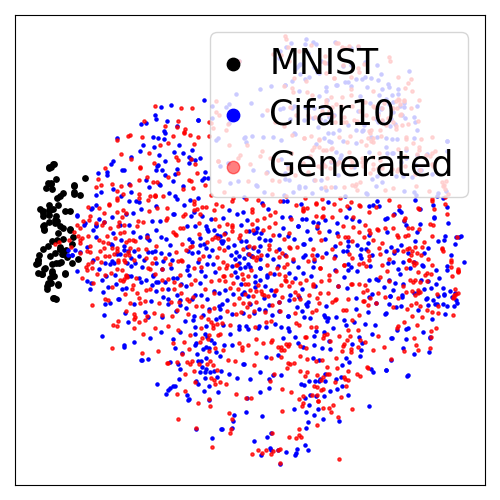}
          \includegraphics[width=0.3\linewidth]{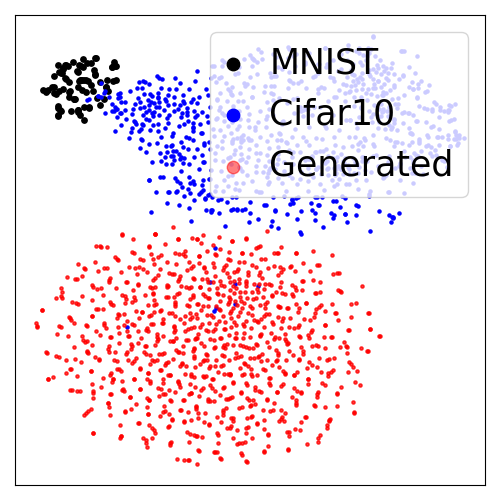}
        \end{minipage}
  }
\caption{
A comparison between PWAN and WGAN on generative modelling.  
\subref{generative_vis_a} When trained on the Cifar10-MNIST dataset,
the samples of WGAN contain a fraction of MNIST-like images,
\ie, outliers,
while the samples of PWAN are Cifar10-like.
\subref{generative_vis_d} The t-SNE visualization shows that a fraction of samples of WGAN are close to MNIST (dark points),
while most samples of PWAN are distinct from MNIST. 
Zoom in to see the details.
}
\label{Compare_WGAN}
\end{figure*}

\begin{figure*}[tb!]
  \centering
  \includegraphics[width=0.3\linewidth]{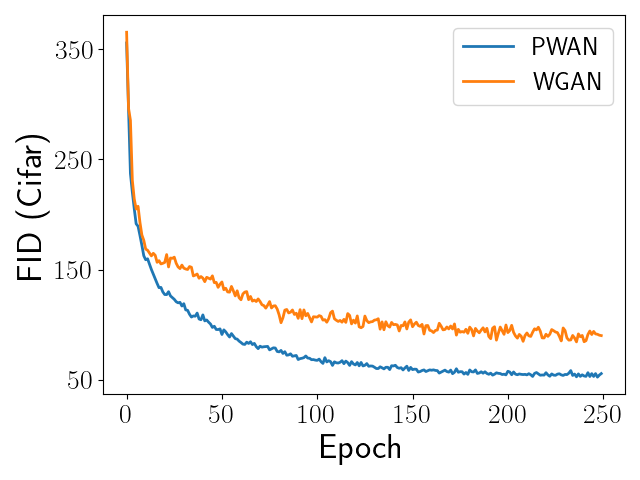}
  \includegraphics[width=0.3\linewidth]{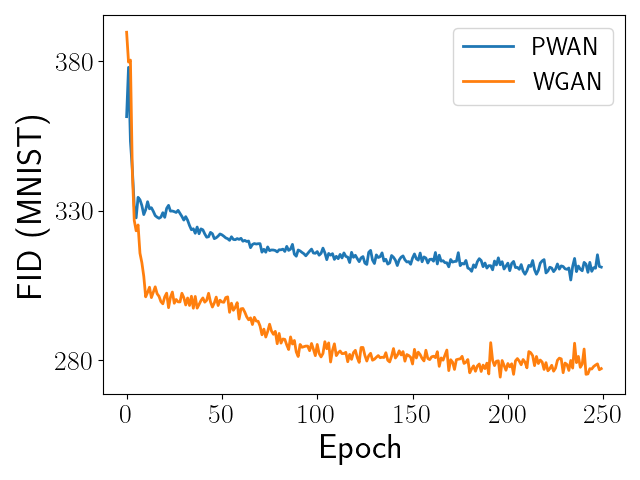}
  \includegraphics[width=0.3\linewidth]{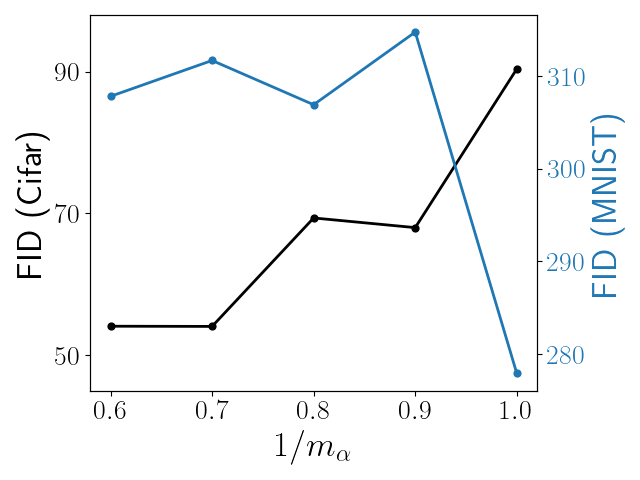}
\caption{Quantitative comparison between PWAN and WGAN on Cifar10-MNIST dataset.
Left and middle: FID on Cifar10 and MNIST.
Right: the influence of $m_\alpha$ on FID.
}
\label{Compare_WGAN_FID}
\end{figure*}

This subsection presents an application of PWAN to generative modelling:
Given an image dataset $\mathbf{D}$ contaminated by a fraction of outliers,
the goal is to generate images that are similar to the non-outliers in $\mathbf{D}$.

To achieve robustness against outliers,
we construct the following model as an extension of WGAN~\cite{gulrajani2017improved,pmlr-v70-arjovsky17a}:
\begin{equation}
  \label{generative_loss}
  \min_{\theta} \mathcal{L}_{M,1}(\alpha, \beta_\theta),
\end{equation}
where $\alpha = \frac{m_\alpha}{|\mathbf{D}|} \sum_{x \in \mathbf{D}} \delta_x$ is the data distribution with weight $m_\alpha >1$,
$|\mathbf{D}|$ is the size of dataset $\mathbf{D}$,
and $\beta_\theta = (\mathcal{T}_\theta)_{\#}\mathcal{N}$ is a push-forward of the standard Gaussian distribution $\mathcal{N}$ by a feedforward network $\mathcal{T}_\theta$,
\ie, a sample from $\beta_\theta$ can be generated by $y = \mathcal{T}_\theta(z)$,
where $z \sim \mathcal{N}$ is a Gaussian noise.
By inserting our PWAN formulation into~\eqref{generative_loss}, 
our model can be explicitly written as:
\begin{align}
  \label{generative_loss_specific}
  \min_{\mathcal{T}_\theta} \max_{\vf_w \leq 0}  \mathbb{E}_{x \sim \tilde{\alpha}} [m_\alpha \vf_w(x)]
  - \mathbb{E}_{z \sim \mathcal{N}} [\vf_w(\mathcal{T}_\theta(z))]& \nonumber \\
  - \textit{GP}(\vf_{w})&,
\end{align}
where $\vf_w$ is a feedforward network,
and $\tilde{\alpha} = \frac{1}{m_\alpha} \alpha$ is the data distribution.
Note that~\eqref{generative_loss_specific} approximately omits $\frac{m_\alpha - 1}{m_\alpha}$ data as outliers.
When $m_\alpha=1$, 
no data is omitted,
thus~\eqref{generative_loss_specific} becomes WGAN (the sign constraint $\vf_w \leq 0$ is not active).
For clarity,
we call~\eqref{generative_loss_specific} PWAN in the rest of this subsection.

To show the effectiveness of PWAN,
we consider the task of generating Cifar10~\cite{krizhevsky2009learning} images.
We construct the Cifar10-MNIST dataset $\mathbf{D}$ by adding $10\%$ MNIST~\cite{lecun2010mnist} images as outliers to the Cifar10 dataset~\cite{krizhevsky2009learning},
and we compare PWAN against WGAN on this dataset.
The network structure of WGAN and all hyper-parameters follow~\cite{gulrajani2017improved}.
For PWAN,
we set $m_\alpha=1 / 0.7$,
and take the negative absolute value to ensure the sign constraint as detailed in Sec.~\ref{Sec_algorithm}.
All other parameters are the same as those used in WGAN.

The qualitative results are presented in Fig.~\ref{generative_vis_a},
where we see that the samples of WGAN are ``dirty'' as they contain outliers,
\ie, MNIST-like images.
In contrast,
PWAN generates ``clean'' samples similar to Cifar10.
This result suggests that PWAN can effectively omit outliers during training.
Furthermore,
the robustness of PWAN is confirmed by the visualizations in Fig.~\ref{generative_vis_d},
where we observe that the samples of PWAN are close to Cifar10 and are distinct from MNIST,
while the samples of WGAN are mixed up with both MNIST and Cifar10.

A quantitative comparison is presented in Fig.~\ref{Compare_WGAN_FID},
where we observe that compared to WGAN, 
PWAN achieves lower FID on Cifar10,
and higher FID on MNIST.
In other words,
the samples of PWAN are closer to Cifar10,
and are further away from MNIST,
which is consistent with our observation in Fig.~\ref{generative_vis_d}.
In addition,
we conduct a sensitivity study of the parameter $m_\alpha$,
which controls the ratio of omitted outliers.
As shown in the right panel,
the FID on Cifar10 is robust to $m_\alpha$ when $1/m_\alpha$ is relatively small.
However,
when $1/m_\alpha$ is excessively large,
\eg, $m_\alpha$ is close to $1$,
the FID on Cifar10 increases drastically while the FID on MNIST decreases.
This is because too few outliers are omitted,
\ie, 
the samples of PWAN are aligned to MNIST.

\subsection{More Details in Sec.~\ref{Sec_app_PSR}}
\label{Sec_app_PSR_app}
To estimate the gradient of the coherence energy efficiently,
we first decompose $\mathbf{G}$ as $\mathbf{G} \approx \mathbf{Q}\Lambda \mathbf{Q}^T$ via the Nystr\"{o}m method,
where $k \ll r$, 
$\mathbf{Q} \in \mathbb{R}^{r\times k}$, 
and $\Lambda \in \mathbb{R}^{k\times k}$ is a diagonal matrix.
Then we apply the Woodbury identity to $(\sigma \mathcal{I} + \mathbf{Q}\Lambda \mathbf{Q}^T)^{-1}$ and obtain
\begin{equation}
   (\sigma \mathcal{I} + \mathbf{Q}\Lambda \mathbf{Q}^T)^{-1} = \sigma^{-1} \mathcal{I} - \sigma^{-2} \mathbf{Q} (\Lambda^{-1} + \sigma^{-1} \mathbf{Q}^T \mathbf{Q})^{-1}\mathbf{Q}^T. \nonumber
\end{equation}
As a result,
the gradient of the coherence energy can be approximated as 
\begin{align}
   \frac{\partial \mathcal{C}_\theta}{\partial V} & =  2 \lambda (\sigma \mathcal{I} + \mathbf{G})^{-1} V  \nonumber \\
   & \approx (2 \lambda) (\sigma^{-1} V - \sigma^{-2} \mathbf{Q} (\Lambda^{-1} + \sigma^{-1} \mathbf{Q}^T \mathbf{Q})^{-1}\mathbf{Q}^T V). \nonumber
\end{align}

\subsection{Detailed Experimental Settings in Sec.~\ref{Sec_experiments_PS}}
\label{Sec_experiments_PS_app}

The network used in our experiment is a 5-layer point-wise multi-layer perceptron with a skip connection.
The detailed structure is shown in Fig.~\ref{app_Exp_network}.
 
\begin{figure}[htb!]
	\centering
        \includegraphics[width=0.99\linewidth]{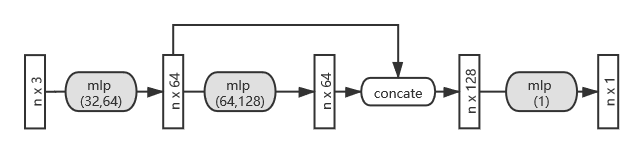}
    \vspace{-2mm}
  \caption{The structure of the network used in our experiments.
  The input is a matrix of shape $(n,3)$ representing the coordinates of all points in the set,
  and the output is a matrix of shape $(n,1)$ representing the potential of the corresponding points.
  $mlp(x)$ represents a multi-layer perceptron (mlp) with the size $x$.
  For example,
  $mlp(m,n)$ represents an mlp consisting of two layers,
  and the size of each layer is $m$ and $n$.
  We use ReLu activation function in all except the output layer.
  The activation function $\vl(x; h) = \max\{-|x|, -h\}$ is added to the output to clip the output to the interval $[-h , 0]$.
  }
	\label{app_Exp_network}
\end{figure}

We train the network $\vf_{w,h}$ using the Adam optimizer~\cite{kingma2014adam},
and we train the transformation $\mathcal{T}_{\theta}$ using the RMSprop optimizer~\cite{Tieleman2012}.
The learning rates of both optimizers are set to $10^{-4}$.
The parameters are set as follows:
\begin{itemize}[leftmargin=3mm]
  \item[-]{Experiments in Sec.~\ref{Sec_exp_acc}}: For PWAN,
  we set $(\rho, \lambda, \sigma, T)=(2, 0.01, 0.1, 2000)$;
  For TPS-RPM,
  we set $T\_finalfac=500$, $frac=1$, and $T\_init=1.5$;
  For GMM-REG,
  we set $sigma=0.5,0.2,0.02$, $Lambda=.1,.02,.01$, $max\_function\_evals=50,50,100$ and $level=3$.
  For BCPD and CPD,
  we set $(\beta, \lambda, w) = (2.0, 2.0, 0.1)$.
  \item[-]{Experiments on the human face dataset}:
  We use m-PWAN with  $(\rho, \lambda, \sigma, T)=(0.5, 5 \times 10^{-4}, 1.0, 3000)$;
  For BCPD and CPD,
  we set $(\beta, \lambda, w) = (3.0, 20.0, 0.1)$.
  \item[-]{Experiments on the human body dataset}: We use m-PWAN with $(\rho, \lambda, \sigma, T)=(1.0, 1 \times 10^{-4}, 1.0, 3000)$;
  For BCPD and CPD,
  we set $(\beta, \lambda, w) = (0.3, 2.0, 0.1)$.
  We also tried $w=0.6$ but the results are similar (not shown in our experiments).
  \item[-]{Rigid registration}: We set $(\beta, \lambda, w) = (2.0, 1e^9, 0.1)$ for BCPD.
  We use the rigid version of the CPD software.
  We set the same parameter for ICP as PWAN,
  \ie,
  we set the distance threshold $d=0.05$ for d-ICP,
  and set the trimming rate $m=0.8\min(q,r)$ for m-ICP.
\end{itemize}
The parameters for CPD and BCPD are suggested in~\cite{hirose2021a}.

We note that in our experiments, 
we determine parameter $m$ for m-PWAN by estimating the overlap ratio and setting $m$ as the number of overlapped points.
A future direction is to automatically determine the overlap ratio as in~\cite{hartley2003multiple}.
For d-PWAN,
we always assume the point sets are uniformly distributed (otherwise we can downsample the point sets using voxel filtering),
and set $h$ close to the nearest distance between points in a point set.

\subsection{More Details in Sec.~\ref{Sec_exp_Toy}}
\label{Sec_exp_Toy_app}
KL divergence and $L_2$ distance between point sets $X$ and $Y$ are formally defined as
\begin{align}
    L_2(X,Y)  =  \sum_{\substack{x_i, x_j \in X  \\ y_i, y_j \in Y} } \frac{1}{q^2} \phi(0|x_i-x_j,2\sigma) &+ \frac{1}{r^2} \phi(0|y_i-y_j,2\sigma) \nonumber \\
    & - \frac{2}{qr}\phi(0|x_i-y_j,2 \sigma), \nonumber
\end{align}
\begin{equation}
    KL(X,Y) = - \frac{1}{q} \sum_{y_j \in Y} \log \Bigl( \omega \frac{1}{q} + (1-\omega) \sum_{x_i \in X} \frac{1}{r} \phi(y_j| x_i, \sigma)    \Bigr), \nonumber
\end{equation}
where $\phi(\cdot|u,\sigma)$ is the Gaussian distribution with mean $u$ and variance $\sigma$.
For simplicity,
we set $\sigma=1$ and $\omega=0.2$ for KL and $L_2$ in Sec.~\ref{Sec_exp_Toy}.

\begin{figure*}[tb!]
  \centering
  \vspace{-4mm}
  \begin{minipage}[b]{1\linewidth}
      \centering
      \subfigure[Experimental setting.]{
          \includegraphics[width=0.4\linewidth]{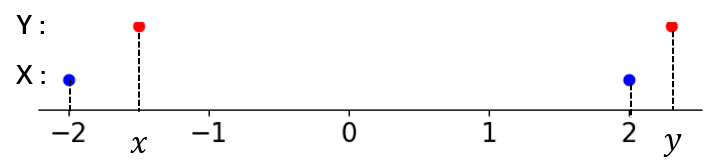}
  }
  \vspace{-2mm}
  \end{minipage}
  \begin{minipage}[b]{1\linewidth}
    \centering
  \subfigure[$L_2$]{
    \label{metric_xy_b}
    \begin{minipage}[b]{0.25\linewidth}
      \includegraphics[width=1.0\linewidth]{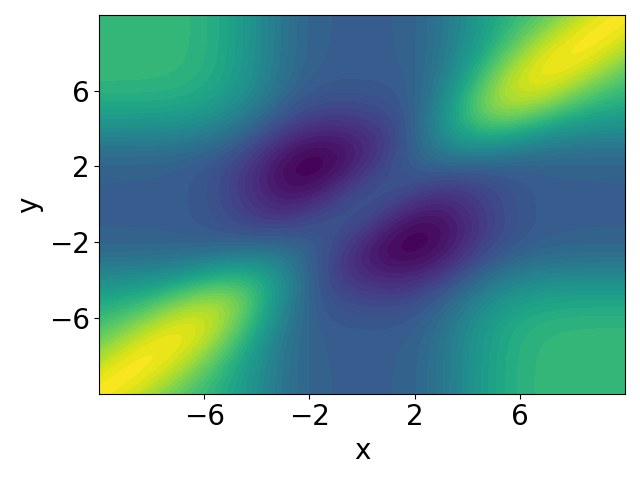}\\
      \includegraphics[width=1.0\linewidth]{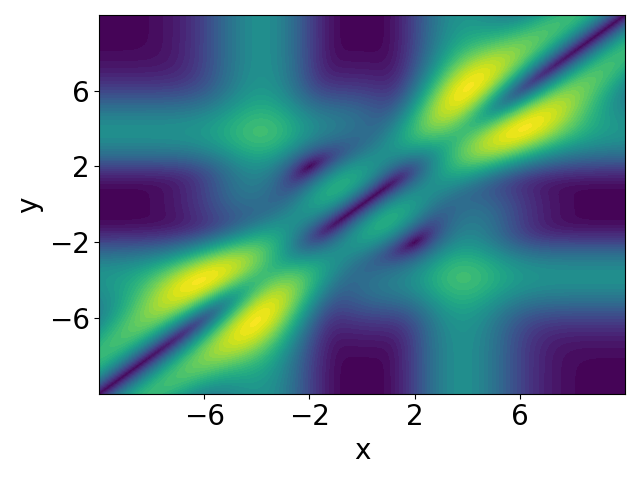}
    \end{minipage}
  }
  \subfigure[$KL$]{
    \label{metric_xy_c}
    \begin{minipage}[b]{0.25\linewidth}
      \includegraphics[width=1.0\linewidth]{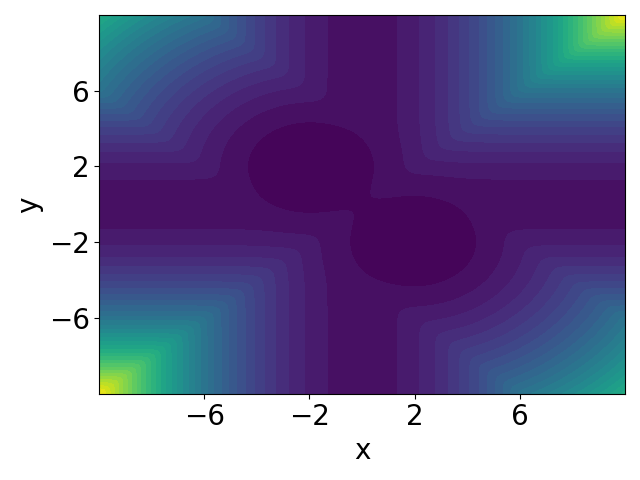}\\
      \includegraphics[width=1.0\linewidth]{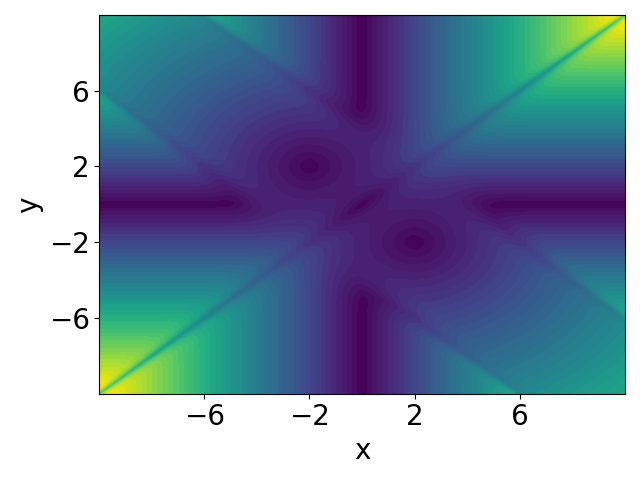}
    \end{minipage}
  }
  \subfigure[$\mathcal{W}_1$]{
    \label{metric_xy_d}
    \begin{minipage}[b]{0.25\linewidth}
      \includegraphics[width=1.0\linewidth]{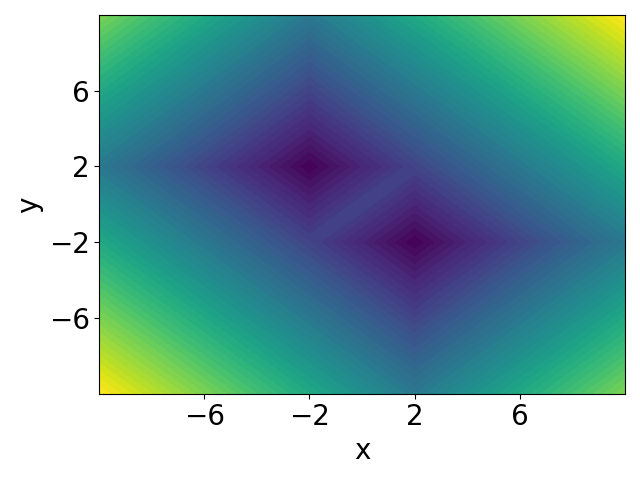}\\
      \includegraphics[width=1.0\linewidth]{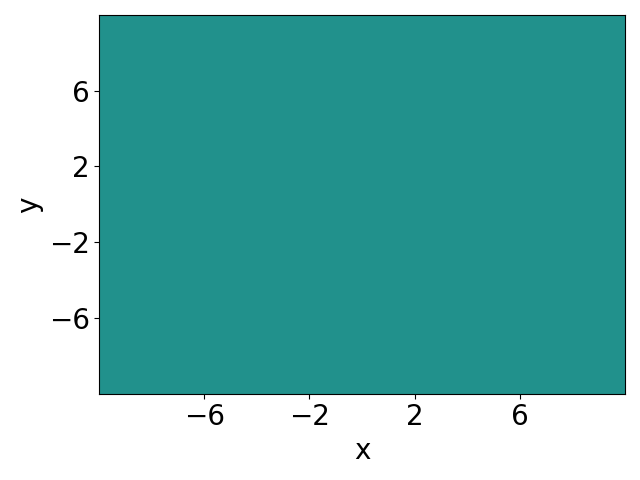}
    \end{minipage}
  }
  \begin{minipage}[b]{0.05525\linewidth}
    \includegraphics[width=1\linewidth]{Result/Fish/1.png} \\\vspace{-2.4ex}
    \includegraphics[width=1\linewidth]{Result/Fish/1.png} \\ \vspace{-0.84ex}
\end{minipage}
\end{minipage}
\caption{Comparison of different discrepancies on a pair of toy point sets.
The top and bottom row in~\subref{metric_xy_b},\subref{metric_xy_c} and~\subref{metric_xy_d} represent the discrepancies and their gradient norms respectively.
}
\label{metric_xy}
\end{figure*}

We note that compared to KL divergence and $L_2$ distance,
Wasserstein type divergence is ``smoother'', \ie, it can be optimized more easily.
To see this, 
we present a toy example comparing $\mathcal{W}_1$,  KL divergence and $L_2$ distance in Fig.~\ref{metric_xy}.
We set $\omega=0$ for KL divergence since the dataset is clean,
and we set $\sigma=1$.
We fix the reference set $X=\{-2, 2\}$,
and move the source set $Y=\{x, y\}$ in the 2D space.
We present the discrepancies between $X$ and $Y$ as a function of $(x,y)$,
and compute their respective gradients.

As can be seen,
all discrepancies have two global minima $(-2, 2)$ and $(2, -2)$ corresponding to the correct alignment.
However,
KL divergence has a suspicious stationary point $(0, 0)$,
which can trap both expectation-maximization-type and gradient-descent-type algorithms~\cite{xu2016global}.
In addition,
there exist some regions where the gradient norm of $L_2$ distance is small,
which indicates optimizing $L_2$ may be slow in these regions.
In contrast,
the gradient norm of $\mathcal{W}_1$ is constant,
thus the optimization process can easily converge to global minima.

\subsection{More Details in Sec.~\ref{Sec_exp_acc}}
\label{Sec_exp_acc_app}
We present qualitative results of the experiments in Fig.~\ref{qualtitative_outlier} and Fig.~\ref{qualtitative_partial}.
We do not show the results of TPS-RPM on the second experiment as it generally fails to converge.
As can be seen,
PWAN successfully registers the point sets in all cases,
while all baseline methods bias toward the noise points or to the non-overlapped region when the outlier ratio is high,
except for TPS-RPM which shows strong robustness against noise points comparably with PWAN in the first example.

\begin{figure*}[htb!]
	\centering
    \subfigure[Initial sets]{
        \begin{minipage}[b]{0.15\linewidth}
            \includegraphics[width=1\linewidth]{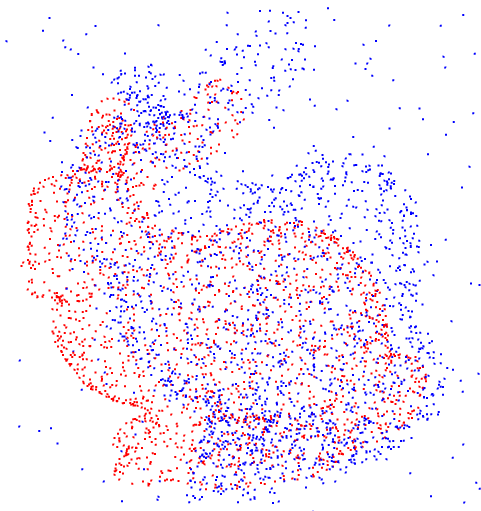}\\
            \includegraphics[width=1\linewidth]{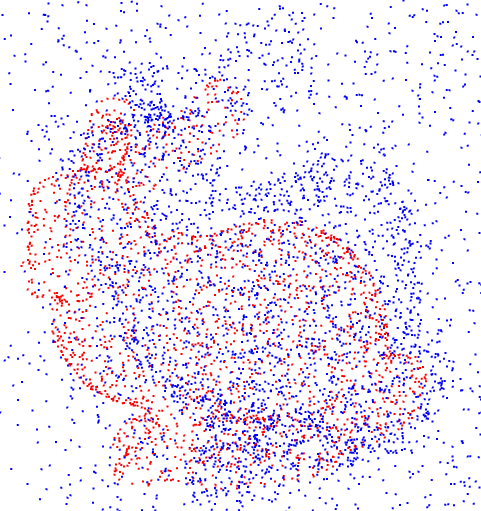}\\
            \includegraphics[width=1\linewidth]{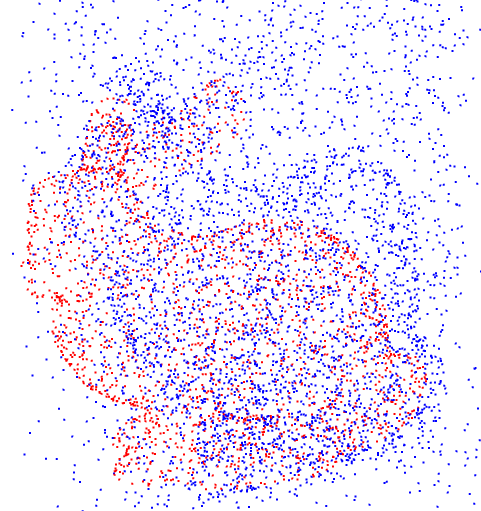}\\
        \end{minipage}
    }
    \hspace{-2mm}
    \subfigure[BCPD]{
        \begin{minipage}[b]{0.15\linewidth}
            \includegraphics[width=1\linewidth]{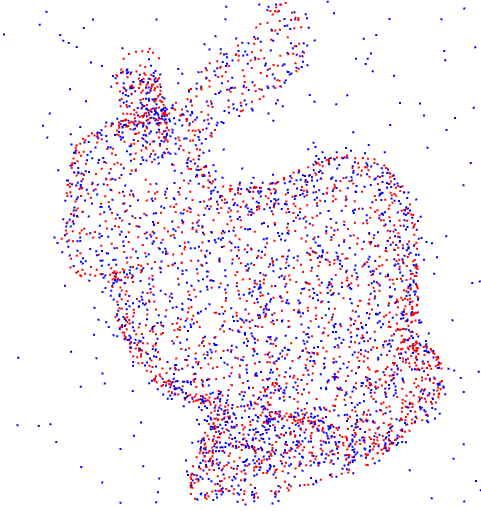}\\
            \includegraphics[width=1\linewidth]{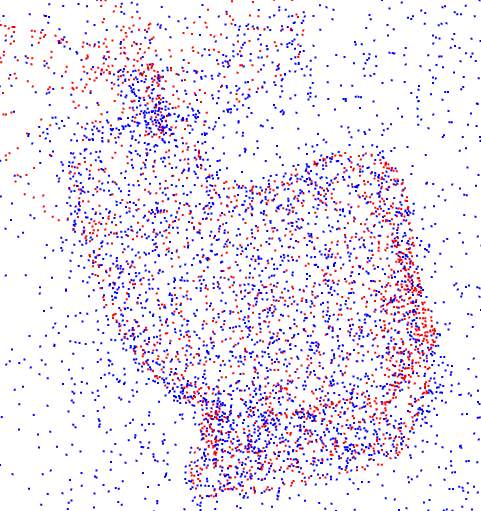}\\
            \includegraphics[width=1\linewidth]{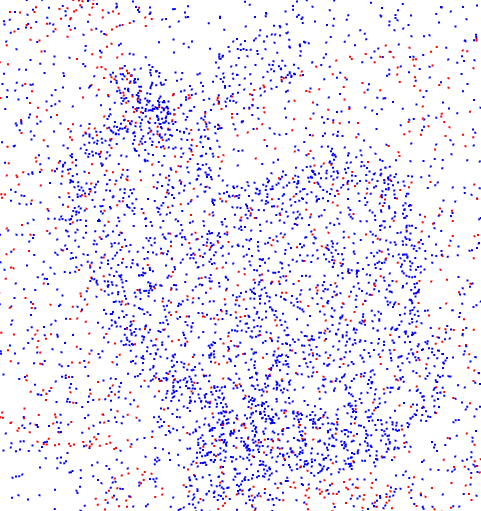}\\
        \end{minipage}
    }
    \hspace{-2mm}
    \subfigure[CPD]{
        \begin{minipage}[b]{0.15\linewidth}
            \includegraphics[width=1\linewidth]{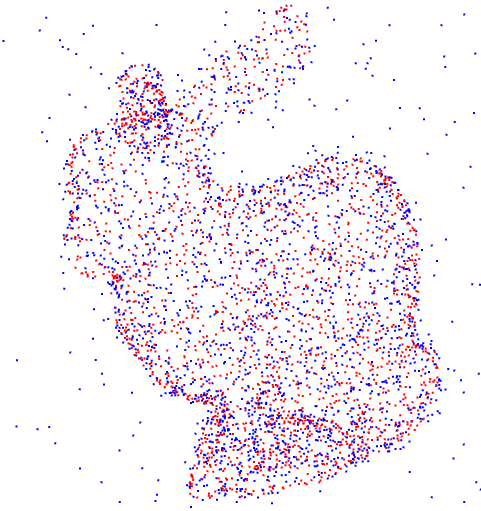}\\
            \includegraphics[width=1\linewidth]{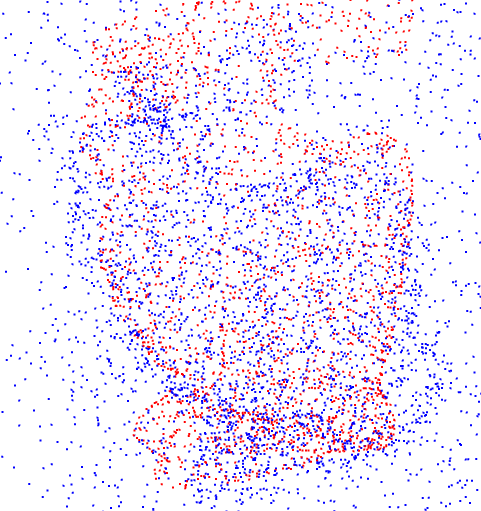}\\
            \includegraphics[width=1\linewidth]{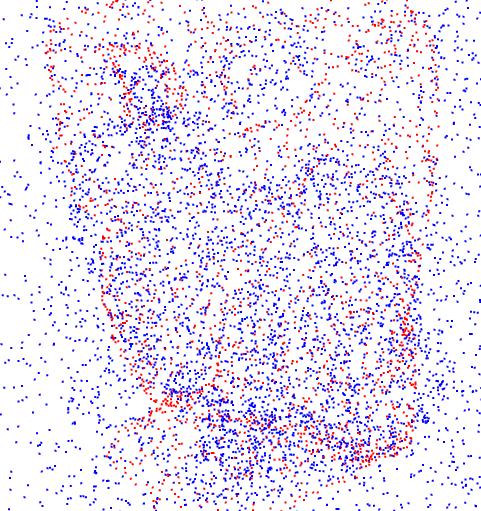}\\
        \end{minipage}
    }
    \hspace{-2mm}
    \subfigure[GMM-REG]{
        \begin{minipage}[b]{0.15\linewidth}
            \includegraphics[width=1\linewidth]{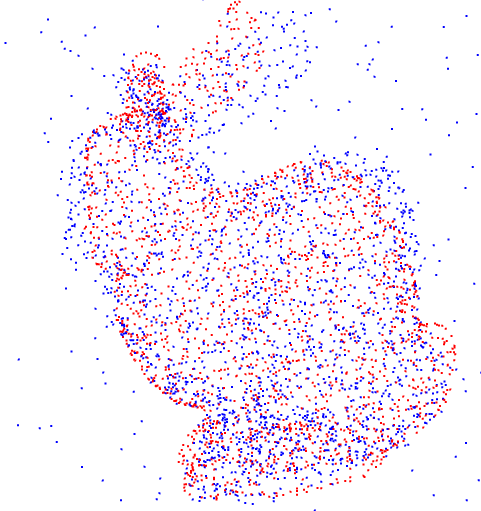}\\
            \includegraphics[width=1\linewidth]{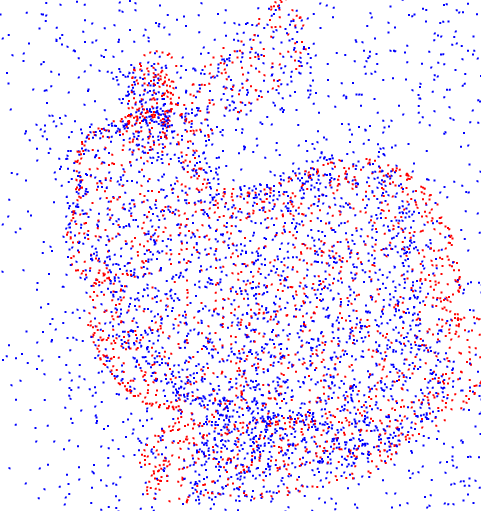}\\
            \includegraphics[width=1\linewidth]{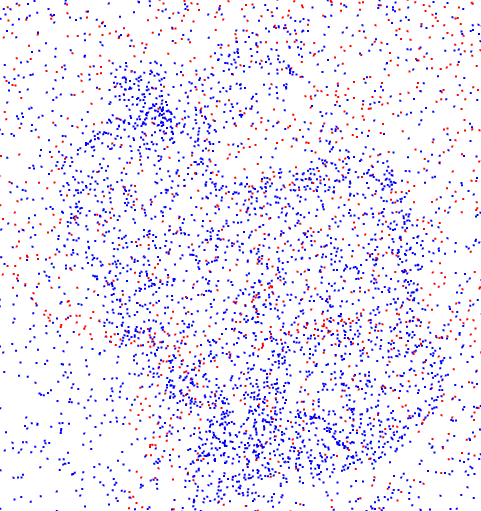}\\
        \end{minipage}
    }
    \hspace{-2mm}
    \subfigure[TPS-RPM]{
        \begin{minipage}[b]{0.15\linewidth}
            \includegraphics[width=1\linewidth]{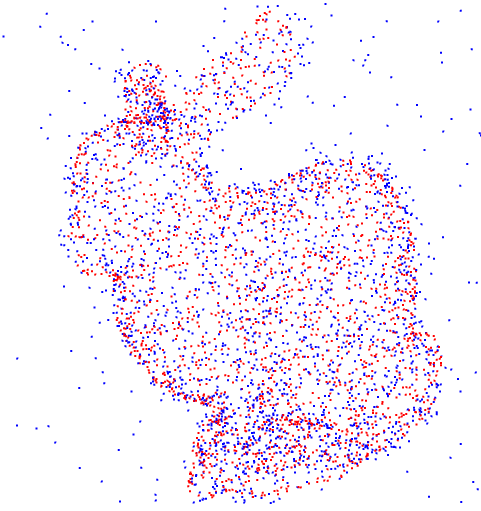}\\
            \includegraphics[width=1\linewidth]{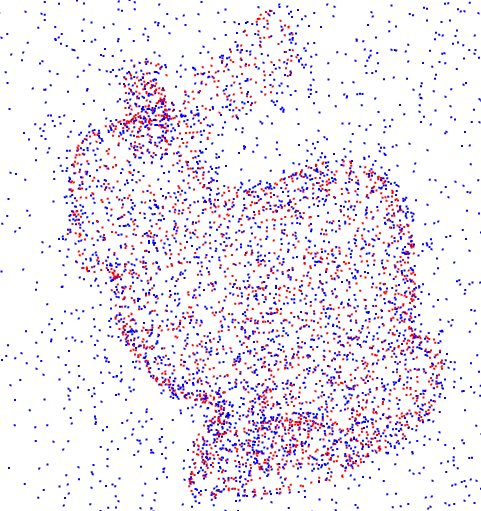}\\
            \includegraphics[width=1\linewidth]{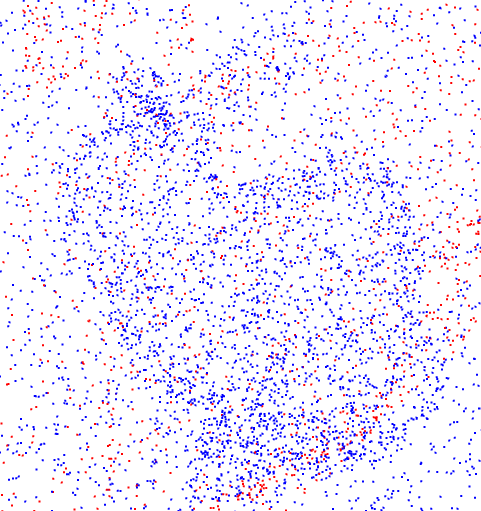}\\
        \end{minipage}
    }
    \hspace{-2mm}
    \subfigure[PWAN]{
        \begin{minipage}[b]{0.15\linewidth}
            \includegraphics[width=1\linewidth]{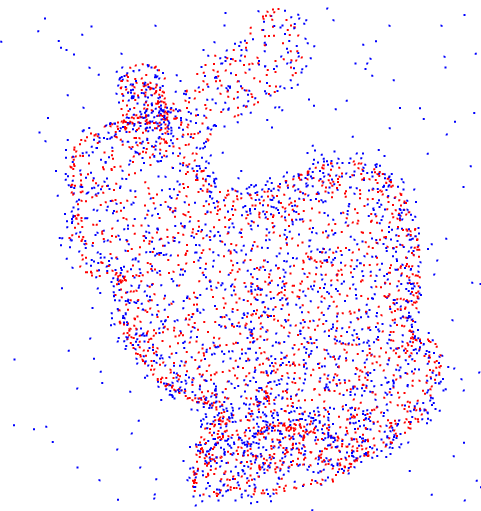}\\
            \includegraphics[width=1\linewidth]{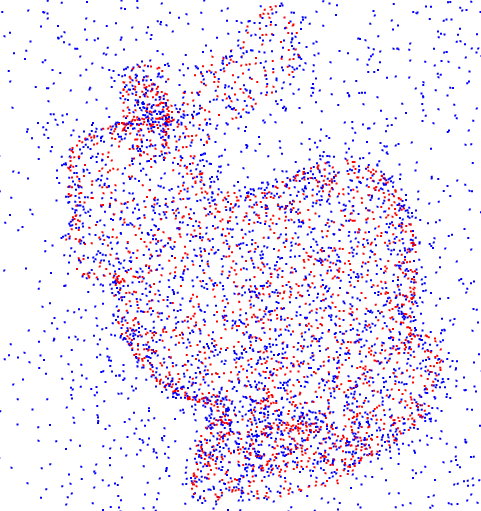}\\
            \includegraphics[width=1\linewidth]{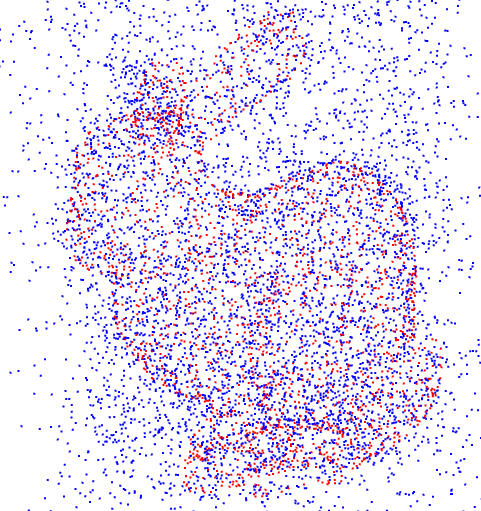}\\
        \end{minipage}
    }
    \vspace{-2mm}
    \caption{An example of registering noisy point sets. 
    The outlier/non-outlier ratios are 
    $0.2$ (1st row),
    $1.2$ (2nd row) and $2.0$ (3rd row).}
    \label{qualtitative_outlier}
\end{figure*}

\begin{figure*}[htb!]
	\centering
    \subfigure[Initial sets]{
        \begin{minipage}[b]{0.15\linewidth}
            \includegraphics[width=1\linewidth]{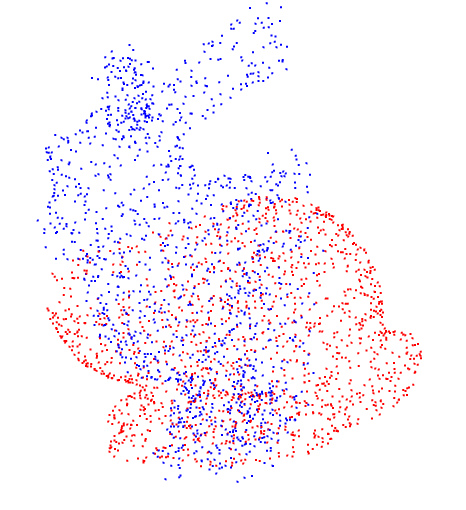}\\
            \includegraphics[width=1\linewidth]{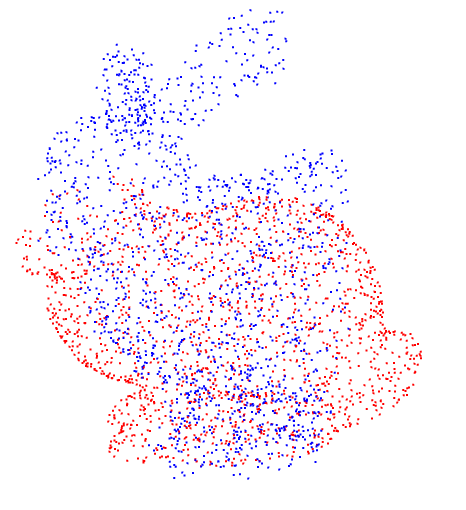}\\
            \includegraphics[width=1\linewidth]{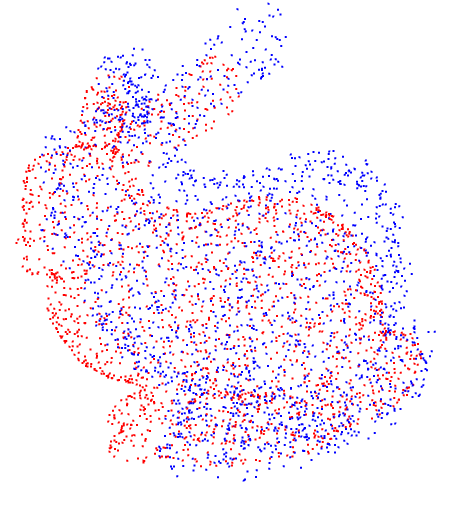}\\
        \end{minipage}
    }
    \hspace{-2mm}
    \subfigure[BCPD]{
        \begin{minipage}[b]{0.15\linewidth}
            \includegraphics[width=1\linewidth]{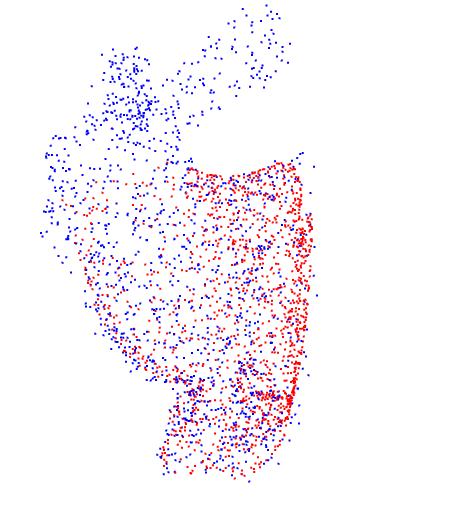}\\
            \includegraphics[width=1\linewidth]{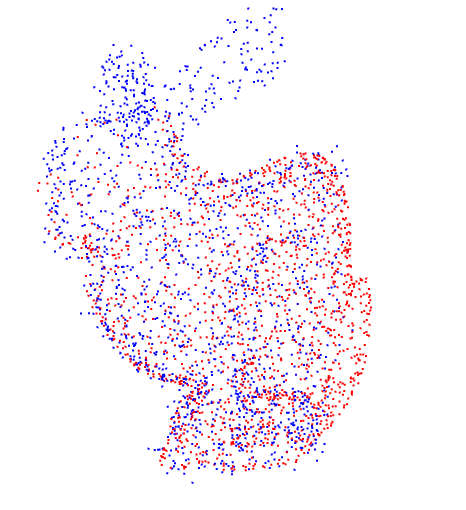}\\
            \includegraphics[width=1\linewidth]{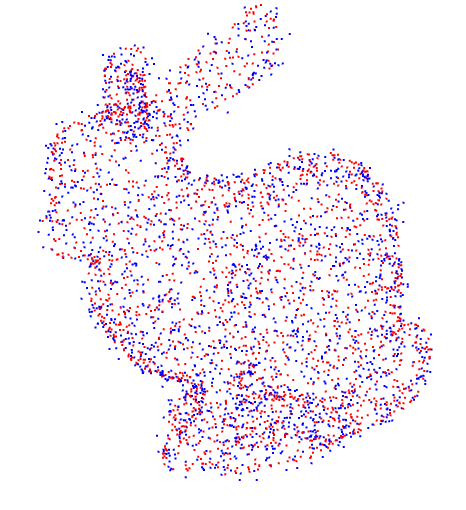}\\
        \end{minipage}
    }
    \hspace{-2mm}
    \subfigure[CPD]{
        \begin{minipage}[b]{0.15\linewidth}
            \includegraphics[width=1\linewidth]{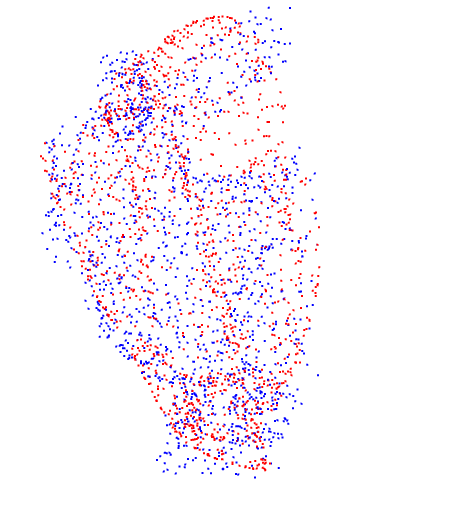}\\
            \includegraphics[width=1\linewidth]{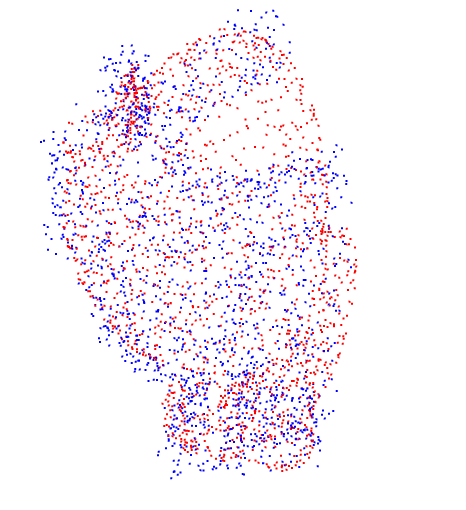}\\
            \includegraphics[width=1\linewidth]{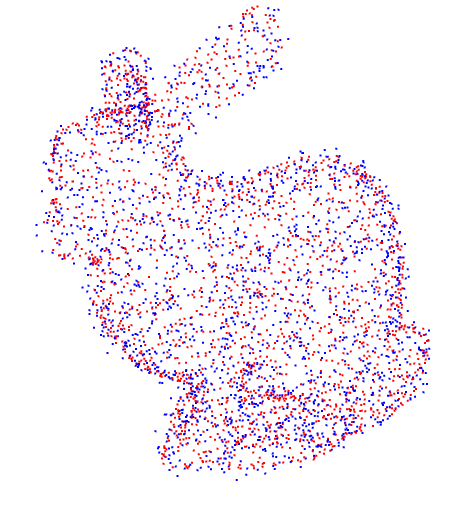}\\
        \end{minipage}
    }
    \hspace{-2mm}
    \subfigure[GMM-REG]{
        \begin{minipage}[b]{0.15\linewidth}
            \includegraphics[width=1\linewidth]{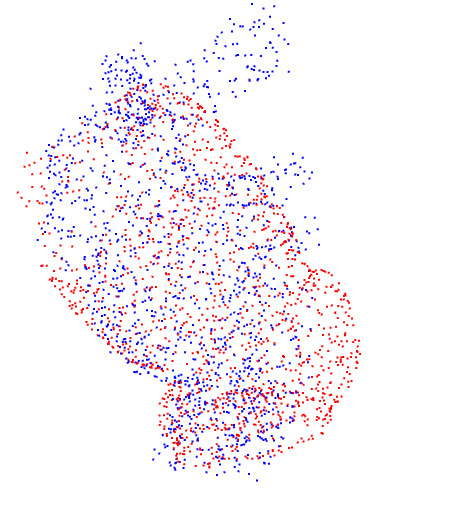}\\
            \includegraphics[width=1\linewidth]{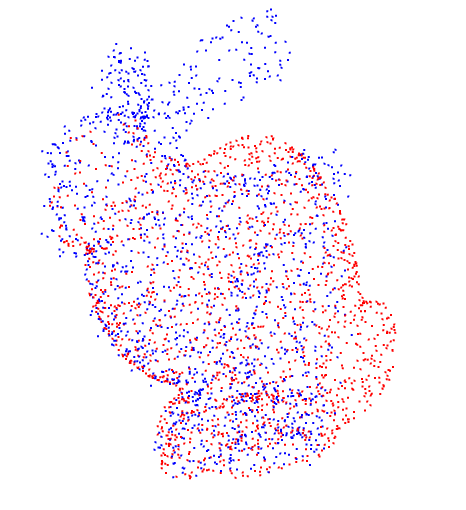}\\
            \includegraphics[width=1\linewidth]{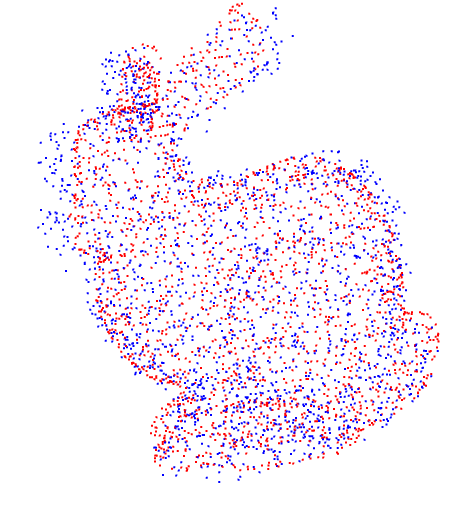}\\
        \end{minipage}
    }
    \hspace{-2mm}
    \subfigure[d-PWAN]{
        \begin{minipage}[b]{0.15\linewidth}
            \includegraphics[width=1\linewidth]{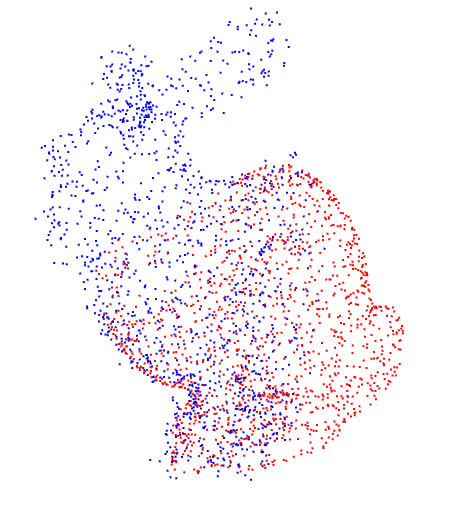}\\
            \includegraphics[width=1\linewidth]{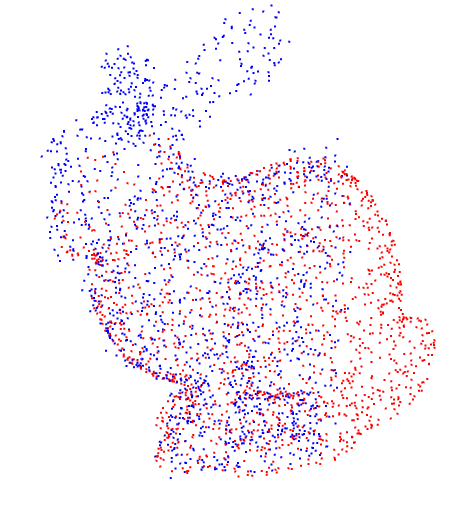}\\
            \includegraphics[width=1\linewidth]{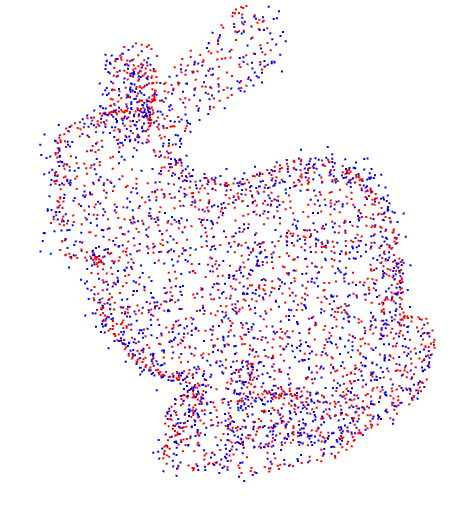}\\
        \end{minipage}
    }
    \hspace{-2mm}
    \subfigure[m-PWAN]{
        \begin{minipage}[b]{0.15\linewidth}
            \includegraphics[width=1\linewidth]{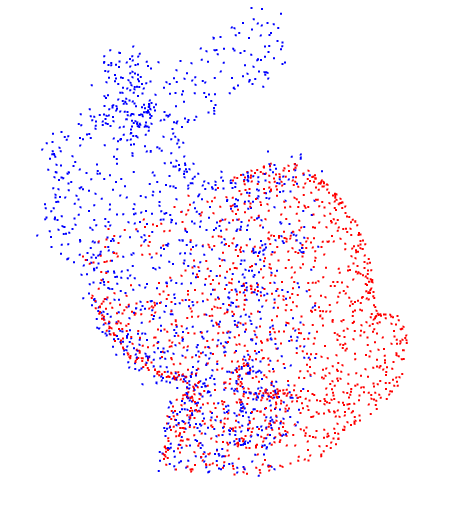}\\
            \includegraphics[width=1\linewidth]{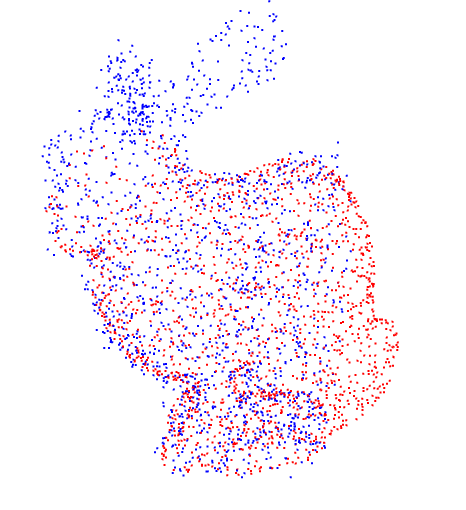}\\
            \includegraphics[width=1\linewidth]{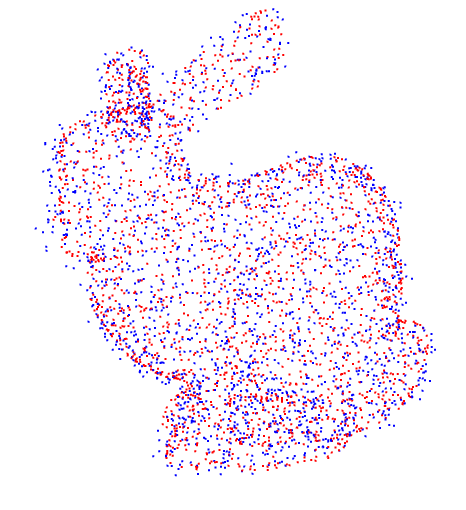}\\
        \end{minipage}
    }
    \vspace{-2mm}
    \caption{An example of registering partially overlapped point sets.
    The overlap ratios are 
    $0.57$ (1st row),
    $0.75$ (2nd row) and $1$ (3rd row).
    }
    \label{qualtitative_partial}
\end{figure*}

We further evaluate PWAN on large-scale armadillo datasets.
We compare PWAN with BCPD and CPD,
because they are the only baseline methods that are scalable in this experiment.
We present some registration results in Fig.~\ref{qualtitative_Large}.
As can be seen,
our method can handle both cases successfully,
while both CPD and BCPD bias toward outliers.

The training details of PWAN in this example are shown in Fig.~\ref{qualtitative_Large_training}.
Due to its adversarial nature,
the loss of PWAN does not decrease monotonically,
instead,
it always increases during the first few steps,
and then starts to decrease.
In addition,
the maximal gradient norm of the network (Lipschitz constant) is indeed controlled near $1$,
and the MSE decreases during the training process.
An example of the registration process is presented in Fig.~\ref{nonrigid_process}.

\begin{figure*}[htb!]
    \centering
    \subfigure[A trajectory of non-rigid registration of the armadillo dataset.]{
      \includegraphics[width=0.18\linewidth]{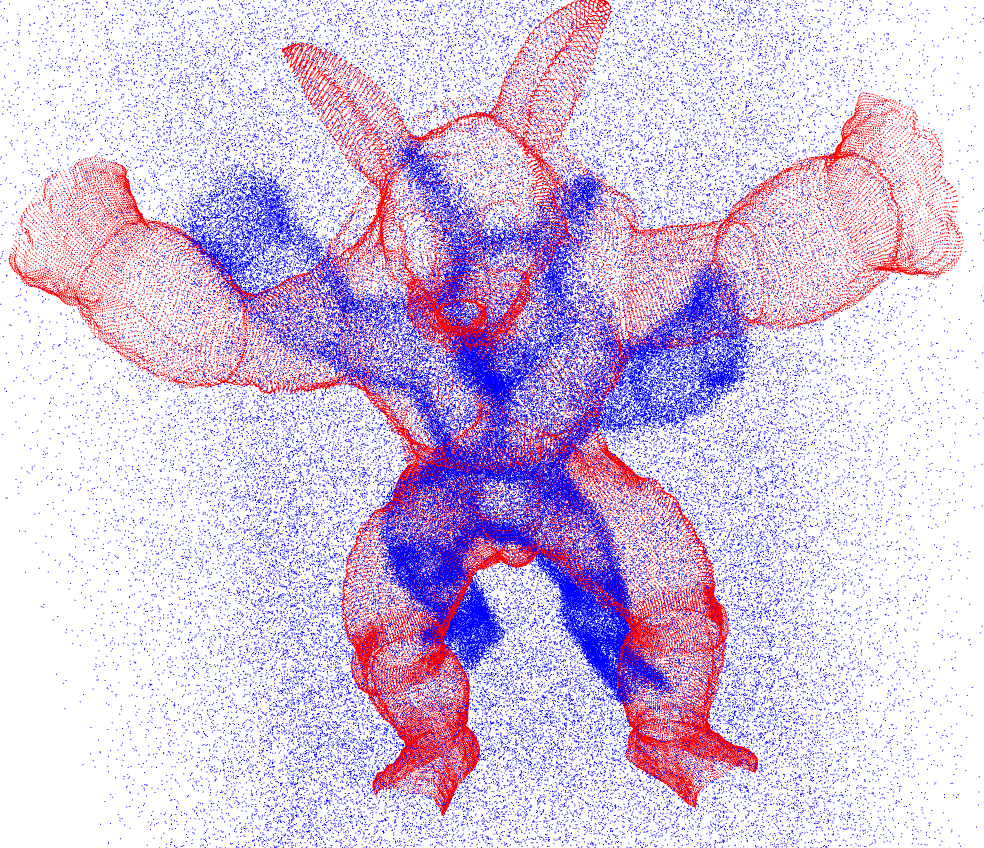}
      \includegraphics[width=0.18\linewidth]{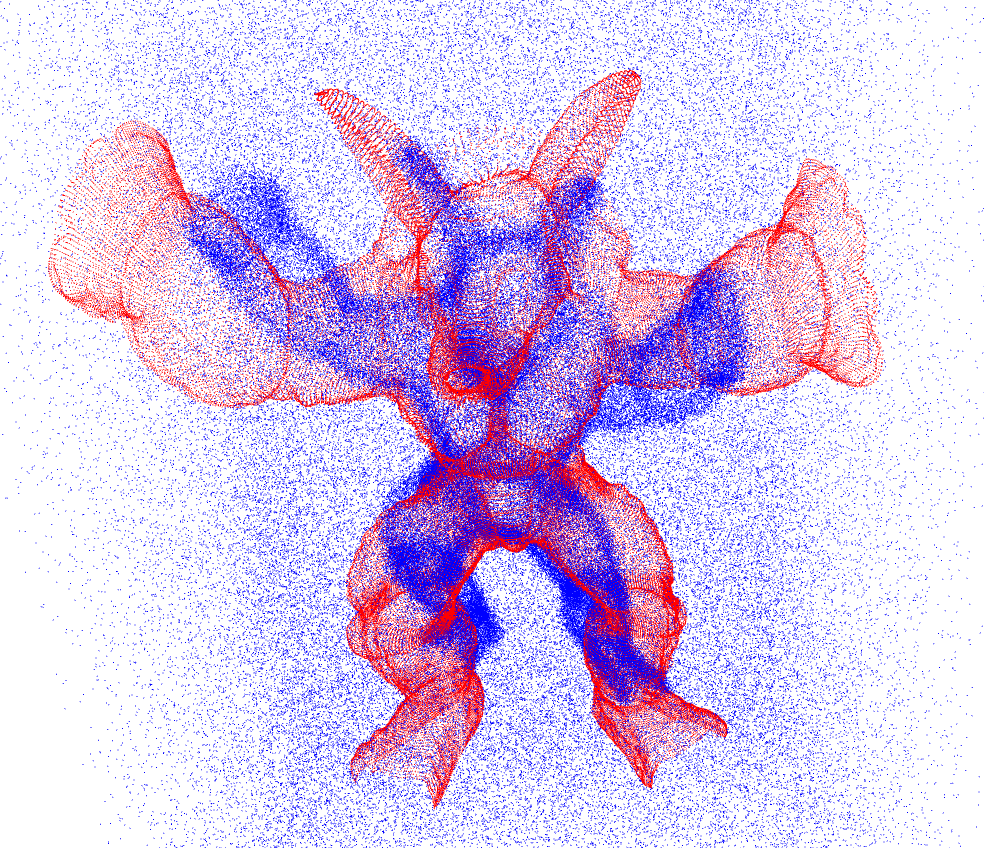}
      \includegraphics[width=0.18\linewidth]{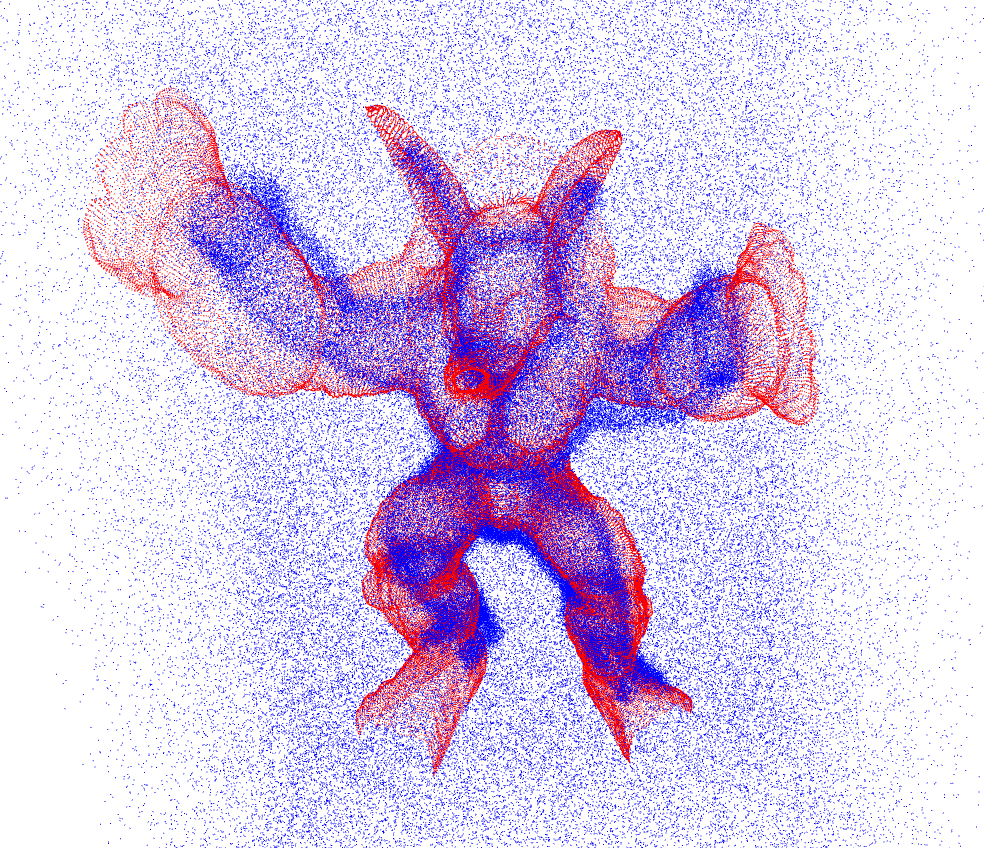}
      \includegraphics[width=0.18\linewidth]{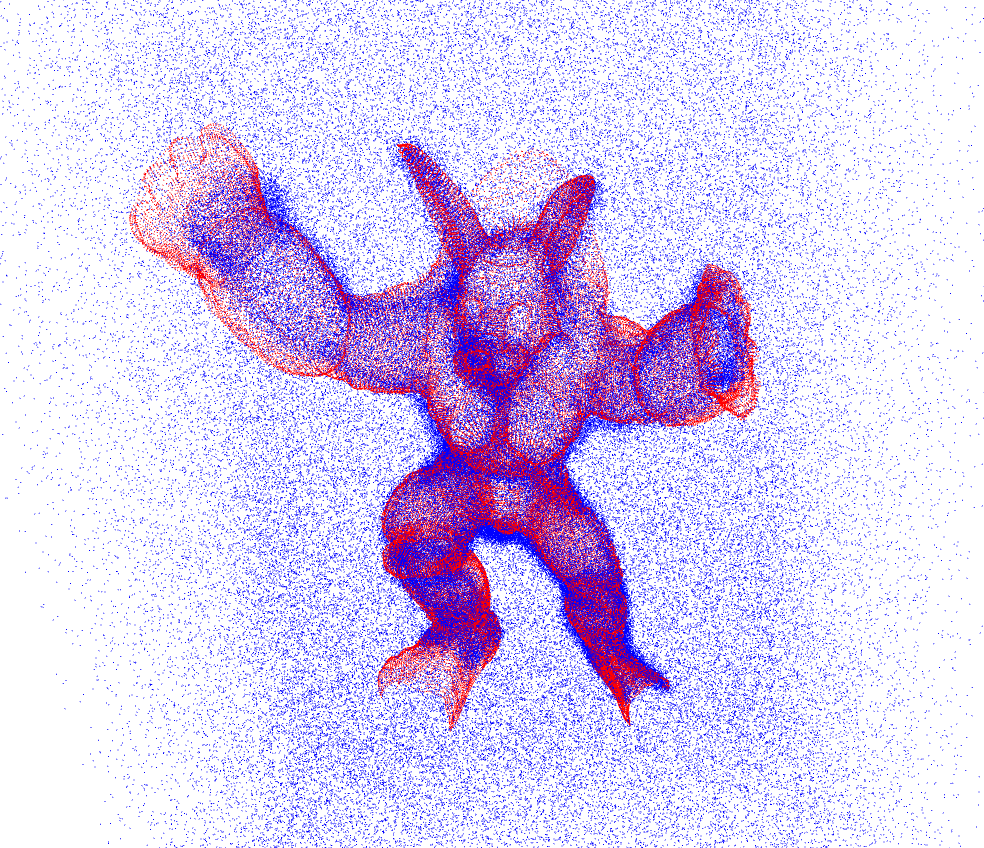}
      \includegraphics[width=0.18\linewidth]{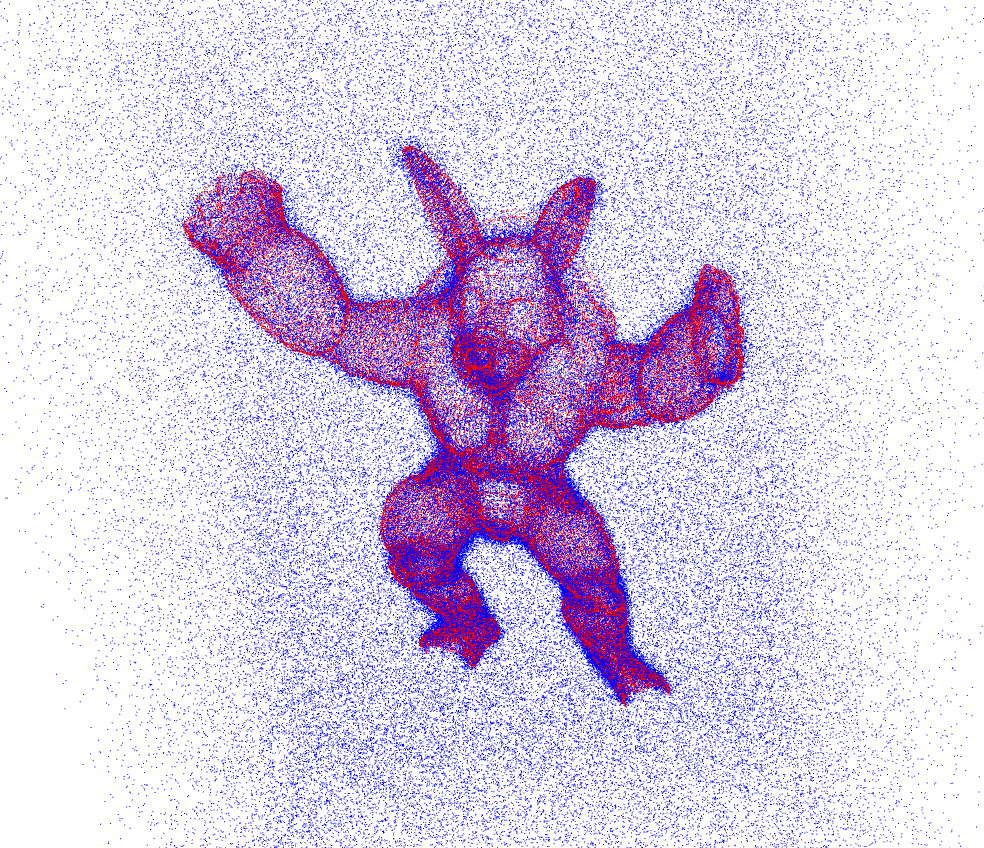}
      \label{nonrigid_process}
    }
    \subfigure[A trajectory of rigid registration of the mountain dataset.]{
      \includegraphics[width=0.18\linewidth]{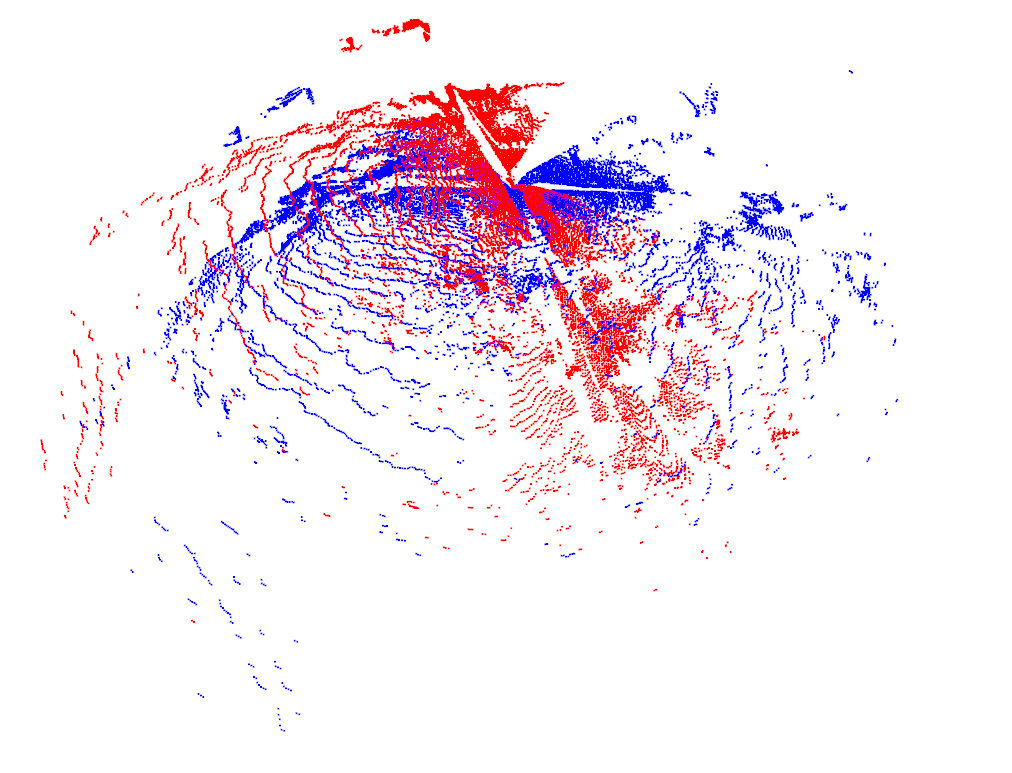}
      \includegraphics[width=0.18\linewidth]{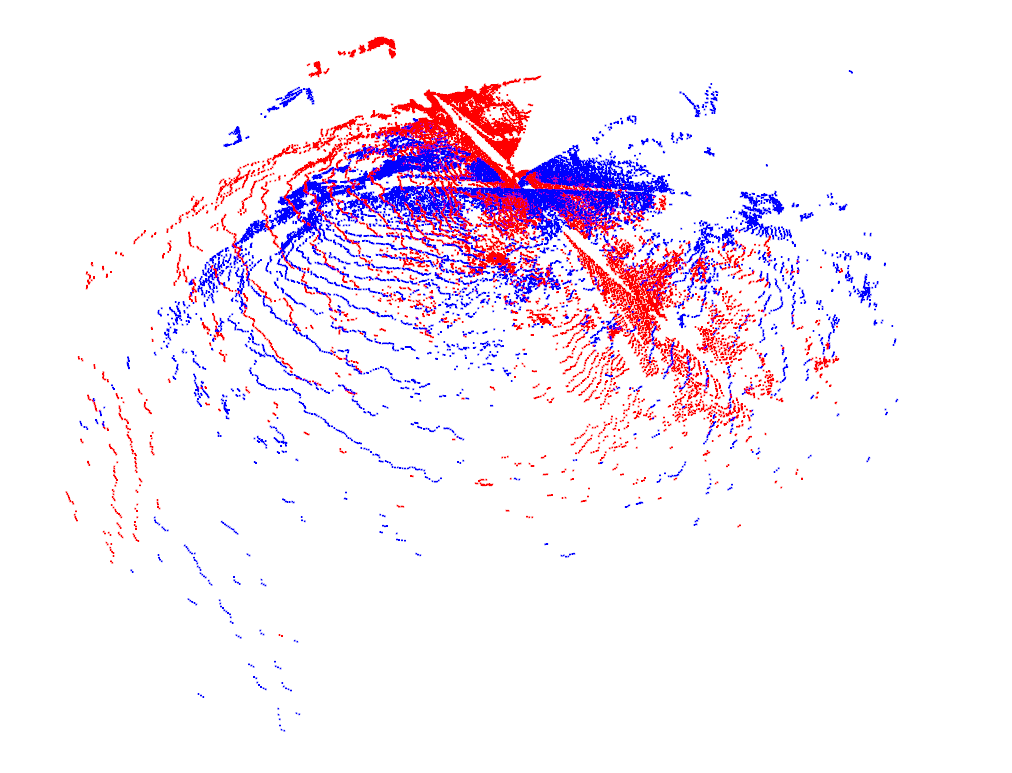}
      \includegraphics[width=0.18\linewidth]{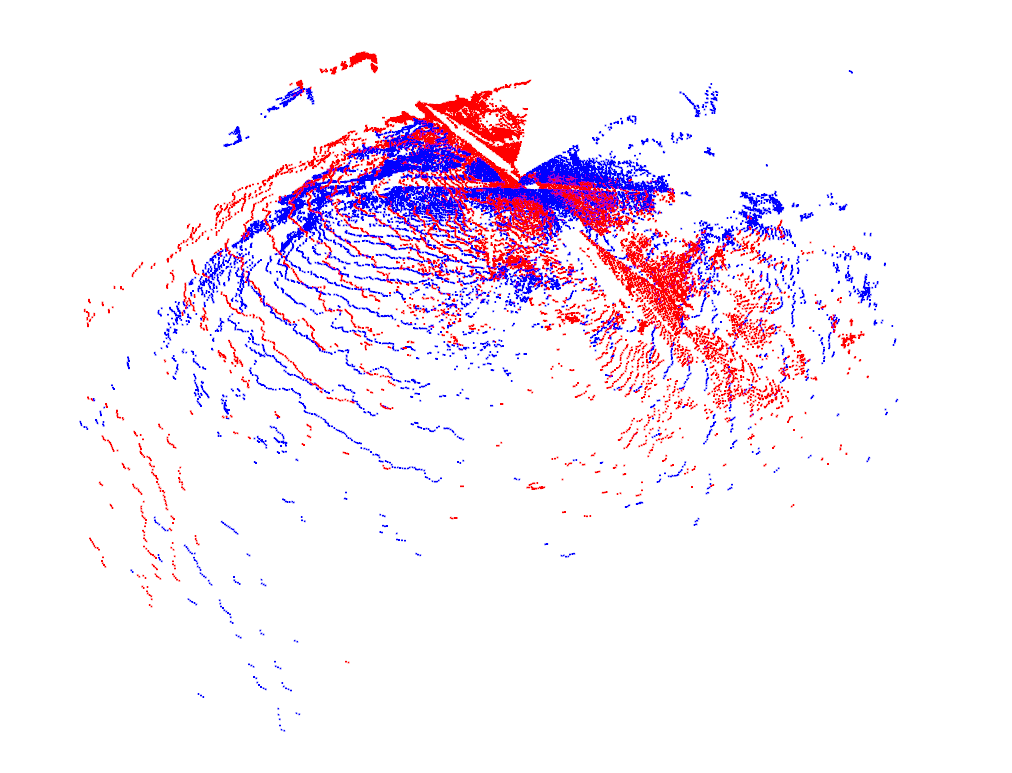}
      \includegraphics[width=0.18\linewidth]{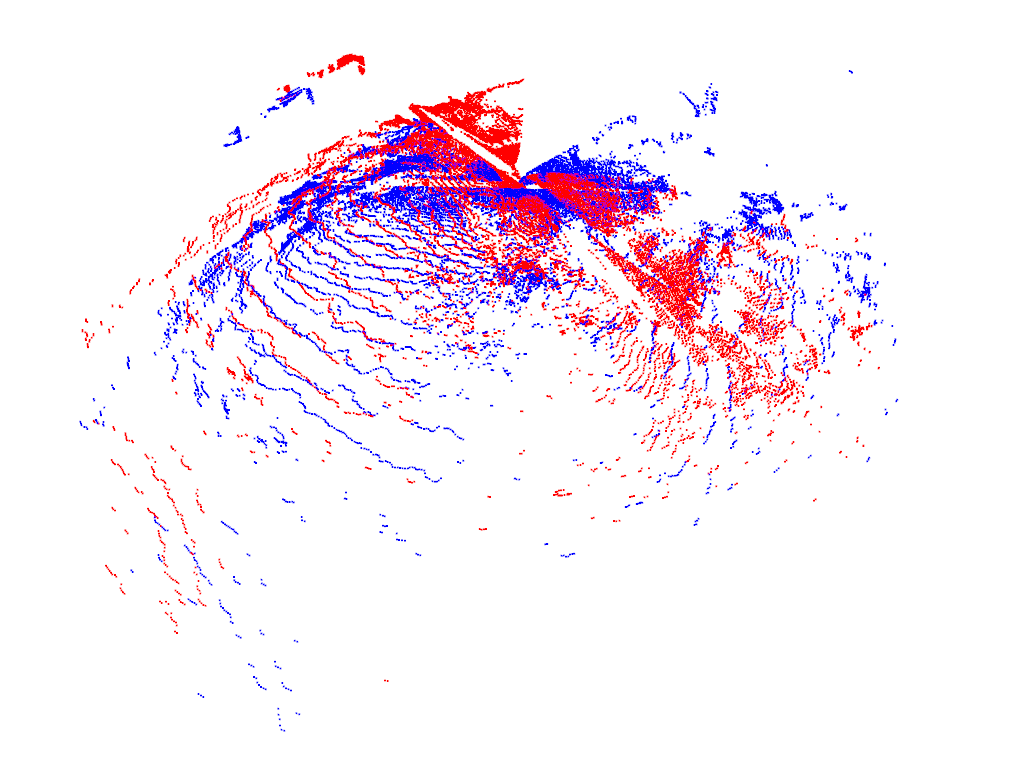}
      \includegraphics[width=0.18\linewidth]{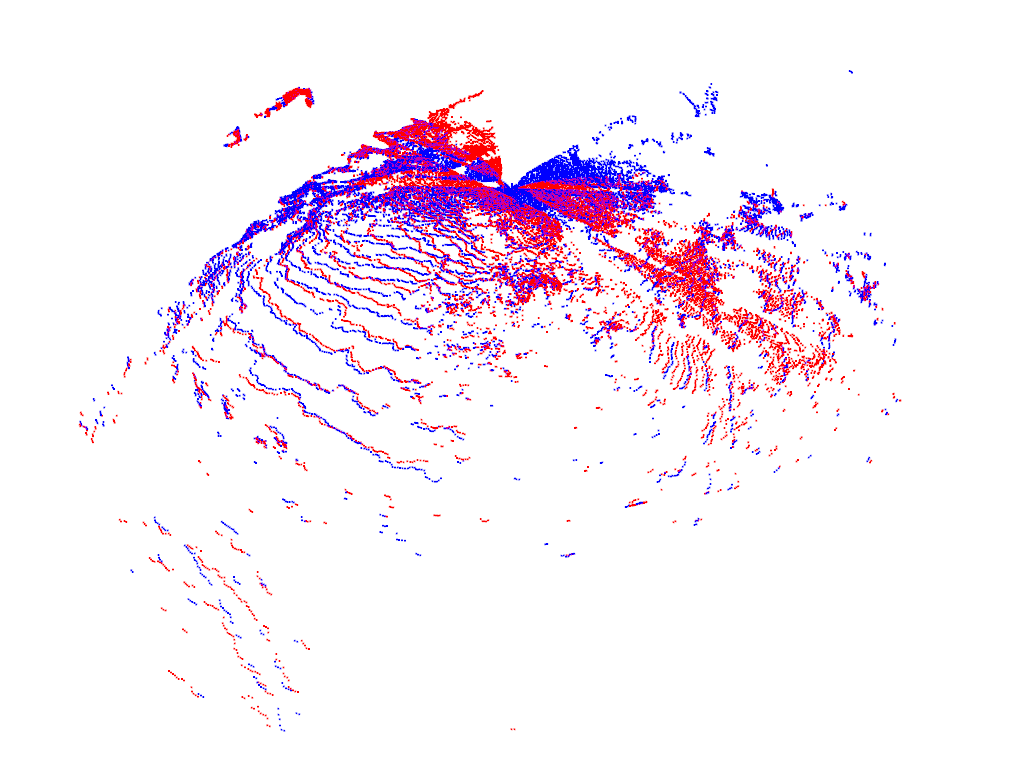}
      \label{rigid_process}
    }
      \caption{Registration trajectory. We show the process of non-rigid~\subref{nonrigid_process} and rigid registration~\subref{rigid_process} from left to right.
  }
  \label{Registration process.}
  \end{figure*}

\begin{figure*}[htb!]
	\centering  
   \subfigure[ An example of registering noisy point sets. 
   The source and reference set contain $8 \times 10^4$ and $1.76 \times 10^5$ points respectively.
    ]{
      \label{large_noise_qualitative}
    \begin{minipage}[b]{1\linewidth}
        \centering  
        \includegraphics[width=0.2\linewidth]{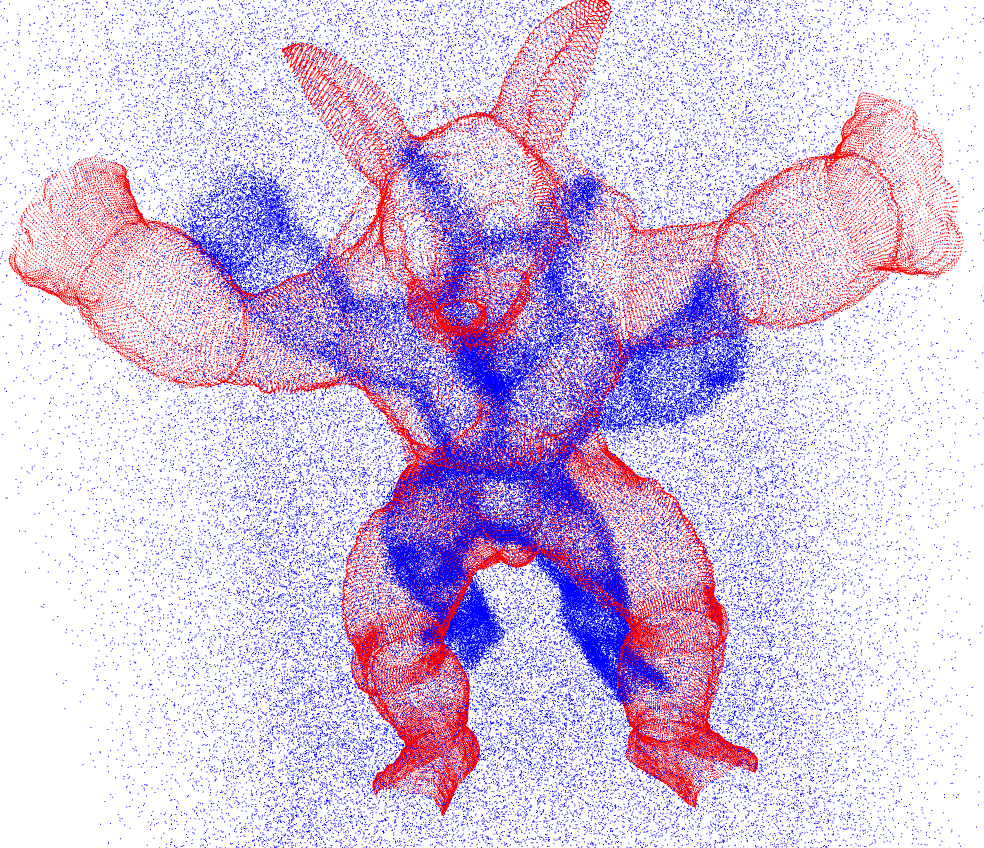}
        \includegraphics[width=0.2\linewidth]{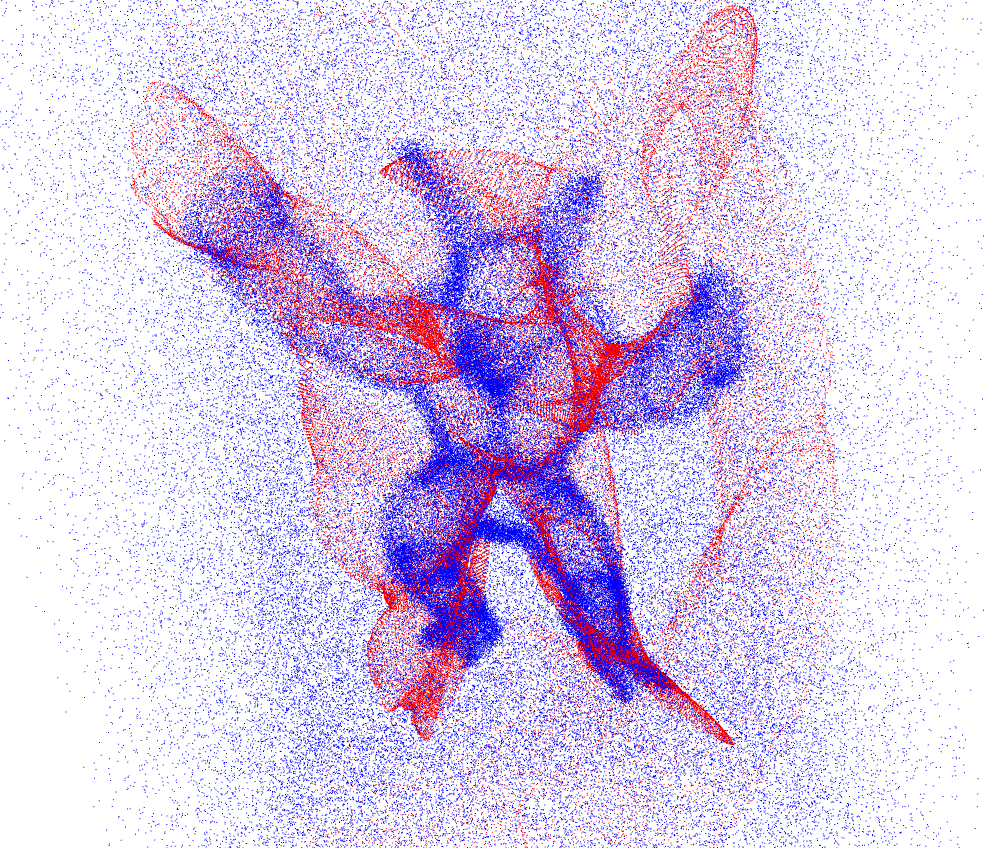}
        \includegraphics[width=0.2\linewidth]{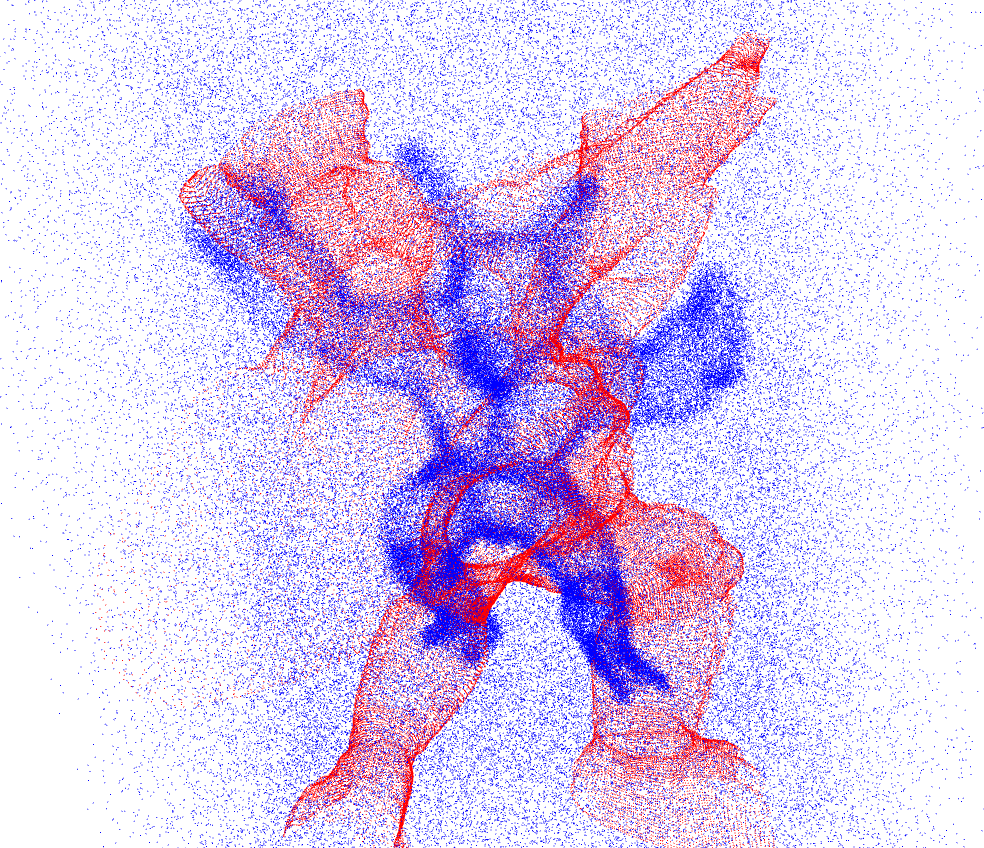}
        \includegraphics[width=0.2\linewidth]{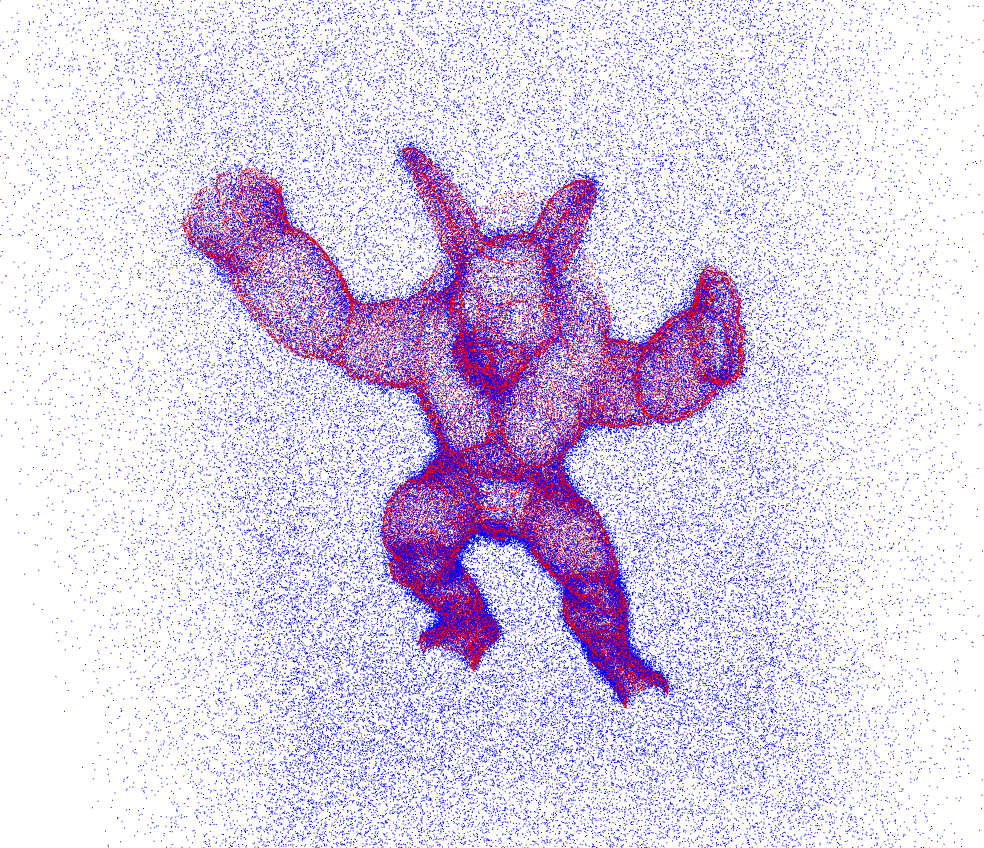}\\
        \hspace{3mm} Initial sets \hspace{23mm} BCPD  \hspace{26mm} CPD \hspace{23mm} PWAN \hspace{5mm}
    \end{minipage}
   }
   \subfigure[ An example of registering partially overlapped point sets. 
   The source and reference set both contain $7 \times 10^4$ points.
    ]{
    \begin{minipage}[b]{1\linewidth}
        \centering  
        \includegraphics[width=0.2\linewidth]{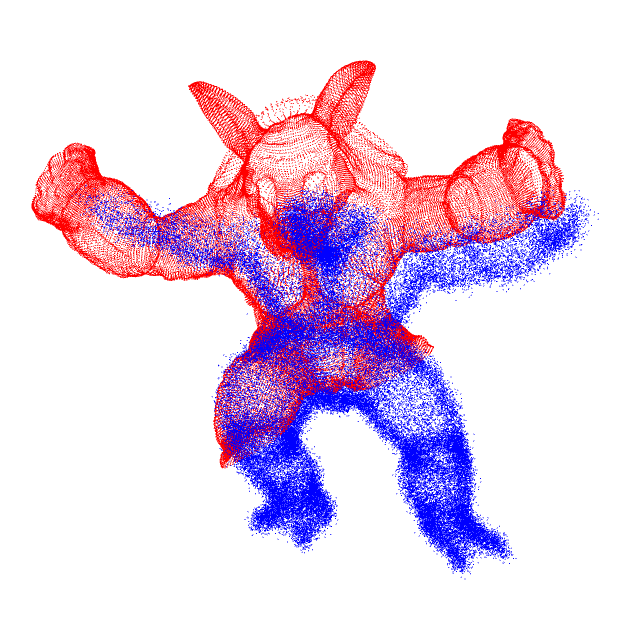}
        \hspace{-5mm}
        \includegraphics[width=0.2\linewidth]{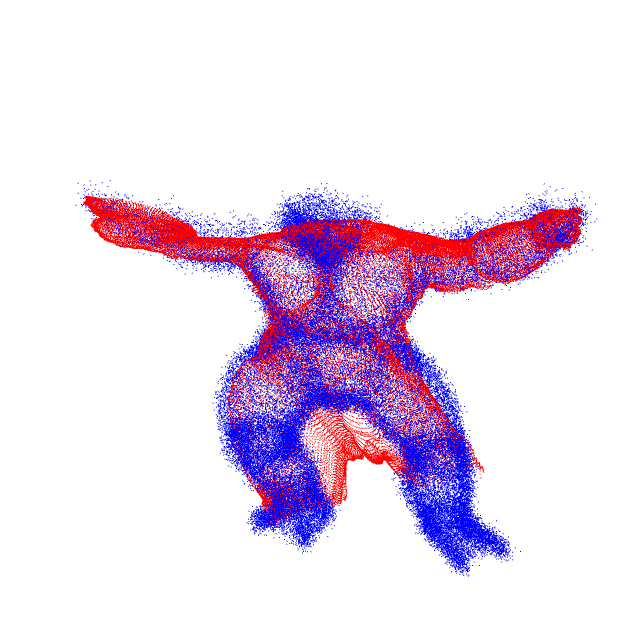}
        \hspace{-5mm}
        \includegraphics[width=0.2\linewidth]{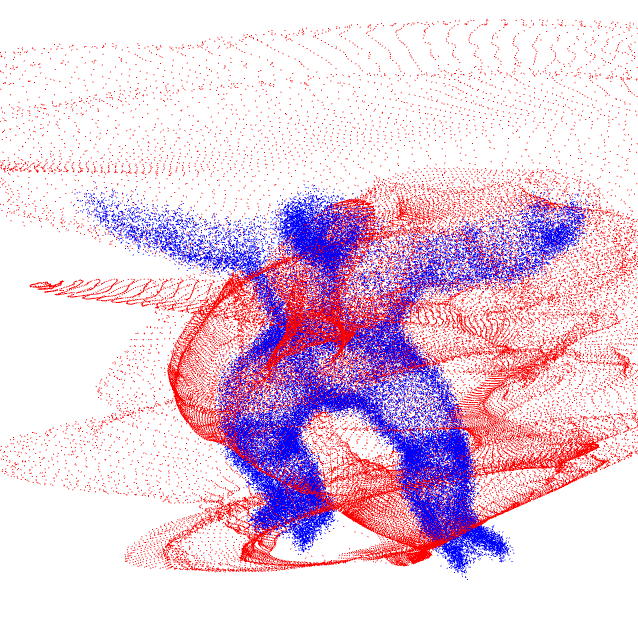}
        \hspace{-5mm}
        \includegraphics[width=0.2\linewidth]{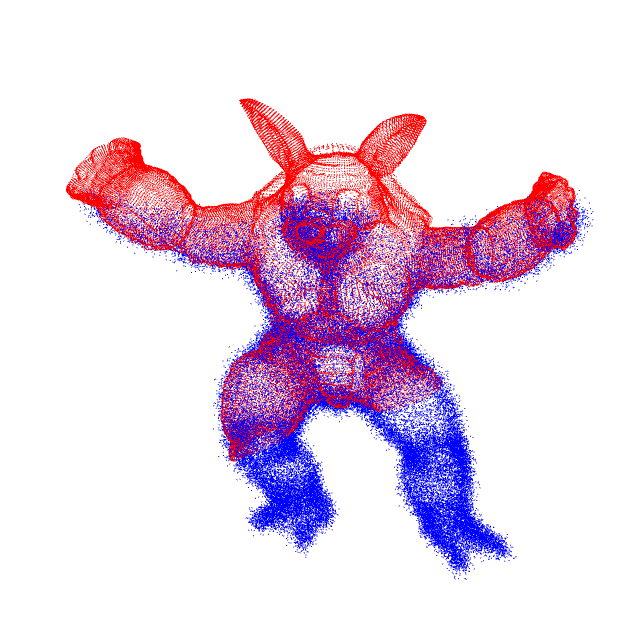}
        \hspace{-5mm}
        \includegraphics[width=0.2\linewidth]{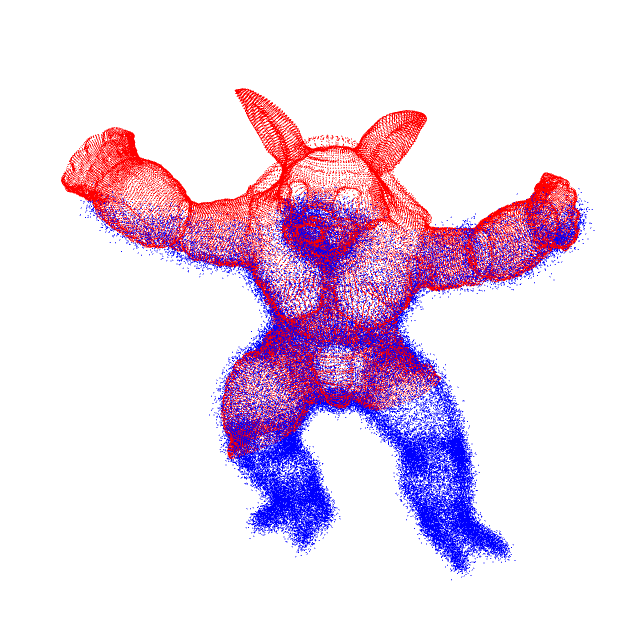} \\
        \vspace{-3mm}
        \hspace{3mm} Initial sets \hspace{20mm} BCPD  \hspace{23mm} CPD \hspace{21mm} d-PWAN \hspace{20mm} m-PWAN
    \end{minipage}
   }
\vspace{-2mm}
  \caption{Examples of registering large-scale point sets.}
	\label{qualtitative_Large}
\end{figure*}

\begin{figure}[htb!]
  \centering
  \includegraphics[width=0.52\linewidth]{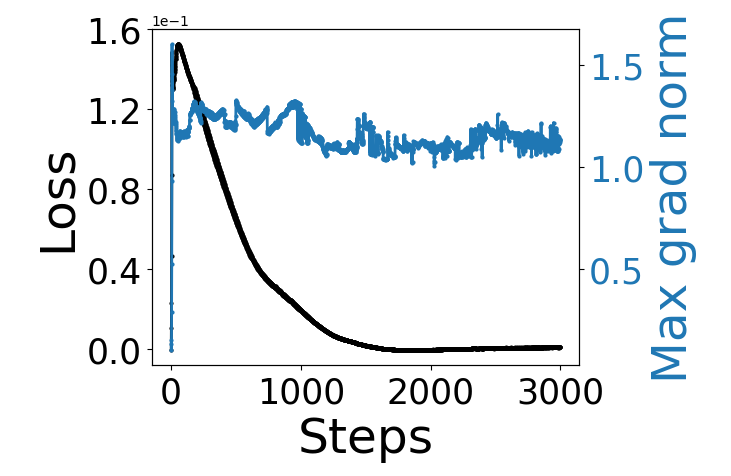}
  \hspace{-4mm}
  \includegraphics[width=0.45\linewidth]{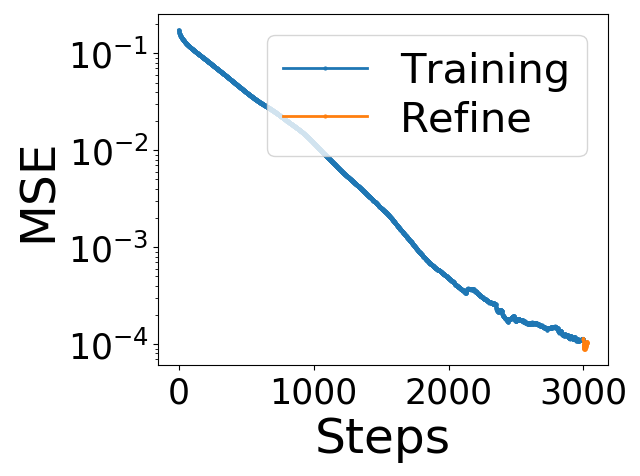}
  \caption{Training details of PWAN on the pair of point sets shown in Fig.~\ref{large_noise_qualitative}}
	\label{qualtitative_Large_training}
\end{figure}

\subsection{More Details in Sec.~\ref{Sec_exp_real}}
\label{Sec_exp_real_app}
The quantitative comparison on human face datasets is presented in Tab.~\ref{Face_tab}.
\begin{table}[h!]
	\begin{center}
        \vspace{-4mm}
	  \caption{Quantitative result of registering the space-time faces dataset. }
	  \label{Face_tab}
	  \begin{tabular}{c c c c} 
		  \hline
		   &BCPD & CPD & PWAN \\
		\hline
	  \centering
    MSE $(\times 10^{-3})$ &  1.7 (1.0)  &  0.49 (0.2)& 0.32 (0.08) \\
		\hline
	  \end{tabular}
	\end{center}
	\vspace{-4mm}
\end{table}

The human body dataset is taken from a SHREC'19 track called ``matching humans with different connectivity''~\cite{bodydataset}.
This dataset consists of $44$ shapes,
and we manually select $3$ pairs of shapes for our experiments.
To generate a point set for a shape,
we first sample $50000$ random points from the 3D mesh,
and then apply voxel grid filtering to down-sample the point set to less than $10000$ points.
The description for the generated point sets is presented in Tab.~\ref{Human_points}

\begin{table}[h!]
	\begin{center}
	  \caption{Point sets used for registration. no.$m$ represents the m-th shape in the dataset~\cite{bodydataset}.}
	  \label{Human_points}
      \begin{tabular}{ c c c c c} 
		  \hline
		    & (no.1, no.42) & (no.18, no.19) & (no.30, no.31)  \\
		\hline
	  \centering
      Size  & (5575, 5793) & (6090, 6175) & (6895, 6792)  \\
      Description & \makecell[l]{same pose \\  different person}  & \makecell[l]{different pose \\ same person} & \makecell[l]{different pose \\ different person} \\
		\hline
	  \end{tabular}
	\end{center}
\end{table}

We conduct two experiments to evaluate PWAN on registering complete and incomplete point sets respectively.
In the first experiment,
we register $3$ pairs of point sets using PWAN,
where the human shapes come from different people or/and with different poses.
In the second experiment,
we register incomplete point sets which are generated by cropping a fraction of the no.30 and no.31 point sets.
For both experiments,
we compare PWAN with CPD~\cite{myronenko2006non} and BCPD~\cite{hirose2021a},
and we only present qualitative registration results,
because we do not know the true correspondence between point sets.

The results of the first experiment are shown in Fig.~\ref{real_human_app}.
As can be seen,
PWAN can handle articulated deformations and produce good full-body registration results.
In contrast,
CPD and BCPD have difficulties aligning point sets with large articulated deformations, 
as significant registration errors are observed near the limbs.

The results of the second experiment are shown in Fig.~\ref{real_human_partial_app}.
As can be seen,
both CPD and BCPD fail in this experiment,
as the non-overlapping points are seriously biased. 
For example,
in the $3$-rd row,
they both wrongly match the left arm to the body,
which causes highly unnatural artifacts.
In contrast,
the proposed PWAN can handle the partial matching problem well,
since it successfully maintains the shape of non-overlapping regions,
which contributes to the natural registration results.

\begin{figure*}[htb!]
	\centering
  \vspace{-3mm}
    \subfigure[Source set]{
        \begin{minipage}[b]{0.18\linewidth}
            \centering
            \includegraphics[width=0.66\linewidth]{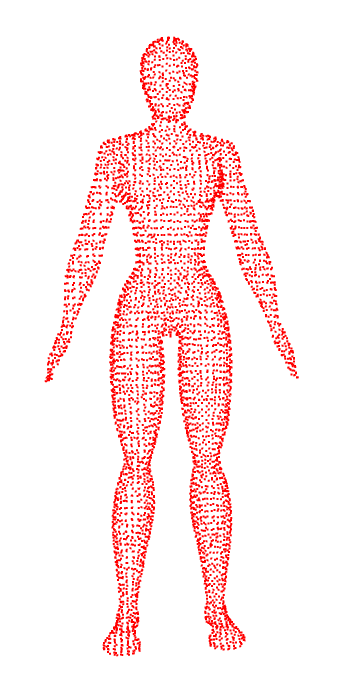}\\
            \includegraphics[width=1\linewidth]{Result/Human/1819/new_000000_transformed_npy_vis_vis3_step1_}\\
            \includegraphics[width=1\linewidth]{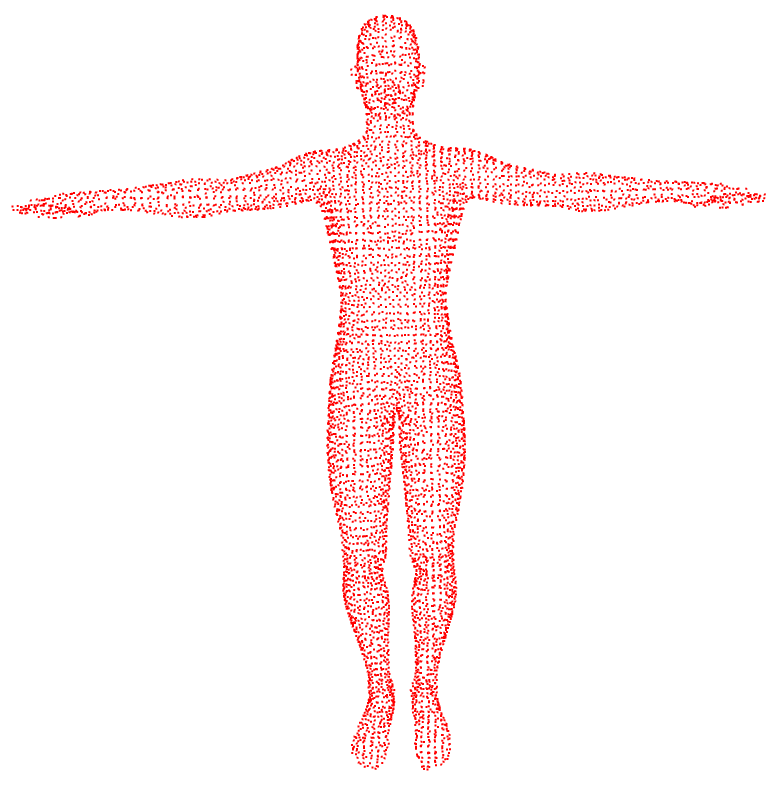}
        \end{minipage}
    }
    \subfigure[Reference set]{
        \begin{minipage}[b]{0.18\linewidth}
            \centering
            \includegraphics[width=0.66\linewidth]{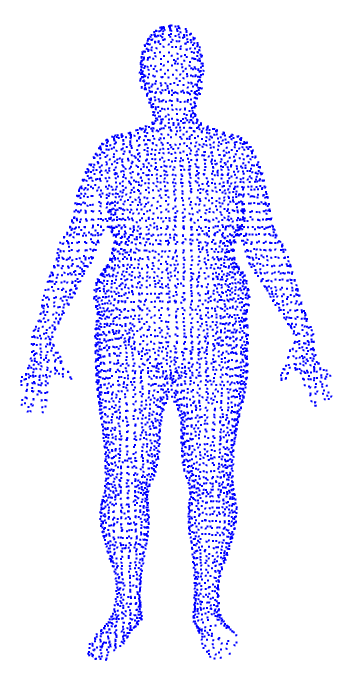}\\
            \includegraphics[width=1\linewidth]{Result/Human/1819/new_000000_transformed_npy_vis_vis3_step2_}\\
            \includegraphics[width=1\linewidth]{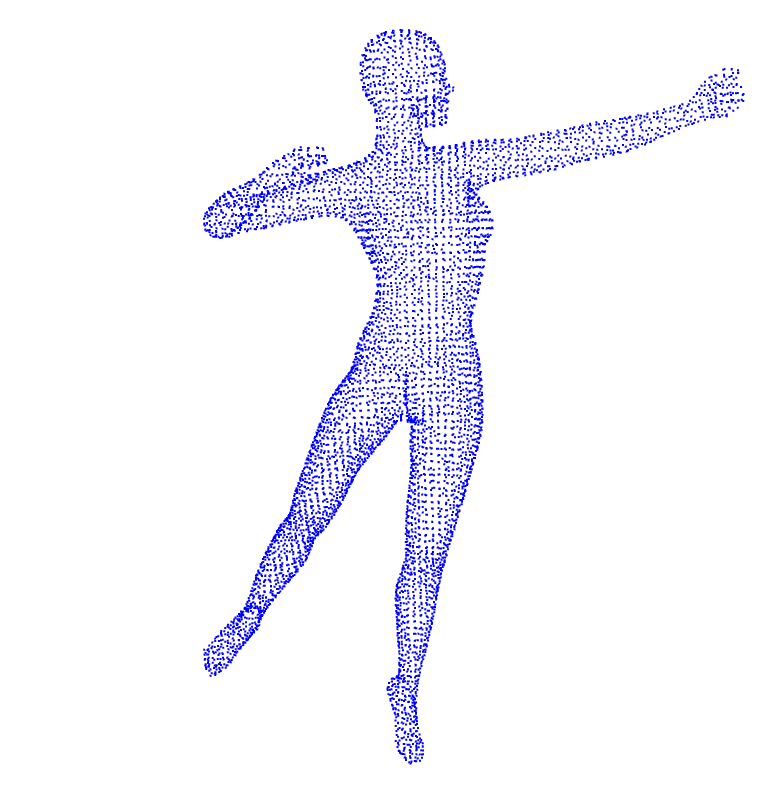}
        \end{minipage}
    }
    \subfigure[PWAN (Ours)]{
        \begin{minipage}[b]{0.18\linewidth}
            \centering
            \includegraphics[width=0.66\linewidth]{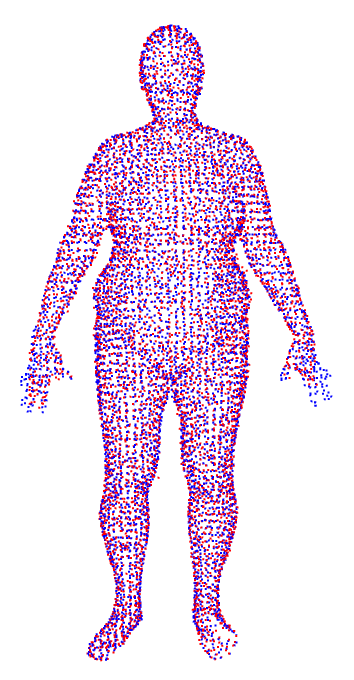}\\
            \includegraphics[width=1\linewidth]{Result/Human/1819/new_020000_transformed_npy_vis_vis3_}\\
            \includegraphics[width=1\linewidth]{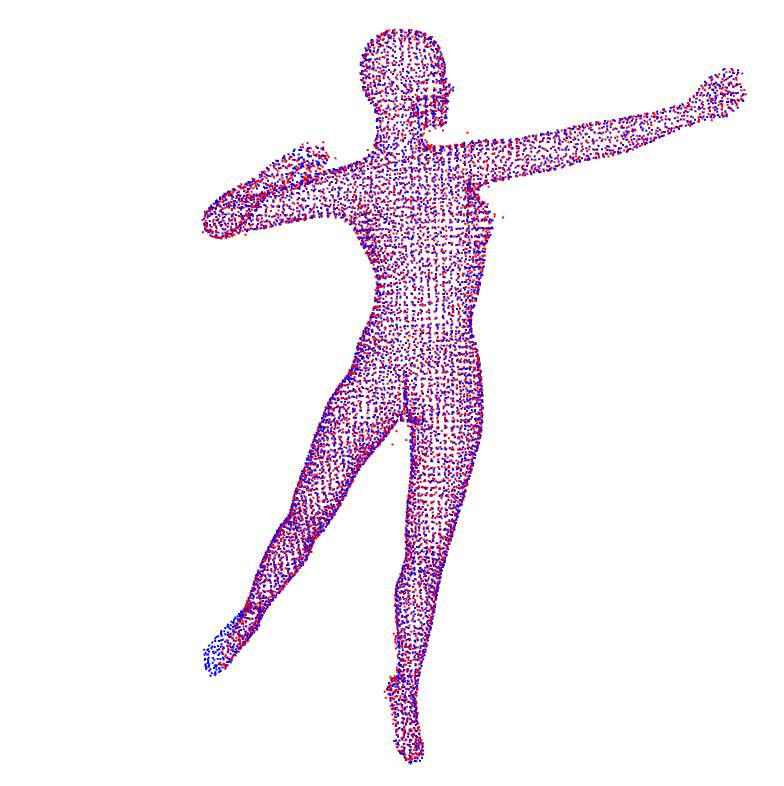}
        \end{minipage}
    }
    \subfigure[BCPD~\cite{hirose2021a}]{
        \begin{minipage}[b]{0.18\linewidth}
            \centering
            \includegraphics[width=0.66\linewidth]{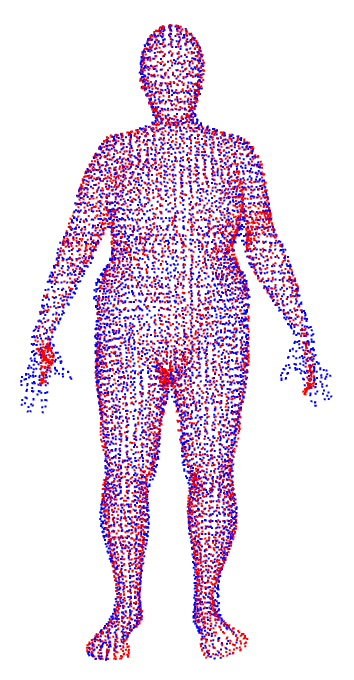} \\
            \includegraphics[width=1\linewidth]{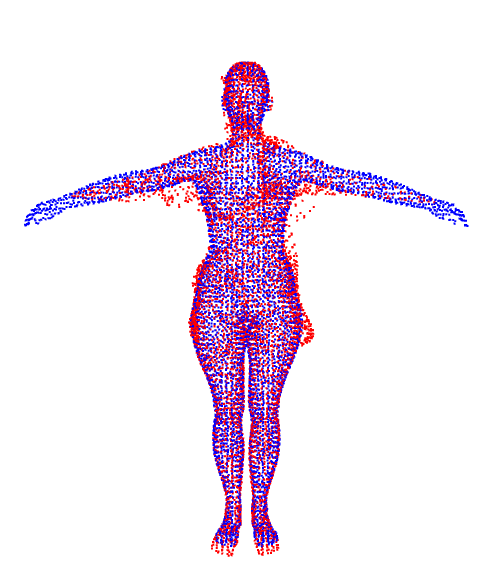} \\
            \includegraphics[width=1\linewidth]{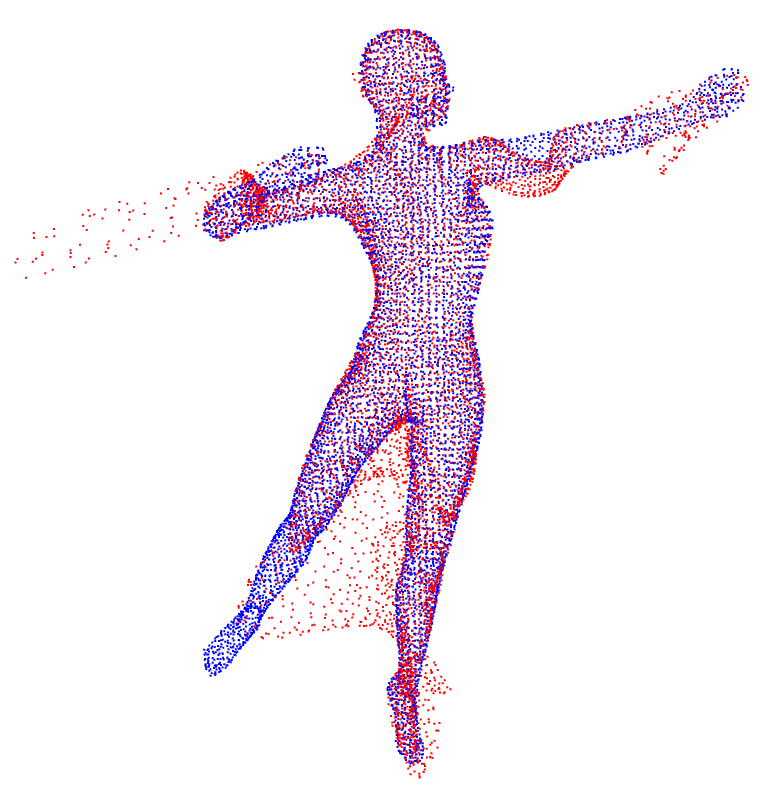} 
        \end{minipage}
    }
    \subfigure[CPD~\cite{myronenko2006non}]{
        \begin{minipage}[b]{0.18\linewidth}
            \centering
            \includegraphics[width=0.66\linewidth]{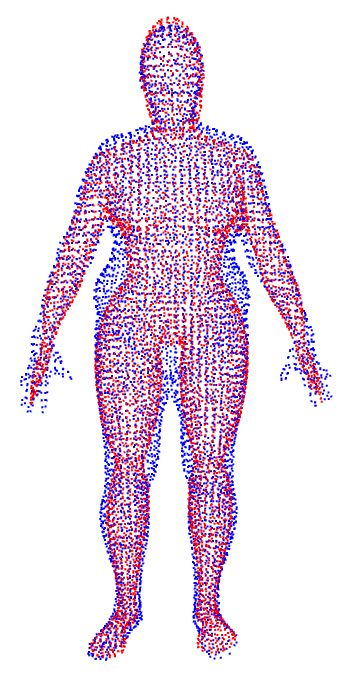} \\
            \includegraphics[width=1\linewidth]{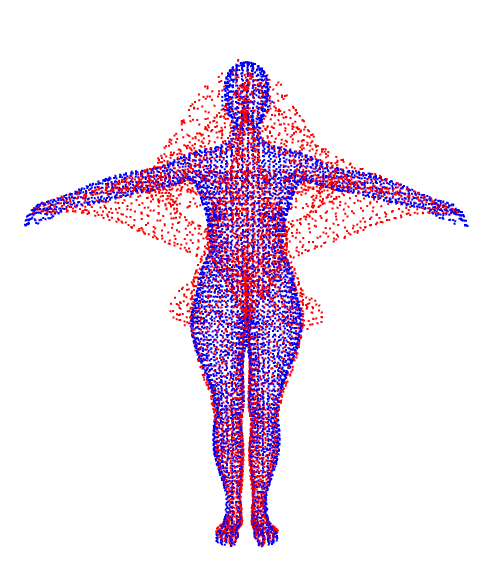} \\
            \includegraphics[width=1\linewidth]{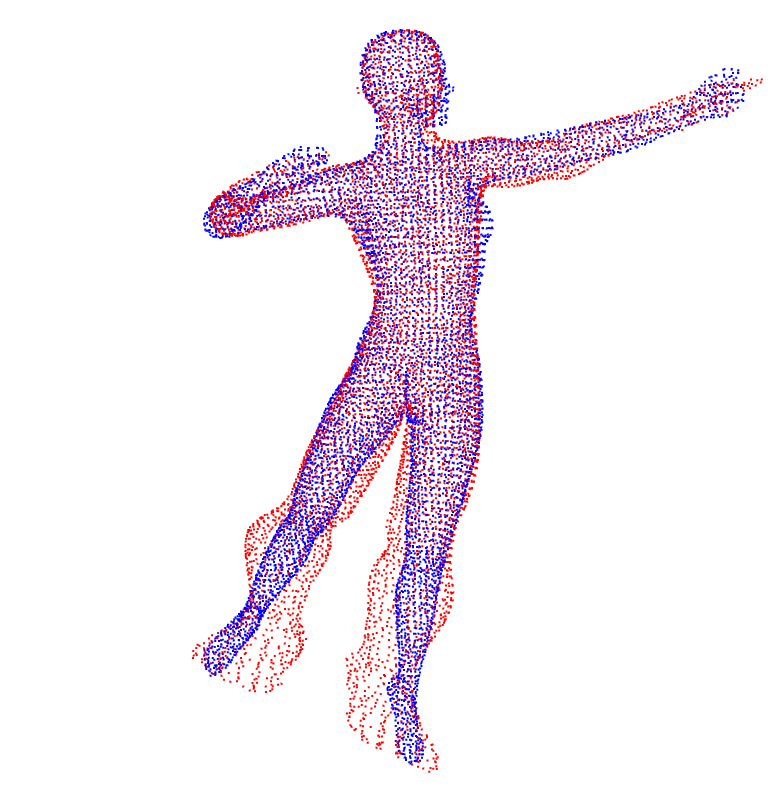}
        \end{minipage}
    }
    \vspace{-1mm}
    \caption{
    The results of registering complete point sets no.1 to no.42 (1-st row), 
    no.18 to no.19 (2-nd row), 
    and no.30 to no.31 (3-rd row). 
    }
    \vspace{-3mm}
    \label{real_human_app}
\end{figure*}

\begin{figure*}[htb!]
	\centering
  \vspace{-1mm}
    \subfigure[Source set]{
        \begin{minipage}[b]{0.18\linewidth}
            \centering
            \includegraphics[width=1\linewidth]{Result/Human/Crop3/new_000000_transformed_npy_vis_vis3_step1_}\\
            \includegraphics[width=1\linewidth]{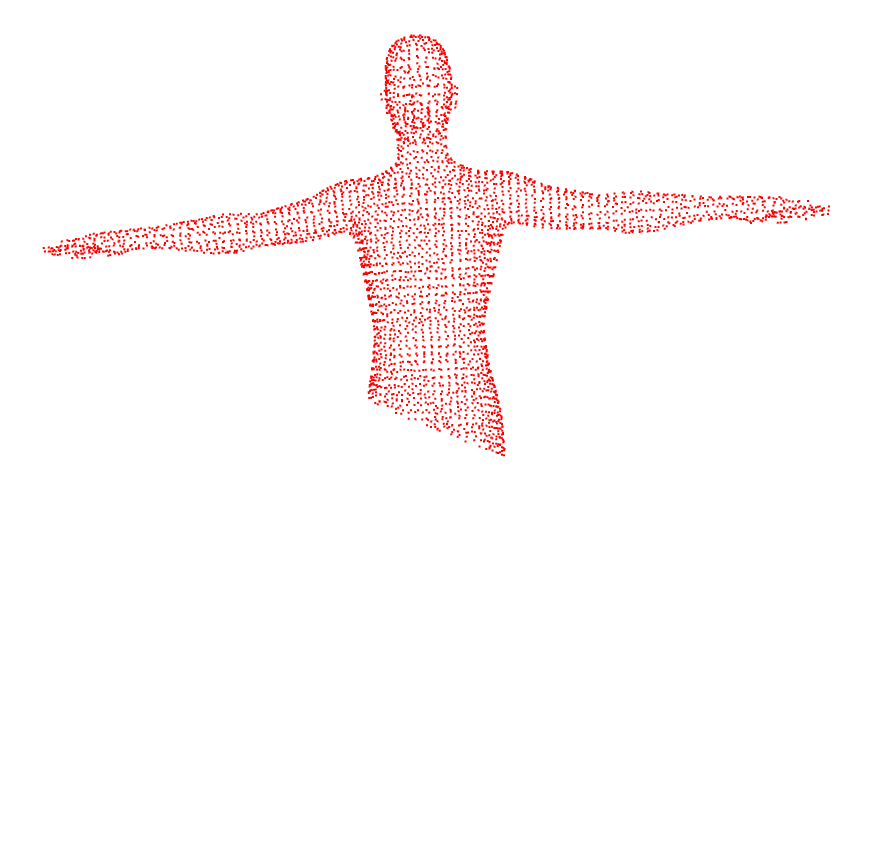} \\
            \includegraphics[width=1\linewidth]{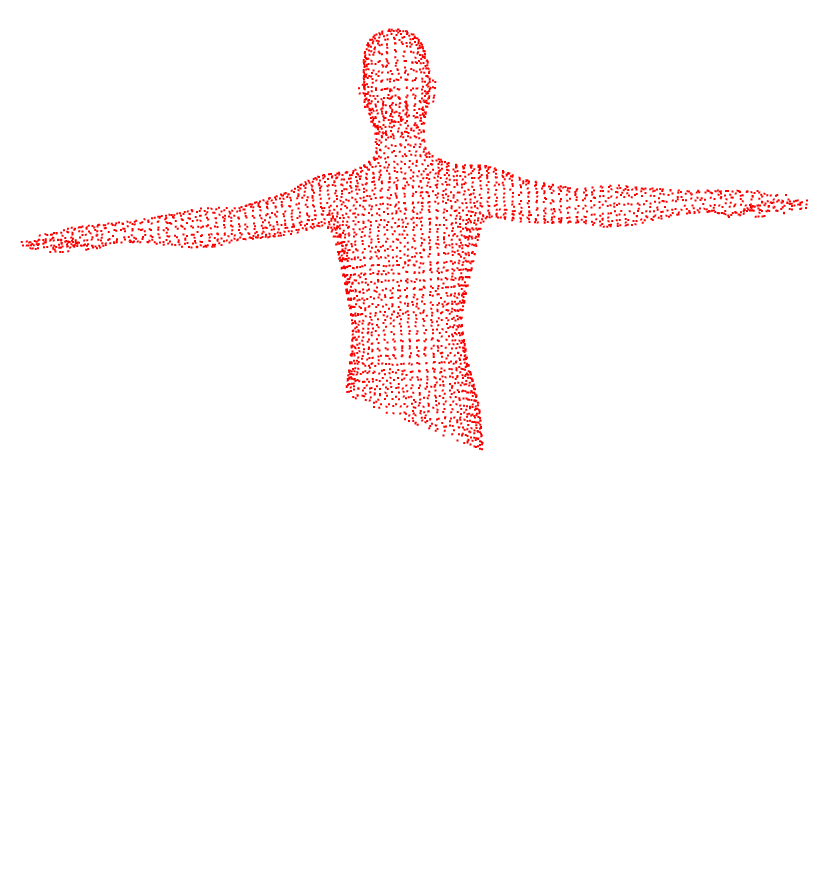} \\
        \end{minipage}
    }
    \subfigure[Reference set]{
        \begin{minipage}[b]{0.18\linewidth}
            \centering
            \includegraphics[width=1\linewidth]{Result/Human/Crop3/new_000000_transformed_npy_vis_vis3_step2_} \\
            \includegraphics[width=1\linewidth]{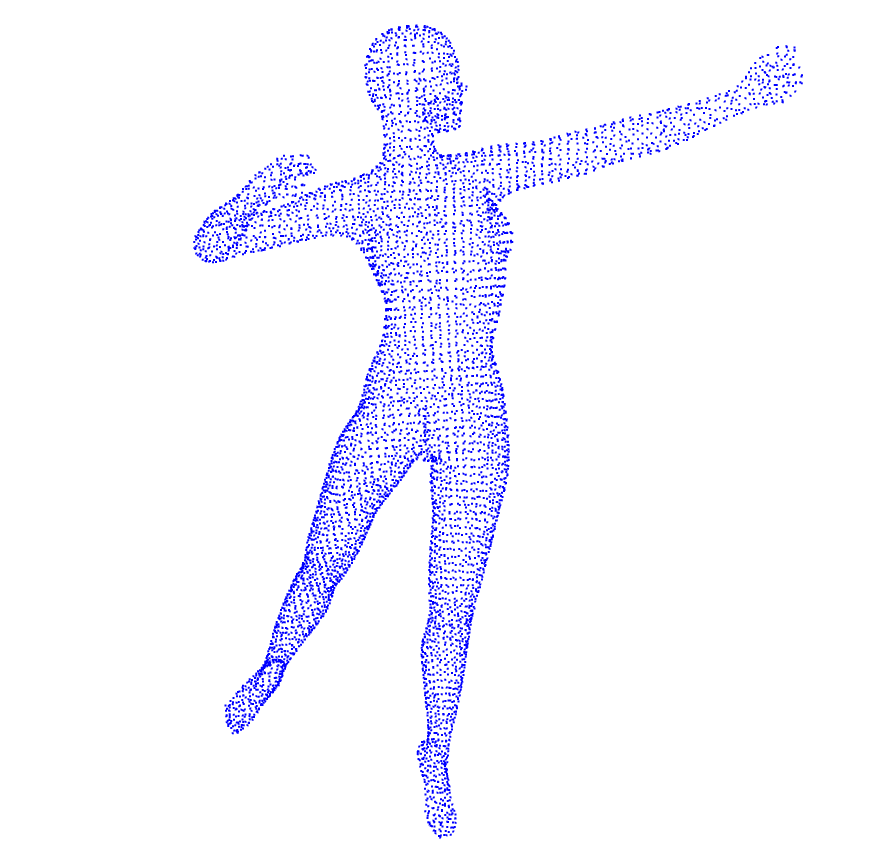} \\
            \includegraphics[width=1\linewidth]{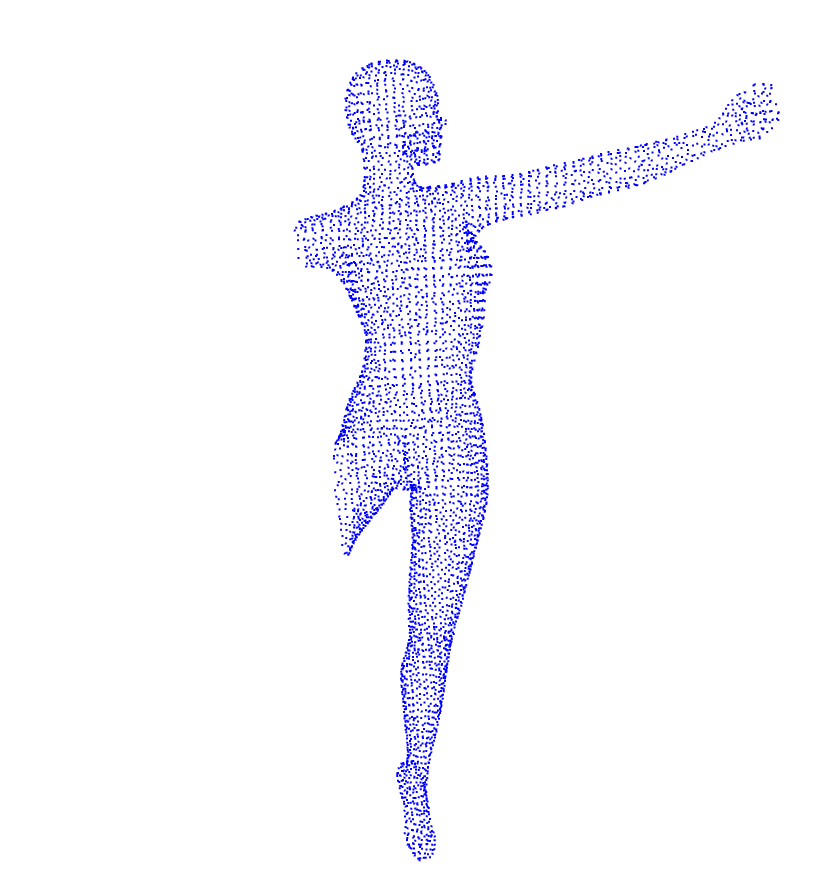} \\
        \end{minipage}
    }
    \subfigure[PWAN (Ours)]{
        \begin{minipage}[b]{0.18\linewidth}
            \centering
            \includegraphics[width=1\linewidth]{Result/Human/Crop3/new_020000_transformed_npy_vis_vis3_} \\
            \includegraphics[width=1\linewidth]{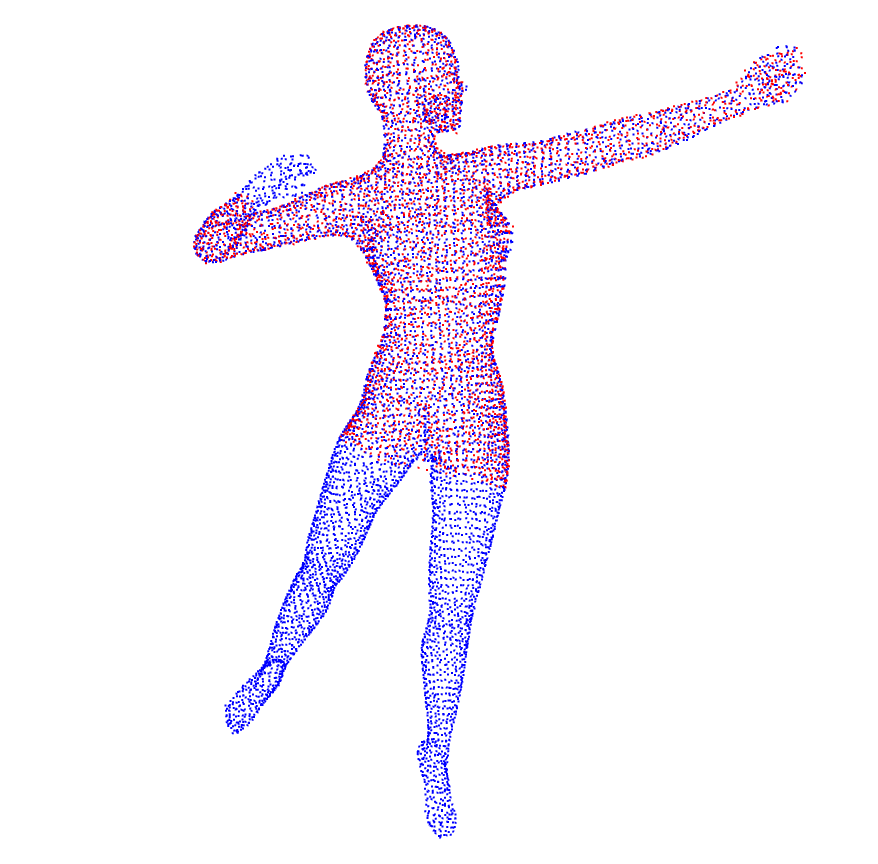} \\
            \includegraphics[width=1\linewidth]{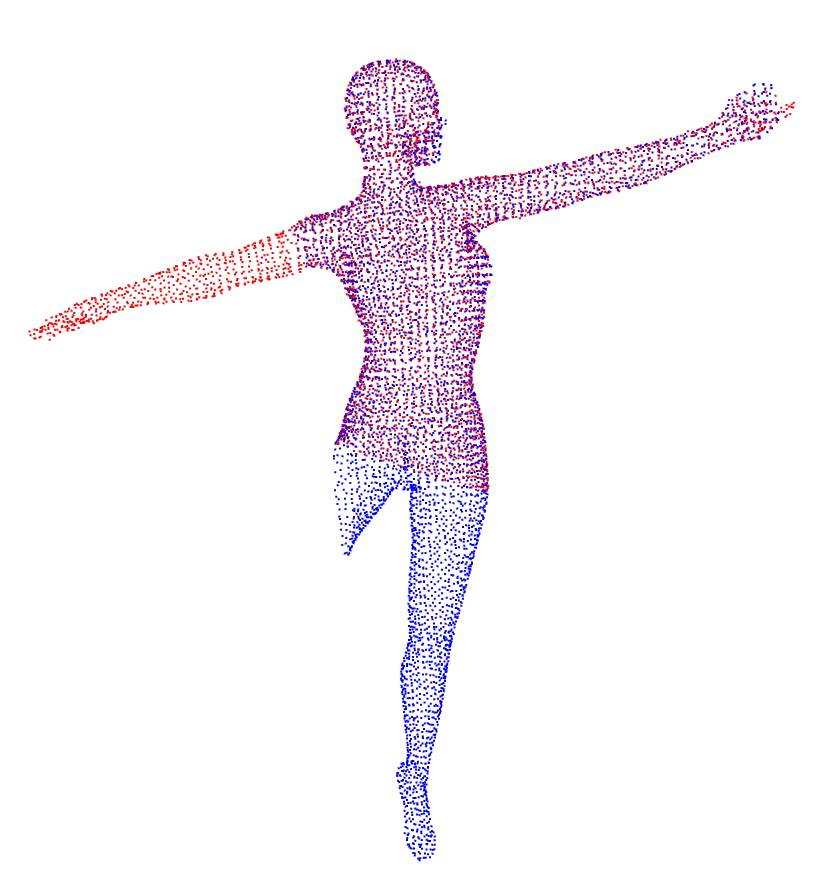} \\
        \end{minipage}
    }
    \subfigure[BCPD~\cite{hirose2021a}]{
        \begin{minipage}[b]{0.18\linewidth}
            \centering
            \includegraphics[width=1\linewidth]{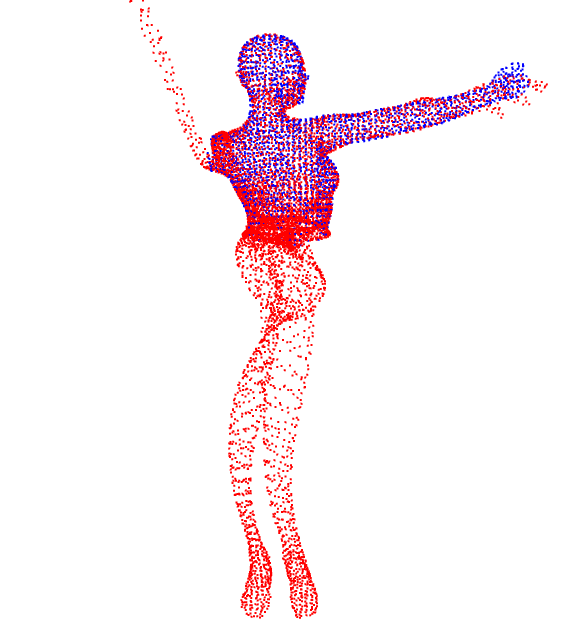} \\
            \includegraphics[width=1\linewidth]{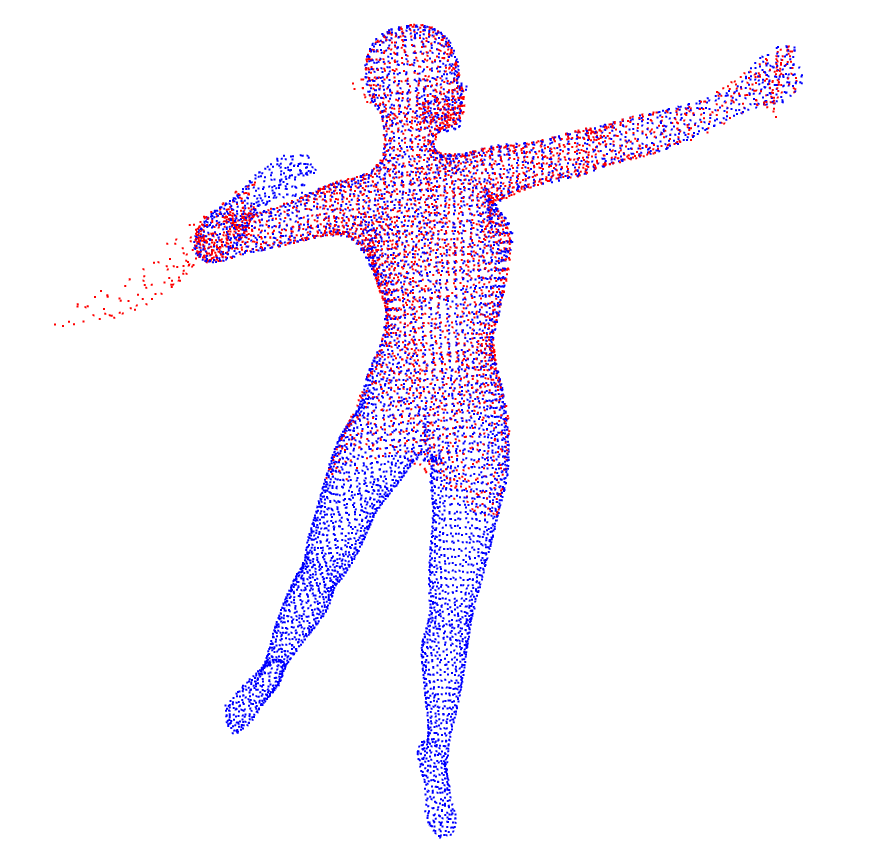} \\
            \includegraphics[width=1\linewidth]{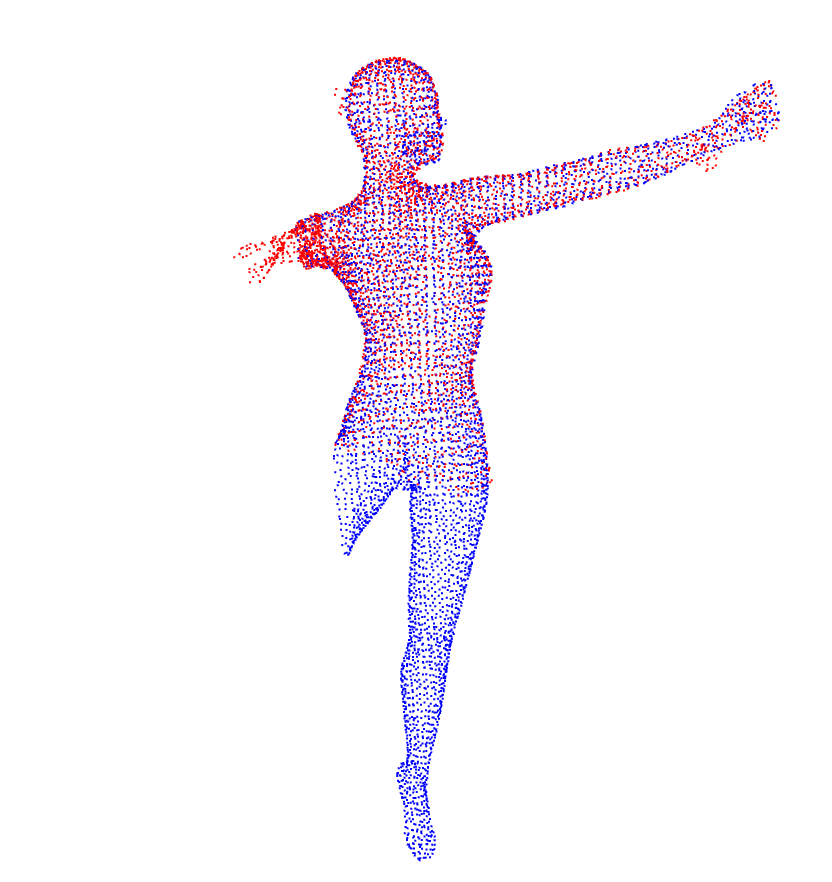} \\
        \end{minipage}
    }
    \subfigure[CPD~\cite{myronenko2006non}]{
        \begin{minipage}[b]{0.18\linewidth}
            \centering
            \includegraphics[width=1\linewidth]{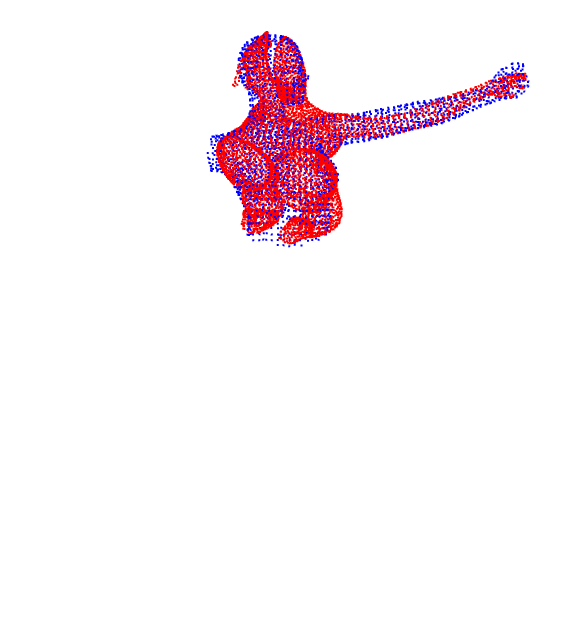}  \\
            \includegraphics[width=1\linewidth]{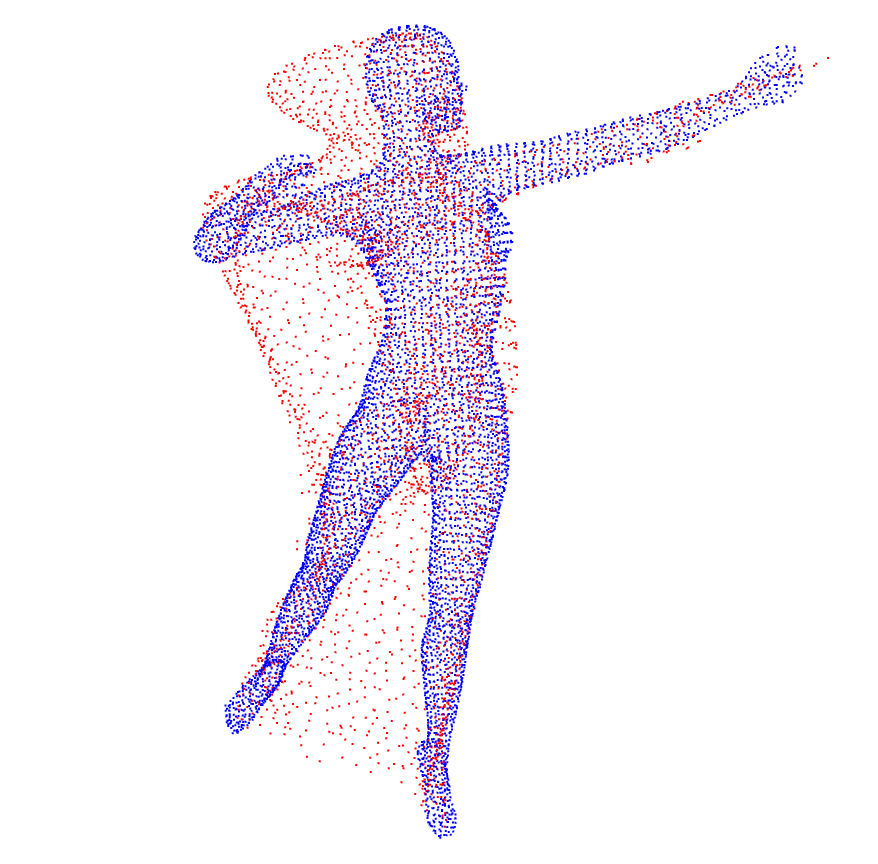} \\
            \includegraphics[width=1\linewidth]{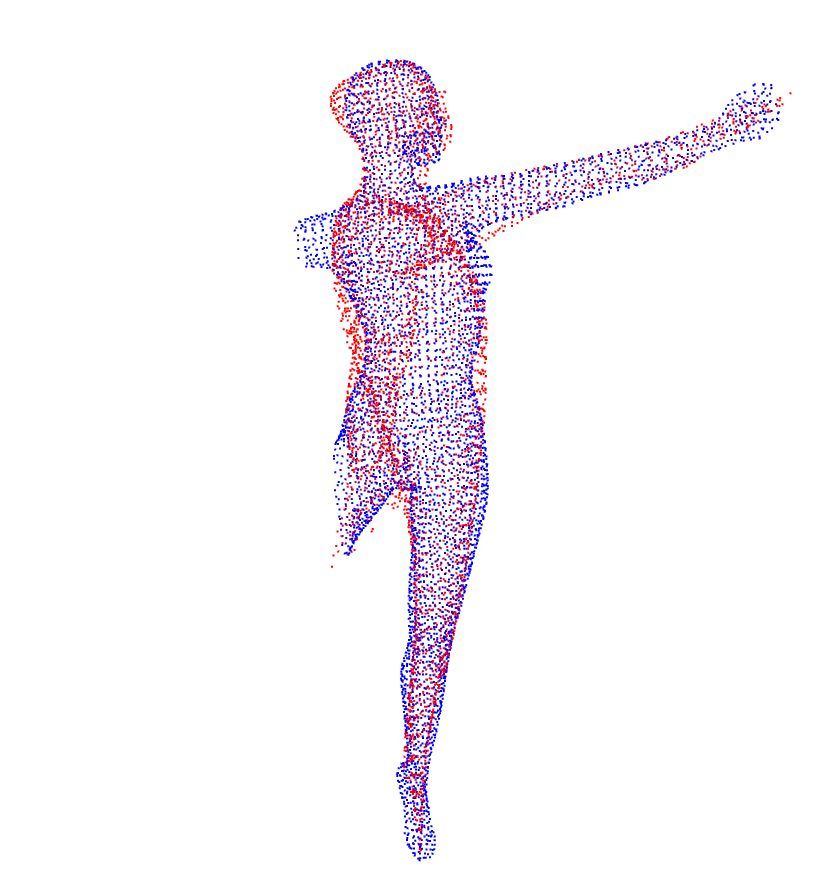} \\
        \end{minipage}
    }
    \vspace{-2mm}
    \caption{
    Registering incomplete point sets no.30 to no.31.
        We present the results of complete-to-incomplete (1-st row),
        incomplete-to-complete (2-nd row) and incomplete-to-incomplete (3-rd row) registration.
    }
    \vspace{-3mm}
    \label{real_human_partial_app}
\end{figure*}

\subsection{More Details in Sec.~\ref{Sec_rigid}}
\label{Sec_rigid_app}
The detailed quantitative results of rigid registration are presented in Tab.~\ref{app_rigid_quantitative},
and we additionally present some examples of the registration results in Fig.~\ref{rigid_app}.
It can be seen that both types of PWAN can accurately align all point sets,
while all baseline algorithms failed to handle the parasaurolophus shapes.

\begin{table*}[ht!]
	\begin{center}
	  \caption{Quantitative results of rigid registration. We report the median and standard deviation of rotation errors.}
    \label{app_rigid_quantitative}
      \begin{tabular}{c c c c c c c c} 
		  \hline
        & apartment & mountain & stair & wood-summer & parasaurolophus & T-rex \\
		\hline
	  \centering
    BCPD & 0.71 (13.4)	& 8.07 (7.4) &	0.24 (3.8)&	7.32 (9.4)&	0.16 (0.2)&	0.09 (0.2) \\
    CPD & 0.50 (1.7)	&6.39 (3.6)&	0.9 (39.2)&	1.99 (1.6)&	0.17 (60.8)&	0.08 (63.2)\\
    m-ICP & 1.86 (28.9)	&11.23 (13.9)	&0.32 (2.3)	&11.29 (7.9)	&1.94 (8.1)	&1.85 (2.5)	\\
		d-ICP & 		11.14 (23.2) &	13.44 (6.1)	&3.92 (6.4)	& 23.68 (8.2)&	21.29 (8.1)&	21.05 (4.2)\\
    m-PWAN  (Ours) & 0.32 (19.5)	&4.67 (4.1)	&0.23 (0.2)&	1.17 (0.8)&	0.11 (0.4)&	0.10 (0.1)	\\
    d-PWAN  (Ours) & 0.36 (29.9)&	4.7 (4.1)&	0.23 (0.2)&	1.20 (0.8)&	0.12 (2.3)	&0.07 (7.4)\\
    \hline
	  \end{tabular}
	\end{center}
\end{table*}

\begin{figure*}[htb!]
	\centering
  \vspace{-1mm}
    \subfigure[Initial sets]{
        \begin{minipage}[b]{0.12\linewidth}
            \centering
            \includegraphics[width=1\linewidth]{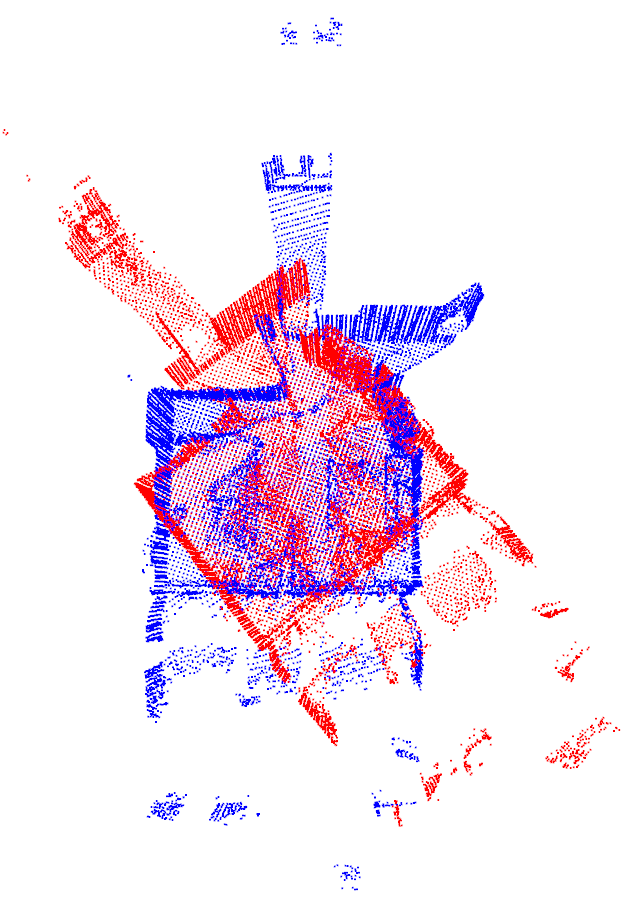}\\
            \includegraphics[width=1\linewidth]{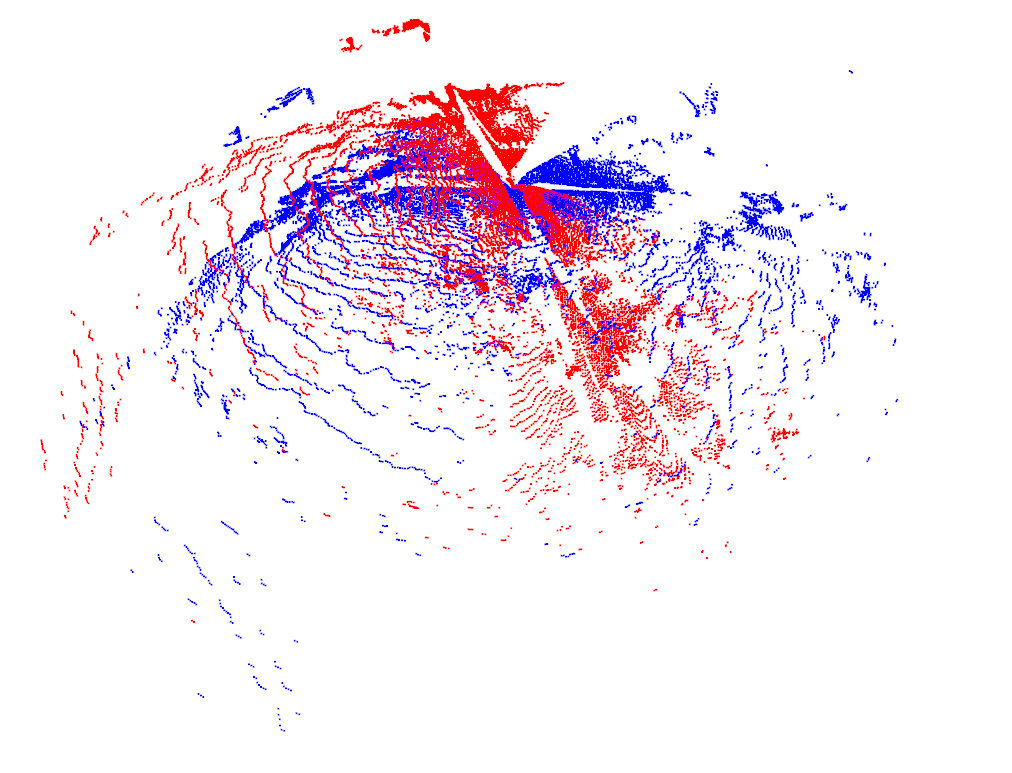} \\
            \includegraphics[width=1\linewidth]{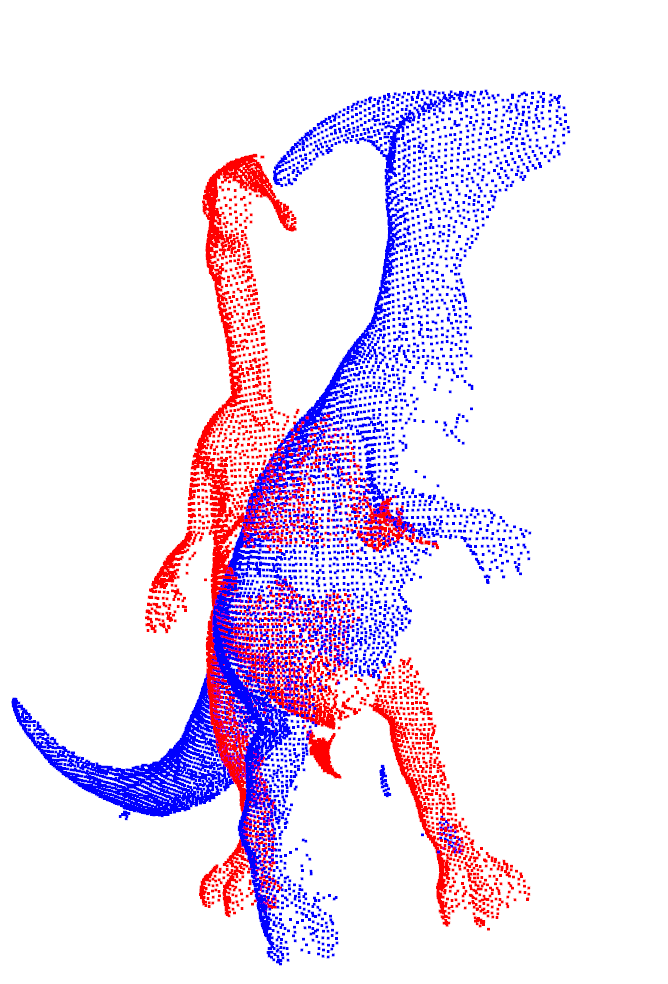} \\
        \end{minipage}
    }
    \subfigure[d-PWAN]{
        \begin{minipage}[b]{0.12\linewidth}
            \centering
            \includegraphics[width=1\linewidth]{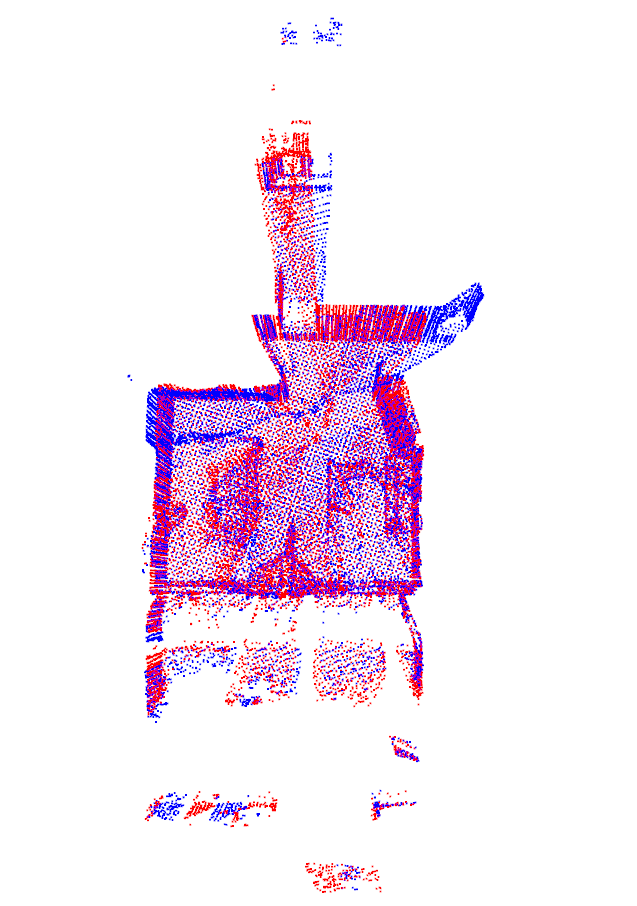} \\
            \includegraphics[width=1\linewidth]{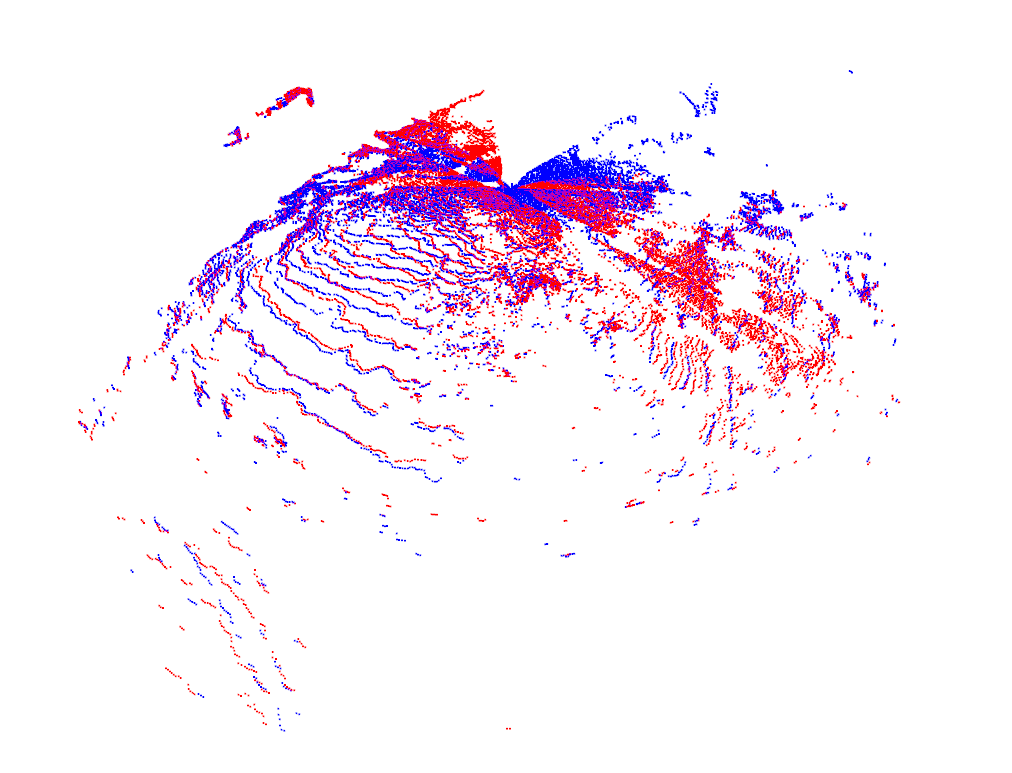} \\
            \includegraphics[width=1\linewidth]{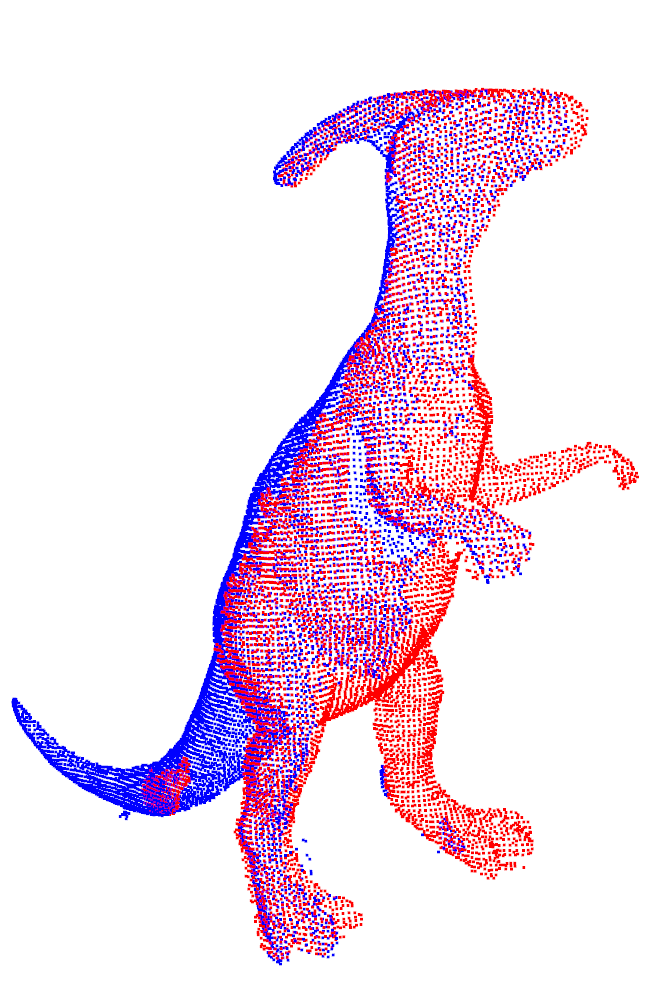} \\
        \end{minipage}
    }
    \subfigure[m-PWAN]{
        \begin{minipage}[b]{0.12\linewidth}
            \centering
            \includegraphics[width=1\linewidth]{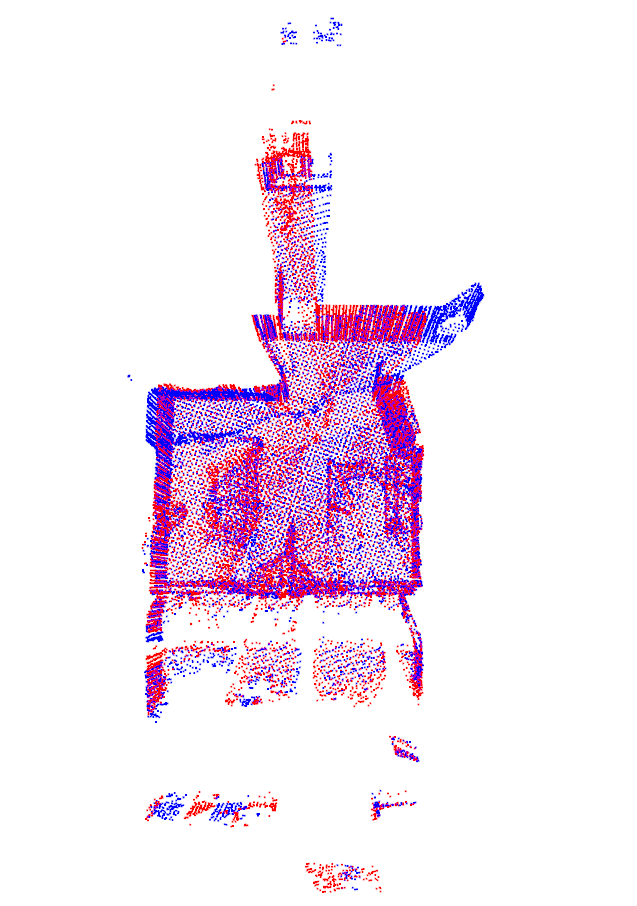} \\
            \includegraphics[width=1\linewidth]{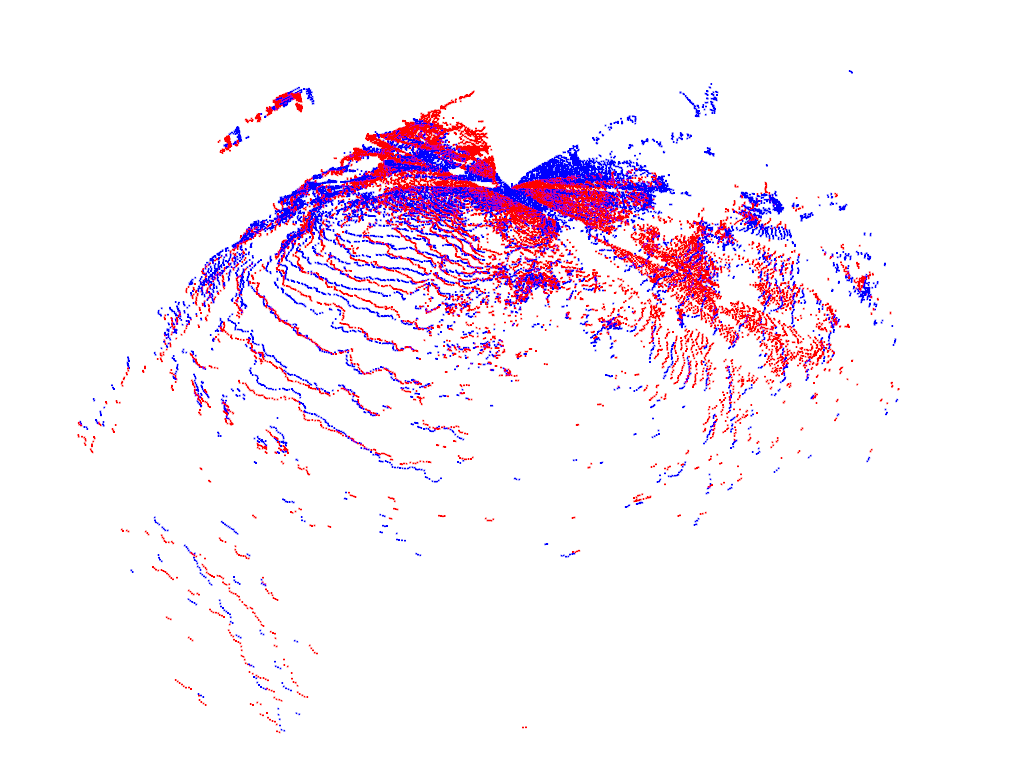} \\
            \includegraphics[width=1\linewidth]{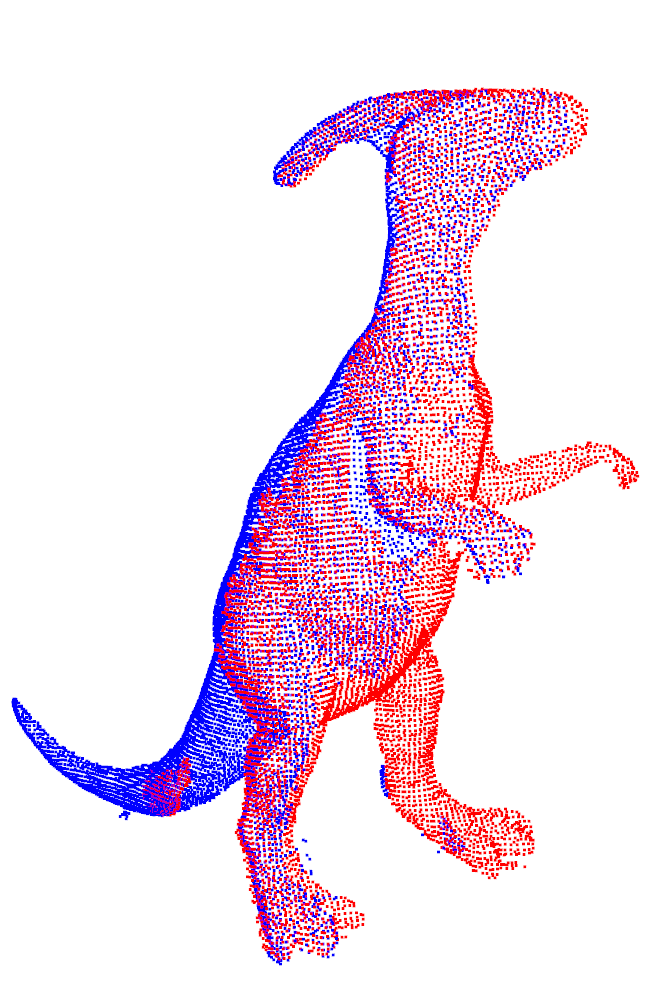} \\
        \end{minipage}
    }
    \subfigure[BCPD]{
        \begin{minipage}[b]{0.12\linewidth}
            \centering
            \includegraphics[width=1\linewidth]{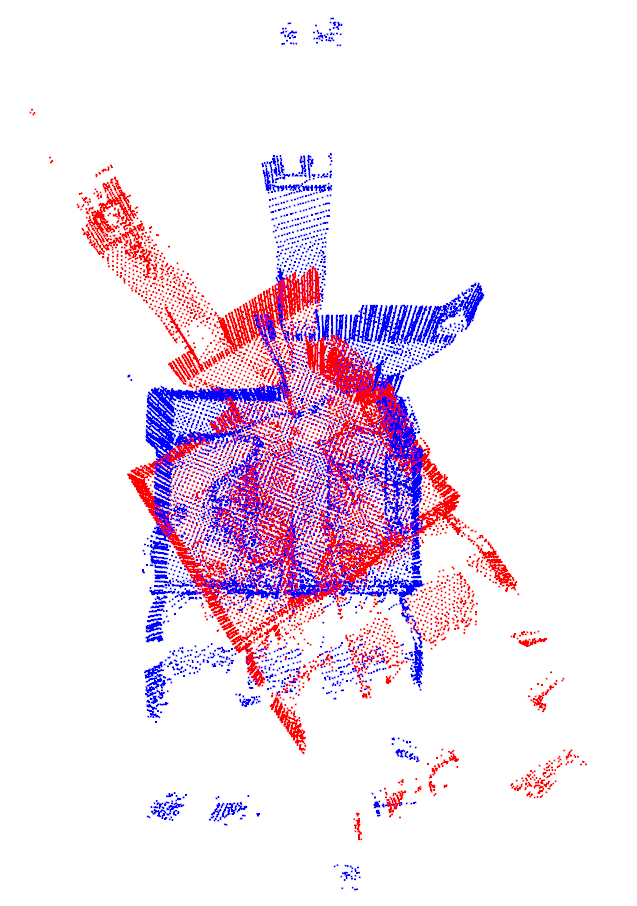} \\
            \includegraphics[width=1\linewidth]{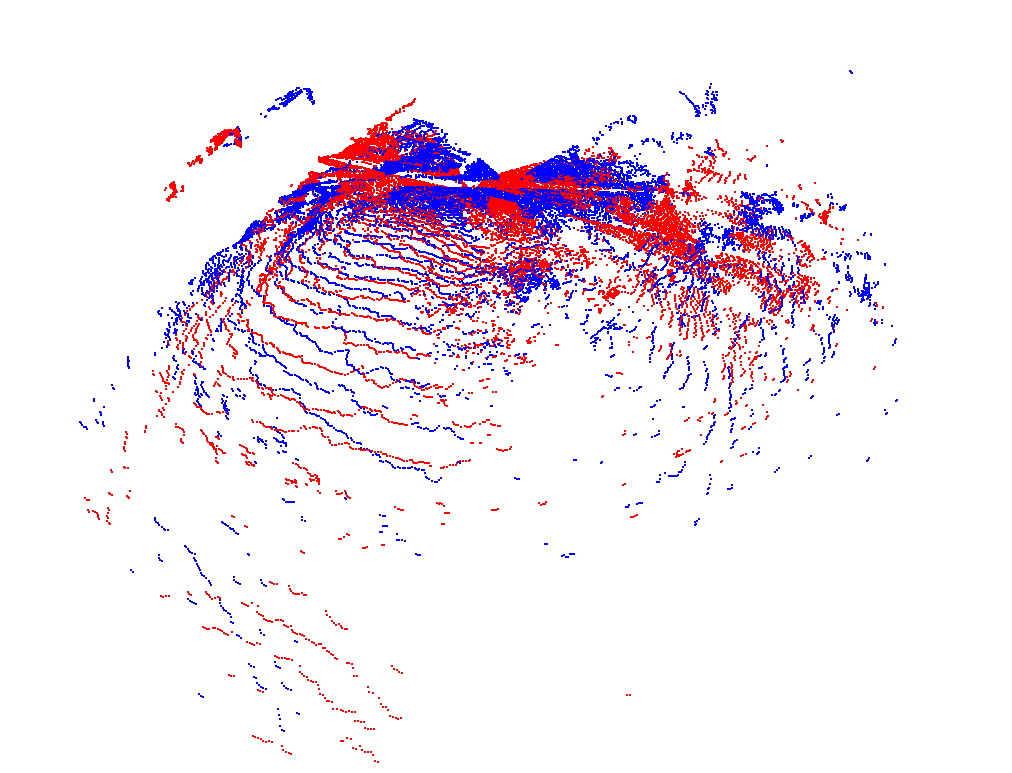} \\
            \includegraphics[width=1\linewidth]{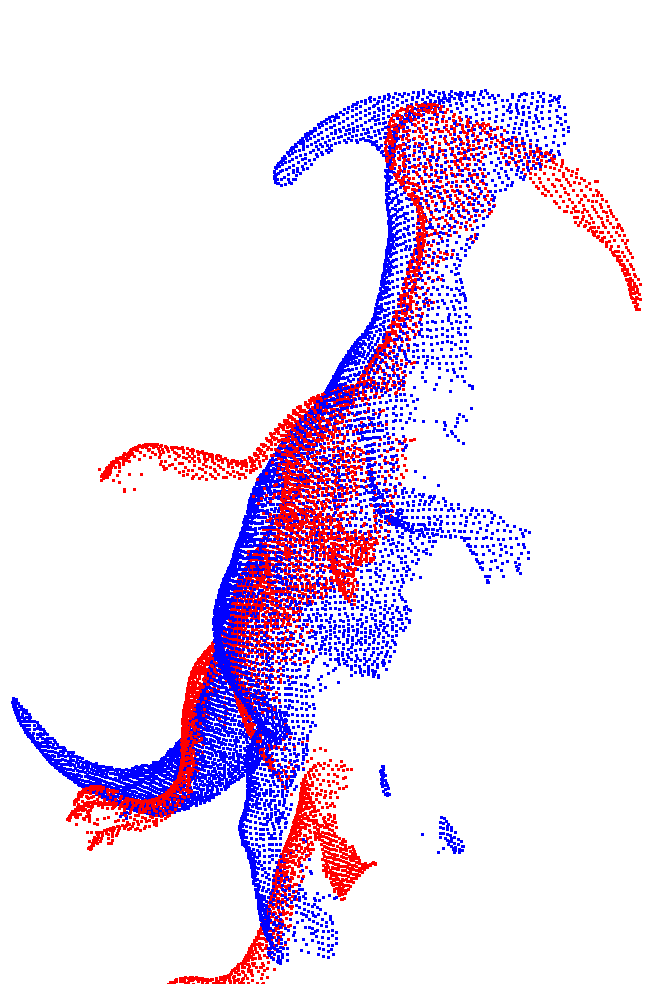} \\        
        \end{minipage}
    }
    \subfigure[CPD]{
        \begin{minipage}[b]{0.12\linewidth}
            \centering
            \includegraphics[width=1\linewidth]{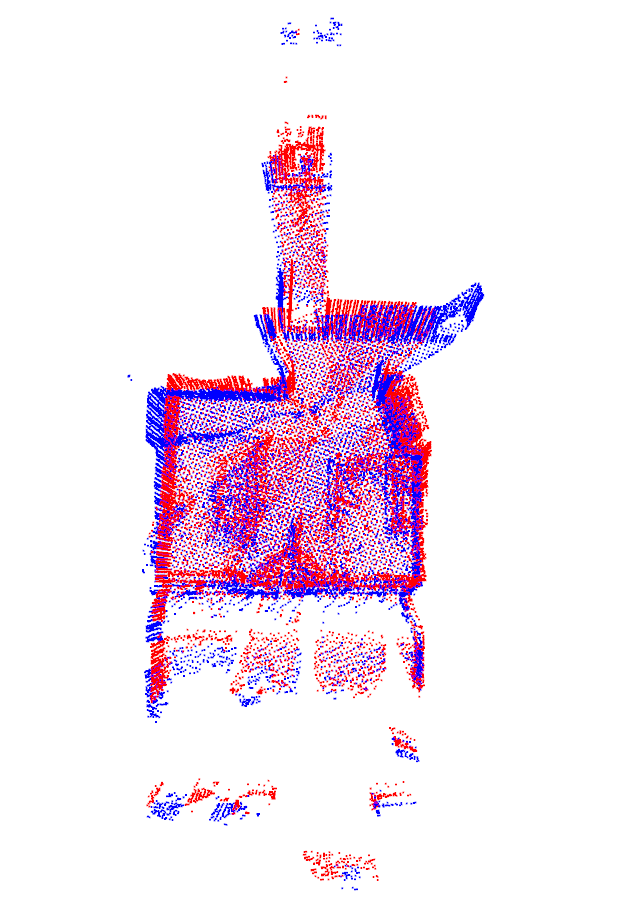}  \\
            \includegraphics[width=1\linewidth]{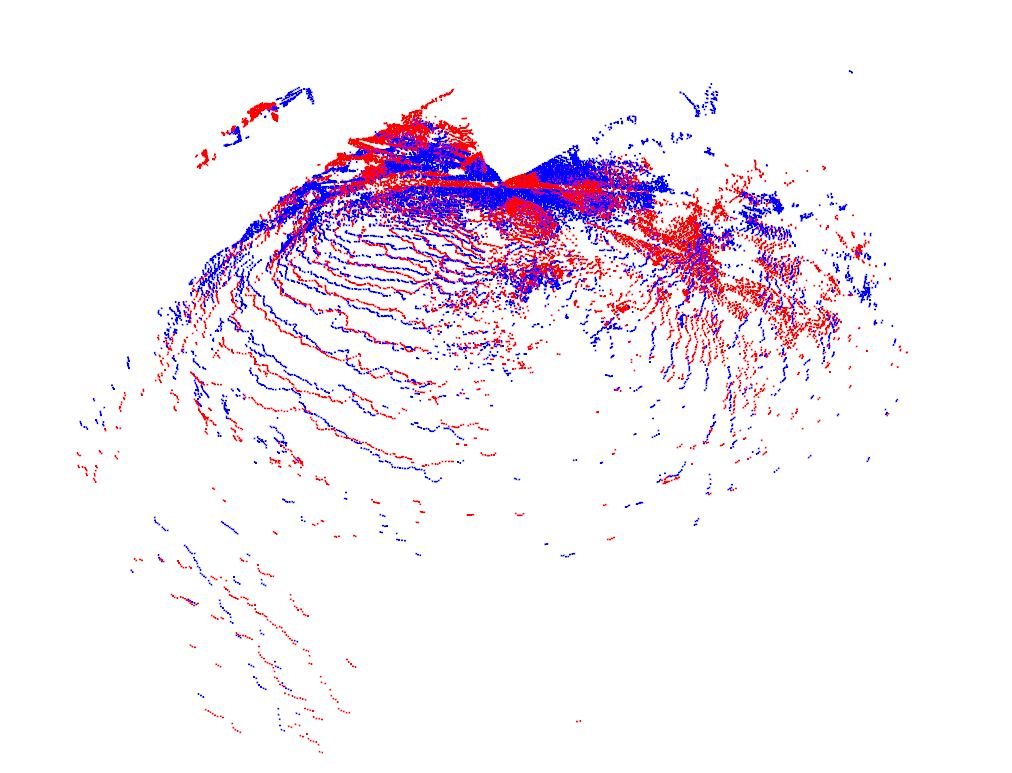} \\
            \includegraphics[width=1\linewidth]{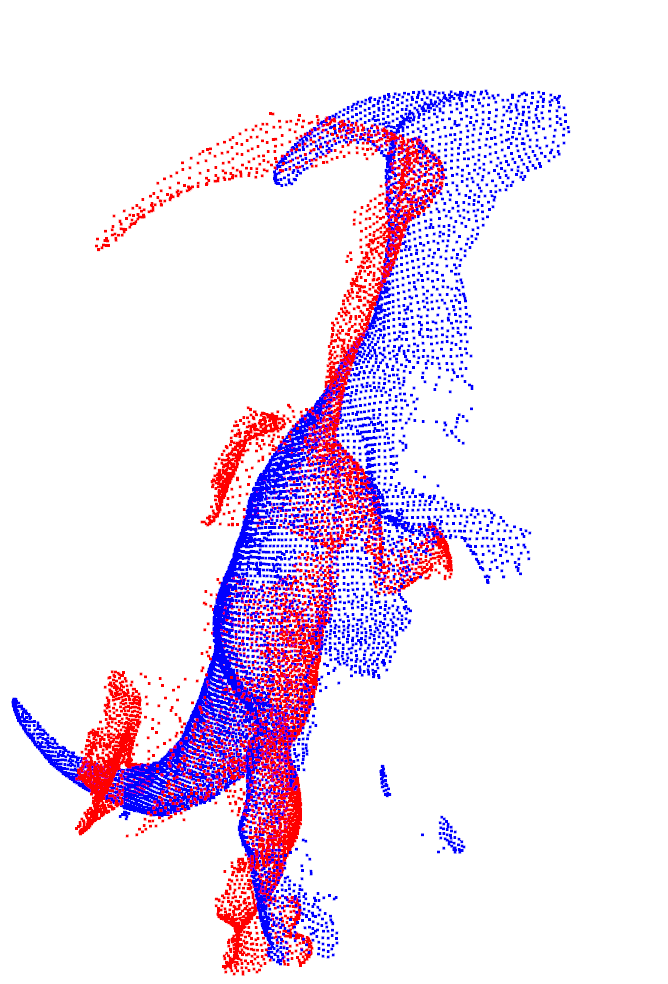} \\
        \end{minipage}
    }
    \subfigure[d-ICP]{
        \begin{minipage}[b]{0.12\linewidth}
            \centering
            \includegraphics[width=1\linewidth]{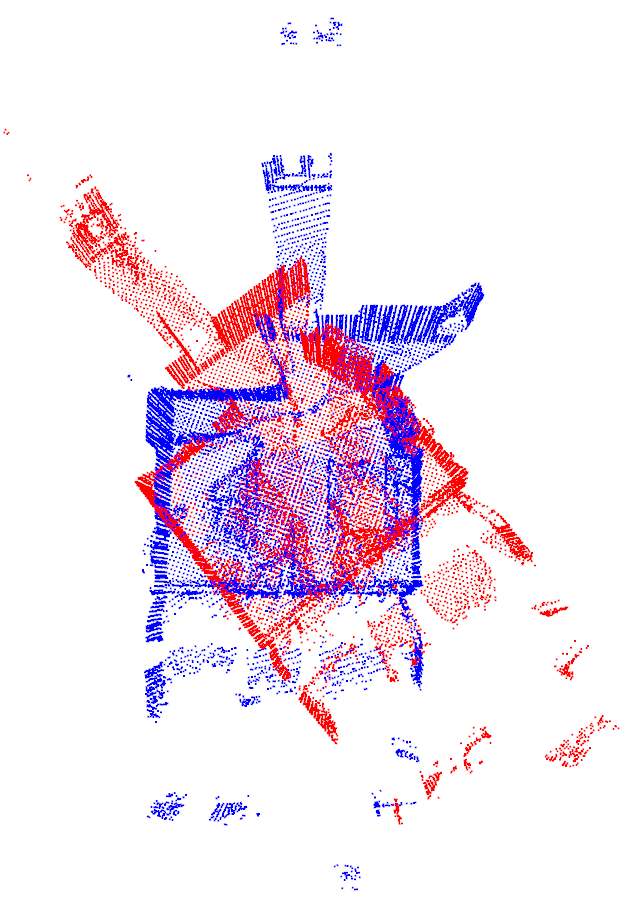}  \\
            \includegraphics[width=1\linewidth]{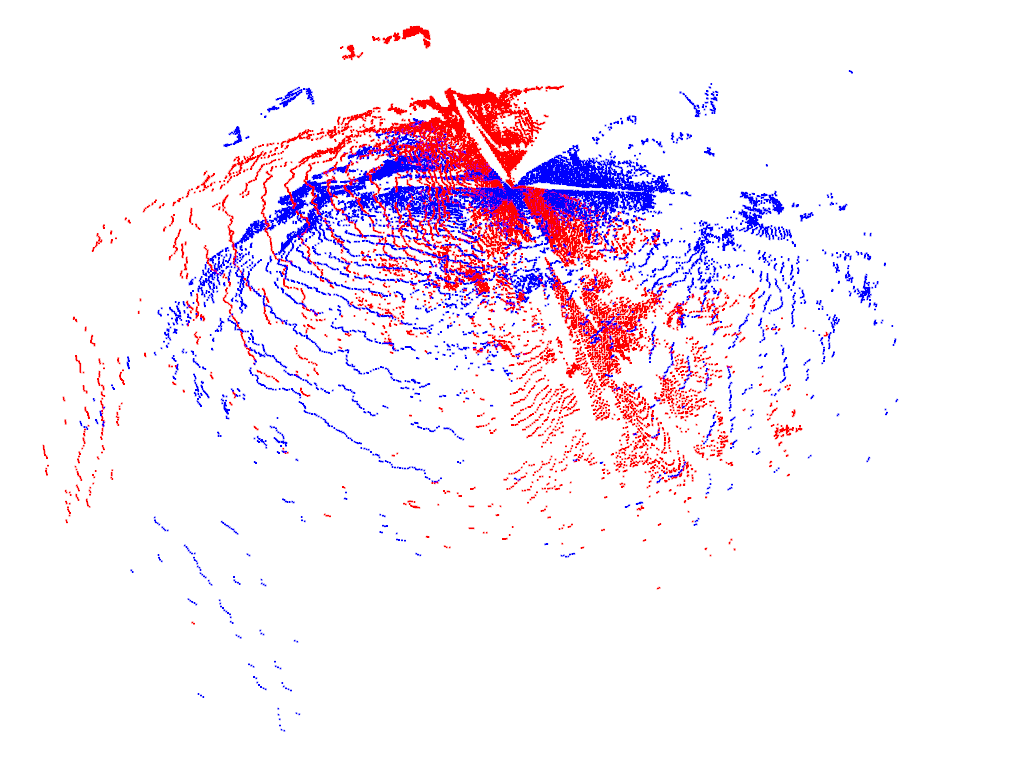} \\
            \includegraphics[width=1\linewidth]{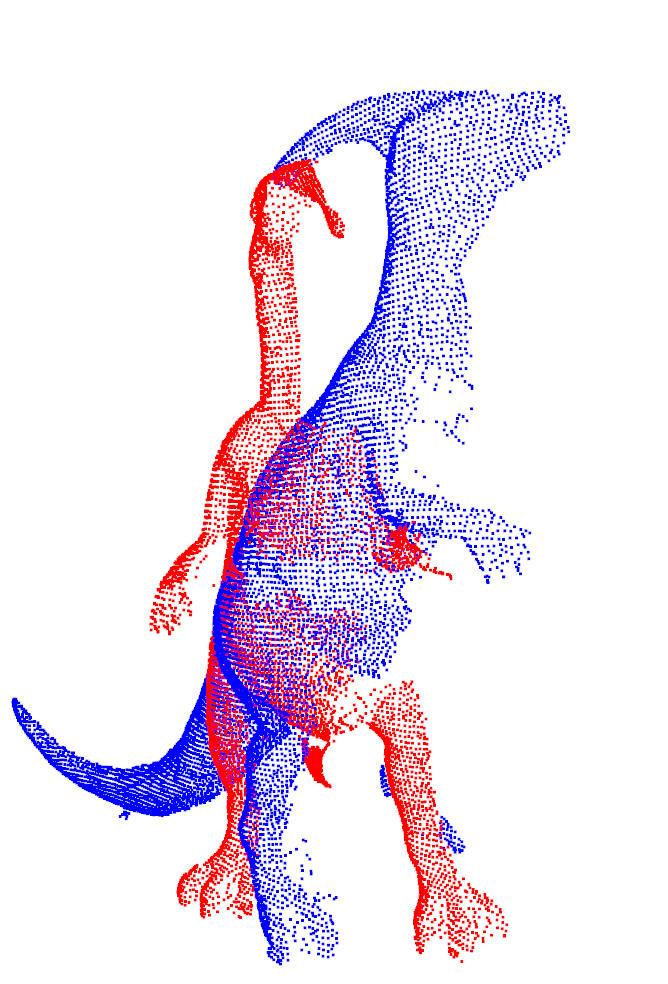} \\
        \end{minipage}
    }
    \subfigure[m-ICP]{
        \begin{minipage}[b]{0.12\linewidth}
            \centering
            \includegraphics[width=1\linewidth]{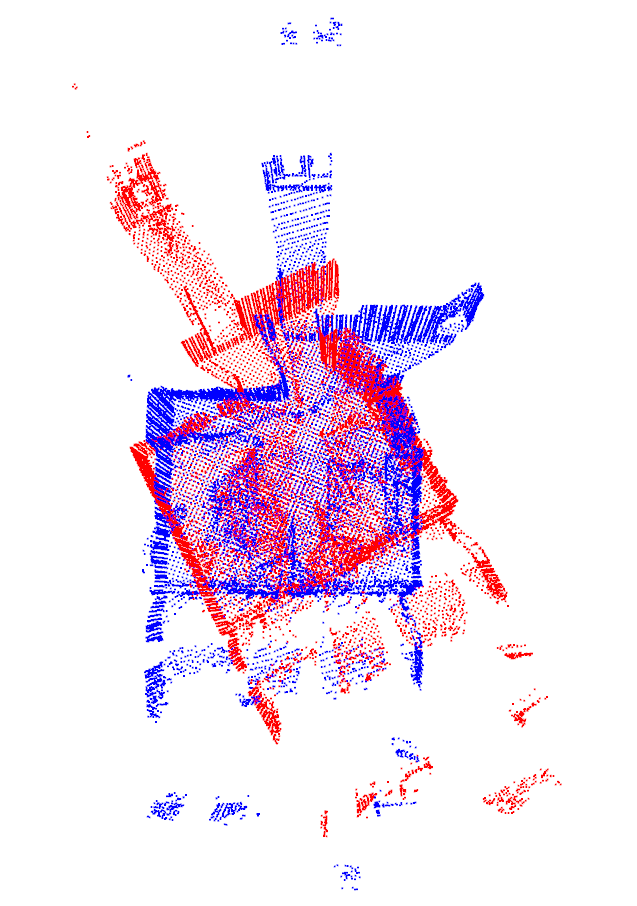}  \\
            \includegraphics[width=1\linewidth]{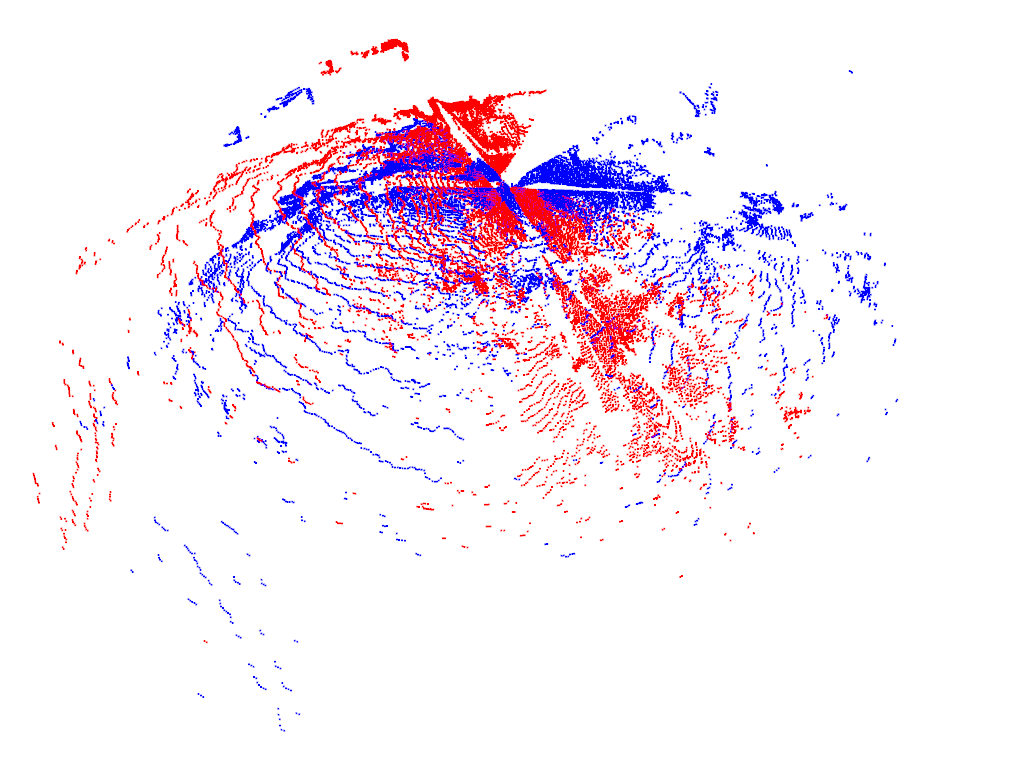} \\
            \includegraphics[width=1\linewidth]{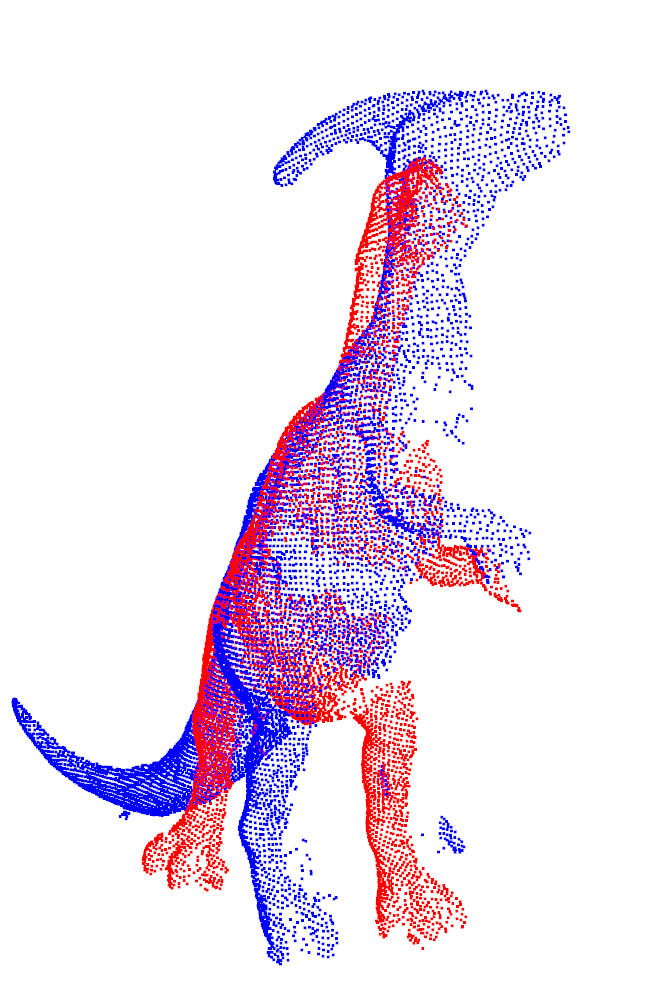} \\
        \end{minipage}
    }
    \vspace{-2mm}
    \caption{
    Examples of rigid registration results on a pair of apartment (indoor), mountain (outdoor) and parasaurolophus (object) point sets.
    }
    \vspace{-3mm}
    \label{rigid_app}
\end{figure*}

\subsection{Explanation of the Effect of $m_\alpha$}
\label{Sec_m_alpha_app}
To explain the effect of $m_\alpha$ in our formulation,
we present a toy example in Fig.~\ref{C_PDA},
where $\alpha$ and $\beta_\theta$ are uniform distributions supported on points.
We consider the correspondence given by $\mathcal{L}_{M,1}(\alpha, \beta_\theta)$,
where we fix $m_\beta=1$ and increase $m_\alpha$ from $1$ to infinity.
Fewer data points in $\alpha$ are aligned as $m_\alpha$ increases,
and the alignment gradually becomes the nearest neighborhood alignment when $m_\alpha$ is close to infinity.
\begin{figure}[htb!]
  \centering
  \subfigure[$m_\alpha=1$]{
    \includegraphics[width=0.32\linewidth]{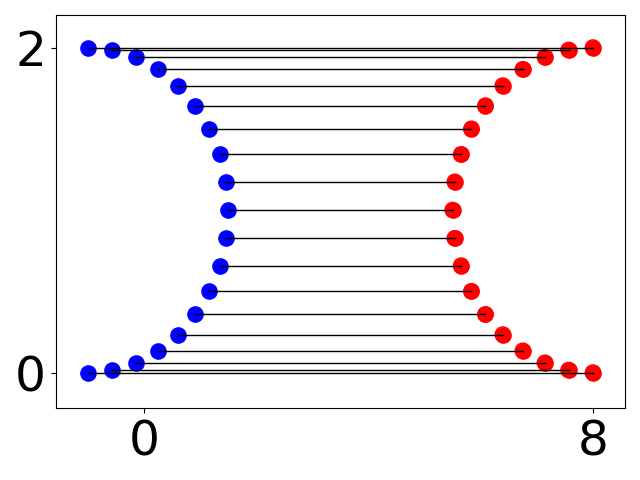}
  }
  \hspace{-4mm}
  \subfigure[$m_\alpha=3$]{
    \includegraphics[width=0.32\linewidth]{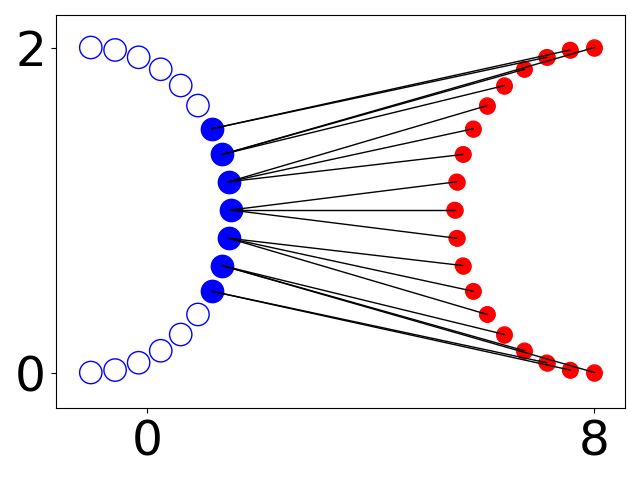}
  }
  \hspace{-4mm}
  \subfigure[$m_\alpha=20$]{
    \includegraphics[width=0.32\linewidth]{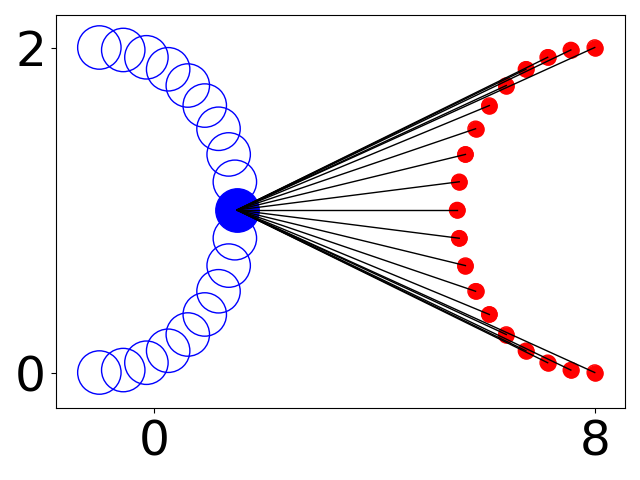}
  }
\caption{The alignment of $\alpha$ (blue) and $\beta_\theta$ (red) specified by $\mathcal{L}_{M,1}(\alpha, \beta_\theta)$ with $m_\beta=1$ and varying $m_\alpha$.
The size of each point is proportional to its mass.
Complete alignment ($m_\alpha=1$) gradually becomes the nearest neighborhood alignment ($m_\alpha = \infty $) as $m_\alpha$ increases.
}
\label{C_PDA}
\end{figure}

\subsection{Detailed Experimental Settings in Sec.~\ref{Sec_experiment_PDA}}
\label{Sec_experiment_PDA_app}
The parameters of PWAN are set as follows:
\begin{itemize}[leftmargin=3mm]
  \item[-]{Office-Home}:
  The batch size is $65$ and we use a stratified sampler as in~\cite{damodaran2018deepjdot},
  \ie, a mini-batch contains $1$ random sample from each reference class.
  We set $(T,u,s)=(5\times10^3, 5, 10^{3/10000})$.
  \item[-]{VisDa17}:
  We use batch size $60$ and a stratified sampler. $(T,u,s)=(10^4, 20, 10^{5/10000})$.
  \item[-]{ImageNet-Caltech}:
  We use batch size $100$. $(T,u,s)=(4.8\times10^4, 1, 10^{1/48000})$.
  \item[-]{DomainNet}:
  We use batch size $100$.   $(T,u,s)=(10^4, 1, 10^{6/100000})$.
  Following~\cite{gu2021adversarial},
  for this dataset,
  we do not use Relu activation function at the bottleneck layer.
\end{itemize}
In the above settings,
we select the largest possible batch size to fit in the GPU memory for each task,
and sufficiently long training steps are chosen to make sure PWAN converges.
We evaluate all methods by their test accuracy at the end of the training instead of the highest accuracy during the training~\cite{gu2021adversarial}. 
In our experiments,
the bottleneck size of ADV~\cite{gu2021adversarial} is reduced to $512$ from $2048$ for ImageNet-Caltech to prevent the out-of-memory error.

\subsection{More Details in Sec.~\ref{Sec_PDA_comparison}}
\label{Sec_PDA_comparison_app}

\begin{figure*}[tb!]
	\centering
    \hspace{3.8mm}
    \begin{minipage}{0.6\linewidth}
        \begin{minipage}[b]{1\linewidth}
            \subfigure[The result of PWAN with random seed $0$. (Accuracy=$94\%$)]{
                \begin{minipage}[b]{1\linewidth}
                    \includegraphics[width=0.3\linewidth]{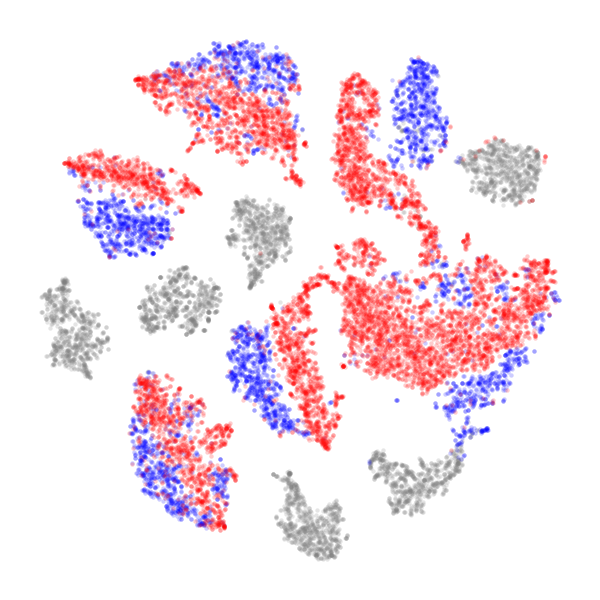}  
                    \includegraphics[width=0.6\linewidth]{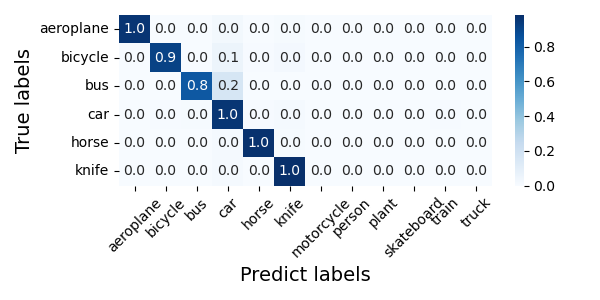}
                \end{minipage}
                \label{VisDa0}
            }
        \end{minipage}
        \begin{minipage}[b]{1\linewidth}
            \subfigure[The result of PWAN with random seed $1$. (Accuracy=$80\%$)]{
                \begin{minipage}[b]{1\linewidth}
                    \includegraphics[width=0.3\linewidth]{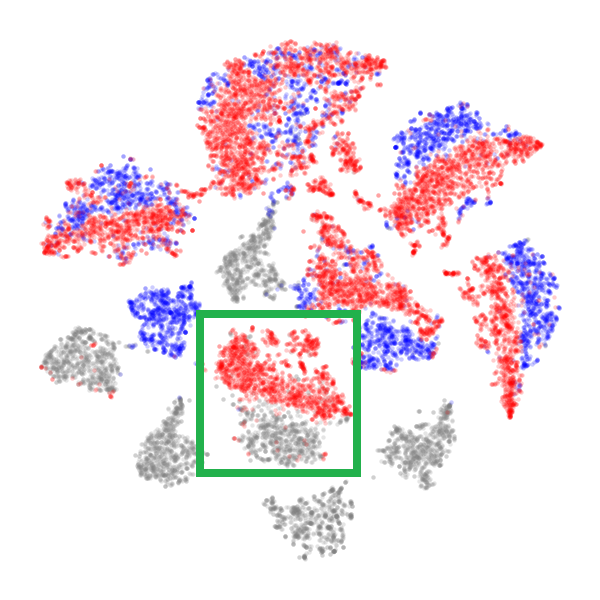}  
                    \includegraphics[width=0.6\linewidth]{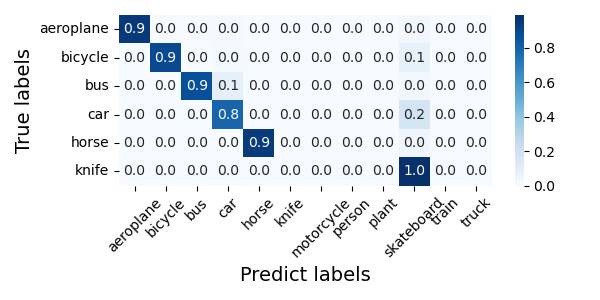} 
                \end{minipage}
                \label{VisDa1}
            }
        \end{minipage}
    \end{minipage}
    \begin{minipage}{0.3\linewidth}
        \subfigure[Random samples of the knife class in the source domain.]{
          \begin{minipage}[b]{1\linewidth}
            \includegraphics[width=0.1\linewidth]{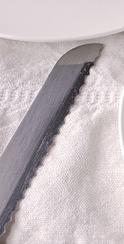}
            \includegraphics[width=0.2\linewidth]{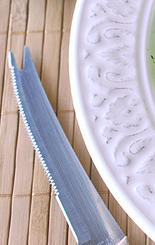}
            \includegraphics[width=0.2\linewidth]{Result/PDA/Confusion/knife_test/knife_695611}
            \includegraphics[width=0.2\linewidth]{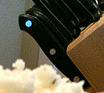}
            \includegraphics[width=0.2\linewidth]{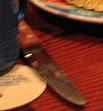}
          \end{minipage}
        \label{VisDa_knife}
        }
        \subfigure[Random samples of the skateboard class in the reference domain.]{
          \begin{minipage}[b]{1\linewidth}
            \includegraphics[width=0.3\linewidth]{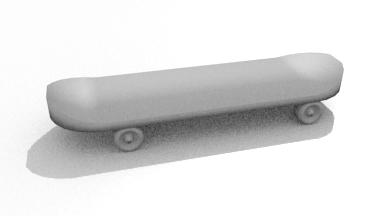}
            \includegraphics[width=0.3\linewidth]{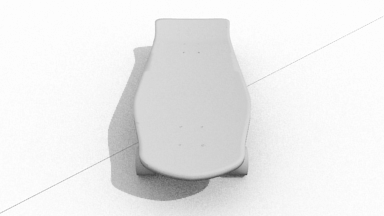}
            \includegraphics[width=0.3\linewidth]{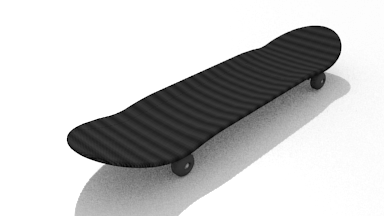} 

            \includegraphics[width=0.3\linewidth]{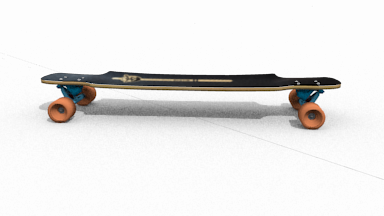}
            \includegraphics[width=0.3\linewidth]{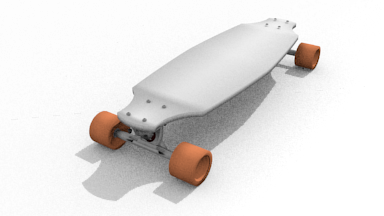}
            \includegraphics[width=0.3\linewidth]{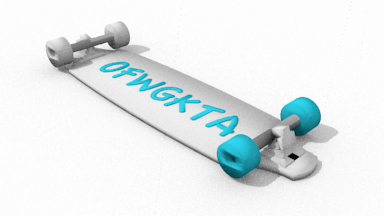}
          \end{minipage}
          \label{VisDa_skateboard}
        }
  \end{minipage}
\vspace{-2mm}
  \caption{
    The results of PWAN on VisDa17. 
    PWAN sometimes has difficulty discriminating the ``skateboard'' and the ``knife'' class due to their visual similarity.
    In practice,
    this issue can be easily solved by model selection using a small annotated validation set.
  }
	\label{ViDa}
\end{figure*}

\begin{figure}[t!]
    \centering
    \includegraphics[width=0.7\linewidth]{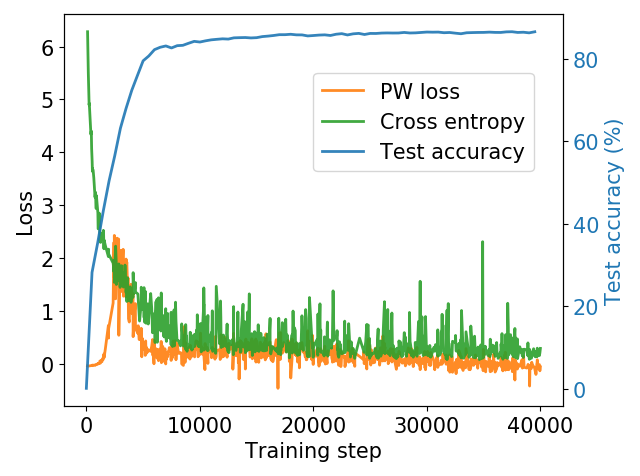}
  \caption{The training process of PWAN on ImageNet-Caltech.
  PW loss and cross-entropy loss gradually decrease until convergence.
  Meanwhile,
  test accuracy increases during the training process.
  }
  \label{PDA_training}
  \end{figure}  

\begin{figure}[t!]
	\centering
  \vspace{-1mm}
    \subfigure[CP-DomainNet]{
        \begin{minipage}[b]{0.45\linewidth}
            \centering
            \includegraphics[width=1\linewidth]{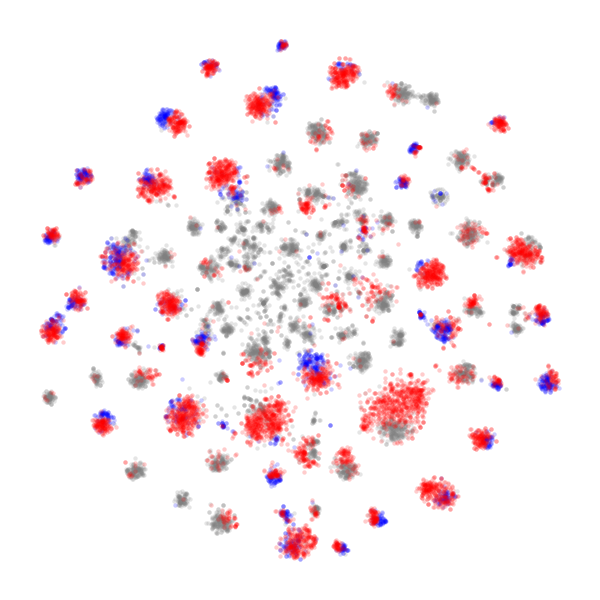}\\
        \end{minipage}
    }
    \subfigure[ImageNet-Caltech]{
        \begin{minipage}[b]{0.45\linewidth}
            \centering
            \includegraphics[width=1\linewidth]{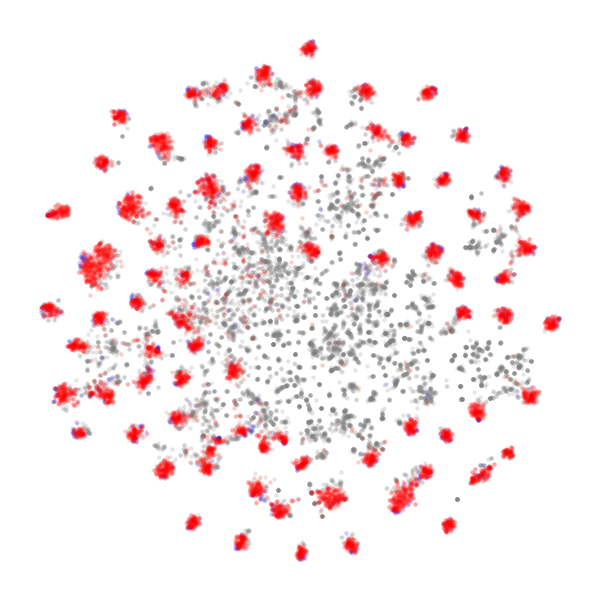} \\
        \end{minipage}
    }
    \vspace{-2mm}
    \caption{
    t-SNE visualization of the learned features of PWAN.
    Blue and gray points represent features in non-outlier and outlier classes in the reference domain respectively.
    Red points represent features in the source domain.
    }
    \vspace{-3mm}
    \label{vis_more}
\end{figure}

\begin{table*}[t!]
  \footnotesize
	\begin{center}
	  \caption{Ablation study on OfficeHome.
    We report the results of PWAN with label smoothing (L), 
    complement objective regularizer (C), 
    neighborhood reciprocity clustering (N),
    and $\alpha$-power (P).}
    \vspace{-3mm}
	  \setlength{\tabcolsep}{1.9pt}
    \label{Office_Home_tab_ablation}
      \begin{tabular}{c c c c c c c c c c c c c c} 
		  \hline
           & AC &	AP	& AR	& CA	& CP	& CR	& PA	& PC	& PR	& RA	& RC	& RP & Avg \\
		\hline
	  \centering
		PWAN  &63.5 (3.3)&	83.2 (2.7)&	89.3 (0.2)&	75.8 (1.5)&	75.5 (2.5)&	83.3 (0.1)&	77.0 (2.4)&	61.1 (1.5)&	86.9 (0.7)&	79.9 (0.6)&	65.0 (3.8)&	86.2 (0.2)	&77.2 (1.6)\\
    PWAN+L &63.9 (0.1)&	84.5 (3.3)	&90.3 (0.3)&	75.4 (0.9)&	75.4 (2.5)&	85.2 (1.2)	&78.0 (1.6)&	63.3 (1.4)	&87.5 (0.8)&	79.7 (0.7)&	66.3 (2.7)&	86.5 (0.8)&	78.0 (1.4)\\
    PWAN+LC  &65.2 (0.6)&	84.5 (3.0)	&89.9 (0.2)&	76.7 (0.6)&	76.8 (1.9)&	84.3 (1.8)&	78.7 (3.2)&	64.2 (0.9)&	87.3 (0.9)&	79.9 (1.1)&	68.0 (2.3)&	86.9 (0.9)&	78.5 (1.0) \\
    PWAN+LCN  & 66.2 (1.9) &	84.2 (2.7)&	90.0 (0.6)&	79.7 (0.9)&	78.4 (0.7)&	85.0 (2.2)&	80.4 (1.8)&	66.8 (0.4)&	88.4 (0.9)&	81.0 (1.1)&	67.7 (3.6)&	87.2 (0.4)&	79.6 (0.9) \\
    PWAN+LCNP  	& 65.4 (0.2) &	88.0 (1.8) &	89.9 (0.3) &	79.2 (1.1) &	78.0 (1.1) &	88.0 (0.5) &	80.5 (1.6)& 66.2 (0.5) &	88.6 (0.8)&	81.8 (0.5)&	70.2 (3.4) &	90.1 (0.6)&	80.5 (0.3)\\
    \hline
	  \end{tabular}
	\end{center}
\end{table*}

We provide more details of the result of PWAN on VisDa17 in Fig.~\ref{ViDa}.
We observe that PWAN generally performs well on all classes,
except that it sometimes recognizes the ``knife'' class as the ``skateboard'' classes as shown in Fig.~\ref{VisDa0} and Fig.~\ref{VisDa1}.
This error is somehow reasonable because these two classes are visually similar as shown in \ie, Fig.~\ref{VisDa_knife} and Fig.~\ref{VisDa_skateboard}.
We argue that this ambiguity can be easily addressed in real applications,
as we can choose the high-performance model by testing the model on a small annotated validation set.
In addition,
compared to the results in Fig.8 in~\cite{cao2022big},
PWAN achieves much better results,
as it can easily discriminate classes ``bicycle''/``motorcycle'' and ``bus''/``car''/``train''.

The training process on ImageNet-Caltech dataset is presented in Fig.~\ref{PDA_training},
where both cross-entropy and PW divergence converge during the training process.
In addition,
we provide a visualization of the learned features on ImageNet-Caltech and DomainNet in Fig.~\ref{vis_more},
where PWAN successfully aligns the source features to the reference features even when the reference data is dominated by outliers 
($68\%$ of classes of DomainNet and $92\%$ of classes of ImageNet-Caltech are outliers).


\end{document}